\newtheorem{definition}{{Definition}}
\newtheorem{theorem}{Theorem}
\newtheorem{proposition}{Proposition}
\newtheorem{propositionappendix}{Proposition}
\newtheorem{assumption}{Assumption}
\newtheorem{lemma}{Lemma}
\newtheorem{corollary}{Corollary}
\newtheorem{hypothesis}{Hypothesis}
\newtheorem{lemmaappendix}{Lemma}
\newtheorem{corollaryappendix}{Corollary}
\newtheorem{theoremappendix}{Theorem}
\newcommand{\mybinom}[2]{\Bigl(\begin{array}{@{}c@{}}#1\\#2\end{array}\Bigr)}
\title{Proving Common Mechanisms Shared by Twelve Methods of Boosting Adversarial Transferability}
\author{%
Quanshi Zhang\thanks{These authors contribute equally to this paper.} \thanks{This work is supervised by Dr. Quanshi Zhang (the corresponding author), zqs1022@sjtu.edu.cn. He is with the Department of Computer Science and Engineering,
the John Hopcroft Center and the MoE Key Lab of Artificial Intelligence, AI Institute, at the Shanghai Jiao Tong University, China.} \textsuperscript{1},  Xin Wang\textsuperscript{*1}, Jie Ren\textsuperscript{*1},
\\
Xu Cheng\textsuperscript{1}, Shuyun Lin\thanks{This work was done when Shuyun Lin was an intern at Shanghai Jiao Tong University.} \textsuperscript{1,2}, Yisen Wang\textsuperscript{3}, and Xiangming Zhu\textsuperscript{1}
\\
\\
\textsuperscript{1} Shanghai Jiao Tong University
\\\textsuperscript{2} Tsinghua University
\\
\textsuperscript{3} Peking University
}
\begin{document}

\date{}
\maketitle

\begin{abstract}
Although many methods have been proposed to enhance the transferability of adversarial perturbations, these methods are designed in a heuristic manner, and the essential mechanism for improving adversarial transferability is still unclear.
This paper summarizes the common mechanism shared by twelve previous transferability-boosting methods in a unified view, \emph{i.e.} these methods all reduce game-theoretic interactions between regional adversarial perturbations.
To this end, we focus on the attacking utility of all interactions between regional adversarial perturbations, and we first discover and prove the negative correlation between the adversarial transferability and the attacking utility of interactions.
Based on this discovery, we theoretically prove and empirically verify that twelve previous transferability-boosting methods all reduce interactions between regional adversarial perturbations.
More crucially, we consider the reduction of interactions as the essential reason for the enhancement of adversarial transferability.
Furthermore, we design the interaction loss to directly penalize interactions between regional adversarial perturbations during attacking.
Experimental results show that the interaction loss significantly improves the transferability of adversarial perturbations.
\end{abstract}

\section{Introduction} \label{sec:intro}
Adversarial attacks of deep neural networks (DNNs) has received considerable attention in recent years~\cite{ma2018characterizing,pgd2018,wang2019dynamic,ilyas2019adversarial,duan2020adversarial,wu2020adversarial}.
Adversarial transferability~\cite{szegedy2013intriguing,goodfellow2014explaining} is one of the intriguing properties of adversarial examples, which refers that adversarial examples generated on a source DNN can also fool other DNNs.
A high adversarial transferability indicates that adversarial examples generated on a DNN can easily attack a different DNN, which is of significant value in black-box attacks.
Therefore, many methods have been proposed to enhance adversarial transferability~\cite{mim,wu2018understanding,Wu2020Skip,xie2019improving,dong2019evading,guo2020backpropagating,gao2020patch, huang2019enhancing}.

As Table~\ref{tab:previous methods} shows, many previous methods of enhancing adversarial transferability are {predominantly} designed based on various heuristics, although they all could effectively boost adversarial transferability.
However, the core mechanism for the enhancement of adversarial transferability {remains} unclear.
For example, some methods~\cite{Wu2020Skip, guo2020backpropagating} modified the gradients in back-propagation to boost adversarial transferability. {In some studies~\cite{wu2018understanding,xie2019improving}, adversarial perturbations were extracted} under various image transformations to boost adversarial transferability,
while \cite{huang2019enhancing} generated adversarial perturbations by increasing perturbation on features in a pre-specified layer to improve adversarial transferability.

Compared to heuristic methods, in this paper, we are more interested in exploring the {core} reason for the effectiveness of the above methods. To this end, we believe a solid explanation for  adversarial transferability should satisfy the following three requirements.

\textit{Requirement 1:} The explanation is supposed to reveal the common mechanism shared by lots of previous methods of boosting adversarial transferability.

\textit{Requirement 2:} The explanation is supposed to enable us to identify the algorithmic components in some previous methods that were designed in an experimental manner but {in conflict} with the purpose of boosting adversarial transferability. More crucially, removing such algorithmic conflicts may further boost adversarial transferability.

\textit{Requirement 3:} It is supposed to use the explanation to design new loss functions {and} further boost adversarial transferability.

In this way, if an explanation satisfies the above three requirements, then we believe this explanation {most probably will} reflect the essence of adversarial transferability.

\textbf{Using interactions to analyze the attacking utility.}
Before we investigate the common mechanism of boosting adversarial transferability, we need to first represent the attacking utility from a new perspective, \emph{i.e.}, using game-theoretic interactions to explain the attacking utility.
Specifically, given an input sample $x$, the adversarial attack adds human-imperceptible perturbations $\delta$ on the input to make the DNN's output $f(x')$ incorrect, \emph{e.g.}, being implemented as $\max_\delta \textit{Loss}(x'\!=\!x\!+\!\delta)$.
Here, the attacking utility of the perturbation $\delta$ is defined as the increase of the classification loss $\textit{Loss}(x')-\textit{Loss}(x)$.
As shown in Figure~\ref{fig:intro_interaction}, we can consider each dimension, or divide the perturbation map into multiple regions and consider each region as a \textit{perturbation unit}.
Note that the attacking utility is usually not caused by each single perturbation unit in $\delta$ independently, but is based on interactions between different perturbation units.

\begin{figure}
    \centering
    \includegraphics[width=0.8\linewidth]{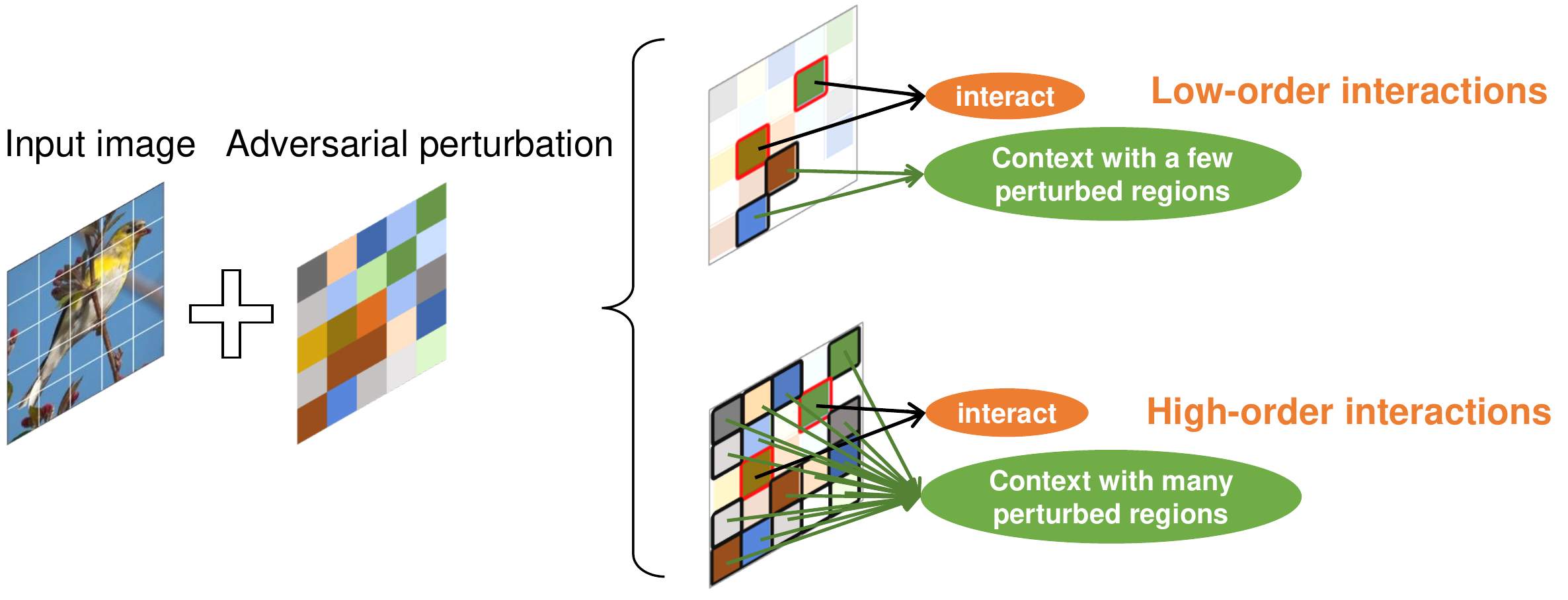}
    \vspace{-10pt}
    \caption{Schematic diagram for the multi-order interaction between perturbation units. In this toy example, we split the perturbation into $5\times 5=25$ regions  (perturbation units) for clarity.}
    \label{fig:intro_interaction}
\end{figure}

Accordingly, we propose to use the interaction between perturbation units to explain the attacking utility.
We divide the entire perturbation map into $n$ regions, and take each regional perturbation as a unit, as shown in Figure~\ref{fig:intro_interaction}.
Let $\Omega=\{1,2,\ldots,n\}$ denote the set of all perturbation units.
In the attack, the attacking utility when we perturb two units $(a,b)$ in the input together may not be equal to the sum of utilities when we individually perturb only one unit $a$ or $b$.
Therefore, the interaction between $(a,b)$ is defined as the difference in the attacking utility when we consider $(a,b)$ as a coalition and perturb them together \emph{w.r.t.} the case when we only perturb $a$ or only perturb $b$.
Furthermore, Zhang~\emph{et al.}~\cite{dropout2020zhang} proved that the overall interaction between two perturbation units can be decomposed into different complexities.
Some interactions are computed based on many contextual variables, which are complex. Such interactions are defined as high-order interactions, where the order refers to the number of contextual variables.
In comparison, some interactions are based on a small number of contextual variables. Such interactions represent simple collaborations between perturbation units and are defined as low-order interactions.

Based on the above interaction, we can explain the overall attacking utility and the adversarial transferability.
Specifically, we mathematically decompose the overall attacking utility into the weighted sum of interactions between different units, \emph{i.e.,} $\textit{Loss}(x')-\textit{Loss}(x)=\sum_{s=0}^{n-2}\sum_{a\ne b\in \Omega}\lambda^{(s)} I^{(s)}_{ab}+\text{constant}$, where $I^{(s)}_{ab}$ denotes the interaction between $(a,b)$ with the order $s$.
In this way, interactions {are used} to progressively explain the adversarial transferability from the following four perspectives.

\textbf{1. Proving the negative correlation between transferability and interactions.}
We prove that adversarial perturbations with higher transferability tend to exhibit smaller interactions between perturbation units.
Such a correlation is both theoretically proven in Section~\ref{subsec:proof for negative correlation} and experimentally verified in Figure~\ref{fig:relation}.

The negative correlation between adversarial transferability and interactions is proven from the following four perspectives.
(1) We derive a closed-form solution to the infinite-step adversarial attack.
Based on this, we prove that compared to the single-step attack, the multi-step attack {generally} generates perturbations {that have} larger interactions.
(2) We prove that adversarial perturbations generated by the multi-step attack is less transferable {compared to} those generated by the single-step attack.
The above conclusions could imply that a larger interaction usually {is indicative of} lower transferability.
(3) {Taking inspiration from the} above two conclusions, we prove that such a negative correlation is {due to} the phenomenon that different DNNs may classify different categories as the second-largest class.
This phenomenon explains why a large interaction causes low transferability.
(4) In addition, we prove that high-order interactions are much larger than low-order interactions, and high-order interactions {occupy} major numerical components of the overall attacking utility.
Meanwhile, we find that high-order interactions are less transferable than low-order interactions.

\textbf{2. Unifying the twelve previous transferability-boosting methods.}
Based on the above theoretical theorems, we prove that \textbf{though many previous methods are based on different intuitions and observations, they all share a common mechanism that \textit{they all reduce interactions between perturbation units.}}
More specifically, we prove that twelve methods in Table~\ref{tab:previous methods}, including the MI Attack, VR Attack, GhostNet Attack, LinBP Attack, Adaptive Attack, FD Attack, IL Attack, PI Attack, adversarially-trained model attack, RAP Attack, IA Attack, and FI Attack, all {lead to a reduction in} interactions between perturbation units.
Therefore, the decrease of interactions between perturbation units may {to some extent show} a common effective factor for the improvement of adversarial transferability.

\textbf{3. {Removing algorithmic conflicts to theoretically improve some previous methods by}}.
The above explanation {allows} us to identify some algorithm components/tricks in some methods that may occasionally {be in} conflict with the decrease of interactions.
This is because previous methods are {mostly} designed based on intuitions {that lack} clear theoretical foundation. Thus, some methods may not always exhibit the common mechanism in all cases \emph{w.r.t.} different inputs.
Therefore, we revise previous methods to eliminate these conflicts {so as to} purely reduce interactions.
Surprisingly, we find that in these cases, the adversarial transferability is usually further boosted, which also proves the {solid basis} of our theoretical explanation.

\textbf{4. {Boosting adversarial transferability by proposing a new loss function.}}
Based on above theoretical proof, we propose a new loss to directly penalize interactions between perturbation units during attacking, which further boosts the adversarial transferability.

\begin{table}[t!]
    \centering
    \caption{Previous methods of boosting adversarial transferability.
    \vspace{-5pt}}
    \resizebox{\linewidth}{!}{
    \begin{tabular}{p{0.25\linewidth} | p{0.75\linewidth}}
        \hline
        Paper & Method\\
        \hline
        \makecell[l]{Momentum iterative (MI)
        \\~ Attack~\cite{mim}} & \makecell[l]{Utilized the momentum of gradients during the generation of adversarial\\~ perturbations.}\\
        \makecell[l]{Variance-reduced (VR)\\~ Attack~\cite{wu2018understanding}} & \makecell[l]{Added different Gaussian noises to an input image, and averaged the\\~ gradients for attacking.}\\
        \makecell[l]{Translation-invariant (TI)\\~ Attack~\cite{dong2019evading}} & \makecell[l]{Generated adversarial perturbations by computing the gradients \emph{w.r.t.} to a\\~ set of translated versions of the original input.}\\
        \makecell[l]{Skip gradient method \\~(SGM)~\cite{Wu2020Skip}} & \makecell[l]{Used more gradient information of skip connections to improve the\\~ transferability.}\\
        \makecell[l]{GhostNet Attack~\cite{li2020learning}} & \makecell[l]{Added dropout operations at different layers and randomized the influence\\~ of gradients from skip connections in each iteration of generating\\~ adversarial perturbations.}\\
        \makecell[l]{Linear back-propagation \\~(LinBP) Attack~\cite{guo2020backpropagating}} & \makecell[l]{Computed gradients by taking non-linear activation functions as identity\\~ transformations in  the back-propagation.}\\
        \makecell[l]{Activation Attack (AA)\\~~\cite{inkawhich2019feature}} & \makecell[l]{Generated adversarial examples by minimizing the difference of intermediate\\~-layer features between the original input and an input from the target \\~ class.}\\
        \makecell[l]{Feature distribution (FD)\\~ Attack~\cite{Inkawhich2020Transferable}} & \makecell[l]{Trained auxiliary models to explicitly modeled the feature distribution of\\~ each class, and  generated perturbations by maximizing the probability of\\~ the target class.}\\
        \makecell[l]{Intermediate level (IL)\\~ Attack~\cite{huang2019enhancing}} & \makecell[l]{Fine-tuned an existing adversarial perturbation by increasing its perturbation\\~ on features in  a pre-specified  layer of the source model.}\\
        \makecell[l]{Diversity input (DI)\\~ Attack~\cite{xie2019improving}} & \makecell[l]{Augmented input images by random transformations including resizing\\~ and padding, and fed  the transformed images into the classifier to compute\\~ gradients for attacking.}\\
        \makecell[l]{Patch-wise iterative (PI)\\~ Attack~\cite{gao2020patch}} & \makecell[l]{Boosted the transferability by distributing the perturbations out of the norm\\~ constraint to neighboring units.}\\
        \makecell[l]{Spriger \emph{et al.}~\cite{springer2021adversarial}} & \makecell[l]{Found that adversarial examples generated on an adversarially trained DNN\\~ were usually more transferable  than those generated on a normally trained\\~ DNN.}\\
        \makecell[l]{Reverse adversarial\\~ perturbation (RAP)\\~ Attack~\cite{qin2022boosting}}  & \makecell[l]{Generated an adversarial example in flat regions of the loss landscape by\\~ solving the min-max optimization.}\\
        \makecell[l]{Intrinsic adversarial (IA)\\~ Attack ~\cite{zhu2022rethinking}} & \makecell[l]{Increased the alignment between gradients of the adversarial loss and the\\~ gradient of joint data distribution by smoothing the activation function and\\~ weakening the impact of residual modules.}\\
        \makecell[l]{Feature importance-aware\\~(FI) Attack \cite{wang2021feature}} & \makecell[l]{Generated adversarial perturbations by  disrupting important features and\\~ improving trivial features in an intermediate layer.}\\
        \hline
    \end{tabular}}
    \label{tab:previous methods}
\end{table}

\textbf{Contributions.}
Essentially, the main contributions of this paper {appear} in the following two {aspects}.

$\bullet$ First, we use the game-theoretic interaction to prove a unified explanation of the common mechanism shared by transferability-boosting methods. This indicates a {core} reason for the improvement of adversarial transferability.

$\bullet$ Second, based on our explanations, we discover algorithmic components/tricks in previous methods that may be occasionally in conflict with the decrease of interactions. Then, we revise these components to remove conflicts and thus further improving previous methods.
Besides, the above theoretical findings also enable us to design new methods to boost adversarial transferability.
In other words, our research may summarize a new theoretical perspective to revisit the common mechanism that determines adversarial transferability, and {may even assist in} designing new algorithms in a theory-driven manner.

A preliminary version of this paper appeared in \cite{wang2021a}.
However, the previous version~\cite{wang2021a} did not theoretically prove the negative correlation between adversarial transferability and interactions, and it just unified five previous transferability-boosting methods.
Beyond the previous version, we theoretically prove the negative correlation between adversarial transferability and interactions. Then, we prove that a total of twelve previous transferability-boosting methods can reduce interactions between perturbation units.
In addition, we identify and revise some algorithmic components that may be occasionally in conflict with the reduction of interactions, thereby boosting the performance of previous methods.

\section{Related work}

\textbf{Adversarial transferability.}
Adversarial attacks can be roughly divided into two categories, \emph{i.e.} white-box attacks~\cite{szegedy2013intriguing, goodfellow2014explaining,jsma,carlini2017towards,bim,Su2019Onepixel,pgd2018} and black-box attacks~\cite{liu2016delving,substitute,zoo, bhagoji2018practical,ilyas2018black,bai2020improving}.
A specific type of black-box attack~\cite{mim,wu2018understanding,xie2019improving,Wu2020Skip} is to generate adversarial examples on a surrogate/source DNN and transfer these adversarial examples to attack a different target DNN.
In this way, the {strength of adversarial transferability brings about} the development of black-box attacks.
Thus, as Table~\ref{tab:previous methods} shows, many methods have been proposed to enhance the transferability of adversarial examples.

However, it is still unclear why the above methods could effectively boost the adversarial transferability.
To this end, we aim to explain and improve the transferability based on the game theory.
We reveal that though the above transferability-boosting methods are proposed based on different intuitions and observations, almost {all of them} all reduce interactions between adversarial perturbation units.

\textbf{Interaction.}
Using interactions between input variables  to analyzing DNNs is a new direction in recent years.
Based on the Shapley value~\cite{shapley1953value}, Grabisch and Roubens~\cite{grab1999An} proposed the Shapley interaction index to measure interactions between players in game theory.
Based on the Shapley interaction index, Lundberg~\emph{et al.}~\cite{Lundberg2017tree} quantified pairwise interactions in tree-ensemble models.
Chen~\emph{et al.}~\cite{chen2020generating} proposed to use the Shapley interaction index to generate hierarchical explanations for DNNs that were learned towards NLP tasks.
Tsang~\emph{et al.}~\cite{tsang2018detecting} measured interactions between input variables using weights of intermediate layers in the DNN.
Murdoch~\emph{et al.}~\cite{Murdoch2018beyond} detected interactions between input variables in LSTMs by using contextual decomposition (CD), and Sigh~\emph{et al.}~\cite{Sigh2019hirachical} extended this method to CNNs.
Jin~\emph{et al.}~\cite{jin2020hierachical} further improved the CD algorithm to measure the contextual independence of words and hierarchically explain LSTMs.
Based on the Integrated Gradients~\cite{sundararajan2017axiomatic}, which was an attribution method, Janizek~\emph{et al.}~\cite{janizek2020explaining} proposed the Integrated Hessian to quantify interactions between a pair of input variables.
This method required to replace ReLU functions in the DNN with  SoftPlus functions.
Unlike previous studies, this paper focuses on using the game-theoretic interaction to both explain and improve the adversarial transferability.

\section{The relationship between transferability and interactions}
\label{sec:negative}

\textbf{Adversarial attack and transferability.}
Given an input sample $x\in\mathbb{R}^n$ with the ground-truth label $y\in\{1, 2, \dots, C\}$, let $h(x)\in\mathbb{R}^C$ denote the output of the DNN before the softmax layer.
We use $h_{y'}(x)$ to denote the output value of the $y'$-th class.
To simplify the story, in this study, we mainly focus on the untargeted attack, which adds an imperceptible perturbation $\delta\in\mathbb{R}^n$ on the sample $x$ to make the DNN {incorrectly} classify the perturbed sample $x'=x+\delta$, \emph{i.e.}, $\arg\max_{y'} h_{y'}(x')\ne y$.
The objective of the untargeted attack is usually formulated as follows.
\begin{equation}
\begin{split} \label{eq:attack}
    \!\!\max_{\delta}\; \textit{Loss}(h(x+\delta), y) \; \text{s.t.}\; \|\delta\|_p\le \epsilon,\; x+\delta\in[0, 1]^n,
\end{split}
\end{equation}
where $\textit{Loss}(h(x+\delta), y)$ is referred to {as} the classification loss, and $\epsilon$ is a constant of the norm constraint.
For simplification, we use $\textit{Loss}(x')=\textit{Loss}(h(x'), y)$ to represent the classification loss on the input $x'$.

Adversarial transferability refers to the transferability of adversarial examples on two different DNNs.
Given a trained DNN, namely a \textit{source DNN}, we generate adversarial examples on the source DNN, and expect to attack another DNN (namely the \textit{target DNN}).
If the adversarial example generated on the source DNN leads the target DNN to make incorrect predictions, then we consider this adversarial example exhibit a good transferability.

\textbf{Shapley values for the attacking utility.}
We consider the adversarial attack as a game, and  we use the Shapley value in game theory~\cite{shapley1953value} to analyze the attacking utility of adversarial perturbations.
We can consider the perturbation in each pixel (\emph{i.e.} a perturbation unit) as a player in the game, and the attacking utility $\textit{Loss}(x')-\textit{Loss}(x)$ can be considered as the overall reward won by the collaboration of different perturbation units. Note that sometimes, to reduce the computational cost of the analysis, we do not directly explain the attacking utility at the pixel level. Instead, we divide the perturbation $\delta\in \mathbb{R}^n$ into $K$ regions, and the perturbation vector within each region is termed a single perturbation unit, as shown in Figure~\ref{fig:intro_interaction}.\footnote{In experiments in Section~\ref{sec:IR loss}, we divided perturbations into $16\times 16$ regions and took each region as a perturbation unit.}
Thus, the Shapley value of each player measures the numerical utility of perturbations within a region in adversarial attacking.
Without loss of generality, let $\Omega\!=\!\{1, 2, \dots, n\}$ denote the set of all players (perturbation units).
For a subset of perturbation units $S\!\subseteq\! \Omega$, let $\delta^{(S)}\!\in\!\mathbb{R}^n$ denote the perturbation map that only contains perturbation units in $S$, as follows.
\begin{equation}
    \forall a\in S,~\delta^{(S)}_a=\delta_a;\;  \forall i\notin S,~\delta^{(S)}_a=0
\end{equation}
Then, the attacking utility $v(S)$ of perturbation units in $S$ is defined as the loss increase when only perturbations on units in $S$ are added to the input units in $S$ are added to the input, as follows.
\begin{equation}
    v(S)\triangleq \textit{Loss}(x+\delta^{(S)})-\textit{Loss}(x)
    \label{eq: v(S)}
\end{equation}
The overall attacking utility of all perturbation units corresponds to $v(\Omega)=\textit{Loss}(x'=x+\delta)-\textit{Loss}(x)$.
In this way, Ancona~\emph{et al.}~\cite{ancona2019explaining} and Weber~\emph{et al.}~\cite{weber1988probabilistic} have proven that the Shapley value $\phi(a|\Omega)$ is a metric that fairly allocates the total game reward (attacking utility) gained by all players (all perturbation units) $v(\Omega)-v(\emptyset)$ to each player (each perturbation unit), as follows.
\begin{equation}
    \begin{split}
    &\phi(a|\Omega)\triangleq \!\!\!\! \sum_{{S \subseteq \Omega \backslash\{a\}}}\!\!\!\!\! \frac{|S| !(n-|S|-1) !}{n !}[v(S \cup\{a\})-v(S)],\\
    &\text{which satisfies }v(\Omega)-v(\emptyset) = {\sum}_{a}\phi(a|\Omega).
\end{split}
\label{eq:shapley}
\end{equation}
The Shapley value satisfies four desirable axioms, \emph{i.e.} the \textit{linearity}, \textit{dummy}, \textit{symmetry}, and \textit{efficiency} axioms (please see the Appendix~\ref{append:shapley-axiom} for details).

\subsection{Interactions}
\label{subsec:interactions}

In order to explain adversarial transferability, let us first define the interaction.
In the attack, perturbation units in $\delta$ do not work independently to contribute to the attacking utility. Instead, different perturbation units may interact with each other to form certain patterns.
The interaction between two perturbation units $(a, b)$ measures the compositional contribution of each specific pairwise interaction between $(a,b)$ to the overall attacking utility.
Specifically, the interaction measures the difference between the joint contribution of two units when they are perturbed together and the sum of their individual contributions~\cite{dropout2020zhang}, as follows.
\begin{equation}
    I_{a b}(\delta) \triangleq \phi(S_{a b}|\Omega') - \left[\phi(a|\Omega\setminus\{b\}) + \phi(b|\Omega\setminus\{a\})\right] \label{eq:interaction}
\end{equation}
where $S_{ab}=\{a,b\}$ denotes a singleton unit when $(a, b)$ are perturbed simultaneously or not perturbed simultaneously.
In this case, the set of all perturbation units can be considered as $\Omega'=\Omega\cup \{S_{ab}\}\setminus\{a,b\}$.
Then, when $a$ and $b$ are considered as a single player in the computation of the Shapley value, the contribution of the singleton unit $(a, b)$ to the attacking utility is quantified as $\phi(S_{ab}|\Omega')$.
$\phi(a|{\Omega\setminus\{b\}})$ denotes the contribution of the perturbation unit $a$, when the unit $b$ is always absent.
Similarly, $\phi(b|{\Omega\setminus\{a\}})$ denotes the individual contribution of the perturbation unit $b$ when the unit $a$ is always absent.

Essentially, the above definition of the interaction reflects the relationship like ``$1\!+\! 1\!>\! 2$'' between two perturbation units.
Specifically, the positive interaction $I_{a b} > 0$ means that perturbation units $a$ and $b$ cooperate with each other to boost the attacking utility, \emph{i.e.}, ``$1\!+\! 1\!>\! 2$.'' In contrast, the negative interaction $I_{a b}<0$ means that perturbation units $a$ and $b$ conflict with each other, thereby weakening the attacking utility, \emph{i.e.}, ``$1\!+\! 1\!<\! 2$.''
When $I_{a b}\approx 0$, it means that there is no interaction between perturbation units $a$ and $b$, \emph{i.e.}, ``$1\!+\! 1\!\approx\! 2$.''

The definition of the interaction in Eq.~\eqref{eq:interaction} is equivalent to the change of the unit $a$'s contribution to the attacking utility, when we compare the unit $a$'s contributions computed in the following two cases, \emph{i.e.}, (1) the case when the unit $b$ is present and (2) the case when the unit $b$ is absent.
\begin{equation}
    I_{a b} =  \mathbb{E}_{S\subseteq \Omega\setminus\{a,b\}} \underbrace{\{[v(S\cup\{a,b\})-v(S\cup\{a\})]-[v(S\cup\{b\})-v(S)]\}}_{\Delta v(a,b|S)} \label{eq:interaction-2}
\end{equation}
Please see the supplementary material for the proof.
This definition is just the same as the Shapley interaction index~\cite{grab1999An} between two players.

\textbf{Multi-order interactions as numerical components of the overall attacking utility.}
Furthermore, the pairwise interaction $I_{a b}$ can be decomposed into different orders as follows~\cite{dropout2020zhang}.
\begin{equation}
    I_{a b}=\frac{1}{n-1}\sum_{s=0}^{n-2}I_{a b}^{(s)},\;\;
    I^{(s)}_{ab}\triangleq \sum_{\substack{S\subseteq \Omega\setminus\{a,b\},|S|=s}}  \Delta v(a,b,S)
     \label{eq:multi-order interaction}
\end{equation}
The order $s$ reflects the contextual complexity of the interaction, which means that $(a, b)$ collaborate with other $s$ contextual variables.
As Figure~\ref{fig:intro_interaction} shows,  low-order interactions (\emph{i.e.}, $s$ is small) represent simple collaborations between $(a, b)$ and a few contextual variables.
In contrast, high-order interactions (\emph{i.e.}, $s$ is large) represent complex collaborations between $(a, b)$ and a massive number of contextual variables.

\begin{theorem}
(\textbf{Decomposing the attacking utility into multi-order interactions,} proof in \cite{robustness2021ren} and Appendix~\ref{append:multi-order interaction}.)
The overall attacking utility can be decomposed into compositional utilities of multi-order interactions, \emph{i.e.}, $v(\Omega)-v(\emptyset)=\sum_{a\in\Omega}[v(\{a\})-v(\emptyset)] + \sum_{a\ne b\in \Omega}\sum_{s=0}^{n-1} \frac{n-1-s}{n(n-1)}I^{(s)}_{ab}$.
\end{theorem}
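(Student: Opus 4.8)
The plan is to prove the identity purely by coefficient matching: I would expand the entire right-hand side into a linear combination of the elementary values $v(T)$ for $T\subseteq\Omega$, and show that the resulting coefficients reproduce the trivial expansion of the left-hand side, namely $+1$ on $v(\Omega)$, $-1$ on $v(\emptyset)$, and $0$ on every other $T$. No analytic input is needed beyond the definitions; the Shapley efficiency axiom is only relevant if one instead starts from $v(\Omega)-v(\emptyset)=\sum_a\phi(a|\Omega)$, which I would use for the alternative route mentioned below.

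First I would substitute $\Delta v(a,b,S)=v(S\cup\{a,b\})-v(S\cup\{a\})-v(S\cup\{b\})+v(S)$ into the order-$s$ interactions. For consistency with the pairwise interaction equalling the Shapley interaction index, $I^{(s)}_{ab}$ must be read as the \emph{average} of $\Delta v(a,b,S)$ over subsets $S$ of size $s$, i.e. carrying the weight $1/\binom{n-2}{s}$; this normalization is exactly what makes the coefficients collapse. Fixing a target set $T$ with $|T|=t$, each $v(T)$ is produced in four ways for a given ordered pair $(a,b)$ and subset $S$: as $S\cup\{a,b\}$ (then $s=t-2$), as $S\cup\{a\}$ or $S\cup\{b\}$ (then $s=t-1$), or as $S$ (then $s=t$). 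Counting the ordered pairs and subsets realizing each case gives $t(t-1)$, $2t(n-t)$, and $(n-t)(n-t-1)$ configurations respectively, so the coefficient of $v(T)$ coming from the interaction term is
\[
\frac{1}{n(n-1)}\!\left[\frac{t(t-1)(n+1-t)}{\binom{n-2}{t-2}}-\frac{2t(n-t)^2}{\binom{n-2}{t-1}}+\frac{(n-t)(n-t-1)(n-1-t)}{\binom{n-2}{t}}\right],
\]
to which the singleton term $\sum_a[v(\{a\})-v(\emptyset)]$ adds $+1$ when $t=1$ and $-n$ when $t=0$.

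The decisive step is the simplification of this bracket. Using $t(t-1)!=t!$ and $(n-t)(n-1-t)!=(n-t)!$, each of the three fractions reduces to the common factor $t!\,(n-t)!/(n-2)!$ multiplied by a single linear factor, namely $(n+1-t)$, $-2(n-t)$, and $(n-1-t)$; since $(n+1-t)-2(n-t)+(n-1-t)=0$, the bracket vanishes identically whenever $2\le t\le n-1$, delivering the required coefficient $0$. The three boundary values are checked directly: $t=n$ gives $+1$; $t=0$ gives $n-1$, which combines with the $-n$ from the singleton term to $-1$; and $t=1$ gives $-1$, which the singleton term's $+1$ cancels to $0$. Finally, the terms with $s\ge n-1$ in the stated sum contribute nothing, since the weight $\tfrac{n-1-s}{n(n-1)}$ already vanishes at $s=n-1$ and no subset of $\Omega\setminus\{a,b\}$ exceeds size $n-2$.

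I expect the main obstacle to be exactly this collapse: pinning down the per-order normalization of $I^{(s)}_{ab}$ so that the three binomial coefficients appear with the precise weights that let the factorials cancel, and then confirming that the leftover linear factors sum to zero. A cleaner alternative that sidesteps the raw counting is to pass to Harsanyi dividends $a_v(T)=\sum_{L\subseteq T}(-1)^{|T|-|L|}v(L)$: an inclusion--exclusion argument shows $\Delta v(a,b,S)=\sum_{\{a,b\}\subseteq T\subseteq S\cup\{a,b\}}a_v(T)$, and after substituting this into the right-hand side one only has to verify that the total coefficient of each $a_v(T)$ equals $1$, matching the M\"obius expansion $v(\Omega)-v(\emptyset)=\sum_{\emptyset\ne T\subseteq\Omega}a_v(T)$. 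Either way the entire analytic content reduces to one binomial identity.
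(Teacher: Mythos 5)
Your proof is correct, but it takes a genuinely different route from the paper's. The paper proves the identity through the Shapley value: it starts from the efficiency axiom $v(\Omega)-v(\emptyset)=\sum_{a}\phi(a|\Omega)$, decomposes each Shapley value into order components $\phi(a|\Omega)=\frac{1}{n}\sum_{s=0}^{n-1}\phi^{(s)}(a|\Omega)$ with $\phi^{(s)}(a|\Omega)=\mathbb{E}_{|S|=s}\left[v(S\cup\{a\})-v(S)\right]$, and then telescopes via the key identity $\phi^{(s)}(a|\Omega)-\phi^{(s-1)}(a|\Omega)=\mathbb{E}_{b\in\Omega\setminus\{a\}}\left[I^{(s-1)}_{ab}\right]$, so the weights $\frac{n-1-s}{n(n-1)}$ emerge as the number of times each order survives the telescoping sum. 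You instead expand the entire right-hand side in the basis $\{v(T)\}$ and match coefficients; your counting of the $t(t-1)$, $2t(n-t)$, and $(n-t)(n-t-1)$ configurations, the reduction of the bracket to the common factor $t!\,(n-t)!/(n-2)!$ times $(n+1-t)-2(n-t)+(n-1-t)=0$, and the three boundary cases $t\in\{0,1,n\}$ all check out. Your route is more elementary and self-contained -- it needs no Shapley machinery and compresses the whole theorem into one binomial identity -- while the paper's route buys structural insight into where the weights come from and how order-$s$ Shapley components relate to order-$(s-1)$ interactions. You also correctly flag a genuine inconsistency in the paper: the identity only holds when $I^{(s)}_{ab}$ is the \emph{average} of $\Delta v(a,b,S)$ over size-$s$ contexts (the appendix's definition, and the one forced by requiring $I_{ab}=\frac{1}{n-1}\sum_{s}I^{(s)}_{ab}$ to equal the Shapley interaction index), not the unnormalized sum written in the main text; the paper's own appendix proof silently uses the averaged version. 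Your Harsanyi-dividend alternative is likewise sound and is the standard shortcut for identities of this type.
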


Therefore, such a decomposition enables us to analyze the adversarial utility in a more fine-grained manner.

\begin{table}[t!]
\centering
\caption{Notations used in the paper.
\vspace{-5pt}}
\label{tab:notation}
    \begin{tabular}{cp{10.5cm}}
    \hline
    $x$ & An input sample\\
    $y\in\{1, 2, \dots, c\}$ & The ground truth label of $x$\\
    $\delta$ & The adversarial perturbation added on $x$\\
    $\Omega=\{1, 2, \dots, n\}$ & The set of all perturbation units\\
    $I_{a b}(\delta)$ & The interaction between perturbation units $(a, b)$  in $\delta$\\
    $I^{(s)}_{ab}(\delta)$ & The $s$-th order interaction between perturbation units $(a, b)$ in $\delta$\\
    $\alpha$ & The step size for generating adversarial perturbation $\delta$\\
    $m$ & The number of steps for generating adversarial perturbation $\delta$\\
    $\beta$ & The adversarial attacking strength $\beta=\alpha m$\\
    $h(\cdot)$ & A DNN, where $h(x)\in\mathbb{R}^c$\\
    $\textit{Loss}(\cdot)$ & The classification loss function\\
    $g$ & The gradient of the loss function \emph{w.r.t.} the input $x$, \emph{i.e.} $g=\nabla_x \textit{Loss}(x)$\\
    $H$ & The Hessian of the loss funtion \emph{w.r.t.} the input $x$, \emph{i.e.} $H=\nabla_x^2 \textit{Loss}(x)$\\
    $\lambda_i$ & The $i$-th eigenvalue of the Hessian $H$\\
    $v_i$ & The $i$-th eigenvector of the Hessian $H$\\
    $\gamma_i$ & The coefficient of the gradient $g$ along the $i$-th eigenvector of $H$, and $g=\sum_{i=1}^n \gamma_i v_i$\\
    \hline
    \end{tabular}
\end{table}

\subsection{Multi-step attack vs. single-step attack}\label{subsec:multi-step vs single-step}

Before the proof of the negative correlation between adversarial transferability and interactions, let us first focus on a special case.
\emph{I.e.,} let us investigate the adversarial transferability and interactions on the multi-step attack and the single-step attack.
In fact, in this subsection, we prove the following two conclusions.

First, we prove that the multi-step attack generates perturbations with larger interactions than the single-step attack.
Then, this conclusion can be combined with the previous discovery in \cite{xie2019improving}, \emph{i.e.,} the multi-step attack usually generates perturbations with lower transferability than the single-step attack.
This discovery can be explained by considering the multi-step attack more overfitted to the source DNN than the single-step attack. Therefore, in this paper, we propose the hypothesis that interactions and adversarial transferability are negatively correlated.

Second, we prove the closed-form solution to adversarial perturbations and interactions when we conduct the following simplification on adversarial attacking.
We use these conclusions to prove the common mechanism shared by twelve methods of boosting adversarial transferabiltiy.

\textbf{Closed-form solution to adversarial perturbations.}
Let us focus on adversarial perturbations generated via the multi-step attack.
Let $\alpha$ denote the step size, and let $m$ denote the number of steps in attack.
Then, the generated adversarial perturbation after $m$ steps can be represented as follows.
\begin{equation}
    \delta^{(m)} = \alpha\sum_{t=0}^{m-1}g_{x+\delta^{(t)}}
    \label{eq: attack}
\end{equation}
In general, there are two widely-used variants of the adversarial attack, \emph{i.e.,} the $\ell_\infty$ attack and the $\ell_2$ attack~\cite{pgd2018}.
The $\ell_\infty$ attack regularizes the gradient with the \text{sign} function, and the $\ell_2$ attack normalizes the gradient.
\begin{equation}
    \delta^{(m)}=\alpha_t\sum_{t=0}^{m-1}g_{x+\delta^{(t)}}=\begin{cases}
    \alpha_t\sum_{t=0}^{m-1} \text{sign}(\nabla_x Loss(x+\delta^{(t)})), & \ell_\infty~\text{attack}\\
    \alpha_t\sum_{t=0}^{m-1} (\nabla_x Loss(x+\delta^{(t)})/\Vert \nabla_x Loss(x+\delta^{(t)})\Vert), & \ell_2~\text{attack}\\
    \end{cases}
\label{eq: delta with norm}
\end{equation}
Without loss of generality, we first consider the attack without regularization or normalization on gradients, as follows.
\begin{equation}
    g_{x+\delta^{(t)}}=\nabla_x Loss(x+\delta^{(t)})
    \label{eq: attack without sign}
\end{equation}
To simplify our analysis, we use the second-order Taylor expansion to approximate the loss function $\textit{Loss}(\cdot)$, where we ignore terms higher than the second order.

\begin{lemma}
(\textbf{The closed-form solution to adversarial perturbations,} proof in Appendix~\ref{lemmaappendix:delta}.)
\label{lemma:delta}
Let us first consider the adversarial perturbation after $m$ steps in Eq.~\eqref{eq: attack without sign}.
Let us decompose the Hessian matrix ($H=\nabla^2_x \textit{Loss}(x)\in \mathbb{R}^{n\times n}$) via eigen-decomposition $H\!=\!V \Lambda V^{\text{T}}$ subject to $V=[v_1,v_2,\ldots,v_n], \Lambda=\text{diag}(\lambda_1,\lambda_2,\ldots,\lambda_n)$, where $\lambda_i \in \mathbb{R}$ and $v_i\in \mathbb{R}^{n}$ denote the $i$-th eigenvalue and eigenvector of $H$, respectively.
Furthermore, $g\!=\!\sum_{i=1}^n \gamma_i v_i$  decomposes the gradient $g=\nabla_x \textit{Loss}(x)$ along eigenvectors of $H$, {where $\gamma_i = g^T v_i$.}
Then, when we attack a DNN via gradient ascent for $m$ steps with the step size $\alpha$, based on Assumption~\ref{assumption:taylor}, the generated adversarial perturbation is { $\delta^{(m)} =V D^{(m)} \boldsymbol{\gamma}$}, where $\boldsymbol{\gamma}=[\gamma_1,\gamma_2,\ldots,\gamma_n]^T$ and { $D^{(m)}\in \mathbb{R}^{n\times n}$ is a diagonal matrix. If $\lambda_i\ne0$, $D^{(m)}_{ii}=\frac{(\alpha\lambda_i+1)^m-1}{\lambda_i}$; otherwise, $D^{(m)}_{ii}=\alpha m$}.
\end{lemma}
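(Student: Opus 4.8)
The plan is to exploit the second-order Taylor approximation to linearize the attack dynamics, and then to solve the resulting linear recurrence exactly by passing to the eigenbasis of the Hessian. First I would invoke Assumption~\ref{assumption:taylor}: under the second-order expansion of $\textit{Loss}$ around $x$, namely $\textit{Loss}(x+\delta)\approx \textit{Loss}(x)+g^{T}\delta+\tfrac{1}{2}\delta^{T}H\delta$, the gradient at any perturbed point becomes affine in the perturbation,
\[
    g_{x+\delta^{(t)}}=\nabla_x \textit{Loss}(x+\delta^{(t)})=g+H\delta^{(t)}.
\]
This is the decisive step, because it replaces the generally nonlinear gradient map by an affine one; implicitly it uses that the Hessian stays constant (equal to $H$ at $x$) along the whole attack trajectory, which is precisely what truncating at second order buys us. Substituting into the update $\delta^{(t+1)}=\delta^{(t)}+\alpha\,g_{x+\delta^{(t)}}$ with $\delta^{(0)}=0$ then gives the linear recurrence
\[
    \delta^{(t+1)}=(I+\alpha H)\delta^{(t)}+\alpha g.
\]

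Next I would diagonalize. Since $H$ is symmetric, $V$ is orthogonal, so writing $\delta^{(t)}=V\mathbf{c}^{(t)}$ and using $H=V\Lambda V^{T}$ together with $g=V\boldsymbol{\gamma}$, left-multiplication by $V^{T}$ decouples the recurrence into $n$ independent scalar recurrences
\[
    c_i^{(t+1)}=(1+\alpha\lambda_i)\,c_i^{(t)}+\alpha\gamma_i,\qquad c_i^{(0)}=0.
\]
Each of these I would solve by summing the resulting geometric series, $c_i^{(m)}=\alpha\gamma_i\sum_{k=0}^{m-1}(1+\alpha\lambda_i)^{k}$. When $\lambda_i\ne 0$ the common ratio differs from $1$, and the closed form gives $c_i^{(m)}=\gamma_i\,\frac{(1+\alpha\lambda_i)^{m}-1}{\lambda_i}$, which is exactly $D^{(m)}_{ii}\gamma_i$. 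Reassembling $\delta^{(m)}=V\mathbf{c}^{(m)}=V D^{(m)}\boldsymbol{\gamma}$ then yields the claimed identity.

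The only place demanding care---and the main obstacle, such as it is---is the degenerate case $\lambda_i=0$, where the common ratio equals $1$ and the geometric-series closed form is indeterminate. Here the scalar recurrence collapses to $c_i^{(t+1)}=c_i^{(t)}+\alpha\gamma_i$, whose solution is simply $c_i^{(m)}=\alpha m\,\gamma_i$, recovering $D^{(m)}_{ii}=\alpha m$ and completing the case split. Beyond this, the argument is the routine solution of a linear dynamical system, so I expect essentially all of the genuine content to reside in justifying the affine-gradient step through Assumption~\ref{assumption:taylor}; everything downstream is linear algebra and a geometric sum.
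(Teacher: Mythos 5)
Your proposal is correct and follows essentially the same route as the paper: both hinge on the affine gradient $g+H\delta^{(t)}$ supplied by Assumption~\ref{assumption:taylor}, reduce the update to a geometric sum $\alpha\sum_{t=1}^{m}(I+\alpha H)^{t-1}g$, diagonalize via $H=V\Lambda V^{T}$, and split on $\lambda_i=0$ to obtain $D^{(m)}_{ii}$. The only cosmetic difference is that the paper establishes the increment formula $\Delta x^{(t)}=\alpha(I+\alpha H)^{t-1}g$ by induction and then sums, whereas you solve the cumulative linear recurrence directly in the eigenbasis; the content is identical.
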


In fact, there are many factors that influence the adversarial perturbation in the multi-step attack, \emph{e.g.}, the step number $m$ and the step size $\alpha$. Therefore, to simplify the story, we investigate the more generic form of multi-step attacking, \emph{i.e.}, attacking for infinite steps with the infinitesimal step size. We consider all other forms of multi-step attacks as the approximation of such an infinite-step attack with the infinitesimal step size.

\begin{corollary}
(Proof in Appendix~\ref{corollaryappendix:delta}.)
\label{corollary:delta}
Let $\beta=\alpha m$ denote the total adversarial strength.
If we use the infinite-step adversarial attack with an infinitesimal step size to simplify the multi-step adversarial attack, \emph{i.e.}, $m\!\to\! \infty$, and $\alpha=\beta / m$ is infinitesimal, then the generated adversarial perturbation is given as $\delta^{(\infty)}=VD\boldsymbol{\gamma}=\sum_{i=1}^n D_{ii} \gamma_i v_i$, where if $\lambda_i\ne0$, $D_{ii}=\frac{\exp(\beta\lambda_i)-1}{\lambda_i}$; otherwise, $D_{ii}=\beta$.
\end{corollary}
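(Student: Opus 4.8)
The plan is to treat this corollary as a direct passage to the limit in the closed-form formula already established in Lemma~\ref{lemma:delta}, so essentially no new machinery is needed. First I would invoke Lemma~\ref{lemma:delta} to write the $m$-step perturbation as $\delta^{(m)} = V D^{(m)} \boldsymbol{\gamma}$, where the diagonal entries are $D^{(m)}_{ii} = \frac{(\alpha\lambda_i+1)^m - 1}{\lambda_i}$ when $\lambda_i \neq 0$ and $D^{(m)}_{ii} = \alpha m$ when $\lambda_i = 0$. The crucial structural observation is that the eigenvector matrix $V$, the eigenvalues $\lambda_i$, and the coefficient vector $\boldsymbol{\gamma}$ are all computed from $H = \nabla_x^2 \textit{Loss}(x)$ and $g = \nabla_x \textit{Loss}(x)$ at the clean input $x$; hence they are fixed quantities that do not depend on the step count $m$ or the step size $\alpha$. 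Consequently the only $m$-dependence in $\delta^{(m)}$ sits inside the diagonal matrix $D^{(m)}$, and the limit reduces to an entrywise limit: $\delta^{(\infty)} = \lim_{m\to\infty} \delta^{(m)} = V \big(\lim_{m\to\infty} D^{(m)}\big) \boldsymbol{\gamma}$.

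Next I would substitute $\alpha = \beta/m$ and evaluate the diagonal limit coordinate by coordinate. For a nonzero eigenvalue $\lambda_i$, writing $D^{(m)}_{ii} = \frac{(1 + \beta\lambda_i/m)^m - 1}{\lambda_i}$ and applying the elementary identity $\lim_{m\to\infty}(1 + c/m)^m = \exp(c)$ with $c = \beta\lambda_i$ yields $D_{ii} = \frac{\exp(\beta\lambda_i) - 1}{\lambda_i}$. For a zero eigenvalue the entry is already exact for every finite $m$, since $D^{(m)}_{ii} = \alpha m = (\beta/m)\,m = \beta$, so its limit is simply $\beta$. Assembling the two cases and using $\delta^{(\infty)} = V D \boldsymbol{\gamma} = \sum_{i=1}^n D_{ii}\gamma_i v_i$ gives exactly the claimed expression.

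I expect no serious obstacle here: the statement is a clean limit of a formula that is already proved, and the two cases are routine. The only points meriting care are bookkeeping rather than conceptual. I would make explicit that passing the limit through the fixed factors $V$ and $\boldsymbol{\gamma}$ is legitimate precisely because $D^{(m)}$ is diagonal with finitely many entries, so the matrix limit coincides with the entrywise limit and no uniformity argument is required. I would also remark, for consistency, that fixing $\beta = \alpha m$ keeps the total attack strength bounded as $m \to \infty$, so the infinite-step limit remains compatible with the second-order Taylor approximation underlying Lemma~\ref{lemma:delta}; this reassures the reader that $\delta^{(\infty)}$ is a well-defined object and not an artifact of an unbounded perturbation.
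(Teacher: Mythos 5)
Your proposal is correct and follows essentially the same route as the paper's own proof in Appendix~\ref{corollaryappendix:delta}: invoke Lemma~\ref{lemma:delta}, note that $V$ and $\boldsymbol{\gamma}$ are independent of $m$, substitute $\alpha=\beta/m$, and take the diagonal limit entrywise via $\lim_{m\to\infty}(1+\beta\lambda_i/m)^m=\exp(\beta\lambda_i)$, with the $\lambda_i=0$ entries equal to $\beta$ exactly for every finite $m$. Your closing remarks on the legitimacy of the entrywise limit and on bounded total strength are sensible bookkeeping that the paper leaves implicit.
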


\textbf{Compared with the single-step attack, the multi-step attack tends to emphasize gradient components in $g$ towards top-ranked eigenvectors of the Hessian matrix.}
In order to obtain such a conclusion, we first need to ensure the single-step attack and the multi-step attack to have similar attacking strength, which ensures the fairness of comparing the single-step attack and the multi-step attack. Therefore, we set the step size $\eta$ of the single-step attack as the accumulated step size of all the $m$ steps in the multi-step attack, $\eta=\beta=\alpha m$. Thus, the adversarial perturbation generated by the single-step attack is given as follows.
\begin{equation}
    \delta^{(\text{single})}=\eta g=\beta\nabla_x Loss(x)
    \label{eq: delta_single}
\end{equation}

In this way, Lemma~\ref{lemma:delta} and Corollary~\ref{corollary:delta} show that unlike the single-step attack with the adversarial perturbation $\beta g$, the multi-step attack usually emphasizes gradient components in $g$ towards top-ranked eigenvectors of the Hessian matrix and restrains gradient components in $g$ along  low-ranked eigenvectors of the Hessian matrix, when it generates adversarial perturbations.

\textbf{The $\ell_2$ attack and the $\ell_\infty$ attack with normalized gradients.}
In fact, the closed-form solution in Corollary~\ref{corollary:delta} can also be used to analyze the $\ell_2$ PGD attack and the $\ell_\infty$ PGD attack~\cite{pgd2018}.
The PGD Attack  normalizes gradients by using $\ell_2$-norm or the sign function, as shown in Eq.~\eqref{eq: delta with norm}.
We can still roughly approximate such a PGD Attack using infinite-step attack with the infinitesimal step size, as long as the step size for the $t$-th step $\alpha_t$ in the PGD Attack is small enough. In this case, when the step size is infinitesimal, we can prove that the $\ell_2$ attack usually converges to the infinite-step attack.
For the $\ell_\infty$ attack, we can approximately consider that for a complex nonlinear DNN in real applications, the sign function multiplied with a tiny step size is an approximation for its gradient, because the small gradients are usually unstable.

\textbf{Closed-form solution to interactions.}
In fact, we study the sum of interactions between all pairs of perturbation units. Theoretically, the sum of interactions $\sum_{a,b\in\Omega} I_{ab}(\delta)$ can be computed as follows.
\begin{equation}
    {\sum}_{a, b\in\Omega} I_{a b}(\delta) = {\sum}_{a\in\Omega} \left[v(\Omega)-v(\Omega\setminus \{a\}) - v(\{a\}) + v(\emptyset)\right]
    \label{eq:expect}
\end{equation} Please see Appendix~\ref{append:sum of interaction} for the proof.

\begin{lemma}
(\textbf{The closed-form solution to interactions,} proof in Appendix~\ref{append:interaction} .)
\label{lemma:interaction}
For the adversarial perturbation $\delta$, the sum of pairwise interactions between perturbation units is\footnote{We prove that the interaction between two input units $(a, b)$ can be approximated as $I_{a b}=\delta_a H_{ab} \delta_b$. Accordingly, we can extend the definition of the interaction to measure the interactive influence a unit $a$ on itself as $I_{aa}= \delta_a H_{aa} \delta_a$. In other words, if we can double the perturbation on $a$, $x'_a=x_a+2\delta_a$, then the absence of the original perturbation $\delta_a$ can further increase the importance of the additional perturbation on the unit by $I_{aa}$. Please see the supplementary material for details.} $\sum_{a,b\in \Omega} I_{a b}(\delta)=\delta^T H \delta$.
Moreover, according to Corollary~\ref{corollary:delta}, if the adversarial perturbation  is given as the result of the infinite-step adversarial
attack with an infinitesimal step size $\delta^{(\infty)}$, then the sum of pairwise interactions inside is given as $\sum_{a,b\in \Omega} I_{ab}(\delta^{(\infty)})=\sum_{i=1}^n \lambda_{i}(D_{ii}\gamma_{i})^2$.
\end{lemma}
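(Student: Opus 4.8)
The plan is to prove the two claimed identities in turn, relying on the second-order Taylor approximation of $\textit{Loss}(\cdot)$ granted by Assumption~\ref{assumption:taylor} together with the closed-form solution $\delta^{(\infty)}=VD\boldsymbol{\gamma}$ of Corollary~\ref{corollary:delta}. Throughout, I would write the value function under the Taylor expansion as $v(S)=\textit{Loss}(x+\delta^{(S)})-\textit{Loss}(x)=g^{T}\delta^{(S)}+\tfrac12(\delta^{(S)})^{T}H\delta^{(S)}$, since the constant term cancels in the definition of $v(S)$ and all terms beyond the Hessian are discarded under the assumption.

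For the first identity, I would start from the given expression in Eq.~\eqref{eq:expect} and substitute the quadratic form above into each of the four value functions $v(\Omega)$, $v(\Omega\setminus\{a\})$, $v(\{a\})$ and $v(\emptyset)$. Writing $\delta=\delta^{(\Omega\setminus\{a\})}+\delta_{a}\mathbf{e}_{a}$, the linear (gradient) contributions cancel, while restoring the $a$-th coordinate in the quadratic part produces cross terms that reduce the bracket $v(\Omega)-v(\Omega\setminus\{a\})-v(\{a\})+v(\emptyset)$ exactly to $\delta_{a}\sum_{b\ne a}H_{ab}\delta_{b}$. Summing over $a\in\Omega$ then yields the off-diagonal quadratic form $\sum_{a\ne b}\delta_{a}H_{ab}\delta_{b}$. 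To obtain the full quadratic form $\delta^{T}H\delta$, I would invoke the self-interaction convention $I_{aa}=\delta_{a}H_{aa}\delta_{a}$ introduced in the footnote, which supplies precisely the diagonal contribution $\sum_{a}H_{aa}\delta_{a}^{2}$; adding it gives $\sum_{a,b\in\Omega}I_{ab}(\delta)=\delta^{T}H\delta$. An equivalent and perhaps cleaner route is to compute the per-pair quantity directly from Eq.~\eqref{eq:interaction-2}: under the same Taylor expansion one verifies that $\Delta v(a,b|S)=\delta_{a}H_{ab}\delta_{b}$ is independent of the context $S$, whence $I_{ab}=\delta_{a}H_{ab}\delta_{b}$ and the summation is immediate.

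For the second identity, I would substitute the closed-form perturbation $\delta^{(\infty)}=VD\boldsymbol{\gamma}$ into $\delta^{T}H\delta$ and diagonalize. Using the eigen-decomposition $H=V\Lambda V^{T}$, the orthonormality $V^{T}V=I$ of the eigenvectors of the symmetric Hessian, and the fact that $D$ and $\Lambda$ are diagonal (so $D^{T}=D$), we get $(VD\boldsymbol{\gamma})^{T}H(VD\boldsymbol{\gamma})=\boldsymbol{\gamma}^{T}D\Lambda D\boldsymbol{\gamma}=\sum_{i=1}^{n}\lambda_{i}(D_{ii}\gamma_{i})^{2}$, which is the claimed expression.

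I expect the main obstacle to lie not in the algebra but in the bookkeeping that reconciles Eq.~\eqref{eq:expect} — which accounts only for distinct pairs $a\ne b$ — with the full quadratic form $\delta^{T}H\delta$, since the diagonal self-interactions must be supplied separately through the convention $I_{aa}=\delta_{a}H_{aa}\delta_{a}$. The second delicate point is that every equality here is exact only to second order: it rests entirely on Assumption~\ref{assumption:taylor}, so I would be careful to state explicitly that terms beyond the Hessian are dropped, rather than claiming an exact identity for a general nonlinear loss.
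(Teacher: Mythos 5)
Your proposal is correct and, in its ``cleaner route,'' coincides with the paper's own proof: Appendix~\ref{append:interaction} likewise shows that under Assumption~\ref{assumption:taylor} the context-dependent increment reduces to $\Delta v(a,b|S)=\delta_a H_{ab}\delta_b$ independently of $S$, so $I_{ab}=\delta_a H_{ab}\delta_b$, extends this to the diagonal via the convention $I_{aa}=\delta_a H_{aa}\delta_a$, and then sums and diagonalizes with $H=V\Lambda V^T$ exactly as you do. Your alternative bookkeeping through Eq.~\eqref{eq:expect} is a valid minor variant of the same computation, and your caveat that the identities hold only to second order is exactly the caveat the paper itself relies on.
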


\begin{corollary}
(Proof in Appendix~\ref{appendix:positive-semi-definite}.)
\label{corollary:semi-definite}
According to \cite{yao2018hessian}, if the loss function $Loss(x)$ for attacking is formulated as a cross entropy upon the softmax function or a sigmoid function, then the Hessian matrix $H$ is semi-definite. Therefore, $\sum_{a,b\in\Omega} I_{a b} \ge 0$.
\end{corollary}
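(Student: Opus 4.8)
The plan is to reduce the inequality to a quadratic-form nonnegativity statement and then invoke the positive semi-definiteness of the Hessian. By Lemma~\ref{lemma:interaction} I already have the closed-form identity $\sum_{a,b\in\Omega} I_{ab}(\delta)=\delta^T H \delta$, so proving $\sum_{a,b\in\Omega} I_{ab}\ge 0$ is \emph{exactly} the claim that the quadratic form $\delta^T H \delta$ is nonnegative. Hence the entire corollary collapses to establishing that $H=\nabla_x^2 \textit{Loss}(x)$ is positive semi-definite under the stated loss functions; once $H\succeq 0$, nonnegativity of $\delta^T H \delta$ holds for every vector, in particular for the adversarial perturbation.

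Second, to obtain this PSD property I would follow the argument of \cite{yao2018hessian}. The idea is to split the input Hessian via the Gauss--Newton decomposition $H = J^T H_{\mathrm{logit}} J + R$, where $J=\nabla_x h(x)$ is the Jacobian of the logits with respect to the input, $H_{\mathrm{logit}}$ is the Hessian of the loss with respect to the logits $h$, and $R$ collects the residual terms weighted by $\partial \textit{Loss}/\partial h_k$. For the cross-entropy-on-softmax loss one computes $H_{\mathrm{logit}}=\mathrm{diag}(p)-pp^T$ with $p=\mathrm{softmax}(h(x))$, and for any vector $u$ this gives $u^T H_{\mathrm{logit}} u = \sum_i p_i u_i^2 - (\sum_i p_i u_i)^2 = \mathrm{Var}_p(u)\ge 0$; for the sigmoid loss the analogous scalar second derivative $\sigma(h)(1-\sigma(h))>0$ plays the same role. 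Thus $H_{\mathrm{logit}}\succeq 0$ and therefore $J^T H_{\mathrm{logit}} J\succeq 0$, which is the dominant Gauss--Newton term of \cite{yao2018hessian}. Concluding $H\succeq 0$ then yields $\delta^T H \delta \ge 0$ and the corollary.

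Finally, as an independent check I would reproduce the inequality directly from the spectral form in Lemma~\ref{lemma:interaction}. For the infinite-step perturbation of Corollary~\ref{corollary:delta}, the sum of interactions equals $\sum_{i=1}^n \lambda_i (D_{ii}\gamma_i)^2$. Positive semi-definiteness of $H$ means every eigenvalue satisfies $\lambda_i\ge 0$, while $(D_{ii}\gamma_i)^2\ge 0$ trivially, so each summand is nonnegative and hence so is the total, recovering $\sum_{a,b\in\Omega} I_{ab}(\delta^{(\infty)})\ge 0$ term by term. I expect the genuine obstacle to lie entirely in justifying $H\succeq 0$: the Gauss--Newton term $J^T H_{\mathrm{logit}} J$ is unconditionally PSD, but the residual $R$ (which involves the curvature of the network outputs and can carry either sign) is not, so the rigor of the statement rests on the regime analyzed in \cite{yao2018hessian} in which this residual is negligible or absorbed, rather than on any calculation I would carry out here.
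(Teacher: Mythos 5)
Your reduction is exactly the paper's: Lemma~\ref{lemma:interaction} gives $\sum_{a,b\in\Omega} I_{ab}(\delta)=\delta^{T}H\delta$, so everything rests on $H\succeq 0$, and the paper likewise factors $H$ through the logit Hessian as $H=\tilde{W}_x\,(\nabla_z^2\textit{Loss})\,\tilde{W}_x^{T}$ with $\nabla_z^2\textit{Loss}=\mathrm{diag}(p)-pp^{T}$. Two points of genuine divergence are worth noting. First, your argument that $u^{T}(\mathrm{diag}(p)-pp^{T})u=\sum_i p_i u_i^2-\bigl(\sum_i p_i u_i\bigr)^2=\mathrm{Var}_p(u)\ge 0$ is cleaner and more direct than the paper's route, which instead bounds the eigenvalues of $\mathrm{diag}(p)-pp^{T}$ via the Gershgorin circle theorem; the variance identity gives the PSD claim in one line and with no eigenvalue estimates. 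Second, you treat the Gauss--Newton residual $R$ as the unresolved obstacle and defer its negligibility to \cite{yao2018hessian}, but the paper does not need that regime argument: its Assumption~\ref{assumption:activation} fixes the ReLU gating states, so the network is treated as piecewise linear ($z\approx\tilde{W}_x^{T}x+b$), the Jacobian $\partial z/\partial x^{T}$ is constant, and the residual term in the chain-rule expansion of $\nabla_x^2\textit{Loss}$ vanishes identically (this is exactly the $\underbrace{\cdots}_{=0}$ step in the paper's Lemma on the Hessian closed form). You should invoke that assumption rather than leaving the residual's sign as an open issue --- within the paper's framework the decomposition is exact, not approximate. With that substitution your proof is complete and, if anything, slightly tighter than the original on the PSD step.
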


\begin{theorem}
(\textbf{Multi-step attack generates perturbations with larger interactions than the single-step attack,} proof in Appendix~\ref{append:single vs multi}.)
\label{theorem: multi-step have large interactions}
We consider the multi-step attack with infinite steps, \emph{i.e.,} $\delta^{(\text{multi})}=\delta^{(\infty)}$ defined in Corollary~\ref{corollary:delta}, and the single-step attack defined in Eq.~\eqref{eq: delta_single} and Eq.~\eqref{eq: same norm}.
Then, we have $\sum_{a, b\in\Omega}[{I_{a b}(\delta^{(\text{multi})})}]\ge\sum_{a, b\in\Omega}[{I_{a b}(\delta^{(\text{single})})}]$.
\end{theorem}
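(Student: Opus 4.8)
The plan is to push everything into the eigenbasis of the Hessian $H$, where both the closed-form perturbations and the interaction sum diagonalize, and then reduce the inequality to a scalar comparison in each eigendirection. First I would invoke Lemma~\ref{lemma:interaction} to write $\sum_{a,b} I_{ab}(\delta)=\delta^{T}H\delta$ for both perturbations. Writing $g=\sum_i \gamma_i v_i$, the single-step perturbation of Eq.~\eqref{eq: delta_single} is $\delta^{(\text{single})}=\beta g=\sum_i \beta\gamma_i v_i$, so that $(\delta^{(\text{single})})^{T}H\delta^{(\text{single})}=\sum_i \lambda_i(\beta\gamma_i)^2$. The infinite-step perturbation from Corollary~\ref{corollary:delta} is $\delta^{(\infty)}=\sum_i D_{ii}\gamma_i v_i$, and Lemma~\ref{lemma:interaction} already records $\sum_{a,b} I_{ab}(\delta^{(\infty)})=\sum_i \lambda_i(D_{ii}\gamma_i)^2$. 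The claim therefore becomes $\sum_i \lambda_i\gamma_i^2(D_{ii}^2-\beta^2)\ge 0$.

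Next I would establish this inequality termwise. By Corollary~\ref{corollary:semi-definite} the Hessian is positive semi-definite, so every $\lambda_i\ge 0$ and every prefactor $\lambda_i\gamma_i^2$ is nonnegative; it then suffices to show $D_{ii}^2\ge\beta^2$ in each direction. For $\lambda_i=0$ the term vanishes identically, so I may assume $\lambda_i>0$, where $D_{ii}=(e^{\beta\lambda_i}-1)/\lambda_i>0$. The desired bound $D_{ii}\ge\beta$ is exactly $e^{\beta\lambda_i}-1\ge\beta\lambda_i$, which is the elementary convexity inequality $e^{t}\ge 1+t$ evaluated at $t=\beta\lambda_i\ge 0$. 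Squaring (legitimate since both sides are nonnegative) gives $D_{ii}^2\ge\beta^2$, so every summand is nonnegative and the theorem follows.

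The main thing to get right is the normalization encoded in Eq.~\eqref{eq: same norm} that puts the single- and multi-step attacks on equal footing. If this merely fixes the accumulated step budget $\eta=\beta=\alpha m$, the termwise argument above is complete. If instead it equalizes the $\ell_2$ norms $\|\delta^{(\text{single})}\|=\|\delta^{(\infty)}\|$, then after rescaling $g$ the comparison is no longer termwise, and I would instead argue via a covariance (Chebyshev sum) inequality: writing normalized weights $p_i\propto\gamma_i^2$ and $q_i\propto D_{ii}^2\gamma_i^2$, one has $\mathbb{E}_q[\lambda]-\mathbb{E}_p[\lambda]=\mathrm{Cov}_p(\lambda,D^2)/\mathbb{E}_p[D^2]\ge 0$, because $D_{ii}^2$ is a nondecreasing function of $\lambda_i$ on $[0,\infty)$ and hence positively correlated with $\lambda_i$ under any weighting. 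Either way, the monotonicity of $D_{ii}$ in $\lambda_i$---the fact that the infinite-step solution amplifies high-curvature eigendirections---is the structural heart of the proof, while the positive semi-definiteness of $H$ is what prevents low-curvature or would-be negative directions from reversing the sign. I expect the only real care to be needed at the $\lambda_i=0$ boundary and in confirming which normalization is in force.
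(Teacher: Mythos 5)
Your proof is correct and its overall skeleton matches the paper's: Appendix~\ref{append:single vs multi} likewise splits into the two normalizations, handles the equal-step-budget case ($\eta=\beta$) by the termwise factorization $\sum_i\lambda_i\gamma_i^2(D_{ii}-\beta)(D_{ii}+\beta)\ge 0$ with each factor shown nonnegative (via the Taylor series of $\exp$, equivalent to your $e^{t}\ge 1+t$), and handles the equal-$\ell_2$-norm case by Chebyshev's sum inequality. The one genuine difference is in that second case: the paper applies Chebyshev's sum inequality to the sequences $\lambda_i$ and $\gamma_i^2(D_{ii}^2-\eta^2)$, which requires these to be similarly ordered, and the paper explicitly concedes it ``cannot fully prove'' this ordering and only argues it holds roughly because $D_{ii}^2$ grows exponentially in $\lambda_i$. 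Your covariance formulation $\mathbb{E}_q[\lambda]-\mathbb{E}_p[\lambda]=\mathrm{Cov}_p(\lambda,D^2)/\mathbb{E}_p[D^2]$ with $p_i\propto\gamma_i^2$ sidesteps this entirely: since $\lambda\mapsto\lambda$ and $\lambda\mapsto D(\lambda)^2$ are comonotone functions of the single scalar $\lambda_i$, their covariance under any weighting is nonnegative, with no assumption on how the $\gamma_i^2$ are distributed across eigendirections. So your version of Perspective~2 is strictly more rigorous than the paper's and closes a gap the authors themselves flag; it would be worth stating that the equal-norm identity $\|\delta^{(\text{multi})}\|_2=\|\delta^{(\text{single})}\|_2$ is exactly what converts the normalized comparison $\mathbb{E}_q[\lambda]\ge\mathbb{E}_p[\lambda]$ back into the unnormalized claim about $\delta^{T}H\delta$.
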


\textbf{Compared with the single-step attack, the multi-step attack generates perturbations with larger interactions.} Theorem~\ref{theorem: multi-step have large interactions} enables us to compare adversarial perturbations between the single-step and the multi-step attack.
Note that we need to ensure the fairness of the comparison between the single-step attack and the multi-step attack.
Here, there are two ways to ensure the fairness.

$\bullet$
We set the accumulated attacking strength of the single-step attacks to be the same as that of the multi-step attack. That is $\delta^{(\text{single})}=\beta g$, just like in Eq.~\eqref{eq: delta_single}.

$\bullet$
Alternatively, we can also assume that the norm of the perturbation generated by the single-step attack is the same as that in the multi-step attack, as follows.
\begin{equation}
    \Vert\delta^{(\text{single})}\Vert_2=\Vert\delta^{(\text{multi})}\Vert_2
    \label{eq: same norm}
\end{equation}
Then, Theorem~\ref{theorem: multi-step have large interactions} theoretically proves that the multi-step attack usually generates perturbations with larger interactions than the single-step attack.

\textbf{Discussion on the transferability:} The above theorem shows that adversarial perturbations generated by the multi-step attack exhibit larger interactions than those generated by the single-step attack.
In addition, Xie~\emph{et al.}~\cite{xie2019improving} found that the perturbations generated by the  multi-step attack were usually less transferable than those generated by the signle-step attack.
An intuitive explanation for this phenomenon is that the multi-step attack is more likely to over-fit the source DNN, thereby yielding a low transferability.
Therefore, we propose the following hypothesis.
\begin{hypothesis}
The adversarial transferability and interactions of adversarial perturbations are negatively correlated.
\label{hypo: correlation}
\end{hypothesis}

\subsection{Understanding negative correlation between adversarial transferability and interactions}
\label{subsec:proof for negative correlation}

In Section~\ref{subsec:multi-step vs single-step}, we prove that the multi-step attack generates perturbations with larger interactions than the signle-step attack. Besides, \citet{xie2019improving} found that the multi-step attack generates perturbations with higher transferability than the single-step attack.
Therefore, we propose Hypothesis~\ref{hypo: correlation} that the adversarial transferability and interactions of adversarial perturbations are negatively correlated.
In this subsection, we aim to theoretically prove this correlation between adversarial transferability and interactions.
To this end, the basic idea consists of two parts. First, we prove that a low interaction between perturbation units is usually owing to the phenomenon that the DNN allocates relatively balanced attention to all categories (except for the ground-truth category). Second, we can consider that the adversarial transferability can be naturally explained by such balanced attention, to some extent.

\begin{theorem} \label{theorem:balance}
(Proof in Appendix~\ref{append:balance}.)
Under Assumption~\ref{assum:ortho-weights},  if the prediction scores on other categories are more balanced (following Definition~\ref{def:balance}), then the sum of pairwise interactions $\sum_{a, b\in\Omega}[{I_{a b}(\delta^{(\text{single})})}]$ will be smaller.
\end{theorem}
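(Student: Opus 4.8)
The plan is to collapse the entire statement onto a single scalar quantity and then run a majorization argument. First I would invoke Lemma~\ref{lemma:interaction}, which gives $\sum_{a,b\in\Omega} I_{ab}(\delta)=\delta^{\top}H\delta$. Since for the single-step attack Eq.~\eqref{eq: delta_single} fixes $\delta^{(\text{single})}=\beta g$, this immediately yields $\sum_{a,b\in\Omega} I_{ab}(\delta^{(\text{single})})=\beta^{2}\,g^{\top}Hg$, so it suffices to show that $g^{\top}Hg$ shrinks as the off-target prediction scores become more balanced.

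Next I would use Assumption~\ref{assum:ortho-weights} to push $g^{\top}Hg$ from input space into logit/probability space. Writing the logits $h(x)$ as locally linear in $x$ through the last-layer weight matrix $W$, the cross-entropy gradient and Hessian with respect to the logits are $p-e_{y}$ and $\mathrm{diag}(p)-pp^{\top}$, where $p=\mathrm{softmax}(h(x))$ and $e_{y}$ is the one-hot label. The orthogonality of the weight rows ($WW^{\top}=I$) collapses the input-space form onto the logit space, giving the scalar $g^{\top}Hg=(p-e_{y})^{\top}\bigl(\mathrm{diag}(p)-pp^{\top}\bigr)(p-e_{y})$, which depends only on the probability vector $p$. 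Writing $q=p-e_{y}$ (so $\sum_{c}q_{c}=0$) and expanding, I would reduce the target to $g^{\top}Hg=p_{y}(p_{y}-1)^{2}+\sum_{c\ne y}p_{c}^{3}-\bigl(\sum_{c}p_{c}^{2}-p_{y}\bigr)^{2}$. In the comparison of Definition~\ref{def:balance} the true-class probability $p_{y}$, and hence the off-target mass $\sum_{c\ne y}p_{c}=1-p_{y}$, are held fixed, so the first term is constant and the question becomes whether the remaining expression is Schur-convex in the off-target coordinates $(p_{c})_{c\ne y}$. A direct differentiation gives the Schur condition $(p_{i}-p_{j})(\partial_{i}-\partial_{j})=(p_{i}-p_{j})^{2}\bigl[3(p_{i}+p_{j})-4(\sum_{c}p_{c}^{2}-p_{y})\bigr]$, so any equalizing transfer strictly decreases $g^{\top}Hg$ as soon as $3(p_{i}+p_{j})\ge 4(\sum_{c}p_{c}^{2}-p_{y})$.

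The hard part will be controlling the coupling term $-(\sum_{c}p_{c}^{2}-p_{y})^{2}$ arising from $pp^{\top}$, whose dependence on balance is not monotone a priori and could offset the clean decrease of the Schur-convex piece $\sum_{c\ne y}p_{c}^{3}$. I expect to tame it by observing that at the correctly-classified clean input $x$ the ground-truth score is the largest, $p_{y}=\max_{c}p_{c}$, whence $\sum_{c}p_{c}^{2}\le p_{y}\sum_{c}p_{c}=p_{y}$; this makes $\sum_{c}p_{c}^{2}-p_{y}\le 0$, so the bracket above is bounded below by $3(p_{i}+p_{j})\ge 0$ and Schur-convexity holds unconditionally in this regime. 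The conclusion then follows because the balanced vector is majorized by any less-balanced one, so an equalizing transfer lowers $g^{\top}Hg$ and therefore the interaction sum. If the cross term proves intractable in full generality, I would fall back on the Gauss-Newton approximation $H\approx W^{\top}\mathrm{diag}(p)W$, under which $g^{\top}Hg=p_{y}(p_{y}-1)^{2}+\sum_{c\ne y}p_{c}^{3}$ and the monotonicity is immediate from the Schur-convexity of $\sum_{c\ne y}p_{c}^{3}$.
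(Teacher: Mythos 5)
Your proposal is correct, and it follows the paper's reduction exactly up to the key scalar: via Lemma~\ref{lemma:interaction} and Assumption~\ref{assum:ortho-weights} you arrive at the same quantity $\sum_{a,b}I_{ab}(\delta^{(\text{single})})\propto \sum_{c\ne y}p_c^3+p_y(p_y-1)^2-\bigl(\sum_c p_c^2-p_y\bigr)^2$ that appears in the paper's Eq.~\eqref{eq:p-interction}. Where you genuinely diverge is the final monotonicity step. The paper forms the difference $f(p)-f(q)$ explicitly, factors it into three terms, and bounds each with Chebyshev's sum inequality (Lemma~\ref{lemma:csinequality}) plus the bound $\sum_{c\ne y}p_c^2+p_y(p_y-1)\le 0$ coming from $p_y\ge 0.5$. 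You instead verify the Schur--Ostrowski condition $(p_i-p_j)(\partial_i f-\partial_j f)=(p_i-p_j)^2\bigl[3(p_i+p_j)-4S\bigr]\ge 0$ on the slice where $p_y$ is fixed, using $S=\sum_c p_c^2-p_y\le 0$ (the same sign fact as the paper's Term~(III)), and conclude by majorization. Both are valid; your route is more conceptual and immediately extends to any pair comparable in the majorization order, while the paper's is elementary and self-contained. The one step you must make explicit is that Definition~\ref{def:balance} actually yields majorization of the sorted off-target probabilities: the gap condition $q_{j_k}-q_{j_{k+1}}\le p_{i_k}-p_{i_{k+1}}$ makes the differences $p_{i_k}-q_{j_k}$ non-increasing in $k$, and since they sum to zero their left partial sums are non-negative, which is exactly $q\prec p$. (This is the same observation the paper records as Eq.~\eqref{eq:p-q-order} before invoking Chebyshev, so it is a one-line fill, not a flaw.) Your Gauss--Newton fallback is unnecessary: the cross term is tamed unconditionally on the relevant domain because $p_y\ge 0.5$ forces $S\le 0$ everywhere on the slice, not just at the given $p$.
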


\textbf{Conclusion 1: negative relationship between the balanced prediction and the interaction.} Theorem~\ref{theorem:balance} shows the negative relationship between the balance of predictions and the interaction between perturbation units.

\textbf{Two assumptions for adversarial transferability.}
In fact, the mathematical analysis of adversarial transferability is an ill-defined problem. It is because essentially, the adversarial transferability depends on the difference between the source DNN and the target DNN, and we do not have any prior knowledge about such a difference.
However, according to the long-term experimental observation, we can still study the transferability based on the following two assumptions.

$\bullet$
For a specific category, the attention or visual concepts encoded in different DNNs have some similarities. Otherwise, we have no way to discuss the adversarial transferability.

$\bullet$
Although different DNNs are all trained and all assign the highest inference score to the ground-truth category, the second-largest and third-best categories in different DNNs are usually different.
Such a difference in the second-largest and third-best categories significantly hurts the adversarial transferability.
For example, if the ground-truth category and the second-largest category in a DNN are the cat and the dog, respectively. Then, the attacking method will generate adversarial perturbation with a clear direction, \emph{i.e.}, weakening the cat category and strengthening the prediction on the dog category.
If the second-largest category of the target DNN is not the dog, then the transferability of the above adversarial perturbation on this DNN will be significantly hurt.

\textbf{Conclusion 2: positive relationship between the balanced prediction and the adversarial transferability.}
The balanced predictions for the second-largest categories will boost the adversarial transferability.
In the above example, if the target DNN considers the horse as the second-largest category, then the transferability of the perturbation is low on this target DNN.
In comparison, given a DNN that generates balanced probabilities for different categories (except for the ground-truth category), the attacking does not have a clear direction towards any specific category. In other words, the attacking mainly weakens the prediction on the ground-truth category, and the attacking method strengthens other categories in a more balanced manner.
Thus, such perturbations exhibit a high transferability.

Therefore, the above two conclusions indicate \textbf{the negative correlation between interactions and adversarial transferability} under the two assumptions about the adversarial transferability.
Note that the balanced predictions here do not means that all other categories have the same prediction score.
Instead, we only focus on all categories that are similar to the ground-truth category, and consider whether the prediction on such categories are balanced or not. Otherwise, the exact uniform classification probability over all categories will lead to an over-complicated feature representation with a low transferability.
In sum, the balanced prediction is also an intuitive explanation for the adversarial transferability.

\begin{figure*}[t]
    \centering
    \includegraphics[width=1.0\linewidth]{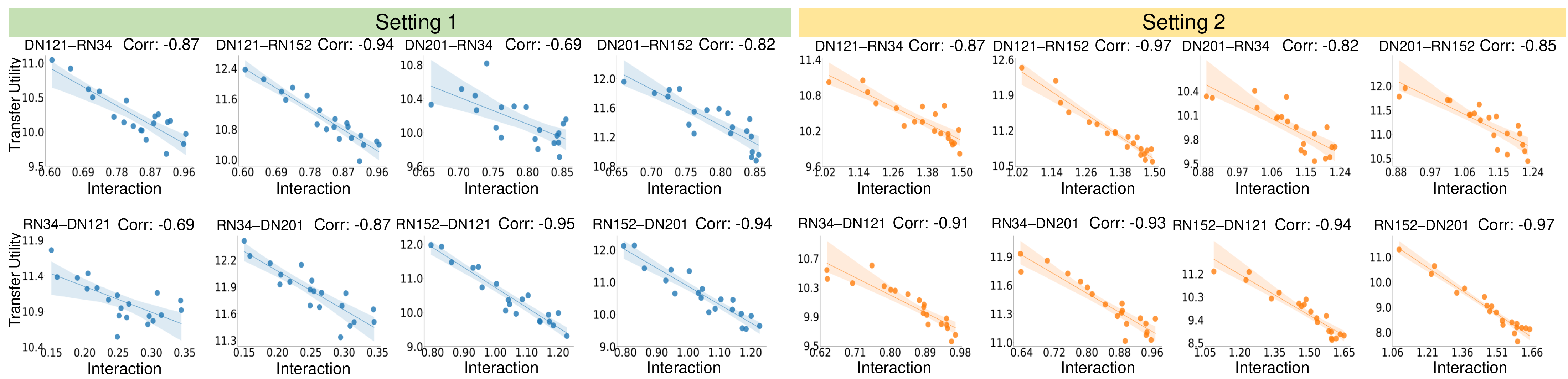}
    \vspace{-15pt}
    \caption{The negative correlation between the transfer utility and the interaction{, which is measured by} the Pearson correlation. The blue shade in each subfigure represents the 95\% confidence interval of the linear regression.}
    \label{fig:relation}
\end{figure*}

\textbf{Empirical verification of the negative correlation between adversarial transferability.}
To this end, we first generated the adversarial example $x'=x+\delta$ on the source DNN.
Then, given a target DNN $h^{(t)}$, the transfer utility of $\delta$ was measured by $\textrm{\textit{Transfer Utility}}=[\max_{y^\prime\ne y}h^{(t)}_{y^\prime}(x') - h^{(t)}_y(x')] - [\max_{y^\prime\ne y}h^{(t)}_{y^\prime}(x) - h^{(t)}_y(x)]$.
The transfer utility measures the change of the target DNN's prediction gap between the ground-truth category and the second-largest category caused by the adversarial perturbation.
The average interaction was measured by $\textit{Interaction}=\frac{1}{n(n-1)} \sum_{a,b\in\Omega}I_{ab}(\delta)$ following Eq.~\eqref{eq:expect}, which was computed on the source DNN.

Specifically, we generated adversarial perturbations on four DNNs, including ResNet-34/152(RN-34/152)~\cite{he2016deep} and DenseNet-121/201(DN-121/201)~\cite{huang2017densely}.
We randomly sampled 50 images from the validation set of the ImageNet dataset~\cite{imagenet2015}.
Given an input image $x$, we generated adversarial perturbations by solving the relaxed form of Eq.~\eqref{eq:attack} via gradient descent, \emph{i.e.} $\min_\delta -Loss(x+\delta) + c\cdot \|\delta\|^p_p \; \text{s.t.} \; x+\delta\in[0, 1]^n$, where $c\in\mathbb{R}$ is a scalar.
In this way, we  gradually changed the value of $c$ and set different values of $p$\footnote{We set $p=2$ as the setting 1, and $p=5$ as the setting 2. To this end, the performance of adversarial perturbations is not the key issue in the experiment. Instead, we just randomly set the $p$ value to examine the trustworthiness of the negative correlation under various attacking conditions (even in extreme attacking conditions).} as different hyper-parameters to generate different adversarial perturbations on each input image.

In this way, we could compare different adversarial perturbations generated on the same image, and explored the relationship between their adversarial transferability and interactions.
For fair comparisons between different adversarial perturbations, we selected a constant $\tau$ and took $\|\delta\|_2=\tau$ as the stopping criteria of all adversarial attacks.
Given each input image, we transferred its adversarial examples generated on each ResNet to DenseNets, and transferred its adversarial examples generated on each DenseNet to ResNets.
As shown in Figure~\ref{fig:relation}, there was a strong negative correlation between the transfer utility and the interaction.
Each subfigure was obtained from a specific pair of source DNN and target DNN.
Each point in the subfigures shows the average transfer utility and the average interaction of adversarial perturbations, which were computed on adversarial perturbations on all testing images \emph{w.r.t.} specific hyper-parameters $c$ and $p$.

\section{Unifying twelve previous transferability-boosting attacks}\label{sec:explaining}

In this section, we prove that the decrease of interactions between perturbation units is a common mechanism shared by many classic transferability-boosting methods, although they were not originally designed to do so.
Before proving the common mechanism of previous methods, we first clarify some assumptions used in the proof.

(1) We assume that the source DNN is learned for the classification task, and the classification loss is formulated as the cross-entropy loss. This assumption is used in proofs for all transferability-boosting methods below.

(2) In order to simplify the analysis, we use the second-order Taylor expansion to approximate the loss function
$Loss(\cdot)$, where we ignore terms of higher than the second order.
This assumption is mathematically formulated in Assumption~\ref{assumption:taylor}, and used in proofs for all the following transferability-boosting methods.

(3) Since the DNN is highly non-linear, the change of the activation states is unpredictable for analysis.
To simplify our analysis, we assume that the adversarial perturbation does not significantly change gating states in each activation layer.
This assumption is mathematically formulated in Assumption~\ref{assumption:activation}, and used in proofs for all the following transferability-boosting methods.

(4) We notice that almost all attacking methods use the clip or sign operations to constrain the magnitude of perturbations, so such operations are not the essential difference between different methods.
Therefore, we usually ignore the clip and sign operations in all transferability-boosting methods, as well as the classic multi-step attacking method. In this way, we compare such methods all without the clip operation and the sign operation, in order to prove that all transferability-boosting methods reduce the interaction between perturbation units.

\textbullet{
\textbf{M1: MI Attack}~\cite{mim} incorporats the momentum of gradients in attacking to boost the adversarial transferability.
The gradient used in step $t$ of the MI Attack is computed as $g_{mi}^{(t)}= g_{mi}^{(t-1)}+\nabla_x Loss\left(x+\delta^{(t-1)}\right)/\left\|\nabla_x Loss\left(x+\delta^{(t-1)}\right)\right\|_{1}$, where $g_{mi}^{(t)}$ is the accumulated gradient.

To simplify the analysis, we ignore the normalization for $\nabla_x Loss(x+\delta^{(t-1)})$.
Note that $g_{mi}^{(t)}$ is the accumulated gradient.
The MI Attack is very complicated. Therefore, we first prove that Eq.~\eqref{eq:approximate MI} can approximate the MI Attack. Based on the formulation in Eq.~\eqref{eq:approximate MI}, we prove that the MI Attack generates perturbations with smaller interactions than the multi-step attack.
\begin{equation}
   g_{mi}^{(t)}=\mu g_{mi}^{(t-1)}+(1-\mu) \nabla_x Loss(x+\delta^{(t-1)}),\; \text{where}\; \mu=(t-1)/t
   \label{eq:approximate MI}
\end{equation}

Besides, a another important issue is that we need to make sure the MI Attack and the multi-step attack to have the same attacking strength. To this end, according to Eq.~\eqref{eq: attack}, $\beta=\alpha m$ measures the accumulated step size, which can reflect the attacking strength of the multi-step attack.
Therefore, for the MI Attack, we also require its accumulated attacking step size to be the same $\beta$ value.
Then, we prove the following proposition, which shows that perturbations generated by the MI Attack exhibit smaller interactions than perturbations generated by the multi-step attack.
}

\begin{proposition} \label{pro:mi}
(Perturbations of the MI Attack exhibit smaller interactions than perturbations of the multi-step attack, proof in {Appendix}~\ref{append:mi}.)
According to Eq.~\eqref{eq: attack}, the accumulated step size $\beta=\alpha m$ reflects the attacking strength of the multi-step attack. Similarly, we set the MI Attack to have the same accumulated step size for a fair comparison.
Then, if the classification loss for attacking is formulated as the cross-entropy loss,  we have $\sum_{a, b\in\Omega}[{I_{a b}(\delta^{(\text{mi})})}]\le\sum_{a, b\in\Omega}[{I_{a b}(\delta^{(\text{multi})})}]$.
\label{proposition2}
\end{proposition}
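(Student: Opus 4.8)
The plan is to reduce both sides of the claimed inequality to a quadratic form in the Hessian and then compare them eigendirection by eigendirection. By Lemma~\ref{lemma:interaction} the quantity to be bounded is $\sum_{a,b\in\Omega} I_{ab}(\delta) = \delta^{\mathrm T} H \delta$, so the proposition is equivalent to $(\delta^{(\mathrm{mi})})^{\mathrm T} H\, \delta^{(\mathrm{mi})} \le (\delta^{(\mathrm{multi})})^{\mathrm T} H\, \delta^{(\mathrm{multi})}$. I would work in the eigenbasis $H = V\Lambda V^{\mathrm T}$ and write each perturbation as $\delta = \sum_i d_i v_i$, so the quadratic form decouples into $\sum_i \lambda_i d_i^2$. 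Because $H$ is positive semi-definite by Corollary~\ref{corollary:semi-definite}, every $\lambda_i \ge 0$; hence it suffices to establish the per-coordinate magnitude bound $|d_i^{(\mathrm{mi})}| \le |d_i^{(\mathrm{multi})}|$ for each $i$ with $\lambda_i > 0$ (directions with $\lambda_i = 0$ contribute nothing to the quadratic form). This reduction turns a global inequality into a family of scalar comparisons.

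Next I would obtain the governing dynamics of the two coordinate sequences. First I would simplify the momentum term: the recurrence in Eq.~\eqref{eq:approximate MI} with $\mu=(t-1)/t$ telescopes, giving $g_{mi}^{(t)} = \tfrac1t\sum_{\tau=0}^{t-1}\nabla_x Loss(x+\delta^{(\tau)})$, i.e. the accumulated gradient is exactly the running \emph{average} of the step gradients. Under the second-order Taylor assumption, $\nabla_x Loss(x+\delta^{(\tau)}) \approx g + H\delta^{(\tau)}$, so projecting onto $v_i$ the gradient component is $\gamma_i + \lambda_i d_i^{(\tau)}$ and the whole dynamics decouples across $i$. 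For the plain multi-step attack this recovers the coefficient $D^{(\mathrm{multi})}_{ii} = (e^{\beta\lambda_i}-1)/\lambda_i$ of Corollary~\ref{corollary:delta}; for the MI attack the averaging produces a coupled pair of scalar recurrences ($d_i^{(t)}$ together with its cumulative gradient), whose infinite-step limit is the linear second-order equation $\tau\, d_i'' + d_i' - \lambda_i d_i = \gamma_i$ on $\tau\in[0,\beta]$. I would carry the same accumulated step size $\beta = \alpha m$ through both sides, which is exactly the fairness normalization the proposition demands.

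The core of the argument is the scalar comparison $|d_i^{(\mathrm{mi})}(\beta)| \le |d_i^{(\mathrm{multi})}(\beta)|$, which I would prove by a monotonicity argument rather than by solving the (Bessel-type) equation explicitly. For $\gamma_i,\lambda_i>0$ the multi-step trajectory is increasing and convex, so its step gradient $\gamma_i + \lambda_i d_i^{(\mathrm{multi})}(\tau) = \gamma_i e^{\lambda_i\tau}$ is increasing in $\tau$; consequently the running average of past gradients lies strictly below the instantaneous gradient. Replacing the instantaneous drive by this smaller averaged drive can only slow the MI trajectory, and a differential-inequality (Gr\"onwall-type) induction then gives $d_i^{(\mathrm{mi})}(\tau)\le d_i^{(\mathrm{multi})}(\tau)$ for all $\tau\le\beta$; the case of arbitrary sign of $\gamma_i$ follows from the same estimate on magnitudes. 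Multiplying each coordinate bound by $\lambda_i\ge0$ and summing recovers $(\delta^{(\mathrm{mi})})^{\mathrm T} H \delta^{(\mathrm{mi})}\le (\delta^{(\mathrm{multi})})^{\mathrm T} H\delta^{(\mathrm{multi})}$, which is the claim.

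I expect the main obstacle to be making this scalar comparison fully rigorous, because the MI update is both time-varying (through $\mu=(t-1)/t$) and self-referential: the averaged gradient depends on the entire past trajectory, which in turn depends on the average. Unlike the plain multi-step recurrence, which collapses to an elementary exponential, the MI coordinate obeys a second-order equation with no clean closed form, so its monotone dominance over the multi-step solution must be argued via the comparison estimate above. Transferring that continuous-time comparison faithfully back to the discrete infinite-step limit, and confirming that the telescoped form Eq.~\eqref{eq:approximate MI} genuinely approximates the true MI Attack, is the delicate part; by contrast, the reduction to $\delta^{\mathrm T} H\delta$ and the term-by-term structure are routine given Lemma~\ref{lemma:interaction} and Corollary~\ref{corollary:semi-definite}.
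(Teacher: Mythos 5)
Your proposal is correct and follows the same overall skeleton as the paper's proof: both reduce the claim to the quadratic form $\delta^{T} H \delta$ via Lemma~\ref{lemma:interaction}, pass to the eigenbasis of $H$ so that the inequality decouples into per-eigendirection comparisons of non-negative coefficients (using $\lambda_i \ge 0$ from Corollary~\ref{corollary:semi-definite}), and exploit the telescoped form of the momentum update, namely that $g_{mi}^{(t)}$ is exactly the running average of the past step gradients. Where you diverge is in how the scalar coefficient comparison is established. The paper stays entirely discrete: it shows by induction that $g_{mi}^{(t)} = \sum_{s=0}^{t-1} a_{s}^{(t)} H^{s} g$ with all $a_{s}^{(t)} \ge 0$, so the $i$-th coefficient is $A^{(t)}_{ii} = \sum_{s} a_{s}^{(t)} \lambda_i^{s} \ge 0$, and then proves $A^{(t)}_{ii} \le (1+\alpha\lambda_i)^{t-1}$ by a second induction whose step substitutes the inductive upper bound into the averaged recurrence and evaluates the resulting geometric sum; summing over $t$ gives the accumulated coefficient bound, and the difference of interactions factors as $\sum_i \lambda_i \gamma_i^2 (D_{ii}^{(m)}-R_{ii}^{(m)})(D_{ii}^{(m)}+R_{ii}^{(m)}) \ge 0$. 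You instead pass to the infinite-step limit and compare the integro-differential MI dynamics (equivalently $\tau d'' + d' - \lambda_i d = \gamma_i$) against the exponential multi-step solution by a Gr\"onwall-type argument: the multi-step drive $\gamma_i e^{\lambda_i \tau}$ is increasing, so its running average lies below its current value, and momentum therefore lags. Both routes work; the paper's discrete induction is more elementary and, since it compares $m$ steps to $m$ steps, sidesteps the issue you yourself flag of justifying the continuous-time limit, while your ODE comparison makes the mechanism more transparent. One detail to make explicit if you flesh this out: the MI average is the average of the MI trajectory's \emph{own} gradients, so the comparison must be bootstrapped (assume domination up to time $\tau$, bound the MI average by the multi-step average, then by the multi-step instantaneous drive); your mention of a Gr\"onwall-type induction covers this, but it is the one place where hand-waving would break the argument.
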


\textbullet{
\textbf{M2: LinBP Attack}~\cite{guo2020backpropagating} boosts the adversarial transferability via  \textit{linear back-propagation}.
In contrast to the standard back-propagation, the linear back-propagation is referred to as the process that the gradient back-propagates linearly as if there is no activation function encountered.
We use $g^{(t)}_{lbp}$ to denote the gradient computed in step $t$ of the LinBP Attack.

For a fair comparison, we need to control the magnitude of the LinBP Attack and the multi-step attack to the same level, \emph{i.e.,} we set perturbations generated by the LinBP and perturbations generated by the multi-step attack to have the same $\ell_2$ norm.
In this way, we compare perturbations generated by the two attacks in the scenario of binary classification.
Then, we prove the following propostion, which shows the LinBP Attack generates perturbations with smaller interactions than the multi-step attack.
}

\begin{proposition}
\label{pro:linbp}
(Perturbations of the LinBP Attack exhibit smaller interactions than perturbations of the multi-step attack, proof in {Appendix}~\ref{append:lbp}.)
For the multi-step attack, we consider the multi-step attack with infinite steps as a typical case for analysis, \emph{i.e.,} $\delta^{(\text{multi})}=\delta^{(\infty)}$.
For the LinBP Attack, we also analyze perturbations generated by the LinBP Attack for infinite steps with the infinitesimal step size $\alpha^{(lbp)}\to 0$, which is computed as $\delta^{(\textrm{lbp})} = \lim_{m\rightarrow +\infty}\alpha^{(lbp)}\sum_{t=1}^{m} g^{(t)}_{\textrm{lbp}}$.
If the classification loss for attacking  is formulated as the cross-entropy loss on the sigmoid function for binary classification, then under the assumption that $\|\delta^{(\textrm{multi})}\|_2 = \| \delta^{(\text{lbp})}\|_2$,  we have $\sum_{a, b\in\Omega}[{I_{a b}(\delta^{(\text{lbp})})}]\le\sum_{a, b\in\Omega}[{I_{a b}(\delta^{(\text{multi})})}]$.
\end{proposition}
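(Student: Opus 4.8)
The plan is to reduce the comparison to the closed-form expression for the sum of interactions, $\sum_{a,b\in\Omega} I_{ab}(\delta) = \delta^T H \delta$ from Lemma~\ref{lemma:interaction}, and to exploit the fact that in binary classification the Hessian $H$ collapses to a rank-one matrix. Writing $z$ for the single logit and $\ell$ for the cross-entropy on the sigmoid, the loss is $\textit{Loss} = \ell(z(x))$, so $g = \ell'(z)\nabla_x z$ and $H = \ell''(z)\,\nabla_x z\,(\nabla_x z)^T + \ell'(z)\nabla_x^2 z$. Under Assumption~\ref{assumption:activation} the gating states are fixed, so $z$ is affine in $x$ on the relevant neighborhood, $\nabla_x^2 z = 0$, and $H = \ell''(z)\,J J^T$ with $J := \nabla_x z$. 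This is rank one, with a single nonzero eigenvalue $\lambda_1 = \ell''(z)\|J\|^2 \ge 0$ (nonnegative for the sigmoid cross-entropy, consistent with Corollary~\ref{corollary:semi-definite}) and eigenvector $v_1 = J/\|J\|$; since $g = \ell'(z) J$ lies entirely along $v_1$, only $\gamma_1 \ne 0$.

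Next I would pin down the direction of each perturbation. For the multi-step attack, Corollary~\ref{corollary:delta} gives $\delta^{(\text{multi})} = \sum_i D_{ii}\gamma_i v_i$; because only $\gamma_1$ survives, $\delta^{(\text{multi})}$ is exactly parallel to $v_1 = J/\|J\|$, i.e. the perturbation aligns with the unique curvature direction of $H$. For the LinBP attack I would model linear back-propagation as replacing every activation derivative by the identity, which yields a modified, still-constant logit Jacobian $\tilde J$ (the gradient of the logit computed with all activation derivatives set to one) that differs in general from $J$. Each LinBP step adds $\alpha^{(lbp)}\ell'(z(x+\delta^{(t-1)}))\,\tilde J$, a scalar multiple of the fixed direction $\tilde J$, so the accumulated $\delta^{(\text{lbp})} = \lim_{m\to\infty}\alpha^{(lbp)}\sum_{t}g^{(t)}_{\text{lbp}}$ is itself parallel to $\tilde J$.

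The comparison then follows from the rank-one form together with the norm constraint $\|\delta^{(\text{multi})}\|_2 = \|\delta^{(\text{lbp})}\|_2$. Writing $\delta^{(\text{lbp})} = c\,\tilde J$, we have $\sum_{a,b}I_{ab}(\delta^{(\text{lbp})}) = \ell''(z)(\delta^{(\text{lbp})}\cdot J)^2 = \ell''(z)\,\|\delta^{(\text{lbp})}\|_2^2\,\|J\|^2\cos^2\theta$, where $\theta$ is the angle between $\tilde J$ and $J$; whereas $\delta^{(\text{multi})}\parallel J$ gives $\cos\theta = 1$ and $\sum_{a,b}I_{ab}(\delta^{(\text{multi})}) = \ell''(z)\,\|\delta^{(\text{multi})}\|_2^2\,\|J\|^2$. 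Because $\ell''(z)\ge 0$, $\cos^2\theta\le 1$, and the two norms are equal, the LinBP interaction cannot exceed the multi-step interaction, giving $\sum_{a,b}I_{ab}(\delta^{(\text{lbp})}) \le \sum_{a,b}I_{ab}(\delta^{(\text{multi})})$ as claimed. Intuitively, the multi-step attack steers the perturbation onto the single interaction-bearing direction of $H$ and thereby maximizes $\delta^T H\delta$, while LinBP diverts it off that axis.

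The step I expect to be the main obstacle is the second one: rigorously characterizing the LinBP Jacobian $\tilde J$ and certifying that it is a fixed direction generically misaligned with $J$. This requires unwinding the layer-wise chain rule, showing that setting each activation derivative to one replaces the true gated product of effective weight matrices by an ungated one, and checking that under Assumption~\ref{assumption:activation} this modified Jacobian stays constant along the attack trajectory, so that $\delta^{(\text{lbp})}$ genuinely remains one-dimensional. The remaining algebra — the rank-one reduction of $H$ and the cosine bound — is routine once the two directions are identified.
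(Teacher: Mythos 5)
Your proof is correct and takes essentially the same route as the paper's: both reduce to the rank-one Hessian $H=\lambda_1 v_1 v_1^T$ with $v_1\parallel g$, note that $\delta^{(\text{multi})}$ is exactly parallel to $v_1$, and conclude via $\delta^T H\delta = \lambda_1 (v_1^T\delta)^2 \le \lambda_1\|\delta\|_2^2$ under the equal-norm constraint (the paper phrases your $\cos^2\theta\le 1$ as $(\gamma_1^{(\text{lbp})})^2 \le \sum_k(\gamma_k^{(\text{lbp})})^2$ in the eigenbasis). The step you flag as the main obstacle --- rigorously pinning down the LinBP direction $\tilde J$ --- is actually unnecessary: the inequality holds for an \emph{arbitrary} perturbation of the prescribed norm, so the paper simply expands $\delta^{(\text{lbp})}$ in the eigenbasis of $H$ without characterizing it further.
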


\textbullet{
\textbf{M3: Adversarially-Trained Model Attack}~\cite{springer2021adversarial} finds that adversarial perturbations generated on adversarially-trained DNNs~\cite{pgd2018} are more transferable than those generated on normally-trained DNNs. Adversarial training~\cite{pgd2018} is an effective and the most widely-used method for adversarial defense, which is formulated as $\min_\theta \mathbb{E}_{x}\left[\max _{\|\delta\|_{p} \leq \varepsilon} \textit{Loss}_\theta(x+\delta)\right]$. $\theta$ denotes the parameters of the DNN.
We use $\theta^{\textrm{(adv)}}$ to denote parameters of the DNN after adversarial training, and use $\theta^{\textrm{(nor)}}$ to denote parameters of the normally-trained DNN.

We consider that the cross-entropy loss is used for both adversarial training and adversarial attacking.
Moreover, in order to fairly compare adversarial perturbations generated on the adversarially-trained DNN and those generated on the normally-trained DNN, we assume that magnitudes of the perturbations generated on the two DNNs are the same.
Then, we prove the following proposition, which shows that adversarial perturbations generated on an adversarially-trained DNN exhibit smaller interactions than adversarial perturbations generated on the normally-trained DNN.
}

\begin{proposition}
\label{pro:at}
(Perturbations generated with the multi-step attack on the adversarially-trained DNN have smaller interactions  than perturbations generated on the normally-trained DNN, proof in {Appendix}~\ref{append:at}.)
Let us assume that the classification loss for both attacking and adversarial training is the cross-entropy loss.
Let $\delta_{\theta^{(\text{adv})}}=\arg\max_{\|\delta\|_p \le \epsilon} \textit{Loss}_{\theta^{\text{(adv)}}}(x + \delta)$ and $\delta_{\theta^{(\text{nor})}}=\arg\max_{\|\delta\|_p \le \epsilon} \textit{Loss}_{\theta^{\text{(nor)}}}(x + \delta)$ be perturbations generated on the adversarially-trained DNN and perturbations generated on the normally-trained DNN, respectively.
Then, we have $\mathbb{E}_x \left[\sum_{a, b\in\Omega}I_{ab}\left(\delta_{\theta^{(\text{adv})}}\right) \right] \le  \mathbb{E}_x \left[\sum_{a, b\in\Omega}I_{ab}\left(\delta_{\theta^{(\text{nor})}}\right) \right]$.
\end{proposition}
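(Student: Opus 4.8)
The plan is to reduce the claim to a comparison of Hessian quadratic forms and then exploit the defining optimality of adversarial training. By Lemma~\ref{lemma:interaction}, the sum of pairwise interactions of any perturbation $\delta$ equals the Hessian quadratic form, $\sum_{a,b\in\Omega} I_{ab}(\delta)=\delta^{\mathrm T} H\delta$, where $H=\nabla_x^2 \textit{Loss}(x)$. Writing $H^{(\mathrm{adv})}$ and $H^{(\mathrm{nor})}$ for the Hessians of the two models at $x$, the target inequality becomes $\mathbb{E}_x[\delta_{\theta^{(\mathrm{adv})}}^{\mathrm T} H^{(\mathrm{adv})}\delta_{\theta^{(\mathrm{adv})}}]\le \mathbb{E}_x[\delta_{\theta^{(\mathrm{nor})}}^{\mathrm T} H^{(\mathrm{nor})}\delta_{\theta^{(\mathrm{nor})}}]$. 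Since each $\delta_\theta$ is by definition the maximizer of $\textit{Loss}_\theta(x+\delta)$ over the ball $\|\delta\|_p\le\epsilon$, the natural object to control is the robust loss $F_\theta(x)\triangleq \max_{\|\delta\|_p\le\epsilon}\textit{Loss}_\theta(x+\delta)=\textit{Loss}_\theta(x+\delta_\theta)$.

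First I would invoke the optimality of adversarial training. Because $\theta^{(\mathrm{adv})}=\arg\min_\theta \mathbb{E}_x[F_\theta(x)]$ and $\theta^{(\mathrm{nor})}$ is a feasible parameter vector, we immediately obtain $\mathbb{E}_x[F_{\theta^{(\mathrm{adv})}}(x)]\le \mathbb{E}_x[F_{\theta^{(\mathrm{nor})}}(x)]$. Next, applying the second-order Taylor expansion of Assumption~\ref{assumption:taylor} to the inner maximization gives $F_\theta(x)-\textit{Loss}_\theta(x)=g^{\mathrm T}\delta_\theta+\tfrac12\,\delta_\theta^{\mathrm T} H\delta_\theta$, so that the robust-loss increment is, up to the linear term, half of the interaction sum $\delta_\theta^{\mathrm T} H\delta_\theta$. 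Corollary~\ref{corollary:semi-definite} guarantees $H\succeq 0$, so this quadratic part is nonnegative and the increment is governed by precisely the curvature that adversarial training suppresses.

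Finally I would close the argument using the complementary optimality of normal training. Since $\theta^{(\mathrm{nor})}$ minimizes the clean loss, $\mathbb{E}_x[\textit{Loss}_{\theta^{(\mathrm{nor})}}(x)]\le \mathbb{E}_x[\textit{Loss}_{\theta^{(\mathrm{adv})}}(x)]$. Combining this with $\mathbb{E}_x[F_{\theta^{(\mathrm{adv})}}]\le \mathbb{E}_x[F_{\theta^{(\mathrm{nor})}}]$ yields $\mathbb{E}_x[F_{\theta^{(\mathrm{adv})}}-\textit{Loss}_{\theta^{(\mathrm{adv})}}]\le \mathbb{E}_x[F_{\theta^{(\mathrm{nor})}}-\textit{Loss}_{\theta^{(\mathrm{nor})}}]$, i.e. the expected robust-loss increment of the adversarially-trained model is the smaller of the two. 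Feeding in the Taylor identity together with the common-magnitude assumption $\|\delta_{\theta^{(\mathrm{adv})}}\|_2=\|\delta_{\theta^{(\mathrm{nor})}}\|_2$ then transfers this to the interaction sums and gives $\mathbb{E}_x[\sum_{a,b}I_{ab}(\delta_{\theta^{(\mathrm{adv})}})]\le \mathbb{E}_x[\sum_{a,b}I_{ab}(\delta_{\theta^{(\mathrm{nor})}})]$.

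The hard part will be the linear term $g^{\mathrm T}\delta_\theta$ in the Taylor expansion: the clean identity ``increment $=\tfrac12\times$ interaction'' holds exactly only when the clean gradient $g=\nabla_x\textit{Loss}_\theta(x)$ vanishes, in which case the constrained maximizer aligns with the top eigenvector of $H$ and $F_\theta-\textit{Loss}_\theta=\tfrac12\lambda_{\max}\|\delta_\theta\|_2^2=\tfrac12\,\delta_\theta^{\mathrm T}H\delta_\theta$. For a well-trained model the clean gradient at data points is small, so I would argue that this linear contribution is negligible (or, via the first-order optimality condition $g+H\delta_\theta=\mu\delta_\theta$ of the constrained maximizer, bound it in terms of the shared norm and the multiplier $\mu$) and that it does not reverse the inequality. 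Making this control rigorous rather than merely asymptotic as $g\to 0$, and justifying that the $\arg\max$ perturbation concentrates on exactly the eigen-directions whose eigenvalues adversarial training drives down, is where the main technical effort lies.
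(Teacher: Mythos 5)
Your proposal follows the same core route as the paper's proof in Appendix~\ref{append:at}: reduce the interaction sum to the Hessian quadratic form via Lemma~\ref{lemma:interaction}, Taylor-expand the robust loss so that the increment $F_\theta(x)-\textit{Loss}_\theta(x)$ becomes $g_\theta^{\mathrm T}\delta_\theta+\tfrac12\,\delta_\theta^{\mathrm T}H_\theta\delta_\theta$, and argue that adversarial training suppresses the quadratic part. Where you genuinely differ is in the middle step, and your version is the stronger one: the paper simply observes that the adversarial-training objective \emph{contains} the term $\tfrac12\,\delta_\theta^{\mathrm T}H_\theta\delta_\theta$ while the clean-training objective does not, and concludes that ``it is more likely that'' the adversarially-trained model has the smaller quadratic form --- there is no formal derivation. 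Your two-sided exchange argument (using $\mathbb{E}_x[F_{\theta^{(\mathrm{adv})}}]\le\mathbb{E}_x[F_{\theta^{(\mathrm{nor})}}]$ from the optimality of adversarial training together with $\mathbb{E}_x[\textit{Loss}_{\theta^{(\mathrm{nor})}}]\le\mathbb{E}_x[\textit{Loss}_{\theta^{(\mathrm{adv})}}]$ from the optimality of clean training) yields a rigorous inequality on the expected increments $\mathbb{E}_x[g_\theta^{\mathrm T}\delta_\theta+\tfrac12\,\delta_\theta^{\mathrm T}H_\theta\delta_\theta]$, which is strictly more than the paper establishes. The ``hard part'' you flag --- disentangling the linear term $g_\theta^{\mathrm T}\delta_\theta$ from the quadratic term --- is a real gap, but be aware that the paper does not resolve it either: its proof silently drops the linear term from the comparison and hedges with ``more likely.'' So you are not missing an idea present in the paper; you have identified the precise point at which the published argument is itself only heuristic, and any honest write-up should either adopt the paper's hedged phrasing or add an explicit assumption (e.g.\ that the clean gradients of the two models contribute comparably to $g_\theta^{\mathrm T}\delta_\theta$, or that this term is negligible for confidently classified inputs) to close it.
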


\textbullet{
\textbf{M4: PI Attack}~\cite{gao2020patch} sets a norm constraint for adversarial perturbations on all units. Then, the PI Attack uniformly distributes the perturbations exceeding the norm constraint to its neighboring units.
We prove that the perturbation after $t$ steps in the PI Attack can be approximated as $\delta^{(t)}_{\textrm{pi}}= \delta^{(t-1)}_{\textrm{pi}} + \alpha\cdot A \nabla_{x} \textit{Loss}(x + \delta^{(t-1)}_{\textrm{pi}})$, where $A\in\mathbb{R}^{n\times n}$ is a matrix that uniformly distributes the perturbation on each unit that exceeds constraint to its $K$ surrounding units.
\begin{equation}
    A=\left[\begin{array}{ccccc}
    1-\tau_{1} & \frac{\tau_{2}}{K} & 0 & \ldots & \frac{\tau_{n}}{K} \\
    \frac{\tau_{1}}{K} & 1-\tau_{2} & 0 & \ldots & 0 \\
    0 & 0 & 1-\tau_{3} & \ldots & \frac{\tau_{n}}{K} \\
    \vdots & \vdots & \ddots & \ddots & \vdots \\
    \frac{\tau_{1}}{K} & 0 & \frac{\tau_{3}}{K} & \ldots & 1-\tau_{n}
    \end{array}\right]
\end{equation}
If perturbation on unit $a$ does not exceeds the norm, then $\tau_a = 0$; otherwise, $\tau_a$ is computed as the ratio of the perturbation exceeding the norm.
In this way, the remained perturbation on the unit $a$ is $(1-\tau_a)\nabla_x Loss(x+\delta_{pi}^{(t-1)})$.
Then, in the $a$-th column of the matrix $A$, the PI Attack identifies $K$ neighboring units of the unit $a$, and assigns $\frac{\tau_a}{K}$ of the exceeding perturbation on the unit $a$ to each neighboring unit.
The entries corresponding to other $n-1-K$ units are set to zero.

According to Eq.~\eqref{eq: attack}, $\beta=\alpha m$ measures the accumulated step size, which can reflect the attacking strength of the multi-step attack.
Similarly, the PI Attack is also conducted with multiple steps.
Therefore, for the PI Attack, we also set the accumulated attacking step size to be the same $\beta$ value.
In this way, we can ensure that perturbations generated by the multi-step attack and those generated by the PI Attack to have the same attacking strength for a fair comparison.
In the scenario of the binary classification, we prove the following proposition, which shows that the PI Attack generates perturbations with smaller interactions than the multi-step attack.
}

\begin{proposition}
\label{pro:pi}
(Perturbations of the PI Attack exhibit smaller interactions than perturbations of the multi-step attack, proof in {Appendix}~\ref{append:pi}.)
In the binary classification, let us consider that the classification loss for attacking is formulated as the cross-entropy loss on the sigmoid function.
We notice that both the the PI Attack and the multi-step attack are conducted with multiple steps. In this way, we conduct both the PI Attack and the multi-step attack for infinite steps with an infinitesimal step size.
Let $\delta^{(pi)}$ denote the adversarial perturbation generated by the PI Attack.
For the multi-step attack, the accumulated step size $\beta=\alpha m$ reflects the attacking strength. For the PI Attack, we also set the accumulated step size to be the same as $\beta$, in order to fairly compare the PI Attack and the multi-step attack.
Then, under Assumption~\ref{assumption:tau-mono} and Assumption~\ref{assumption:mean-v}, we have  $\sum_{a, b\in\Omega}[I_{a b}(\delta^{(\textrm{pi})})] \le \sum_{a, b\in\Omega}[I_{ab}(\delta^{(\textrm{multi})})]$.
\end{proposition}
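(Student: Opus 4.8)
The plan is to mirror the strategy behind the multi-step closed form (Lemma~\ref{lemma:delta} and Corollary~\ref{corollary:delta}) and ultimately reduce the whole comparison to a single scalar quadratic-form inequality. First I would take the approximate PI recurrence $\delta^{(t)}_{\textrm{pi}} = \delta^{(t-1)}_{\textrm{pi}} + \alpha\, A\,\nabla_x \textit{Loss}(x+\delta^{(t-1)}_{\textrm{pi}})$, substitute the second-order Taylor expansion $\nabla_x\textit{Loss}(x+\delta)\approx g + H\delta$ allowed by Assumption~\ref{assumption:taylor}, and pass to the infinite-step/infinitesimal-step-size limit with $\alpha m=\beta$. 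This converts the recurrence into the linear ODE $\dot\delta_{\textrm{pi}}=A(g+H\delta_{\textrm{pi}})$ with $\delta_{\textrm{pi}}(0)=0$, which is exactly the $A=I$ dynamics underlying Corollary~\ref{corollary:delta} but now pre-multiplied by the redistribution matrix $A$.

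Next I would exploit the binary sigmoid cross-entropy setting together with the frozen-activation Assumption~\ref{assumption:activation}: locally the logit is affine, so $g=\ell'(h)\,w$ and $H=\kappa\,w w^{T}$ with $w=\nabla_x h$ and $\kappa=\ell''(h)\ge 0$ (the Hessian is positive semi-definite by Corollary~\ref{corollary:semi-definite}), i.e. $H$ is rank one. Plugging this in gives $\dot\delta_{\textrm{pi}}=(\ell'(h)+\kappa\,w^{T}\delta_{\textrm{pi}})\,Aw$, so $\delta_{\textrm{pi}}$ stays on the ray spanned by $Aw$ and the vector ODE collapses to the scalar ODE $\dot c=\ell'(h)+\kappa\,(w^{T}Aw)\,c$. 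The multi-step case is the identical computation with $A=I$. Solving both and applying Lemma~\ref{lemma:interaction} in the form $\sum_{a,b}I_{ab}(\delta)=\delta^{T}H\delta=\kappa\,(w^{T}\delta)^2$, I would obtain the closed forms $\sum_{a,b}I_{ab}(\delta^{(\textrm{pi})})=\frac{(\ell'(h))^2}{\kappa}\big(e^{\beta\kappa\,w^{T}Aw}-1\big)^2$ and $\sum_{a,b}I_{ab}(\delta^{(\textrm{multi})})=\frac{(\ell'(h))^2}{\kappa}\big(e^{\beta\kappa\,w^{T}w}-1\big)^2$.

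Since $r\mapsto(e^{\beta\kappa r}-1)^2$ is monotonically increasing for $r\ge 0$ (here $\beta,\kappa\ge 0$), the claimed inequality is equivalent to the single quadratic-form bound $w^{T}Aw\le w^{T}w$, i.e. $w^{T}(I-A)w\ge 0$. Expanding with $A_{aa}=1-\tau_a$ and $A_{ba}=\tau_a/K$ over the $K$ designated neighbors yields $w^{T}(I-A)w=\sum_a \tau_a\, w_a\big(w_a-\bar w_{N(a)}\big)$, where $\bar w_{N(a)}$ is the average of $w$ over the neighbors of $a$. This is the step I expect to be the main obstacle: because $A$ is merely column-stochastic and non-symmetric, nonnegativity is not automatic, and it is precisely here that Assumption~\ref{assumption:tau-mono} (the units whose perturbation exceeds the constraint, $\tau_a>0$, are the large-magnitude units) and Assumption~\ref{assumption:mean-v} (such a unit dominates its neighborhood average in the sign-aligned sense $w_a(w_a-\bar w_{N(a)})\ge 0$) are invoked to force every summand to be nonnegative.

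The remaining items are routine bookkeeping: verifying $\kappa\ge 0$ and $w^{T}Aw\ge 0$ so the monotonicity of $(e^{\beta\kappa r}-1)^2$ genuinely applies, disposing of the degenerate cases $\ell'(h)=0$ and $\kappa=0$ where both sums vanish, and confirming that the same accumulated strength $\beta=\alpha m$ is used for both attacks so the comparison is fair. Assembling these gives $\sum_{a,b}I_{ab}(\delta^{(\textrm{pi})})\le\sum_{a,b}I_{ab}(\delta^{(\textrm{multi})})$, as asserted.
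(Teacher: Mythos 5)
Your proposal follows essentially the same route as the paper's proof in Appendix~\ref{append:pi}: exploit the rank-one Hessian $H=\lambda_1 v_1 v_1^T$ of the sigmoid cross-entropy to collapse the dynamics onto a single direction, obtain exponential closed forms for both perturbations, apply $\sum_{a,b}I_{ab}(\delta)=\delta^T H\delta$, and reduce everything to the scalar inequality $v_1^T A v_1\le 1$, which is then attacked with the two assumptions. (The paper additionally splits the PI Attack into a first stage with $A=I$ and a second stage with the redistribution matrix, but this only shifts where $\beta$ enters the exponent and does not change the core comparison.)

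Two points in your outline do not quite match what can actually be proven and deserve care. First, you dispose of the case analysis by claiming $w^TAw\ge 0$ is routine bookkeeping; the paper does not establish this and it need not hold, since $A$ is a non-symmetric column-stochastic-like matrix. What the paper proves is the two-sided bound $-1\le v_1^TAv_1\le 1$ (the lower bound via Gershgorin circles applied to $A+A^T$ and the Rayleigh quotient), and it then verifies the inequality by factoring the difference of interactions into a product of two explicitly nonnegative factors rather than by invoking monotonicity of $r\mapsto(e^{\beta\lambda r}-1)^2$, which fails for negative $r$. Your argument can be repaired because $(e^{\beta\lambda r}-1)^2\le(e^{\beta\lambda}-1)^2$ does hold uniformly on $r\in[-1,1]$, but that requires the lower bound you omitted. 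Second, your reading of the assumptions as forcing \emph{termwise} nonnegativity $\tau_a w_a(w_a-\bar w_{N(a)})\ge 0$ is stronger than what the paper assumes: Assumption~\ref{assumption:tau-mono} only gives the monotone coupling $\tau_1 v_{1,1}\ge\cdots\ge\tau_n v_{1,n}$, Assumption~\ref{assumption:mean-v} replaces each neighborhood average by the global average, and the nonnegativity of $\sum_a\tau_a v_{1,a}^2-\sum_a\tau_a v_{1,a}\,\bar v_{N(a)}$ is then obtained \emph{in aggregate} via Chebyshev's sum inequality, not term by term.
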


\textbullet{
\textbf{M5: IA Attack}~\cite{zhu2022rethinking}.
Zhu \emph{et al.}~\cite{zhu2022rethinking} find that out-of-distribution (OOD) adversarial examples exhibit high adversarial transferability.
Based on this finding, the IA Attack~\cite{zhu2022rethinking} is proposed to finetune the pre-trained DNN, in order to make it able to generate more OOD adversarial examples.
Given the input $x$ and a DNN parameterized by $\theta$, $p_{\theta}(y \mid {x})$ denotes the probability of $x$ being classified to the ground-truth category $y$.
The objective of the IA Attack is given as $\max_{\theta}   \mathbb{E}_{(x, y)}\left[ -\nabla_{x}\cdot\frac{\nabla_{x} \log ( p_{\theta}(y \mid x))}{\left\|\nabla_{x} \log ( p_{\theta}(y \mid x))\right\|_{2}} \right]$.
Let $\theta^{\text{(nor)}}$ and $\theta^{\text{(ia)}}$ denote the normally-trained DNN and the DNN finetune by the IA Attack, respectively.

In order to ensure the fair comparison between perturbations generated on the normally-trained DNN and those generated on the DNN finetuned by the IA Attack, for a specific sample $x$, we assume that the classification power of the normally-trained DNN and the DNN finetuned by the IA Attack is the same.
Then, given an input $x$, the normally-trained DNN $\theta^{(\text{nor})}$, and the DNN finetuned by the IA Attack $\theta^{(\text{ia})}$, we generate adversarial perturbations on each DNN by using the single-step attack with the same step size for a fair comparison.
We prove that the perturbation generated on the DNN finetuned by the IA Attack exhibits smaller interactions than that generated on the normally-trained DNN.
}

\begin{proposition}
\label{pro:ia}
(Perturbations generated with the single-step attack on the DNN finetuned by the IA Attack has smaller interactions  than perturbations generated with the single-step attack on the normally-trained DNN, proof in {Appendix}~\ref{append:ia}.)
Let $\delta^{(\text{single})}_{\theta^{(\text{nor})}}$ and $\delta^{(\text{single})}_{\theta^{(\text{ia})}}$ denote adversarial perturbations generated on the pre-trained DNN and the DNN finetune by the IA Attack, respectively.
Let us consider the classification task on $c$ categories. Besides the ground-truth category, there are totally $c-1$ incorrect categories.
For the $c-1$ incorrect categories, let $p_k=p_{\theta^{(nor)}}(k|x)$ and $q_k=p_{\theta^{(ia)}}(k|x)$ denote the probability of the $k$-th category predicted by the normally-trained DNN and the probability of the $k$-th category predicted by the DNN finetuned by the IA Attack, respectively.
Without loss of generality, let $(i_1,i_2,\dots,i_{c-1})$ denote a permutation of the $c-1$ incorrect categories in the normally-trained DNN, which ensures that $p_{i_1}\ge p_{i_2} \ge \dots \ge p_{i_{c-1}}$.
Similarly, let $(j_1,j_2,\dots,j_{c-1})$ denote a permutation of the $c-1$ incorrect categories in the DNN finetuned by the IA Attack, which ensures that $q_{j_1} \ge q_{j_2} \ge \dots \ge q_{j_{c-1}}$.
If the classification loss of the DNN is formulated as the cross-entropy loss, then under Assumption~\ref{assum:ortho-weights} and Definition~\ref{def:balance} that $p_{\theta^{(nor)}}(y|x)=p_{\theta^{(ia)}}(y|x)$, predictions made by the DNN finedtuned by the IA Attack are more balanced than predictions made by the normally-trained DNN, \emph{i.e.,} $\forall 1\le k \le c-2$, $q_{j_{k}} - q_{j_{k+1}} \le p_{i_{k} } - p_{i_{k+1}}$.
\end{proposition}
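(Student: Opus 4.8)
The plan is to translate the IA objective, which is stated in terms of the divergence of a normalized gradient field, into an explicit scalar function of the incorrect-class probability vector, and then to show that maximizing it forces those probabilities to become more balanced in the exact sense of Definition~\ref{def:balance}. First I would expand the conditional log-likelihood under the softmax as $\log p_\theta(y\mid x)=h_y(x)-\log\sum_{k}\exp(h_k(x))$, so that $\nabla_x\log p_\theta(y\mid x)=(1-p_y)\,w_y-\sum_{k\ne y}p_k\,w_k$, where $w_k=\nabla_x h_k(x)$ and $p_k=p_\theta(k\mid x)$. Invoking Assumption~\ref{assum:ortho-weights} (orthogonal, equal-norm logit gradients, $w_k^{T}w_{k'}=c_0$ for $k=k'$ and $0$ otherwise) collapses every inner product into a polynomial in the $p_k$'s; in particular $\|\nabla_x\log p_\theta(y\mid x)\|^2=c_0\big[(1-p_y)^2+\sum_{k\ne y}p_k^2\big]$.

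Second, I would compute the IA integrand $-\nabla_x\cdot\big(\nabla_x\log p_\theta(y\mid x)/\|\nabla_x\log p_\theta(y\mid x)\|_2\big)$ as a function of the probabilities. Treating the logits as locally linear, so that each $w_k$ is constant, the numerator divergence evaluates to $\nabla_x\cdot\nabla_x\log p_\theta(y\mid x)=-c_0\big(1-\sum_k p_k^2\big)$, while differentiating the denominator contributes the usual $-\,u^{T}\nabla\|u\|/\|u\|^2$ correction with $u=\nabla_x\log p_\theta(y\mid x)$. Collecting terms, the objective becomes a function of $p_y$ and of the dispersion $Q=\sum_{k\ne y}p_k^2$ alone; its leading piece is proportional to $(1-\sum_k p_k^2)/\sqrt{(1-p_y)^2+Q}$, which, with $p_y$ held fixed by the assumption $p_{\theta^{(nor)}}(y\mid x)=p_{\theta^{(ia)}}(y\mid x)$, is strictly decreasing in $Q$. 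Hence the IA objective is a Schur-concave function of the incorrect-class subvector $(p_k)_{k\ne y}$, i.e.\ it is maximized by spreading the incorrect-class mass $1-p_y$ as uniformly as possible.

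Third, since fine-tuning adjusts $\theta$ so as to increase this objective while preserving $p_y$, the fine-tuned distribution $q$ is driven toward the uniform distribution over incorrect classes relative to $p$. I would then convert this ``more balanced'' behavior into the per-gap inequality $q_{j_k}-q_{j_{k+1}}\le p_{i_k}-p_{i_{k+1}}$ that defines balance in Definition~\ref{def:balance}, and finally feed that inequality into Theorem~\ref{theorem:balance}, which already shows that a more balanced prediction yields a smaller sum of interactions.

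The hard part will be two-fold. The divergence computation of the normalized field and the verification that the resulting scalar is genuinely monotone in $Q$ once the normalization correction is retained are tedious but mechanical. The delicate step is the last one: merely lowering the dispersion $Q$, or even establishing Schur-concavity in the majorization sense, does \emph{not} by itself imply the consecutive-gap inequality for every $k$ --- for instance $p=(0.5,0.3,0.2)\to q=(0.4,0.4,0.2)$ lowers $Q$ yet enlarges the second gap. To obtain the gap-by-gap statement I expect to need the additional structure that the IA fine-tuning acts on the incorrect-class logits as a monotone shrinkage toward their common mean, so that every sorted logit gap, and hence every probability gap, contracts; or, in the idealized limit where the objective attains its maximum, the fact that $q$ is exactly uniform over incorrect classes, in which case all $q$-gaps vanish and the inequality is immediate. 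Pinning down and justifying this shrinkage structure is the crux of the argument.
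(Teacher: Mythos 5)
Your proposal follows essentially the same route as the paper's proof: Lemma~\ref{lemma:ia-essence} computes $-\nabla_x\cdot\bigl(\nabla_x\log p_\theta(y\mid x)/\|\nabla_x\log p_\theta(y\mid x)\|_2\bigr)=g^T\bigl(\mathrm{trace}(H)I-H\bigr)g/\|g\|_2^3$, bounds it by a Rayleigh quotient, shows both bounds are decreasing in $\sum_{i\ne y}p_i^2$ under Assumption~\ref{assum:ortho-weights}, and then uses Cauchy--Schwarz to conclude that the IA objective drives the incorrect-class probabilities toward the uniform distribution before invoking Theorem~\ref{theorem:balance}. The ``delicate step'' you flag is genuinely not closed in the paper either: after showing the minimizer of $\sum_{i\ne y}p_i^2$ is uniform, the paper merely states that one can ``roughly consider'' $q$ to be more balanced than $p$ in the gap-by-gap sense of Definition~\ref{def:balance} --- i.e., it falls back on exactly the idealized-uniform-limit heuristic you anticipate, and your counterexample $(0.5,0.3,0.2)\to(0.4,0.4,0.2)$ correctly demonstrates that reducing the dispersion alone does not yield the per-gap inequality.
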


The above proposition proves that the DNN finetuned by the IA Attack usually generates more balanced predictions on the $c-1$ incorrect categories than the normally-trained DNN.
On the other hand, given two adversarial perturbations generated on two different DNNs, we assume that the predictions made by a DNN is more balanced than the other DNN.
Then, when the two perturbations exhibit the same attacking strength, Theorem~\ref{theorem:balance} has proven that under Assumption ~\ref{assum:ortho-weights} and Definition~\ref{def:balance}, perturbations generated on the DNN that makes more balanced predictions exhibit smaller interactions than perturbations generated on the other DNN.
Therefore, let us consider the single-step attack on the normally-trained DNN and the single-step on the DNN finetuned by the PI Attack.
In order to fairly compare perturbations generated on the two DNNs, we set the step size of the single-step attack on the two DNNs to be the same.
Then, we can prove that perturbations generated on the DNN finetuned by the IA attack exhibit smaller interactions than perturbations generated on the normally-trained DNN, \emph{i.e.},
$\sum_{a, b\in\Omega}I_{ab}\left(\delta^{(\text{single})}_{\theta^{(\text{ia})}}\right) \le \sum_{a, b\in\Omega}I_{ab}\left(\delta^{(\text{single})}_{\theta^{(\text{nor})}}\right)$.

\textbullet{
\textbf{M6: RAP Attack}~\cite{qin2022boosting} boosts the adversarial transferability by generating adversarial perturbations locating at the flat loss landscape.
The objective of the untargeted RAP Attack~\cite{qin2022boosting} can be formulated as $\underset{\|\delta_{\text{rap}}\|_p \le \epsilon}{\text{max}}\,\, \underset{\|r\|_p \le \epsilon_r}{\text{min}} \,\, \textit{Loss} (x  + r + \delta_{\text{rap}}) $.
In each iteration, the RAP Attack first updates $r$ to minimize the classification loss, in order to seek a flat loss landscape.
Then, the RAP Attack updates the adversarial perturbation $\delta$ to maximize the classification loss.

For the multi-step attack, the accumulate step size $\beta=\alpha m$ represent the attacking strength of the generated perturbations. Similarly, the RAP Attack are also conducted for multiple steps. Therefore, we set the accumulated step size of the RAP attack to be the same as $\beta$, thereby fairly comparing perturbations generated by the multi-step attack and those generated by the RAP Attack.
We prove the following proposition, which shows that the RAP Attack generates perturbations with smaller interactions than the multi-step attack.
}

\begin{proposition}
\label{pro:rap}
(Perturbations of the RAP Attack exhibit smaller interactions than perturbations of the multi-step attack, proof in {Appendix~\ref{append:rap}}.)
Let $\delta^{(\text{rap})}$ denote the adversarial perturbation generated by the RAP attack.
The classification loss for attacking is formulated as the cross-entropy loss, and we use the infinite-step attack with the infinitesimal step size  to simplify both the RAP Attack and the multi-step attack.
For the fair comparison, we set the accumulate step size (attacking strength) of the RAP Attack to be the same as the accumulated step size of the multi-step attack, \emph{i.e.,} $\beta=\alpha m$.
Then, we have $\sum_{a, b\in\Omega}[{I_{a b}(\delta^{(\text{rap})})}]\le\sum_{a, b\in\Omega}[{I_{a b}(\delta^{(\text{multi})})}]$.
\end{proposition}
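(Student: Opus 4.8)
The plan is to mirror the strategy used for Theorem~\ref{theorem: multi-step have large interactions}: first obtain a closed-form solution for the RAP perturbation $\delta^{(\text{rap})}$ in the eigenbasis of the Hessian $H$, then invoke Lemma~\ref{lemma:interaction} to reduce both sides of the claimed inequality to weighted sums over the eigenvalues of $H$, and finally compare these sums eigendirection-by-eigendirection.

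First, I would set up the RAP recursion under the second-order Taylor approximation (Assumption~\ref{assumption:taylor}). Writing $\textit{Loss}(x+u)\approx \textit{Loss}(x)+g^{\top}u+\frac{1}{2}u^{\top}Hu$ with $u=r+\delta$, the outer objective becomes $L(\delta)=\min_{\|r\|_p\le\epsilon_r}\textit{Loss}(x+r+\delta)$. By Danskin's theorem, the update direction for $\delta$ is $\nabla_\delta L(\delta)=g+H(\delta+r^{\star})$, where $r^{\star}$ is the constrained inner minimizer. The essential point is that, along the eigendirection $v_i$, the reverse perturbation $r^{\star}$ moves the evaluation point toward lower loss, and it does so most effectively where the curvature $\lambda_i$ is large; hence the effective gradient driving the $\delta$-update is damped precisely along the top-ranked eigenvectors of $H$. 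I would make this quantitative by diagonalizing the recursion in the basis $V=[v_1,\ldots,v_n]$, so that each coordinate evolves independently and the effect of $r^{\star}$ appears as a per-coordinate attenuation factor.

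Second, taking the infinite-step limit with the infinitesimal step size (as in Corollary~\ref{corollary:delta}) while keeping the accumulated attacking strength fixed at $\beta=\alpha m$, I would obtain $\delta^{(\text{rap})}=V D^{(\text{rap})}\boldsymbol{\gamma}$ for a diagonal matrix $D^{(\text{rap})}$ whose entries are attenuated relative to the multi-step entries $D_{ii}=\frac{\exp(\beta\lambda_i)-1}{\lambda_i}$, with the attenuation increasing in $\lambda_i$. Since $H$ is positive semi-definite by Corollary~\ref{corollary:semi-definite}, all $\lambda_i\ge 0$ and all $D_{ii}\ge 0$, so the coordinatewise ordering is clean. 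I would then apply Lemma~\ref{lemma:interaction} to write $\sum_{a,b\in\Omega}I_{ab}(\delta^{(\text{rap})})=\sum_{i=1}^n\lambda_i\big(D^{(\text{rap})}_{ii}\gamma_i\big)^2$ and $\sum_{a,b\in\Omega}I_{ab}(\delta^{(\text{multi})})=\sum_{i=1}^n\lambda_i\big(D_{ii}\gamma_i\big)^2$, and conclude from the per-coordinate inequality $0\le D^{(\text{rap})}_{ii}\le D_{ii}$ together with $\lambda_i\ge 0$ that every summand of the RAP side is no larger than the corresponding summand of the multi-step side, which yields the proposition.

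I expect the main obstacle to be the first step: rigorously controlling the coupled min-max dynamics. Unlike the pure multi-step attack, the RAP update depends on the constrained inner minimizer $r^{\star}$, so I must (i) justify that the reverse perturbation can be treated within the same quadratic approximation, (ii) handle the norm constraint $\|r\|_p\le\epsilon_r$, which is exactly what keeps $L$ non-trivial, since an unconstrained inner minimization would make $L$ independent of $\delta$ and kill the outer gradient, and (iii) show the resulting damping factor is monotone in $\lambda_i$ so that the comparison $D^{(\text{rap})}_{ii}\le D_{ii}$ holds uniformly over $i$. Packaging these into a clean closed form for $D^{(\text{rap})}$ is where the real work lies; once it is in place, the reduction via Lemma~\ref{lemma:interaction} and the termwise comparison are essentially the same bookkeeping as in the proof of Theorem~\ref{theorem: multi-step have large interactions}.
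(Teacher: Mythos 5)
Your endgame is exactly the paper's: in Appendix~\ref{append:rap} the RAP perturbation is written as $\delta^{(\text{rap})}=\sum_i D^{(\text{rap})}_{ii}\gamma_i v_i$, Lemma~\ref{lemma:interaction} reduces both interaction sums to $\sum_i\lambda_i(D^{(\cdot)}_{ii}\gamma_i)^2$, and the conclusion follows from $0\le D^{(\text{rap})}_{ii}\le D_{ii}$ termwise using Corollary~\ref{corollary:semi-definite}. The gap is that the entire technical content of the proof lies in the step you defer: deriving the closed form of $D^{(\text{rap})}_{ii}$. You propose to get it from Danskin's theorem applied to the \emph{constrained} inner minimizer $r^{\star}=\arg\min_{\|r\|_p\le\epsilon_r}\textit{Loss}(x+r+\delta)$, and you correctly observe that the constraint is what prevents the outer gradient from collapsing (under the quadratic model, the unconstrained minimizer gives $r^{\star}+\delta=-H^{-1}g$ and hence $g+H(\delta+r^{\star})=0$). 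But this route does not produce a clean diagonal attenuation: for general $p$ the constrained minimizer of a quadratic does not diagonalize in the eigenbasis of $H$, and even for $p=2$ the KKT multiplier $\mu$ (giving attenuation $\mu/(\lambda_i+\mu)$ per coordinate) depends on $g$, $\delta$, and the active constraint, so it changes across outer iterations and is not a fixed function of $\lambda_i$. Monotonicity in $\lambda_i$ alone is also not what the termwise comparison needs — you need $D^{(\text{rap})}_{ii}\le D_{ii}$ for \emph{every} $i$, including $\lambda_i=0$.

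The paper sidesteps all of this by not modeling the inner problem as an exact constrained minimization at all. It drops the clip/sign operations and replaces the inner loop with $m_r$ steps of \emph{unconstrained} gradient descent of accumulated strength $\beta_r=\alpha m_r$, started from $r=0$ at each outer step. Under Assumption~\ref{assumption:taylor} this inner recursion diagonalizes exactly (each inner step multiplies the residual by $I-\alpha H$), yielding a per-coordinate damping factor $(1-\alpha\lambda_i)^{m_r}\to\exp(-\beta_r\lambda_i)$ on the effective outer gradient. Feeding this into the outer recursion and taking the double limit as in Corollary~\ref{corollary:delta} gives $D^{(\text{rap})}_{ii}=\frac{\exp(\beta\lambda_i e^{-\beta_r\lambda_i})-1}{\lambda_i}$, and since $0\le e^{-\beta_r\lambda_i}\le 1$ and $\lambda_i\ge 0$, the inequality $0\le D^{(\text{rap})}_{ii}\le D_{ii}=\frac{\exp(\beta\lambda_i)-1}{\lambda_i}$ holds for all $i$ (with equality at $\lambda_i=0$). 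If you want to complete your proof, you should adopt this finite-strength inner-descent model rather than the exact constrained minimizer; otherwise the "clean closed form for $D^{(\text{rap})}$" you need does not materialize.
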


\textbullet{
\textbf{M7: IL Attack}~\cite{huang2019enhancing} generates more transferable adversarial perturbations by using intermediate features.
The IL Attack has two stages. In the first stage, the IL Attack generates a relatively weak adversarial perturbation $\delta_{\textrm{base}}$. Then, in the second stage, the IL Attack generates another adversarial perturbation $\delta^{(\text{il})}$ towards the direction of feature changes in the target layer, which are caused by the previous perturbation $f_l(x+\delta_{\textrm{base}}) - f_l(x)$, \emph{i.e.,}  $\max_{\|\delta^{(\text{il})} \|_p \le \epsilon } [f_l(x+\delta^{\text{(il)}}) - f_l(x) ]^T [f_l(x+\delta_{\textrm{base}}) - f_l(x) ]$, where $f_l(\cdot)$ denotes the feature of the $l$-th layer.

For a fair comparison, we need to control the magnitude of the perturbation generated by the IL Attack and the magnitude of the perturbation generated by the multi-step attack to be the same. In order words,  we need to ensure that the IL Attack and the multi-step attack generate perturbations with the same $\ell_2$ norm.
In this way, we compare interactions of perturbations generated by the IL Attack and interactions of perturbations generated by the multi-step attack.
Then, we prove the following proposition, which shows that the IL Attack generates perturbations with smaller interactions than the multi-step attack.
}

\begin{proposition}
\label{pro:il}
(Perturbations of the IL Attack exhibit smaller interactions than perturbations of the multi-step attack, proof in {Appendix~\ref{append:il}}.)
Let $\delta^{(\text{il})}$ denote the adversarial perturbation generated by the IL Attack.
We notice that both the multi-step attack and the IL Attack are conducted with multiple steps. Without loss of generality, we conduct both the multi-step attack and the IL Attack with infinite steps with the infinitesimal step size.
We also control the magnitude of perturbations generated by the IL Attack to be the same as the magnitude of perturbations generated by the multi-step attack, \emph{i.e.,} $\Vert\delta^{(\text{multi})}\Vert_2= \Vert \delta^{(\text{il})}\Vert_2$.
In this way, we can fairly compare perturbations generated by the IL Attack and perturbations generated by the multi-step attack.
If the classification loss for attacking  is formulated as the cross-entropy loss, then we have $\sum_{a, b\in\Omega}[I_{ab}(\delta^{(\textrm{il})})] \le \sum_{a, b\in\Omega}[I_{ab}(\delta^{(\textrm{multi})})]$.
\end{proposition}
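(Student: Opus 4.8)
The plan is to reduce both sides to the quadratic form $\delta^\top H\,\delta$ and compare them in the eigenbasis of $H$. By Lemma~\ref{lemma:interaction}, $\sum_{a,b\in\Omega} I_{ab}(\delta) = \delta^\top H\,\delta$, and by Corollary~\ref{corollary:semi-definite} the Hessian $H$ is positive semi-definite, so writing $H = V\Lambda V^\top$ with $\lambda_i\ge 0$ and $\delta = \sum_i c_i v_i$ gives $\sum_{a,b} I_{ab}(\delta) = \sum_i \lambda_i c_i^2$. Since the proposition fixes $\|\delta^{(\text{il})}\|_2 = \|\delta^{(\text{multi})}\|_2 = \tau$, the eigen-coefficient vectors of both perturbations carry the same total energy $\sum_i c_i^2 = \tau^2$, so the desired inequality is exactly a comparison of two weighted averages of the $\lambda_i$ under the normalized weight distributions $\{c_i^2/\tau^2\}$. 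It therefore suffices to show that the multi-step energy distribution places more mass on the large-$\lambda_i$ directions than the IL energy distribution does.

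For the multi-step attack I would invoke Corollary~\ref{corollary:delta}: $\delta^{(\text{multi})} = \sum_i \frac{\exp(\beta\lambda_i)-1}{\lambda_i}\gamma_i v_i$, so its eigen-coefficient $a_i = \frac{\exp(\beta\lambda_i)-1}{\lambda_i}\gamma_i$ grows \emph{exponentially} in $\lambda_i$ — precisely the amplification of top-ranked eigendirections already noted after Corollary~\ref{corollary:delta}. For the IL attack I would first linearize the feature map: under Assumption~\ref{assumption:activation} the gating states are fixed, so $f_l(x+\delta)-f_l(x) = W_l\delta$ with $W_l$ the constant Jacobian of the $l$-th layer, and the second-stage objective collapses to the \emph{linear} form $\delta^\top W_l^\top W_l\,\delta_{\text{base}}$. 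Its gradient $W_l^\top W_l\,\delta_{\text{base}}$ is constant, so in the infinite-step, infinitesimal-step-size regime the IL perturbation accumulates along this single fixed direction, yielding $\delta^{(\text{il})} = \tau\,\frac{W_l^\top W_l\,\delta_{\text{base}}}{\|W_l^\top W_l\,\delta_{\text{base}}\|}$. Taking the base perturbation to be gradient-aligned and using the chain-rule identity $H = W_l^\top H_l W_l$ (valid under Assumptions~\ref{assumption:taylor} and~\ref{assumption:activation}, where $H_l$ is the loss Hessian in feature space), I would express $\delta^{(\text{il})}$ in the eigenbasis of $H$ and read off coefficients $b_i$ whose magnitudes grow only polynomially / sub-exponentially in $\lambda_i$ relative to the $a_i$.

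With both coefficient profiles in hand, the final step is a correlation (rearrangement) inequality: if the ratio $|a_i|/|b_i|$ is monotonically non-decreasing in $\lambda_i$ and both profiles carry the same total energy $\tau^2$, then $\sum_i\lambda_i b_i^2 \le \sum_i\lambda_i a_i^2$, which is exactly the claim. Intuitively, when two equal-mass distributions are related by a monotone up-weighting of the large-$\lambda_i$ coordinates, the up-weighted one must have the larger weighted mean of the $\lambda_i$; the exponential factor $\exp(\beta\lambda_i)$ of the multi-step attack guarantees this monotone dominance over the IL profile.

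The hard part will be the middle step: making the IL coefficient profile explicit and proving it is dominated by the multi-step profile in the required monotone sense. The obstacle is that the matrix $W_l^\top W_l$ governing the IL direction and the Hessian $H$ need not be simultaneously diagonalizable, so $\delta^{(\text{il})}$ does not automatically decompose cleanly along the eigenvectors $v_i$ of $H$. Resolving this will require either the factorization $H = W_l^\top H_l W_l$ together with structural assumptions that align the relevant subspaces, or a direct bound showing that the Rayleigh quotient $\delta^{(\text{il})\top} H\,\delta^{(\text{il})}/\|\delta^{(\text{il})}\|^2$ cannot exceed that produced by the exponentially amplified multi-step direction. Once this monotone-ratio condition is secured, the remainder is the routine correlation inequality above.
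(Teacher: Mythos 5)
Your overall skeleton matches the paper's: both reduce the interaction sum to the quadratic form $\delta^{\top}H\delta$ via Lemma~\ref{lemma:interaction}, work in the eigenbasis of $H$, exploit the equal-norm constraint to turn the claim into a comparison of two equal-mass energy profiles over the eigenvalues, and finish with a Chebyshev/rearrangement-type inequality (the paper literally invokes Chebyshev's sum inequality, Lemma~\ref{lemma:csinequality}, exactly as in Perspective~2 of Theorem~\ref{theorem: multi-step have large interactions}). Your closing observation that a monotone up-weighting of large-$\lambda_i$ coordinates at fixed total energy increases $\sum_i\lambda_i c_i^2$ is correct and is the same mechanism the paper uses.

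The genuine gap is the middle step, and you have correctly diagnosed but not resolved it. You model the IL direction literally as $\tilde{W}_x^{(l)}(\tilde{W}_x^{(l)})^{\top}\delta_{\textrm{base}}$, which does not decompose along the eigenvectors of $H$ unless the feature Gram matrix and the Hessian are simultaneously diagonalizable; your claim that the resulting coefficients grow only ``sub-exponentially in $\lambda_i$'' is asserted, not derived, and without extra structural assumptions it cannot be derived. The paper avoids this entirely by a cruder approximation: it argues from Eq.~\eqref{eq:obj-ia-our} that the second stage merely rescales the base perturbation, $\delta^{(\text{il})}\approx\eta\,\delta_{\textrm{base}}$ with $\eta\ge 1$, \emph{dropping} the $\tilde{W}_x^{(l)}(\tilde{W}_x^{(l)})^{\top}$ factor altogether, and additionally assumes $\delta_{\textrm{base}}$ is itself an infinite-step perturbation of much smaller strength $\beta_2\ll\beta_1$. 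This makes both coefficient profiles explicit and diagonal, $D^{(\text{il})}_{ii}=\eta\,\frac{\exp(\beta_2\lambda_i)-1}{\lambda_i}$ versus $D_{ii}=\frac{\exp(\beta_1\lambda_i)-1}{\lambda_i}$, so the ratio $D_{ii}/D^{(\text{il})}_{ii}=\frac{1}{\eta}\cdot\frac{\exp(\beta_1\lambda_i)-1}{\exp(\beta_2\lambda_i)-1}$ is (roughly) increasing in $\lambda_i$ and the Chebyshev step goes through. Note also that your choice of a gradient-aligned base corresponds to the degenerate limit $\beta_2\to 0$ of the paper's setup, in which the proposition collapses to Theorem~\ref{theorem: multi-step have large interactions}. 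To complete your proof you would either need to adopt the paper's simplification $\delta^{(\text{il})}\propto\delta_{\textrm{base}}$, or add an explicit alignment assumption (e.g., via the factorization $H=\tilde{W}_x^{(l)}H_l(\tilde{W}_x^{(l)})^{\top}$ together with an orthogonality condition analogous to Assumption~\ref{assum:ortho-weights}) that lets you bound the Rayleigh quotient of $\tilde{W}_x^{(l)}(\tilde{W}_x^{(l)})^{\top}\delta_{\textrm{base}}$ against that of the exponentially amplified multi-step direction; as written, the decisive monotone-dominance claim is an open assertion.
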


\textbullet{
\textbf{M8: VR Attack}~\cite{wu2018understanding} smooths the gradient in attacking by adding the Gaussian noise on the input, which is computed as follows.
$L^{(\text{vr})}(x)  =  \mathbb{E}_{\xi\sim\mathcal{N}(0, \sigma^2 I)}\left[\textit{Loss}(\hat{x} = x + \xi)\right]$.
Besides, we assume that the variance of noise $\sigma^2$ in the VR Attack is small, so we approximate the classification loss using the Taylor expansion with terms of the first and second orders.
Let $\hat{p}$ denote the output probability of the DNN on the input with a Gaussian noise.
If we approximate the expected probability $\bar{p} = \mathbb{E}_{\xi\sim\mathcal{N}(0, \sigma^2I)}[\hat{p}]$ using $\bar{p}(\tau)$ in Eq.~\eqref{eq:pi-tau}, we have the following proposition.

\begin{proposition}\label{pro:p-bar-approx}
(Proof in {Appendix}~\ref{append:vr}.)
    For the $i$-th category, if the output probability of the original sample $p_i \le 0.5$, then the probability under the noise is boosted, \emph{i.e.,} $\bar{p}_i (\tau) - p_i \ge 0$; if $p_i \ge 0.5$, then $\bar{p}_i (\tau) - p_i \le 0$.
\end{proposition}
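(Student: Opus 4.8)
The plan is to reduce the claim to the sign of the second derivative of the sigmoid, since averaging a saturating nonlinearity over symmetric noise is controlled by its curvature. First I would rewrite the per-category probability as a sigmoid of a relative logit: setting $z_i = h_i(x) - \log\sum_{j\ne i}\exp(h_j(x))$, a direct computation gives $p_i = \mathrm{sig}(z_i)$ exactly, where $\mathrm{sig}(z) = 1/(1+e^{-z})$. The VR Attack replaces $x$ by $x+\xi$, which perturbs the logit to $z_i + \Delta z_i$; since $\sigma^2$ is assumed small, the induced perturbation $\Delta z_i$ is zero-mean Gaussian to leading order, with variance $\propto \sigma^2\|\nabla_x z_i\|_2^2$. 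By construction of $\bar p_i(\tau)$ in Eq.~\eqref{eq:pi-tau}, this effect is summarized by a single scalar parameter $\tau$, so that $\bar p_i(\tau) = \mathbb{E}_{\Delta\sim\mathcal N(0,\tau)}[\mathrm{sig}(z_i + \Delta)]$ is the quantity to analyze.

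Next I would Taylor-expand $\mathrm{sig}$ to second order around $z_i$, obtaining $\mathrm{sig}(z_i+\Delta) \approx \mathrm{sig}(z_i) + \mathrm{sig}'(z_i)\,\Delta + \tfrac12\,\mathrm{sig}''(z_i)\,\Delta^2$. Taking the expectation over the symmetric Gaussian $\Delta$ annihilates the linear term and yields
\begin{equation*}
    \bar p_i(\tau) - p_i = \tfrac12\,\mathrm{sig}''(z_i)\,\mathbb{E}[\Delta^2] = \tfrac{\tau}{2}\,\mathrm{sig}''(z_i).
\end{equation*}
The decisive step is the closed-form curvature identity $\mathrm{sig}''(z) = \mathrm{sig}(z)\,(1-\mathrm{sig}(z))\,(1-2\,\mathrm{sig}(z))$, which I would substitute to get $\bar p_i(\tau) - p_i = \tfrac{\tau}{2}\,p_i(1-p_i)(1-2p_i)$. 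Since $\tau \ge 0$ and $p_i(1-p_i) \ge 0$, the sign of $\bar p_i(\tau) - p_i$ equals the sign of $(1-2p_i)$: it is nonnegative exactly when $p_i \le 0.5$ and nonpositive exactly when $p_i \ge 0.5$, which is precisely the two cases asserted.

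The main obstacle I anticipate is not the one-dimensional computation above but justifying the passage from the exact expected probability $\bar p_i$ to its scalar surrogate $\bar p_i(\tau)$. In the genuine multi-class model the relative logits $z_1,\dots,z_c$ are all driven by the same input noise $\xi$, and $\Delta z_i$ carries an $O(\sigma^2)$ mean shift from the curvature $\nabla_x^2 z_i$ that a purely linear model ignores; the role of the small-$\sigma^2$ assumption together with Eq.~\eqref{eq:pi-tau} is exactly to collapse these correlated, higher-order effects into the single effective variance $\tau$. I would therefore argue that, to the second order retained throughout (consistent with Assumption~\ref{assumption:taylor}), the sign of $\bar p_i(\tau)-p_i$ is carried entirely by the $\mathrm{sig}''$ factor, so the conclusion is insensitive to these neglected terms. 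A final, trivial check is that $0 < p_i < 1$ keeps $p_i(1-p_i)$ strictly positive, so the direction of the inequality is governed solely by $(1-2p_i)$.
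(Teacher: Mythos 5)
Your route is genuinely different from the paper's and, at the infinitesimal level, it is correct and more conceptual. The paper's proof in Appendix~\ref{append:vr} never Taylor-expands: it substitutes $\omega=\frac{e^{\tau}+e^{-\tau}}{2}$ into Eq.~\eqref{eq:pi-tau}, reduces $\bar p_i(\tau)$ to a rational function of $p_i$ and $\omega$, and obtains the exact identity $\bar p_i(\tau)-p_i=\frac{(\omega-1)\,p_i(1-2p_i)(1-p_i)}{2(\omega-1)p_i(1-p_i)+1}$, whose sign is read off from $(1-2p_i)$ because $\omega\ge 1$ makes every other factor nonnegative. Your observation that $p_i=\mathrm{sig}(z_i)$ for the relative logit $z_i=h_i-\log\sum_{j\ne i}e^{h_j}$ is exactly right and explains \emph{why} the exact formula has this shape: the factor $p_i(1-p_i)(1-2p_i)$ is precisely $\mathrm{sig}''(z_i)$. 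What the paper's computation buys is validity for every $\tau$ with no remainder to control; what yours buys is an explanation (saturation of the sigmoid) and immediate generality to any symmetric zero-mean perturbation of the logit.

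There is, however, a genuine gap. Eq.~\eqref{eq:pi-tau} is a two-point average at $z_i\pm\tau$ with \emph{finite} $\tau$, not a Gaussian average with small variance, and the proposition is asserted without a smallness restriction on $\tau$. Your second-order expansion only fixes the sign of $\bar p_i(\tau)-p_i$ up to an unestimated $O(\tau^4)$ remainder (also, for the two-point average $\mathbb{E}[\Delta^2]=\tau^2$, so the leading term is $\tfrac{\tau^2}{2}\mathrm{sig}''(z_i)$ rather than $\tfrac{\tau}{2}\mathrm{sig}''(z_i)$). The pointwise sign of $\mathrm{sig}''(z_i)$ does not settle the finite-$\tau$ case by curvature alone, because when $p_i<0.5$ the interval $[z_i-\tau,z_i+\tau]$ can straddle the inflection point $z=0$, where $\mathrm{sig}''$ changes sign. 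The cleanest repair within your framework is to write $\mathrm{sig}(z_i+\tau)-2\,\mathrm{sig}(z_i)+\mathrm{sig}(z_i-\tau)=\int_0^\tau\bigl[\mathrm{sig}'(z_i+s)-\mathrm{sig}'(z_i-s)\bigr]\,ds$ and use that $\mathrm{sig}'(w)=\mathrm{sig}(w)(1-\mathrm{sig}(w))$ is even in $w$ and decreasing in $|w|$, so the integrand has the sign of $-z_i$, i.e.\ of $(1-2p_i)$, for every $s\ge 0$. With that one extra line your argument proves the proposition for all $\tau$, matching the paper's exact result.
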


Proposition~\ref{pro:p-bar-approx} shows that Gaussian noises in the VR Attack make the output probability $\bar{p}(\tau)$ more balanced than $p$, thus we make Assumption~\ref{assum:vr-balance} based on this conclusion.
In order to fairly compare the single-step VR Attack and the normal single-step attack, we ensure perturbations generated by the single-step VR Attack to have the same attacking strength as perturbations generated by the normal single-step attack by using the normalization in Eq.~\eqref{eq:norm-ce-vr}.
Then, under Assumption~\ref{assum:ortho-weights} and Assumption~\ref{assum:vr-balance}, we prove the following proposition, which shows that the single-step VR Attack generates perturbations with smaller interactions than the normal single-step attack.}

\begin{proposition} \label{pro:vr}
(The single-step VR Attack generates perturbations with smaller interactions than the normal single-step attack, proof in {Appendix}~\ref{append:vr}.)
    {Let the classification loss $\textit{Loss}(x)$  be formulated as the cross-entropy loss in Eq.~\eqref{eq:softmax-ce}}.
    In order to fairly compare the single-step VR Attack and the normal single-step attack, we ensure perturbations generated by the single-step VR Attack to have the same attacking strength as perturbations generated by the normal single-step attack by using the normalization in Eq.~\eqref{eq:norm-ce-vr}.
    Then, under Assumption~\ref{assum:ortho-weights} and Assumption~\ref{assum:vr-balance},  we have $\sum_{a, b\in\Omega} \left[I_{ab}\left(\delta^{(\text{single})}_{{\text{vr}}}\right)\right] \le  \sum_{a, b\in\Omega} \left[I_{ab}\left(\delta^{(\text{single})}_{{\text{ce}}}\right) \right]$.
\end{proposition}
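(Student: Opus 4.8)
The plan is to reduce this proposition entirely to Theorem~\ref{theorem:balance}, which already establishes that, for single-step attacks of equal strength, more balanced prediction scores on the non-ground-truth categories yield a smaller sum of pairwise interactions. Concretely, I would argue that the single-step VR Attack behaves as an ordinary single-step attack whose effective prediction probabilities are the noise-averaged $\bar{p}$ rather than the original $p$, and then invoke the fact (supplied by Proposition~\ref{pro:p-bar-approx}) that $\bar{p}$ is more balanced than $p$.

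First I would make the gradient correspondence precise. Writing the VR objective $L^{(\text{vr})}(x)=\mathbb{E}_{\xi\sim\mathcal{N}(0,\sigma^2 I)}[\textit{Loss}(x+\xi)]$ and expanding $\textit{Loss}$ to second order in the small-$\sigma$ regime, the single-step VR perturbation is proportional to $\nabla_x L^{(\text{vr})}(x)=\mathbb{E}_\xi[\nabla_x\textit{Loss}(x+\xi)]$. For the cross-entropy loss of Eq.~\eqref{eq:softmax-ce}, the input gradient is a fixed linear combination of the class-score gradients whose coefficients are exactly the prediction probabilities minus the one-hot label. Averaging over the noise therefore replaces those coefficients by the smoothed probabilities $\bar{p}$, so that the VR gradient has the same structural form as the cross-entropy gradient but evaluated at $\bar{p}$ in place of $p$. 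This identifies the single-step VR Attack with a single-step attack driven by the balanced probabilities $\bar{p}$.

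Next I would feed this into the balance machinery. By Proposition~\ref{pro:p-bar-approx}, each $\bar{p}_i$ is pulled toward $1/2$ relative to $p_i$, which is exactly the content of Assumption~\ref{assum:vr-balance}: the VR prediction vector is more balanced than the normal one in the sense of Definition~\ref{def:balance}. The normalization in Eq.~\eqref{eq:norm-ce-vr} equalizes the attacking strength of the two single-step perturbations, so the only remaining difference between them is the balance of their driving probabilities. Applying Theorem~\ref{theorem:balance} under Assumption~\ref{assum:ortho-weights} then gives $\sum_{a,b\in\Omega}[I_{ab}(\delta^{(\text{single})}_{\text{vr}})]\le\sum_{a,b\in\Omega}[I_{ab}(\delta^{(\text{single})}_{\text{ce}})]$, which together with the closed form $\sum_{a,b}I_{ab}(\delta)=\delta^T H\delta$ of Lemma~\ref{lemma:interaction} closes the argument.

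The main obstacle I expect is rigorously justifying the gradient identification in the second step: the noise-averaged gradient $\mathbb{E}_\xi[\nabla_x\textit{Loss}(x+\xi)]$ is not literally the cross-entropy gradient evaluated at $\bar{p}$, and turning this informal correspondence into an inequality requires controlling the second-order Taylor remainder in $\sigma$ and verifying that the score-gradient directions are not themselves reshuffled by the averaging (this is where Assumption~\ref{assum:ortho-weights} on the near-orthogonal class weights does the work). A secondary subtlety is confirming that the normalization of Eq.~\eqref{eq:norm-ce-vr} genuinely fixes the attacking strength, so that the comparison isolates the balance effect rather than a magnitude artifact.
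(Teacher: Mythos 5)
Your first step is right, and in fact easier than you fear: under Assumption~\ref{assumption:activation} the network output is affine in the noise, so the gradient for each noise realization is exactly $\tilde{W}_x(\hat{p}-Y)$, which is linear in $\hat{p}$; taking the expectation gives $g^{(\text{vr})}=\tilde{W}_x(\bar{p}-Y)$ \emph{exactly}, with no Taylor remainder in $\sigma$ to control. This identification, the use of Proposition~\ref{pro:p-bar-approx} to get balance, and the normalization of Eq.~\eqref{eq:norm-ce-vr} all match the paper.

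The gap is in your final step: Theorem~\ref{theorem:balance} does not apply to this situation, for two concrete reasons. First, its hypotheses (via Definition~\ref{def:balance}) require $p_y=q_y$, whereas here Proposition~\ref{pro:p-bar-approx} gives $q_y\le p_y$ with strict inequality in general; this is precisely why the paper rescales the two step sizes by $1/(1-p_y)$ and $1/(1-q_y)$ rather than using a common $\eta$. Second, and more fundamentally, Theorem~\ref{theorem:balance} compares $\eta^2 g_\theta^T H_\theta g_\theta$ with $\eta^2 g_{\theta'}^T H_{\theta'} g_{\theta'}$, where for the ``balanced'' DNN \emph{both} the gradient and the Hessian are built from $q$, yielding the form $\sum_{i\ne y}q_i^3+q_y(q_y-1)^2-(\cdots)^2$. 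In the VR setting there is only one DNN: the perturbation direction changes to $\tilde{W}_x(q-Y)$, but the interaction $\delta^T H\delta$ is still evaluated with the original Hessian $\tilde{W}_x(\operatorname{diag}(p)-pp^T)\tilde{W}_x^T$. The resulting quantity is the \emph{mixed} form $\sum_{i\ne y}p_iq_i^2+p_y(q_y-1)^2-(\sum_{i\ne y}p_iq_i+p_y(q_y-1))^2$, which is not the quantity Theorem~\ref{theorem:balance} bounds. The paper therefore cannot and does not invoke that theorem; it redoes the argument from scratch for the mixed form, setting $a_{i_k}=p_{i_k}/(1-p_y)$ and $b_{j_k}=q_{j_k}/(1-q_y)$ (both summing to $1$ thanks to the normalization) and applying Chebyshev's sum inequality separately to three terms analogous to, but different from, those in the proof of Theorem~\ref{theorem:balance}. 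To repair your proof you would need to carry out this mixed-form computation explicitly rather than citing the theorem.
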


\textbullet{
\textbf{M9: SGM Attack}~\cite{Wu2020Skip} uses the gradient information of the skip connection in ResNets to improve the transferability of adversarial perturbations.
As discussed in Appendix~\ref{append:sgm}, the SGM Attack modifies the gradient in back-propagation of of ResNets to reduce gradient information from the residual modules, which can be considered as adding a specific dropout operation on the gradient.}

In our previous work~\cite{dropout2020zhang}, we have proven that the dropout operation can reduce the significance of interactions.
Thus, this also explains the reason why the SGM Attack generates perturbations with smaller interactions than the multi-step attack.

\textbullet{
\textbf{M10: GhostNet Attack}~\cite{li2020learning} proposes to generate more transferable adversarial perturbations on the GhostNet. The GhostNet uses two kinds of erosions to revise a pre-trained DNN, including the dropout erosion and the skip connection erosion. The dropout erosion means that \citet{li2020learning} densely applied Dropout operations after each layer of the DNN.
The skip connection erosion means that the GhostNet modifies the weight of features from the skip connection in each layer of ResNets. In the forward propagation given each input sample, the GhostNet randomly samples a weight from a uniform distribution to represent the importance of features from the skip connection.

In Appendix~\ref{append:ghost}, we show that we can roughly consider that the dropout erosion is similar to the SGM Attack, which has been proven to reduce interactions between perturbation units.
In Appendix~\ref{append:ghost}, we show that we can roughly consider the skip connection erosion as a special technique to add noises to the output of the DNN.
This is similar to the VR Attack that adds noises to the input, and the VR Attack has been proven to be equivalent to adding noises to the output of the DNN.
In this way, as proven in Appendix~\ref{append:vr}, the VR Attack can reduce interactions, so we can consider that the skip connection erosion also reduces interactions.
}

\textbullet{
\textbf{M11: FD Attack}~\cite{Inkawhich2020Transferable} puts a two-layer network for each category upon the feature of an intermediate layer in the DNN. Thus, the FD Attack retrains a two-layer network for the classification of each specific category. We can consider the FD Attack actually uses the two-layer network to replace to deep layers upon the feature in the original DNN.
Then, it generates adversarial perturbations by maximizing the output probability of the two-layer network corresponding to the target category.
Note that the FD Attack only focuses on the targeted attack.

Obviously, the newly trained two-layer (shallow) networks usually encode more linear feature representations than the original deep  network.
In this way, we can consider eigenvalues of the Hessian matrix in the two-layer network are smaller than those in the original DNN.
If the newly-trained two-layer network exhibits the similar classification power to the original deep network, then, according to Lemma~\ref{lemma:interaction}, the two-layer network learned by the FD Attack is very likely to generate perturbations with smaller interactions than the original DNN.
}

\textbullet{
\textbf{M12: FI Attack}~\cite{wang2021feature} generates adversarial perturbations as follows. For the input $x$, the FI Attack first randomly masks the input by the mask matrix $M\in\{0,1\}^n$.
Then, the FI Attack feeds the input under different masks into the DNN.
For each specific mask $M$, let $Loss(M\odot x)$ denote the loss function on the masked input, and let $f_k(M\odot x)$ denote the feature of the $k$-th layer, where $\odot$ denotes the element-wise multiplication.
Then, $g^{(M)}=\frac{\partial Loss(M\odot x)}{\partial f_k(M\odot x)}$ denote the gradient of the loss \emph{w.r.t.} the feature of the $k$-th layer.
The FI Attack computes the average gradient over different masks $\bar g = \mathbb{E}_{M\sim \text{Bernoulli}(q)}\frac{\partial Loss(M\odot x)}{\partial f_k(M\odot x)}$.
The objective of the FI Attack is to estimate the adversarial perturbation to further push intermediate-layer features away from the direction of the average gradient $\bar g$, \emph{i.e.,} $\min_{\delta} \sum {\bar g \odot f_k(x+\delta)}$.

In order to prove that the FI Attack generates perturbations with smaller interactions than the multi-step attack, we will prove the following three propositions.
(1) In the pairwise interaction between perturbation units, high-order interactions are major components, while low-order interactions constitute only a small proportion. (2) In the FI Attack, when the input is masked, high-order interactions will be more significantly destroyed than low-order interactions. In other words, most high-order interactions are destroyed and low-order interactions are remained. Therefore, the generated perturbations mainly encode low-order interactions. (3) Since low-order interactions are usually small, the perturbation mainly containing low-order interactions is more likely to exhibit smaller interactions than the perturbation that contain both high-order and low-order interactions.
Therefore, interactions between perturbation units generated by the FI Attack are usually smaller than those generated by the multi-step attack.
}

\section{Interaction loss for boosting adversarial transferability}\label{sec:IR loss}

\textbf{Interaction loss.}
Based on proofs in earlier sections,
we propose to improve the transferability of adversarial perturbations by directly penalizing interactions between perturbation units during attacking.
Therefore, we design the interaction loss $\ell_{\text{interaction}}$ in  Equation \eqref{eq:attack}, and jointly optimize the classification loss and the interaction loss to generate adversarial perturbations.
This method is termed the interaction-reduced attack (IR Attack).
\begin{equation}
    \begin{split}
        \underset{\delta}{\text{max}}\,  \big[Loss(x+\delta) - \lambda \ell_{\textrm{interaction}}\big], \;
        \ell_{\textrm{interaction}}\!=\! \mathbb{E}_{a, b\in\Omega}\left[ I_{ab}(\delta)\right], \;
        \text{s.t.}~ \|\delta\|_p\le \epsilon,\; x\!+\!\delta\in[0, 1]^n,
    \end{split}
     \label{eq:inte loss}
\end{equation}
where $\lambda$ is a constant weight for the interaction loss $\ell_{\textrm{interaction}}$.
We note that when the dimension of images is high, the computational cost of the interactions loss is intolerable {even when} simplified according to Eq.~{\eqref{eq:expect}}.
Therefore, in order to further reduce the computational cost, we divide the input image into $16\times 16$ grids, and compute and penalize the interactions at the grid level rather than the pixel level.
In addition, we apply an efficient sampling method~\cite{dropout2020zhang} to approximate the expectation in Eq.~\eqref{eq:expect}.

Figure~\ref{fig:interaction} visualizes interactions between each perturbation unit and its adjacent perturbation units that were computed at the grid level.
We used adversarial perturbations generated with and without the interaction loss, respectively.
The result shows that the interactions loss successfully reduced interactions between perturbation units.

\begin{figure}[t]
    \centering
    \includegraphics[width=0.85\linewidth]{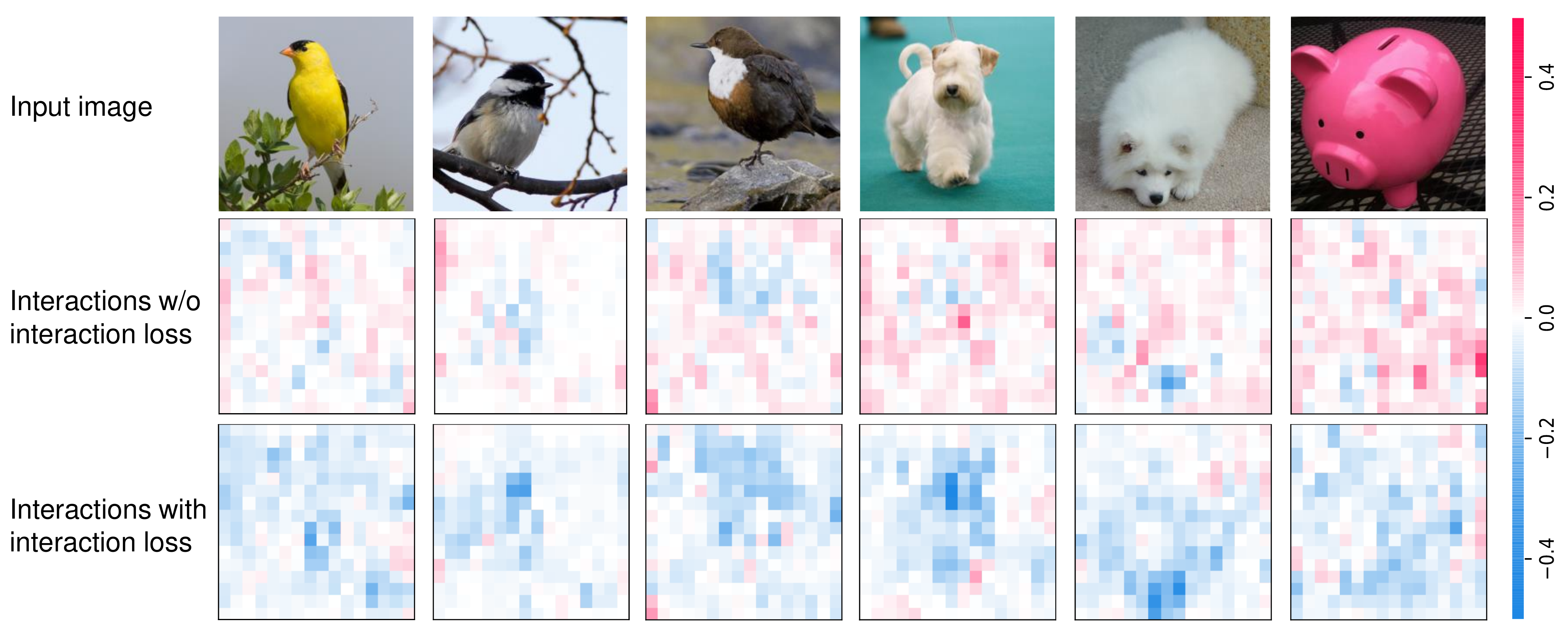}
    \vspace{-5pt}
    \caption{Visualization of interactions between neighboring perturbation units, which is obtained from perturbations generated with and without the interaction loss, respectively.
    Here, we ignore interactions between non-adjacent units to simplify the visualization, because adjacent units usually encode much more significant interactions than other units.
    The color of each pixel in heatmaps is computed as $color[i] \propto E_{j\in N_i}\left[ I_{i j}(\delta)\right]$, where $N_i$ denotes the set of adjacent perturbation units of the perturbation unit $i$.
    Note that in this experiment, we set $v(S)=\max_{y'\ne y}h_{y'}(x+\delta^{(S)})-h_y(x+\delta^{(S)})$, rather than $Loss(x+\delta^{(S)})-Loss(x)$ used in Corollary~\ref{corollary:semi-definite}.
    }
    \label{fig:interaction}
\end{figure}

\begin{table}[!t]
\renewcommand{\arraystretch}{1.1}
\renewcommand\tabcolsep{3.0pt}
\centering
\caption{
Transferability with and without the interaction loss: the success rates of $\ell_\infty$ and $\ell_2$ black-box attacks crafted on six source models, including AlexNet, VGG16, RN-34/152, DN-121/201, against seven target models.
Penalizing interactions between perturbation units boosted the transferability of adversarial perturbations.
\vspace{-5pt}
}
\resizebox{\linewidth}{!}{
\begin{tabular}{c|c|ccccccc}
\hline
  Source& Method & VGG-16 & RN152 & DN-201 & SE-154 & IncV3 & IncV4 & IncResV2  \\ \hline
  \multirow{2}{*}{{AlexNet}} & PGD $\ell_\infty$ & 67.0$\pm$1.6 & 27.8$\pm$1.1 & 32.3$\pm$0.4 & 28.2$\pm$0.7 & 29.1$\pm$1.5 & 23.0$\pm$0.4 & 18.6$\pm$1.5 \\
  & PGD $\ell_\infty$+IR & \textbf{78.7$\pm$1.0} & \textbf{42.0$\pm$1.5} & \textbf{50.3$\pm$0.4} & \textbf{41.2$\pm$0.6} & \textbf{43.7$\pm$0.5} & \textbf{36.4$\pm$1.5} & \textbf{29.0$\pm$1.0} \\\hline
  \multirow{2}{*}{{VGG-16}} & PGD $\ell_\infty$ & -- & 43.0$\pm$1.8 & 48.3$\pm$2.0 & 52.9$\pm$2.7 & 39.3$\pm$0.7 & 49.3$\pm$1.1 & 29.7$\pm$2.0 \\
  & PGD $\ell_\infty$+IR & -- & \textbf{63.1$\pm$1.6} & \textbf{70.0$\pm$1.1} & \textbf{71.2$\pm$1.5} & \textbf{57.6$\pm$1.0} & \textbf{68.6$\pm$3.2} & \textbf{49.2$\pm$1.2} \\\hline
  \multirow{2}{*}{{RN-34}} & PGD $\ell_\infty$ & 65.4$\pm$2.9 & 59.2$\pm$2.7 & 63.5$\pm$3.3 & 33.1$\pm$2.9 & 27.4$\pm$3.6 & 23.9$\pm$1.7 & 21.1$\pm$1.1 \\
  & PGD $\ell_\infty$+IR & \textbf{84.0$\pm$0.5} & \textbf{84.7$\pm$2.3} & \textbf{88.5$\pm$0.9} & \textbf{64.4$\pm$1.6} & \textbf{56.9$\pm$3.1} & \textbf{59.3$\pm$4.3} & \textbf{49.2$\pm$1.1} \\\hline
  \multirow{2}{*}{{RN-152}} & PGD $\ell_\infty$ &  51.6$\pm$3.2 & -- & 61.5$\pm$2.4 & 33.9$\pm$1.5 & 28.1$\pm$0.9 & 25.0$\pm$1.2 & 22.4$\pm$1.0 \\
  & PGD $\ell_\infty$+IR & \textbf{72.3$\pm$1.2} & -- & \textbf{82.1$\pm$1.3} & \textbf{61.1$\pm$0.9} & \textbf{53.6$\pm$0.8} & \textbf{50.6$\pm$3.5} & \textbf{46.0$\pm$2.3} \\ \hline
  \multirow{2}{*}{{DN-121}} & PGD $\ell_\infty$ & 68.6$\pm$1.1 & 63.6$\pm$3.2 & 86.9$\pm$1.5 & 46.1$\pm$1.5 & 37.3$\pm$1.6 & 37.1$\pm$2.1 & 28.9$\pm$2.8 \\
  & PGD $\ell_\infty$+IR & \textbf{85.0$\pm$0.3} & \textbf{84.8$\pm$0.4} & \textbf{95.1$\pm$0.2} & \textbf{70.3$\pm$1.7} & \textbf{61.1$\pm$2.5} & \textbf{62.1$\pm$2.0} & \textbf{53.5$\pm$0.3} \\ \hline
  \multirow{2}{*}{{DN-201}} & PGD $\ell_\infty$ & 64.4$\pm$1.4 & 67.8$\pm$0.2 & -- & 50.9$\pm$0.8 & 39.5$\pm$3.3 & 36.5$\pm$0.9 & 34.2$\pm$0.4 \\
  & PGD $\ell_\infty$+IR & \textbf{78.6$\pm$2.5} & \textbf{85.0$\pm$1.1} & -- & \textbf{73.9$\pm$0.5} & \textbf{61.6$\pm$1.8} & \textbf{63.7$\pm$0.6} & \textbf{56.4$\pm$2.1} \\ \hline \hline
  \multirow{2}{*}{AlexNet} & PGD $\ell_2$ & 85.1$\pm$1.5 & 58.9$\pm$1.0 & 60.2$\pm$2.1 & 55.1$\pm$1.5 & 56.0$\pm$3.7 & 49.6$\pm$3.4 & 44.6$\pm$3.3 \\
  & PGD $\ell_2$+IR & \textbf{91.6$\pm$1.1} & \textbf{72.0$\pm$1.6} & \textbf{76.8$\pm$1.0} & \textbf{69.0$\pm$1.0} & \textbf{73.0$\pm$0.8} & \textbf{63.1$\pm$2.1} & \textbf{59.4$\pm$1.9} \\ \hline
  \multirow{2}{*}{VGG-16} & PGD $\ell_2$ &  -- & 76.7$\pm$0.9 & 82.3$\pm$2.9 & 83.5$\pm$1.9 & 77.5$\pm$3.6 & 82.1$\pm$2.2 & 69.4$\pm$2.1 \\
  & PGD $\ell_2$+IR & -- & \textbf{86.5$\pm$0.9} & \textbf{88.9$\pm$1.5} & \textbf{89.6$\pm$1.2} & \textbf{85.2$\pm$1.1} & \textbf{88.3$\pm$1.4} & \textbf{80.4$\pm$0.4} \\ \hline
  \multirow{2}{*}{RN-34} & PGD $\ell_2$ & 88.2$\pm$1.4 & 86.2$\pm$0.4 & 89.6$\pm$1.3 & 66.9$\pm$1.1 & 64.2$\pm$2.9 & 60.0$\pm$1.9 & 55.2$\pm$1.8 \\
  & PGD $\ell_2$+IR & \textbf{95.2$\pm$0.2} & \textbf{95.4$\pm$0.1} & \textbf{96.7$\pm$0.6} & \textbf{86.7$\pm$1.2} & \textbf{84.3$\pm$0.6} & \textbf{81.8$\pm$1.9} & \textbf{80.4$\pm$1.9} \\ \hline
  \multirow{2}{*}{DN-121} & PGD $\ell_2$ & 89.4$\pm$1.1 & 86.8$\pm$1.0 & 97.6$\pm$1.0 & 75.6$\pm$1.7 & 70.1$\pm$2.9 & 70.4$\pm$4.4 & 66.5$\pm$4.7 \\
  & PGD $\ell_2$+IR & \textbf{94.2$\pm$0.1} & \textbf{93.3$\pm$0.8} & \textbf{97.7$\pm$0.3} & \textbf{87.8$\pm$0.7} & \textbf{84.5$\pm$0.7} & \textbf{84.2$\pm$0.1} & \textbf{82.4$\pm$0.1} \\  \hline
\end{tabular}}
\label{table:ir}
\end{table}

\begin{table*}[!t]
\renewcommand{\arraystretch}{1.1}
\renewcommand\tabcolsep{3.0pt}
\centering
\caption{
Transferability of perturbations generated on the ensemble model: the success rates of $\ell_\infty$ black-box attacks crafted on the ensemble model (RN-34+RN-152+DN-121) against nine target models.
\vspace{-10pt}
}
\resizebox{\linewidth}{!}{
\begin{tabular}{c|c|ccccccccccc}
\hline
  Source& Method & VGG-16 & RN-152  & DN-201 & SE-154 & IncV3 & IncV4 & IncResV2 & DPN-68 & NASN-L & PNASN  \\ \hline
  \multirow{2}{*}{Ensemble} & PGD $\ell_\infty$ & 86.6$\pm$1.2 & \textbf{99.9$\pm$0.1}&  \textbf{97.0$\pm$0.7} & 70.7$\pm$1.6 & 64.2$\pm$0.3 & 57.7$\pm$2.4 & 53.1$\pm$0.7 & 61.6$\pm$0.5& 59.6$\pm$0.4& 72.3$\pm$0.3 \\
  & PGD $\ell_\infty$+IR & \textbf{91.5$\pm$0.1} & 92.4$\pm$1.6 &  92.1$\pm$1.7 & \textbf{86.1$\pm$0.3} & \textbf{81.6$\pm$0.9} & \textbf{79.9$\pm$1.7} & \textbf{78.4$\pm$1.3} & \textbf{82.5$\pm$1.0} & \textbf{82.3$\pm$1.6} & \textbf{85.6$\pm$0.5} \\ \hline
\end{tabular}}
\label{table:ensemble}
\end{table*}

\begin{table*}[t]
\renewcommand{\arraystretch}{1.1}
\renewcommand\tabcolsep{3.0pt}
\centering
\caption{Transferability against the secured models: the success rates of $\ell_\infty$ black-box attacks crafted on RN-34 and DN-121 source models against three secured models.
\vspace{-5pt}
}
\resizebox{\linewidth}{!}{
\begin{tabular}{c|c|rrr|c|c|rrr}
\hline
  Source& \!Method \!&\!\! { IncV3$_{ens3}$} \!\!&\!\! { IncV3$_{ens4}$} \!\!&\!\!{ IncRes$_{ens3}$} \!&\!\!Source \!&\!\! Method \!\!&\!\! {  IncV3$_{ens3}$} \!\!&\!\!{ IncV3$_{ens4}$} \!\!&\!\!{ IncRes$_{ens3}$} \\ \hline
\multirow{4}{*}{{\makecell{ RN-34} }} & PGD $\ell_\infty$ & 9.8$\pm$0.1 & 10.0$\pm$0.5 & 5.7$\pm$0.3  & \multirow{4}{*}{\makecell{DN-121}} & PGD $\ell_\infty$ & 12.8$\pm$0.1 & 11.2$\pm$1.7 & 6.9$\pm$1.0  \\
 & PGD $\ell_\infty$+IR & \textbf{26.5$\pm$2.9} & \textbf{22.1$\pm$1.3} & \textbf{14.3$\pm$0.4} & & PGD $\ell_\infty$+IR  & \textbf{28.0$\pm$1.8} & \textbf{26.5$\pm$2.1} & \textbf{17.4$\pm$1.3} \\ \cline{2-5} \cline{7-10}
 & TI{\footnotemark \label{fn:TI}}   & 21.4$\pm$0.8 & 20.9$\pm$0.9 & 14.9$\pm$1.4 & & TI\footnotemark[2]  & 26.8$\pm$1.3 & 26.1$\pm$1.5 & 19.4$\pm$1.6 \\
 & TI\footnotemark[2] + IR & \textbf{33.6$\pm$0.4} &  \textbf{33.2$\pm$0.3} & \textbf{24.0$\pm$0.5} & & TI\footnotemark[2] + IR  & \textbf{38.0$\pm$2.5} & \textbf{42.2$\pm$7.7} & \textbf{29.0$\pm$1.4} \\ \hline
\end{tabular}
}
\label{table:secured}
\end{table*}
\footnotetext[2]{The TI Attack was designed oriented to the secured DNNs which were robustly trained via adversarial training.
 Thus, we applied the TI Attack to the secured models in Table~\ref{table:secured}.}

\begin{table}[t!]
\renewcommand{\arraystretch}{1.1}
\renewcommand\tabcolsep{3.0pt}
\centering
\caption{
Transferability with and without the interaction loss.
The success rates of $\ell_\infty$ black-box attacks crafted by different methods on four source models (RN-34/152, DN-121/201) against seven target models.
Penalizing the interaction between perturbation units enhanced the transferability of adversarial perturbations.
\vspace{-5pt}}
\resizebox{\linewidth}{!}{
\begin{tabular}{c|c|ccccccc}
\hline
  Source& Method & VGG-16 & RN152 & DN-201 & SE-154 & IncV3 & IncV4 & IncResV2  \\ \hline
  \multirow{5}{*}{{RN-34}}
  & MI & 80.1$\pm$0.5 & 73.0$\pm$2.3 & 77.7$\pm$0.5 & 48.9$\pm$0.8 & 46.2$\pm$1.2 & 39.9$\pm$0.5 & 34.8$\pm$2.5 \\
  & VR & 88.8$\pm$0.2 & 86.4$\pm$1.6 & 87.9$\pm$2.4 & 62.1$\pm$1.5 & 58.4$\pm$3.0 & 56.3$\pm$2.3 & 49.7$\pm$0.9 \\
  & SGM & 91.8$\pm$0.6 & 89.0$\pm$0.9 & 90.0$\pm$0.4 & 68.0$\pm$1.4 & 63.9$\pm$0.3 & 58.2$\pm$1.1 & 54.6$\pm$1.2 \\ \cline{2-2}
  & SGM+IR & 94.7$\pm$0.6 & 91.7$\pm$0.6 & 93.4$\pm$0.8 & 72.7$\pm$0.4 & 68.9$\pm$0.9 & 64.1$\pm$1.3 & 61.3$\pm$1.0 \\
  & HybridIR & \textbf{96.5$\pm$0.1} & \textbf{94.9$\pm$0.3} & \textbf{95.6$\pm$0.6} & \textbf{79.7$\pm$1.0} & \textbf{77.1$\pm$0.8} & \textbf{73.8$\pm$0.1} & \textbf{70.2$\pm$0.5} \\ \hline
  \multirow{5}{*}{{RN-152}}
  & MI & 70.3$\pm$0.6 & -- & 74.8$\pm$1.4 & 51.7$\pm$0.8 & 47.1$\pm$0.9 & 40.5$\pm$1.6 & 36.8$\pm$2.7 \\
  & VR & 83.9$\pm$3.4 & -- & 91.1$\pm$0.9 & 70.0$\pm$3.7 & 63.1$\pm$0.9 & 58.8$\pm$0.1 & 56.2$\pm$1.3 \\
  & SGM & 88.2$\pm$0.5 & -- & 90.2$\pm$0.3 & 72.7$\pm$1.4 & 63.2$\pm$0.7 & 59.1$\pm$1.5 & 58.1$\pm$1.2 \\ \cline{2-2}
  & SGM+IR & 92.0$\pm$1.0 & -- & 92.5$\pm$0.4 & 79.3$\pm$0.1 & 69.6$\pm$0.8 & 66.2$\pm$1.0 & 63.6$\pm$0.9 \\
  & HybridIR & \textbf{95.3$\pm$0.4} & -- & \textbf{96.9$\pm$0.2} & \textbf{84.7$\pm$0.7} & \textbf{80.0$\pm$1.2} & \textbf{77.5$\pm$0.8} & \textbf{75.6$\pm$0.6} \\ \hline
  \multirow{5}{*}{{DN-121}}
  & MI & 83.0$\pm$4.9 & 72.0$\pm$0.7 & 91.5$\pm$0.2 & 58.4$\pm$2.6 & 54.6$\pm$1.6 & 49.2$\pm$2.4 & 43.9$\pm$1.5 \\
  & VR & 91.5$\pm$0.5 & 88.7$\pm$0.5 & 98.8$\pm$0.2 & 75.1$\pm$1.3 & 74.3$\pm$1.7 & 75.6$\pm$3.0 & 69.8$\pm$1.3 \\
  & SGM & 88.7$\pm$0.9 & 88.1$\pm$1.0 & 98.0$\pm$0.4 & 78.0$\pm$0.9 & 64.7$\pm$2.5 & 65.4$\pm$2.3 & 59.7$\pm$1.7 \\ \cline{2-2}
  & SGM+IR & 91.7$\pm$0.2 & 90.4$\pm$0.4 & 94.3$\pm$0.1 & 87.0$\pm$0.4 & 78.8$\pm$1.3 & 79.5$\pm$0.2 & 75.8$\pm$2.7 \\
  & HybridIR & \textbf{96.9$\pm$0.4} & \textbf{96.8$\pm$0.4} & \textbf{99.1$\pm$0.4} & \textbf{90.9$\pm$0.5} & \textbf{88.4$\pm$0.8} & \textbf{87.8$\pm$0.8} & \textbf{87.1$\pm$0.4} \\ \hline
  \multirow{5}{*}{{DN-201}}
  & MI & 77.3$\pm$0.8 & 74.8$\pm$1.4 & -- & 64.6$\pm$1.0 & 56.5$\pm$2.5 & 51.1$\pm$2.1 & 47.8$\pm$1.9 \\
  & VR & 87.3$\pm$1.1 & 90.4$\pm$1.2 & -- & 78.0$\pm$1.5 & 75.8$\pm$2.1 & 75.8$\pm$1.3 & 71.3$\pm$1.2 \\
  & SGM & 87.3$\pm$0.3 & 92.4$\pm$1.0 & -- & 82.9$\pm$0.2 & 72.3$\pm$0.3 & 71.3$\pm$0.6 & 68.8$\pm$0.5 \\ \cline{2-2}
  & SGM+IR & 89.5$\pm$0.9 & 91.8$\pm$0.7 & -- & 87.3$\pm$1.2 & 82.5$\pm$0.8 & 80.3$\pm$0.3 & 81.5$\pm$0.5 \\
  & HybridIR & \textbf{94.4$\pm$0.1} & \textbf{96.9$\pm$0.5} & -- & \textbf{91.7$\pm$0.2} & \textbf{89.6$\pm$0.6} & \textbf{88.3$\pm$0.3} & \textbf{87.3$\pm$0.7} \\ \hline
\end{tabular}}
\label{table:ir+}
\end{table}

\textbf{Experimental settings.}
We generated adversarial perturbations on six source DNNs, including Alexnet~\cite{krizhevsky2012imagenet}, VGG-16~\cite{simonyan2015very}, ResNet-34/152  (RN-34/152)~\cite{he2016deep} and DenseNet-121/201  (DN-121/201)~\cite{huang2017densely}.
For adversarial perturbations generated on each source DNN, we evaluated their transferability on seven target DNNs, including VGG-16, ResNet-152 (RN-152), DenseNet-201 (DN-201), SENet-154 (SE-154)~\cite{hu2018squeeze}, InceptionV3 (IncV3)~\cite{szegedy2016rethinking}, InceptionV4 (IncV4)~\cite{szegedy2017inception}, and Inception-ResNetV2 (IncResV2)~\cite{szegedy2017inception}.
We also evaluated the transferability on secured target models from Tramer~\emph{et al.}~\cite{tramer2017ensemble}, including IncV3$_{ens3}$ (ensemble of three IncV3 networks), IncV3$_{ens4}$ (ensemble of four IncV3 networks), and IncResV2$_{ens3}$ (ensemble of three IncResV2 networks), which were learned via ensemble adversarial training.
Besides, in order to evaluate the transferability of perturbations generated on ensemble source models, we generated adversarial perturbations on the ensemble of RN-34, RN-152, and DN-121.
The target DNNs included the Dual-Path-Network (DPN-68)~\cite{chen2017dual}, the NASNet-LARGE (NASN-L)~\cite{zoph2018learning}, and the Progressive NASNet (PNASN)~\cite{liu2018progressive}, which were three state-of-the-art DNNs.
The proposed IR Attack was conducted in the scenario of the ensemble-based attacking~\cite{liu2016delving}.

\textit{Baseline methods.} We used five baseline attacking methods for comparison, including the PGD Attack~\cite{pgd2018}, MI Attack~\cite{mim}, VR Attack~\cite{wu2018understanding}, SGM Attack~\cite{Wu2020Skip}, and TI Attack~\cite{dong2019evading}.
Our IR Attack was implemented according to Eq.~\eqref{eq:inte loss}.
Besides, we also added the interaction loss $\ell_{\textrm{interaction}}$ to the MI, VR, and SGM Attacks as new implementations of our method, namely the MI+IR, VR+IR, and SGM+IR Attacks, respectively.
Please refer to Appendix~\ref{append:ir+} for details.
Moreover, since Section~\ref{sec:explaining} has proven that the MI, VR, and SGM Attacks can all reduce interactions during attacking, then we combined the IR Attack with all these techniques together as a new implementation of our method, namely the HybridIR Attack.

All the above attacks were conducted for 100 steps\footnote{Previous studies usually conducted attacks for 10 or 20 steps. For a fair comparison between different attacks, the number of steps was set to 100 together with the leave-one-out (LOO) validation. In other words, the transferability of each baseline attacking method was computed on the best adversarial perturbation during the 100 steps.
Please see the appendix~\ref{append:loo} for the motivation and the evidence of the LOO evaluation of transferability.\label{fn:1}} on 1000 images, which were randomly selected from the validation set in the ImageNet dataset.
We set $\epsilon=16/255$ for the $\ell_\infty$ attack, and $\epsilon=16/255\sqrt{n}$ following  \cite{mim} for the $\ell_2$ attack.
For all attacks, the step size was $2/255$.
Considering that DNNs with different depths usually exhibited different signal processing efficiency, in the IR Attack, we set $\lambda=1$ when the source DNN was a ResNet and $\lambda=2$ for other source DNNs.
All attacks were conducted using three different initial perturbations or different randomly sampled grids.

Table~\ref{table:ir} reports the success rates of the perturbations generated by the baseline PGD Attack~\cite{pgd2018} and the IR Attack when attacking target models.
For the IR Attack, we applied $\ell_\infty$ and $\ell_2$ constraints for perturbations, namely PGD $\ell_{\infty}$+IR and PGD $\ell_{2}$+IR, respectively.
Compared with the baseline PGD Attack, the interaction loss significantly improved the transferability on various source models against different target models.
For most source models and target models, interaction loss boosted the transferability by more than 10\%.
In particular, when the source DNN were DN-201 and the target DNN were IncV4, respectively, the interaction loss improved the transferability from 36.5\% to 63.7\% ($>$~27\% gain).

For ensemble models, Table~\ref{table:ensemble} shows that in most cases, the IR Attack improved the transferability of  perturbations by more than $10\%$.
Besides, as Table~\ref{table:secured} shows, the interaction loss also significantly improved the transferability of perturbations against the secured target DNNs.
Note that in order to make settings in Table~\ref{table:secured} to be consistent with \cite{tramer2017ensemble}, we did not use the LOO validation.

Table~\ref{table:ir+} shows that the interaction loss also further boosted the transferability of other attacking methods.
For example, the highest transferability achieved by the SGM Attack against IncResV2 was 68.8\% (when the source is DN-201).
The interaction loss further improved the transferability to 81.5\%  with more than 12\% gain.
Moreover, when we combined the three methods of reducing interactions together, namely the HybridIR Attack, the transferability was improved from a range of 54.6$\sim$98.8\% to 70.2$\sim$99.1\%.
Such improvement {due to} the interaction loss can be understood as follows.
Adversarial perturbations generated by different attacking methods were usually in different manifolds, thereby exhibiting different transferability. Based on these perturbations, the interaction loss can point out the optimization direction of further reducing interactions in a local manner. Thus, the interaction loss further boosts the transferability.

\textit{\textbf{Effects of the interaction loss.}}
We evaluated the transferability of perturbations generated by the IR Attack when setting different weights $\lambda$ for the interaction loss.
In particular, if $\lambda=0$, the IR Attack degraded to the PGD Attack.
We generated adversarial perturbations on two source DNNs (RN-34 and DN-121), and transferred perturbations generated on each source DNN to seven target DNNs (VGG16, RN-152, DN-201, SE-154, IncV3, IncV4, and IncResV2).
Figure~\ref{fig:lambda}~(a) shows the successful transferring rates with different values of $\lambda$.
We found that the transferability of the generated perturbations increased as the weight $\lambda$ increased.

\begin{figure}[t]
    \centering
    \begin{minipage}{0.46\linewidth}
    \caption{(a) The transferability of adversarial perturbations generated by the IR Attack using different values of $\lambda$.
    The success rates increased along with the increase of the value of $\lambda$.
    (b) The transferability of adversarial perturbations generated by only using the interaction loss (without the classification loss).
    Such adversarial perturbations still exhibited moderate adversarial transferability.
    Points localized at the last epoch represent the baseline transferability of noise perturbations.}
    \label{fig:lambda}
    \end{minipage}
    \hfill
    \begin{minipage}{0.5\linewidth}
     \includegraphics[width=\linewidth]{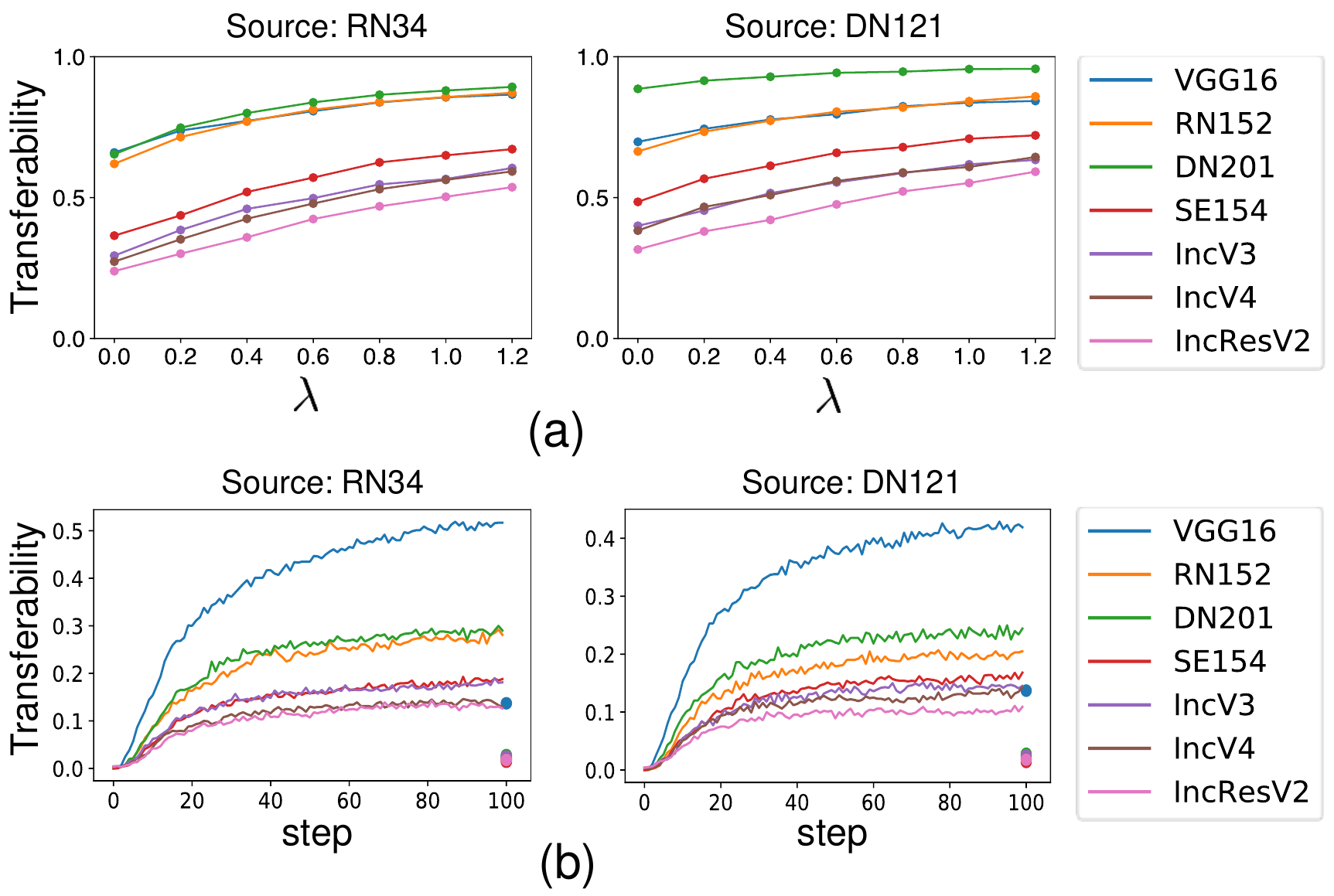}
    \end{minipage}
\vspace{-5pt}
\end{figure}

\textit{\textbf{Attack {solely} with the interaction loss.}}
In order to further evaluate the effects of the interaction loss, we exclusively used the interactions without the classification loss to generate adversarial perturbations on images from the  ImageNet validation set.
We attacked the RN-34 and DN-121 as source DNNs to generate adversarial perturbations for 100 steps\footref{fn:1}, and then evaluated the transferability of perturbations on seven target DNNs.
Besides, we compared such adversarial perturbations with noise perturbations, \emph{i.e.,} $\epsilon\cdot\text{sign}(noise)$, where $\epsilon=16/255$ and $noise \sim \mathcal{N}(0, \sigma^2 I)$.
Figure~\ref{fig:lambda}~(b) shows that the transferability of perturbations generated by only using the interaction loss were still considerable.
This phenomenon may be due to the fact that perturbations generated by only using the interaction loss led to a decrease in most interactions in the DNN, thereby damaging the inference patterns learned on the input.

\section{Conclusion}
In this paper, we have theoretically proven the negative correlation between the transferability and interactions inside adversarial perturbations.
In other words, the adversarial perturbations with higher transferability usually exhibit smaller interactions.
Then, we have proven that the decrease of interactions between perturbation units is a common mechanism shared by twelve classical methods of boosting adversarial transferability.
Furthermore, we have found that the above explanations help us in identifying and refining some algorithmic components that may not boost the adversarial transferability.
Moreover, we have proposed a new loss function {that directly penalizes interactions between perturbation units during attacking, thus significantly improving the adversarial transferability.}

\bibliographystyle{plainnat}
\bibliography{main}

\newpage
\onecolumn
\appendix
\setcounter{equation}{0}
\section{{Interaction based on the Shapley value}}

In this section, we discuss the properties of the Shapley value and those of interactions based on the Shapley value.

\subsection{Four axioms of the Shapley value} 
\label{append:shapley-axiom}
The Shapley value~\cite{shapley1953value} measures the importance of each player in a game.
Let $\Omega=\{1, 2, \dots, n\}$ be the set of all players.
Given $S\subseteq \Omega$, the reward function is denoted by  $v(S)$, which represents the reward obtained by players in $S$.
We use $\phi(a|\Omega)$ to denote the Shapley value of player $a$ in the game with all players $\Omega$ and reward function $v$, which is given as follows.
\begin{equation}
\phi(a|\Omega)=\sum_{S \subseteq \Omega \backslash\{a\}} \frac{|S| !(n-|S|-1) !}{n !}(v(S \cup\{a\})-v(S)). \label{eq:shapley_appendix}
\end{equation}

The Shapley value satisfies four axioms~\cite{weber1988probabilistic} as follows.

\textbullet{ \textit{Linearity axiom:} If there are two games  with reward functions $v$ and $w$, then { $v(S)$} and { $w(S)$} measure the reward obtained by players in {$S$} in games $v$ and $w$,   respectively.
Let $\phi_{v}(a|\Omega)$ and $\phi_w(a|\Omega)$ denote Shapley values of player $a$ in games $v$ and $w$, respectively.
If these two games are combined into a new game, and the reward function becomes $\textit{reward}(S)=v(S)+w(S)$, then the Shapley value is $\phi_{v+w}(a|\Omega)=\phi_{v}(a|\Omega) + \phi_{w}(a|\Omega)$} for each player $a$ in $\Omega$.

\textbullet{ \textit{Dummy axiom:} A player $a \in \Omega$ is said to be a dummy player, if {{$\forall S\subseteq \Omega\backslash \{a\}$}, $v(S\cup \{a\}) = v(S) + v(\{a\})$}.
Thus, { $\phi(a|\Omega) = v(\{a\})-v(\emptyset)$}, which means that player $a$ plays the game individually. }

\textbullet{ \textit{Symmetry axiom:} If { {$\forall S\subseteq\Omega\setminus\{a,b\}$}, $v(S \cup \{a\}) = v(S \cup \{b\})$}, then Shapley values of player $a$ and $b$ are equal, \textit{i.e.} { $\phi(a|\Omega)=\phi(b|\Omega)$} .}

\textbullet{ \textit{Efficiency axiom:} The sum of each player's Shapley value is equal to the overall reward won by the coalition $N$, \textit{i.e.} { $\sum_a \phi(a|\Omega) = v(\Omega)-v(\emptyset)$}.
This axiom ensures that the overall reward can be allocated to each player in the game.}

\subsection{Pairwise interaction} \label{append:consistent-interaction}
In this section, we prove that the two forms of pairwise interaction defined in Section~\ref{subsec:interactions} are equivalent.
Moreover, they are consistent  with the Shapley interaction index~\cite{grab1999An}.

\textbf{First form of  the pairwise  interaction.}
In Section~\ref{subsec:interactions}, the interaction between perturbation units $(\delta_a,\delta_b)$ is defined as the additional contribution as follows.
\begin{equation} \label{eq:first-form-interaction}
    I_{ab}(\delta) = \phi(S_{a b}|\Omega') - \left[\phi(a|\Omega\setminus\{b\}) + \phi(b|\Omega\setminus\{a\})\right],
\end{equation}
where $\phi(S_{a b}|\Omega')$ denotes the joint contribution of $a, b$, when perturbation units $a, b$ are regarded as a singleton unit $S_{a b}=\{a, b\}$.
$\phi(a|{\Omega\setminus\{b\}})$ and $\phi(b|{\Omega\setminus\{a\}})$ represent the individual contributions of units $a$ and $b$, respectively, when perturbation units $a, b$ work individually.

\textbf{Second form  of the pairwise interaction.}
In Section~\ref{subsec:interactions}, the interaction between perturbation units $(\delta_a,\delta_b)$ can also be defined as
the change of the  importance $\phi_a$ of the $a$-th perturbation unit when the $b$-th perturbation unit $\delta_b$ is perturbed \emph{w.r.t} the case when the $b$-th perturbation unit $\delta_b$ is not perturbed. 
If perturbation $\delta_b$ on the $b$-th perturbation unit increases the importance $\phi_a$ of the $a$-th perturbation unit, then a positive interaction exists between $\delta_a$ and $\delta_b$.
If perturbation $\delta_b$ decreases the importance $\phi_a$, it indicates a negative interaction.
Mathematically, this definition of the pairwise interaction can be expressed as follows.
\begin{equation} \label{eq:second-form-interaction}
    I'_{a b}(\delta) = \phi_{a, \text{w/}\, b} - \phi_{a, \text{w/o}\, b},
\end{equation}
where $\phi_{a, \text{w/}\, b}$ represents the importance of $\delta_a$, when $\delta_b$ is always present; and $\phi_{a, \text{w/o}\, b}$ represents the importance of $\delta_a$, when $\delta_b$ is always absent.

\textbf{Form  of the Shapley interaction index.}
The Shapley interaction index~\cite{grab1999An} was proposed as an extension  of the Shapley value~\cite{shapley1953value}, which measures the pairwise interaction as follows.
\begin{equation} \label{eq:sii}
    I^{\text{sii}}_{a b}(\delta) = \sum_{S \subseteq \Omega \backslash\{a, b\}} \frac{|S| !(n-|S|-2) !}{(n-1) !}\big[v(S \cup\{a, b\})- v(S \cup\{a\})-v(S \cup\{b\})+v(S)\big].
\end{equation}

We then prove that the first and second forms of pairwise interactions are equivalent.
Moreover, both forms of the pairwise interaction are consistent  with the Shapley interaction index $I^{\text{sii}}_{a b}(\delta)$.
\newline
\begin{proof}
    Let us analyze the first form of the pairwise interaction.
    When perturbation units $a$ and $b$ are regarded as a singleton unit $S_{a b}=\{a, b\}$, the importance  of $S_{a b}$ based on the Shapley value is given as follows.
    \begin{equation}
        \phi(S_{a b}|\Omega') = \sum_{S \subseteq \Omega \backslash\{a, b\}} \frac{|S| !(n-|S|-2) !}{(n-1) !}(v(S \cup\{a,b\})-v(S)),
    \end{equation}
    where $S_{a b} = \{a, b\}$ represents the coalition of perturbation units $a, b$.
    In this game, because perturbation units $a$ and $b$ are regarded as a singleton player, we can consider that only $n-1$ players are in the game. Consequently the set of total players changes to $\Omega' = \Omega\setminus\{a, b\}\cup S_{a b}$.

    In Eq.~(\ref{eq:first-form-interaction}), $\phi(a|{\Omega\setminus\{b\}})$ and $\phi(b|{\Omega\setminus\{a\}})$ represent the individual contributions of units $a$ and $b$, respectively, when perturbation units $a$ and $b$ work individually.
    The individual contribution of perturbation unit $a$, when perturbation unit $b$ is absent, is expressed as follows.
    \begin{equation}
        \phi(a|\Omega\setminus\{b\}) = \sum_{S \subseteq \Omega \backslash\{a, b\}} \frac{|S| !(n-|S|-2) !}{(n-1) !}(v(S \cup\{a\})-v(S)).
    \end{equation}

    In this game, because the perturbation unit $b$ is always absent, we can consider there are only $n-1$ players in the game.
    Consequently the set of players changes to $\Omega\setminus\{b\}$.

    Similarly, when perturbation unit $a$ is absent, the individual contribution of perturbation unit $b$ is given as follows.
    \begin{equation}
        \phi(b|{\Omega\setminus\{a\}}) = \sum_{S \subseteq \Omega \backslash\{a, b\}} \frac{|S| !(n-|S|-2) !}{(n-1) !}(v(S \cup\{b\})-v(S)).
    \end{equation}

    In this way, the first form of the pairwise interaction can be written as follows.
    \begin{equation} \label{eq:first-form-eqs-shapley-interaction}
        \begin{split}
            I_{a b}(\delta) &= \phi(S_{a b}|\Omega') - \left[\phi(a|\Omega\setminus\{b\}) + \phi(b|{\Omega\setminus\{a\}})\right] \\
            &=\sum_{S \subseteq \Omega \backslash\{a, b\}} \frac{|S| !(n-|S|-2) !}{(n-1) !}(v(S \cup\{a,b\})-v(S)) \\
            &-\left[\sum_{S \subseteq \Omega \backslash\{a, b\}} \frac{|S| !(n-|S|-2) !}{(n-1) !}(v(S \cup\{a\})-v(S)) + \sum_{S \subseteq \Omega \backslash\{a, b\}} \frac{|S| !(n-|S|-2) !}{(n-1) !}(v(S \cup\{b\})-v(S)) \right] \\
            &= \sum_{S \subseteq \Omega \backslash\{a, b\}} \frac{|S| !(n-|S|-2) !}{(n-1) !}\big[v(S \cup\{a, b\})- v(S \cup\{a\})-v(S \cup\{b\})+v(S)\big] \\
            &= I^{\text{sii}}_{a b}(\delta).
        \end{split}
    \end{equation}

    As shown in Eq.~(\ref{eq:first-form-eqs-shapley-interaction}), the first form of the pairwise interaction is equivalent to the Shapley interaction index.

    Then, we analyze the second form of the pairwise interaction.
    In Eq.~(\ref{eq:second-form-interaction}), when perturbation unit $b$ is always present,  the importance of perturbation unit $a$ is given as follows.
    In this game, because the perturbation unit $b$ is always present, we can consider that there are only $n-1$ players.
    \begin{equation}
        \phi_{a, \text{w/}\, b} = \sum_{S \subseteq \Omega \backslash\{a, b\}} \frac{|S| !(n-|S|-2) !}{(n-1) !}(v(S \cup\{a, b\})-v(S\cup \{b\})).
    \end{equation}

    When perturbation unit $b$ is always absent, the importance of perturbation unit $a$ is given as follows.
    In this game, because the perturbation unit $b$ is always absent, we can consider that there are only $n-1$ players.
    \begin{equation}
        \phi_{a, \text{w/o}\, b} = \sum_{S \subseteq \Omega \backslash\{a, b\}} \frac{|S| !(n-|S|-2) !}{(n-1) !}(v(S \cup\{a\})-v(S)).
    \end{equation}

    Thus, the second form of the pairwise interaction can be written as follows.
    \begin{equation} \label{eq:second-form-eqs-shapley-interaction}
        \begin{split}
            I'_{a b}(\delta) &= \phi_{a, \text{w/}\, b} - \phi_{a, \text{w/o}\, b} \\
            &=\sum_{S \subseteq \Omega \backslash\{a, b\}} \frac{|S| !(n-|S|-2) !}{(n-1) !}(v(S \cup\{a, b\})-v(S\cup \{b\})) \\
            &\qquad- \sum_{S \subseteq \Omega \backslash\{a, b\}} \frac{|S| !(n-|S|-2) !}{(n-1) !}(v(S \cup\{a\})-v(S)) \\
            &=\sum_{S \subseteq \Omega \backslash\{a, b\}} \frac{|S| !(n-|S|-2) !}{(n-1) !}\big[v(S \cup\{a, b\})- v(S \cup\{a\})-v(S \cup\{b\})+v(S)\big] \\
            &=I^{\text{sii}}_{a b}(\delta).
        \end{split}
    \end{equation}

    As shown in Eq.~(\ref{eq:second-form-eqs-shapley-interaction}), the first form of the pairwise interaction is equivalent to the Shapley interaction index.

    In this manner, we proved that the first and second forms of the pairwise interactions are equivalent, and that they are consistent  with the Shapley interaction index.
\end{proof}

\subsection{Multi-order interaction}
\label{append:multi-order interaction}
The pairwise interaction $I_{ab}$ can be decomposed into multi-order interactions as follows~\cite{zhang2020game}.
\begin{equation}
    \begin{split}
        &I_{ab}\!=\!\frac{1}{n-1}\sum_{s=0}^{n-2}I_{ab}^{(s)}\\
        &I^{(s)}_{a b}\triangleq \mathbb{E}_{\substack{S\subseteq \Omega\setminus\{a,b\},|S|=s}}\left[  v(S \cup\{a, b\})- v(S \cup\{a\})-v(S \cup\{b\})+v(S) \right].
    \end{split}
\end{equation}

The order $s$ reflects the contextual complexity of the interaction, which means that $(i,j)$ collaborates with $s$ other  contextual variables.

In game theory, we consider the adversarial attack as a game.
We consider each perturbation  unit in the adversarial perturbation $\delta\in\mathbb{R}^n$ as a player.
Without loss of generality, let $\Omega=\{1, 2, \dots, n\}$ denote the set of all players (perturbation units).
Each player (perturbation unit) aims to increase the classification loss, in order to fool the DNN.
Thus, the attacking utility can be defined as $v(S)\triangleq \textit{Loss}(x + \delta^{(S)}) - \textit{Loss}(x)$, where $\forall a\in S,~\delta^{(S)}_a=\delta_a;\;  \forall a\notin S,~\delta^{(S)}_a=0$.
The total attacking utility is given as $v(\Omega)-v(\emptyset) = \textit{Loss}(x'=x + \delta) - \textit{Loss}(x)$, which indicates the increase of the loss caused by adversarial perturbation $\delta$.

The following theorem demonstrates that the total attacking utility can be decomposed into compositional utilities of multi-order interactions.
\begin{theoremappendix} \label{theoremappendix:decomposition}
    \textbf{Decomposing the attacking utility into multi-order interactions.}
    The overall attacking utility can be decomposed into compositional utilities of multi-order interactions, \emph{i.e.}, $v(\Omega)-v(\emptyset)=\sum_{a\in\Omega}[v(\{a\})-v(\emptyset)] + \sum_{a\ne b\in \Omega}\sum_{s=0}^{n-1} \frac{n-1-s}{n(n-1)}I^{(s)}_{ab}$.
\end{theoremappendix}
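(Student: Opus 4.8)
The plan is to start from the efficiency axiom of the Shapley value, which gives $v(\Omega)-v(\emptyset)=\sum_{a\in\Omega}\phi(a|\Omega)$, and to reduce the theorem to a single \emph{per-player} decomposition. Concretely, I would first establish the identity
\[
\phi(a|\Omega) = \bigl[v(\{a\})-v(\emptyset)\bigr] + \sum_{b\ne a}\sum_{s=0}^{n-2}\frac{n-1-s}{n(n-1)}\,I^{(s)}_{ab},
\]
and then sum it over $a\in\Omega$. The unordered double sum $\sum_a\sum_{b\ne a}$ collapses into $\sum_{a\ne b\in\Omega}$ over ordered pairs, and the range of $s$ can be extended to $n-1$ without changing anything because the coefficient $\frac{n-1-s}{n(n-1)}$ vanishes at $s=n-1$. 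This immediately yields the claimed formula, so the entire content lies in the per-player identity.

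The engine of the per-player identity is a telescoping (discrete-integration) argument on the marginal contribution. Writing $\Delta_a v(S)=v(S\cup\{a\})-v(S)$ and the symmetric second difference $\Delta_{ab}v(S)=v(S\cup\{a,b\})-v(S\cup\{a\})-v(S\cup\{b\})+v(S)$ — which is exactly the summand defining $I^{(s)}_{ab}$ — I would order any context $S=\{b_1,\ldots,b_k\}$ and expand
\[
\Delta_a v(S) = \bigl[v(\{a\})-v(\emptyset)\bigr] + \sum_{i=1}^{k}\Delta_{a b_i}v(\{b_1,\ldots,b_{i-1}\}),
\]
since each consecutive difference $\Delta_a v(\{b_1,\ldots,b_i\})-\Delta_a v(\{b_1,\ldots,b_{i-1}\})$ is precisely $\Delta_{a b_i}v(\{b_1,\ldots,b_{i-1}\})$. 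Substituting this into the random-ordering form $\phi(a|\Omega)=\mathbb{E}_\pi[\Delta_a v(P^\pi_a)]$, where $P^\pi_a$ is the predecessor set of $a$ under a uniformly random ordering $\pi$, turns the marginal contribution into the standalone term plus, for every $b$ preceding $a$, a term $\Delta_{ab}v(\{c:c\prec_\pi b\})$ evaluated on the common predecessors of $a$ and $b$.

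The last step is combinatorial bookkeeping. For a fixed pair $(a,b)$ and a fixed $S\subseteq\Omega\setminus\{a,b\}$, I would show that the probability that $S$ is exactly the predecessor set of $b$ equals the familiar Shapley weight $\tfrac{|S|!\,(n-|S|-1)!}{n!}$, and that in this event $a$ automatically follows $b$ (because $a\notin S$), so the constraint $b\prec a$ is free. Grouping subsets by size $s=|S|$ replaces $\sum_{|S|=s}\Delta_{ab}v(S)$ with $\binom{n-2}{s}I^{(s)}_{ab}$, and the coefficient simplifies via $\tfrac{s!(n-s-1)!}{n!}\binom{n-2}{s}=\tfrac{n-1-s}{n(n-1)}$, giving the per-player identity. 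I expect the main obstacle to be the telescoping step itself — recognizing that the order-$s$ interaction summand is exactly the second finite difference produced by integrating the marginal contribution one player at a time, which is what makes the permutation weights collapse into the stated coefficients; the probability computation and the coefficient simplification are then routine.
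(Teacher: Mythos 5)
Your proof is correct, and it reaches the identity by a genuinely different telescoping than the paper's. The paper first splits the Shapley value by context size, $\phi(a\mid\Omega)=\frac{1}{n}\sum_{s=0}^{n-1}\phi^{(s)}(a\mid\Omega)$, and telescopes \emph{across orders}, the key lemma being $\phi^{(s)}(a\mid\Omega)-\phi^{(s-1)}(a\mid\Omega)=\mathbb{E}_{b\in\Omega\setminus\{a\}}\big[I^{(s-1)}_{ab}\big]$; counting how many orders each $I^{(k)}_{ab}$ survives into then produces the factor $n-1-s$. You instead telescope \emph{along a random permutation}: expanding $\Delta_a v(P^\pi_a)$ one predecessor at a time turns the marginal contribution into $v(\{a\})-v(\emptyset)$ plus the second differences $\Delta_{ab}v(P^\pi_b)$, and the coefficient $\frac{n-1-s}{n(n-1)}$ emerges from $\Pr[P^\pi_b=S]\cdot\binom{n-2}{s}$ exactly as you compute. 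Both routes are sound and of comparable length; yours avoids introducing the intermediate multi-order Shapley values $\phi^{(s)}$ and isolates a clean per-player identity that the paper only carries implicitly, while the paper's version makes explicit the parallel between the order decomposition of $\phi$ and that of $I_{ab}$, which it leans on conceptually elsewhere. One point to state explicitly if you write this up: your step $\sum_{|S|=s}\Delta_{ab}v(S)=\binom{n-2}{s}I^{(s)}_{ab}$ presupposes the \emph{normalized} (expectation) definition of $I^{(s)}_{ab}$ used in the appendix, not the unnormalized sum appearing in the main text's Eq.~\eqref{eq:multi-order interaction}; the stated coefficient $\frac{n-1-s}{n(n-1)}$ is only correct under the normalized convention, which is also the one the paper's own proof uses.
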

\begin{proof}
    According to the efficiency property of the Shapley value~\cite{weber1988probabilistic}, we have
    \begin{equation} \label{eq:shapley-efficiency}
        v(\Omega)-v(\emptyset) = \sum_{a\in \Omega} \phi(a\mid \Omega).
    \end{equation}

    Similar to the multi-order interaction, we can decompose the Shapley value as follows.
    \begin{equation} \label{eq:multi-order-shapley-value}
        \begin{split}
            \phi(a\mid \Omega) &= \frac{1}{n} \sum_{s=0}^{n-1} \phi^{(s)}(a\mid \Omega) \\
            \phi^{(s)}(a\mid \Omega) &\triangleq \mathbb{E}_{\substack{S\subseteq \Omega\setminus\{a\},|S|=s}}\left[ v(S \cup\{a\})-v(S) \right].
        \end{split}
    \end{equation}

    Thus, according to Eq.~(\ref{eq:shapley-efficiency}) and Eq.~(\ref{eq:multi-order-shapley-value}),  we have
    \begin{equation} \label{eq:decompose-multi-order-shapley-value}
        \begin{split}
            v(\Omega)-v(\emptyset) &= \frac{1}{n} \sum_{a\in \Omega} \sum_{s=0}^{n-1} \phi^{(s)}(a\mid \Omega) \\
            &= \frac{1}{n} \sum_{a\in \Omega} \sum_{s=0}^{n-1} \phi^{(s)}(a\mid \Omega) - \phi^{(s-1)}(a\mid \Omega) + \phi^{(s-1)}(a\mid \Omega) - \phi^{(s-2)}(a\mid \Omega) + \dots - \phi^{(0)}(a\mid \Omega) + \phi^{(0)}(a\mid \Omega).
        \end{split}
    \end{equation}

    Note that according to Eq.~(\ref{eq:multi-order-shapley-value}), we have
    \begin{equation} \label{eq:diff-multi-order-shapley-value}
        \begin{split}
            &\phi^{(s)}(a\mid \Omega) - \phi^{(s-1)}(a\mid \Omega) \\
            &= \mathbb{E}_{\substack{S'\subseteq \Omega\setminus\{a\}, \atop |S'|=s}}\left[ v(S' \cup\{a\})-v(S') \right] - \mathbb{E}_{\substack{S\subseteq \Omega\setminus\{a\} \atop|S|=s-1}}\left[ v(S \cup\{a\})-v(S) \right] \\
            &= \mathbb{E}_{\substack{S\subseteq \Omega\setminus\{a\} \atop|S|=s-1}}\left[\mathbb{E}_{b\in \Omega\setminus \{S\cup \{a\}\}} \left[  v(S\cup\{b\} \cup\{a\})-v(S\cup\{b\})\right] \right] - \mathbb{E}_{\substack{S\subseteq \Omega\setminus\{a\} \atop|S|=s-1}}\left[ v(S \cup\{a\})-v(S) \right] \\
            &=\mathbb{E}_{\substack{S\subseteq \Omega\setminus\{a\} \atop|S|=s-1}}\left[\mathbb{E}_{b\in \Omega\setminus \{S\cup \{a\}\}} \left[ v(S\cup\{b\} \cup\{a\})-v(S\cup\{b\}) - v(S \cup\{a\})+v(S)  \right] \right] \\
            &=\mathbb{E}_{b\in\Omega\setminus\{a\}} \mathbb{E}_{S\subseteq \Omega\setminus\{a, b\} \atop |S|=s-1}\left[v(S\cup\{b\} \cup\{a\})-v(S\cup\{b\}) - v(S \cup\{a\})+v(S)\right] \\
            &=\mathbb{E}_{b\in\Omega\setminus\{a\}} \left[ I^{(s-1)}_{ab}\right].
        \end{split}
    \end{equation}

    In this way, according to Eq.~(\ref{eq:decompose-multi-order-shapley-value}) and Eq.~(\ref{eq:diff-multi-order-shapley-value}), we have
    \begin{equation}
        \begin{split}
            v(\Omega)-v(\emptyset) &=  \frac{1}{n} \sum_{a\in \Omega} \sum_{s=0}^{n-1} \phi^{(s)}(a\mid \Omega) - \phi^{(s-1)}(a\mid \Omega) + \phi^{(s-1)}(a\mid \Omega) - \phi^{(s-2)}(a\mid \Omega) + \dots - \phi^{(0)}(a\mid \Omega) + \phi^{(0)}(a\mid \Omega) \\
            &= \frac{1}{n} \sum_{a\in \Omega} \sum_{s=0}^{n-1} \left[\mathbb{E}_{b\in\Omega\setminus\{a\}} \left[ I^{(s-1)}_{ab}\right] + \mathbb{E}_{b\in\Omega\setminus\{a\}} \left[ I^{(s-2)}_{ab}\right] + \dots + \mathbb{E}_{b\in\Omega\setminus\{a\}} \left[ I^{(0)}_{ab}\right] + \phi^{(0)}(a\mid \Omega)\right] \\
            &= \frac{1}{n} \sum_{a\in \Omega} \sum_{s=0}^{n-1} \left[\sum_{k=0}^{s-1}\left[\mathbb{E}_{b\in\Omega\setminus\{a\}} \left[ I^{(k)}_{ab}\right] \right] + \phi^{(0)}(a\mid \Omega)\right] \\
            &= \sum_{a\in \Omega} \phi^{(0)}(a\mid \Omega) + \frac{1}{n (n-1)} \sum_{a\in \Omega} \sum_{b\in\Omega\setminus\{a\}}\sum_{s=0}^{n-1} \sum_{k=0}^{s-1} \left[ I^{(k)}_{ab}\right] \\
            &= \sum_{a\in \Omega} \phi^{(0)}(a\mid \Omega) + \frac{1}{n (n-1)} \sum_{a\in \Omega} \sum_{b\in\Omega\setminus\{a\}}\sum_{s=0}^{n-2} (n-1-s) \left[ I^{(s)}_{ab}\right] \\
            &= \sum_{a\in\Omega}[v(\{a\})-v(\emptyset)] + \sum_{a\ne b\in \Omega}\sum_{s=0}^{n-1} \frac{n-1-s}{n(n-1)}I^{(s)}_{ab}.
        \end{split}
    \end{equation}
\end{proof}

\subsection{Sum of the pairwise interactions}
\label{append:sum of interaction}
According to Appendix~\ref{append:consistent-interaction}, the pairwise interaction between the two perturbation units $a$ and $b$ is given as follows.
\begin{equation}
    I_{a b}(\delta) = \phi_{a, \text{w/}\, b} - \phi_{a, \text{w/o}\, b},
\end{equation}
where $\phi_{a, \text{w/}\, b}$ represents the importance of $\delta_a$, when $\delta_b$ is always present; and $\phi_{a, \text{w/o}\, b}$ represents the importance of $\delta_a$, when $\delta_b$ is always absent.

The sum of pairwise interactions is given as follows.
\begin{equation}
    \sum_{a, b \in \Omega}\left[ I_{a b}(\delta)\right] =  \sum_{a \in \Omega} \left[v(\Omega)-v(\Omega\setminus \{a\}) - v(\{a\}) + v(\emptyset)\right], \label{eq:expectation}
\end{equation}which is proved as follows.
$ $\newline

\begin{proof}
The sum of the p interaction can be written as follows.
\begin{align}
    \sum_{a, b \in \Omega}\left[ I_{a b}(\delta)\right]  = \sum_{a}\left\{\sum_{b \in \Omega\setminus \{a\}} \left[\phi_{b, w/\, a} - \phi_{b, w/o\, a} \right]\right\}.
\end{align}
According to the \textbf{\textit{efficiency property}} of Shapley values (see Appendix~\ref{append:shapley-axiom} for details):
\begin{align}
   &\sum_{b\in \Omega\setminus\{a\}}\phi_{b, w/\, a} = v(\Omega) - v{\{a\}}, \\
   &\sum_{b\in \Omega\setminus\{a\}}\phi_{b, w/o\, a} = v(\Omega\setminus \{a\}) - v(\emptyset).
\end{align}
In this way,
\begin{align}
    \sum_{a, b \in \Omega}\left[ I_{a b}(\delta)\right] =  \sum_{a \in \Omega} \left[v(\Omega)-v(\Omega\setminus \{a\}) - v(\{a\}) + v(\emptyset)\right].
\end{align}
\end{proof}

\section{Dynamics of adversarial perturbations.} \label{append:delta-closed-form}

In this section, we prove Lemma~\ref{lemma:delta} in Section~\ref{subsec:multi-step vs single-step}, which analyzes the dynamics of the adversarial perturbation via a closed-form solution.

    \textbf{General form of the multi-step attack.}
    Given an input sample  $x\in\mathbb{R}^n$ and a DNN $h$, we use  $\textit{Loss}(x' = x+\delta) = \textit{Loss}(h(x+\delta),y)$  to denote the classification loss~\textit{w.r.t.} the adversarial example $x' = x+\delta$.
    In general, the adversarial perturbation of the $m$-step attack can be written as follows.
     \begin{equation} \label{eq:multi-step-attack-ori}
        \delta^{(m)}_{\text{general}} \triangleq  \sum_{t=0}^{m-1} \alpha_t\nabla_{x'} \textit{Loss}(x'=x+\delta^{(t)}_{\text{general}}).
    \end{equation}
    Here, $\alpha_t$ represents the step size at step $t$, which depends on the normalization of the gradient.
    For example, in the $L_2$ attack, the step size is determined by the $L_2$ norm of the gradient,~\textit{i.e.}, $ \alpha_t=\alpha/ \|\nabla_{x'} \textit{Loss}(x'=x+\delta^{(t)})\|_2$, where $\alpha$ is a small constant.

    \textbf{Multi-step attack without the normalization effect.}
    To simplify our analysis, we ignore the normalization effects in the $L_2$ and $L_\infty$ attacks.
    In this way, the adversarial perturbation generated by the $m$-step attack can be written as follows.
    \begin{equation} 
    \begin{aligned}
     \label{eq:multi-step-attack}
        \delta^{(m)} & \triangleq \sum_{t=0}^{m-1} \alpha \nabla_{x'} \textit{Loss}(x'=x+\delta^{(t)})
     \end{aligned}
    \end{equation}
    Here, for the multi-step attack without the normalization effect, we use a small constant $\alpha$ to approximate step size $\alpha_t$ .
    This is because, when the step number $m$ is  sufficiently large, and the size of each step $\alpha_t$ is sufficiently small, Eq.~\eqref{eq:multi-step-attack} can be approximated using Eq.~\eqref{eq:multi-step-attack-ori}.

Furthermore, we define the change of the perturbation with the multi-step attack at each step $t$ as follows.
\begin{equation} \label{eq:delta_multi}
    \Delta x^{(t)} \triangleq \alpha\cdot \nabla_{x'}   \textit{Loss}(x' = x + \delta^{(t-1)}).
\end{equation}

In this way,
the perturbation $\delta^{(m)}$ of the $m$-step attack in Eq.~\eqref{eq:multi-step-attack}
can be written as follows.
\begin{equation} \label{eq:multi_update}
    \delta^{(m)} = \sum_{t=1}^{m} \Delta x^{(t)}.
\end{equation}

Then, in order to derive the closed-form solution to the adversarial perturbation $\delta^{(m)}$ in Eq.~\eqref{eq:multi-step-attack}, we use the second-order Taylor expansion to decompose the loss function as follows.
Here, $g\triangleq \nabla_x \textit{Loss}(x)$ represents the gradient of the loss function~\textit{w.r.t.} the input sample $x$, and $H \triangleq \nabla_x^2 \textit{Loss}(x)$ denotes the Hessian matrix of the loss function~\textit{w.r.t.} the input sample $x$.
$R_2(\delta^{(m)})$ is referred to as terms of higher than the second order in the Taylor series~\textit{w.r.t} the adversarial perturbation $\delta^{(m)}$.
\begin{equation}
 \label{eq:loss_taylor}
 \begin{split}
     \textit{Loss}(x'=x+\delta^{(m)}) &= \textit{Loss}(x) + (x'-x)^T g + \frac{1}{2} (x'-x) ^T H (x'-x) + R_2(x'-x).\\
     &=\textit{Loss}(x) + (\delta^{(m)})^T g + \frac{1}{2} (\delta^{(m)}) ^T H \delta^{(m)} + R_2(\delta^{(m)}).
 \end{split}
\end{equation}

\begin{assumption}
\label{assumption:taylor}
 Because the adversarial perturbation $\delta^{(m)}$ is typically imperceptibly small, we assume that $R_2(\delta^{(m)})$ in Eq.~\eqref{eq:loss_taylor} can be neglected,~\textit{i.e.} $R_2(\delta^{(m)})\approx 0$.
\end{assumption}

Based on Assumption~\ref{assumption:taylor}, the Taylor expansion of the loss function in Eq.~\eqref{eq:loss_taylor} can be approximated as
\begin{equation}
 \label{eq:loss_taylor_1}
\begin{aligned}
\textit{Loss}(x'=x+\delta^{(m)}) & = \textit{Loss}(x) + (\delta^{(m)})^T g + \frac{1}{2} (\delta^{(m)}) ^T H \delta^{(m)} + R_2(\delta^{(m)})
\\
&\approx \textit{Loss}(x) + (\delta^{(m)})^T g + \frac{1}{2} (\delta^{(m)}) ^T H \delta^{(m)}.
\end{aligned}
\end{equation}

In this way, based on Eq.~\eqref{eq:loss_taylor_1}, the gradient of the loss function~\textit{w.r.t} the adversarial example $x' = x+\delta^{(m)}$  can be represented as
\begin{equation}
 \label{eq:grad_taylor}
\begin{aligned}
&\nabla_{x'} \textit{Loss}(x'=x+\delta^{(m)}) \\
& =\nabla_{x'}\left[ \textit{Loss}(x) + (x'-x)^T g + \frac{1}{2} (x'-x) ^T H (x'-x) + R_2(x'-x) \right]
\quad // \quad \text{According to Eq.~\eqref{eq:loss_taylor}}
\\
& \approx \nabla_{x'}\left[ \textit{Loss}(x) + (x'-x)^T g + \frac{1}{2} (x'-x) ^T H (x'-x)  \right]
\quad // \quad \text{According to Eq.~\eqref{eq:loss_taylor_1}}
\\
&= g + H (x'-x)
\\
&= g + H \delta^{(m)}
\end{aligned}
\end{equation}

\begin{lemma}
\label{lemma:induction}
    Based on Assumption~\ref{assumption:taylor}, the adversarial perturbation $\Delta x^{(t)}$ generated at the $t$-th step attack can be written as $\Delta x^{(t)} =\alpha \left[I + \alpha H \right]^{t-1} g$.
    \end{lemma}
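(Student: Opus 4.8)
The plan is to prove the claim by induction on the step index $t$, using the linearized gradient from Eq.~\eqref{eq:grad_taylor} as the engine of the argument. The crucial observation is that under Assumption~\ref{assumption:taylor} the loss is treated as quadratic, so its gradient is affine in the perturbation with the \emph{same} $g$ and $H$ (both evaluated at the fixed base point $x$) at \emph{every} step; this turns the iterative update into a linear recursion with step-independent coefficients.

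First I would establish the base case $t=1$. Since the attack starts from the zero initialization $\delta^{(0)}=0$, Eq.~\eqref{eq:delta_multi} gives $\Delta x^{(1)}=\alpha\nabla_{x'}\textit{Loss}(x'=x+\delta^{(0)})=\alpha\nabla_x\textit{Loss}(x)=\alpha g$, which matches the claimed formula since $\alpha[I+\alpha H]^{0}g=\alpha g$.

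Next I would derive a one-step recursion rather than evaluate the accumulated perturbation directly, because this route avoids any assumption that $H$ is invertible. Writing Eq.~\eqref{eq:delta_multi} at steps $t$ and $t-1$ and substituting the linearized gradient $\nabla_{x'}\textit{Loss}(x'=x+\delta^{(t-1)})\approx g+H\delta^{(t-1)}$ from Eq.~\eqref{eq:grad_taylor}, I obtain $\Delta x^{(t)}=\alpha(g+H\delta^{(t-1)})$ and $\Delta x^{(t-1)}=\alpha(g+H\delta^{(t-2)})$. Subtracting these and using the telescoping identity $\delta^{(t-1)}-\delta^{(t-2)}=\Delta x^{(t-1)}$, which is immediate from Eq.~\eqref{eq:multi_update}, cancels the constant term $\alpha g$ and yields the clean recursion $\Delta x^{(t)}=(I+\alpha H)\,\Delta x^{(t-1)}$ for $t\ge 2$. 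Iterating this from the base case gives $\Delta x^{(t)}=(I+\alpha H)^{t-1}\Delta x^{(1)}=\alpha[I+\alpha H]^{t-1}g$, completing the induction.

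I do not expect a serious obstacle, as the argument becomes purely linear-algebraic once the recursion is exposed. The one point requiring care is justifying that the gradient linearization in Eq.~\eqref{eq:grad_taylor} may legitimately be applied at every intermediate iterate $\delta^{(t-1)}$, and not merely at the final perturbation: this is exactly what Assumption~\ref{assumption:taylor} secures, since neglecting the higher-order remainder makes the loss globally quadratic and hence keeps $g$ and $H$ step-independent. If one instead insisted on summing the accumulated perturbation directly, one would confront the geometric series $\sum_{s=1}^{t-1}(I+\alpha H)^{s-1}$, whose closed form $\frac{1}{\alpha}H^{-1}\!\left((I+\alpha H)^{t-1}-I\right)$ requires $H$ to be invertible; the telescoping-recursion route sidesteps this complication entirely, which is the reason I would prefer it.
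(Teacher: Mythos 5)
Your proof is correct, and it reaches the result by a route that differs in its mechanics from the paper's. The paper also argues by induction, but it uses the \emph{strong} inductive hypothesis: it substitutes $\Delta x^{(t')}=\alpha[I+\alpha H]^{t'-1}g$ for all $t'<t$ into $\Delta x^{(t)}=\alpha\bigl[g+H\sum_{t'=1}^{t-1}\Delta x^{(t')}\bigr]$, and then evaluates the resulting geometric series $T(x)=\bigl[\sum_{t'=1}^{t-1}(I+\alpha H)^{t'-1}\bigr]g$ by the multiply-by-$(I+\alpha H)$-and-subtract identity $\alpha H\,T(x)=\bigl[(I+\alpha H)^{t-1}-I\bigr]g$, which is precisely how it too sidesteps the invertibility of $H$. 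You instead difference two consecutive updates to cancel the constant term $\alpha g$ and obtain the first-order recursion $\Delta x^{(t)}=(I+\alpha H)\,\Delta x^{(t-1)}$, after which only ordinary (single-step) induction is needed. The two arguments rest on the same linearization $\nabla_{x'}\textit{Loss}(x+\delta^{(t-1)})\approx g+H\delta^{(t-1)}$ applied at every iterate, and you are right that Assumption~\ref{assumption:taylor} is what licenses this; your version is somewhat more economical, since it never manipulates the partial sums of the series at all, while the paper's version has the minor advantage of exhibiting the closed form of $\sum_{t'=1}^{t-1}(I+\alpha H)^{t'-1}$ along the way, which is reused when summing $\delta^{(m)}=\sum_t\Delta x^{(t)}$ in the proof of Lemma~\ref{lemma:delta}.
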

 \begin{proof}
 We prove Lemma~\ref{lemma:induction} by induction.

\textit{Base case}:
When  $t=1$, we have   $\Delta x^{(1)} = \alpha\cdot g = \alpha \left[I + \alpha H \right]^{0} g$.

\textit{Inductive step}:
For $1<t'<t$, assuming $\Delta x^{(t')}= \alpha \left[I + \alpha H \right]^{t'-1} g$, then at step $t$, we have
  \begin{align}
  \label{eq:lemma_0}
                \Delta x^{(t)} &= \alpha \cdot \nabla_{x'} \textit{Loss}(x'=x+\delta^{(t-1)})
                 \quad // \quad \text{According to Eq.~(\ref{eq:delta_multi})} \nonumber
                 \\
                &= \alpha \cdot \left[g + H \delta^{(t-1)} \right]
                 \quad // \quad\text{According to Eq.~(\ref{eq:grad_taylor})} \nonumber
                \\
                 &= \alpha \cdot \left[g + H \sum_{t'=1}^{t-1}\Delta x^{(t')} \right]
                  \quad // \quad\text{According to Eq.~(\ref{eq:multi_update})} \nonumber
                 \\
                & =  \alpha\cdot \Bigg[g + \alpha \cdot H \bigg[ \sum\nolimits_{t'=1}^{t-1 }\left[I+\alpha H \right]^{t'-1} \bigg] g \Bigg] \nonumber  \\
                & = \alpha\cdot \Bigg[g + \alpha \cdot H T(x) \Bigg],
 \end{align}
where
\begin{equation}
\label{eq:lemma_1}
T(x) = \left[ \sum_{t'=1}^{t-1 }\left[I+\alpha H \right]^{t'-1} \right] g.
\end{equation}

Multiply both sides of Eq.~(\ref{eq:lemma_1}) by $(I+\alpha H)$, and we obtain
    \begin{align} \label{eq:lemma_2}
        (I+\alpha H)T(x)
        = \left[ \sum_{t'=2}^{t}\left[I+\alpha H \right]^{t'-1}\right] g.
    \end{align}

Subsequently, the difference between Eq.~\eqref{eq:lemma_1} and Eq.~\eqref{eq:lemma_2} can be written as
 \begin{equation}
\begin{aligned}
 \label{eq:lemma_3}
   (I+\alpha H)T(x) -  T(x)  &=  \left[ \sum_{t'=2}^{t}\left[I+\alpha H \right]^{t'-1}\right] g -  \left[ \sum_{t'=1}^{t-1 }\left[I+\alpha H \right]^{t'-1} \right] g
   \\
\Rightarrow \quad \alpha H T(x)    &= \left[ \left[I+\alpha H \right]^{t-1} -I \right] g.
 \end{aligned}
\end{equation}

Substituting Eq.~(\ref{eq:lemma_3}) back into Eq.~(\ref{eq:lemma_0}), we have

 \begin{equation}
\begin{aligned}
  \Delta x^{(t)} &=   \alpha\cdot \Bigg[g + \alpha \cdot H T(x) \Bigg]
  \\
  &=   \alpha\cdot \Bigg[g +  \left[ \left[I+\alpha H \right]^{t-1} -I \right] g \Bigg]
 \quad // \quad\text{According to Eq.~(\ref{eq:lemma_3})}
  \\
&= \alpha \left[I + \alpha H \right]^{t-1} g .
\end{aligned}
\end{equation}

\textit{Conclusion}: because both the base case and inductive step are proven to be true, we have $\forall t \ge 1, \Delta x^{(t)} =\alpha \left[I + \alpha H \right]^{t-1} g $.

Thus, Lemma~\ref{lemma:induction} is proven.
\end{proof}

\subsection{Proof of Lemma~\ref{lemma:delta} in Section~\ref{subsec:multi-step vs single-step}}
\label{lemmaappendix:delta}

Based on Lemma~\ref{lemma:induction}, we prove Lemma~\ref{lemma:delta} in Section~\ref{subsec:multi-step vs single-step}, which provides a closed-form solution to the adversarial perturbation of the multi-step attack.

\begin{lemmaappendix}
\label{lemmaappend:delta-close-form}
    Let us decompose the Hessian matrix ($H=\nabla^2_x \textit{Loss}(x)\in \mathbb{R}^{n\times n}$) via eigen-decomposition $H\!=\!V \Lambda V^{\text{T}}$ subject to $V=[v_1,v_2,\ldots,v_n], \Lambda=\text{diag}(\lambda_1,\lambda_2,\ldots,\lambda_n)$, where $\lambda_i \in \mathbb{R}$ and $v_n\in \mathbb{R}^{n}$ denote the $i$-th eigenvalue and eigenvector of $H$, respectively.
    Furthermore, $g\!=\!\sum_{i=1}^n \gamma_i v_i$  decomposes the gradient $g=\nabla_x \textit{Loss}(x)$ along the eigenvectors of $H$, where $\gamma_i = g^T v_i$.
    Then, when we attack a DNN via gradient ascent for $m$ steps with the step size $\alpha$, based on Assumption~\ref{assumption:taylor}, the generated adversarial perturbation is $\delta^{(m)} =V D^{(m)} \boldsymbol{\gamma}$, where $\boldsymbol{\gamma}=[\gamma_1,\gamma_2,\ldots,\gamma_n]^T$ and  $D^{(m)}\in \mathbb{R}^{n\times n}$ is a diagonal matrix. If $\lambda_i\ne0$, $D^{(m)}_{ii}=\frac{(\alpha\lambda_i+1)^m-1}{\lambda_i}$; otherwise, $D^{(m)}_{ii}=\alpha m$.
    \end{lemmaappendix}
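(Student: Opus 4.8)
The plan is to build directly on Lemma~\ref{lemma:induction}, which under Assumption~\ref{assumption:taylor} already expresses each per-step increment in closed form as $\Delta x^{(t)} = \alpha[I + \alpha H]^{t-1} g$. Summing these increments via Eq.~\eqref{eq:multi_update} and reindexing turns the multi-step perturbation into a matrix geometric series,
\[
\delta^{(m)} = \sum_{t=1}^{m} \alpha[I + \alpha H]^{t-1} g = \alpha\Big(\sum_{k=0}^{m-1}[I + \alpha H]^{k}\Big)g,
\]
so the whole task reduces to summing $\sum_{k=0}^{m-1}[I+\alpha H]^k$ in closed form and identifying the result with the claimed factorization $V D^{(m)}\boldsymbol{\gamma}$.

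Next I would diagonalize. Because $\textit{Loss}$ is twice differentiable, $H$ is symmetric and its eigenvectors may be chosen orthonormal, so $V$ is orthogonal with $V^T = V^{-1}$ (consistent with writing $H = V\Lambda V^T$). Substituting this gives $I + \alpha H = V(I + \alpha\Lambda)V^T$, and repeated cancellation of $V^T V = I$ yields $[I + \alpha H]^k = V(I+\alpha\Lambda)^k V^T$. Pulling $V$ and $V^T$ outside the sum leaves a purely diagonal middle factor, $\sum_{k=0}^{m-1}[I+\alpha H]^k = V\big(\sum_{k=0}^{m-1}(I+\alpha\Lambda)^k\big)V^T$, whose $i$-th diagonal entry is the scalar geometric series $\sum_{k=0}^{m-1}(1+\alpha\lambda_i)^k$. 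Finally, since $\gamma_i = g^T v_i = v_i^T g$, the vector $V^T g$ is exactly $\boldsymbol{\gamma}$, so after absorbing the prefactor $\alpha$ into the diagonal the perturbation becomes $\delta^{(m)} = V D^{(m)}\boldsymbol{\gamma}$ with $D^{(m)}$ diagonal.

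The only step demanding genuine care is the scalar geometric sum, which is where the piecewise definition of $D^{(m)}$ originates. When $\lambda_i\neq 0$ the common ratio $1+\alpha\lambda_i\neq 1$, giving $\sum_{k=0}^{m-1}(1+\alpha\lambda_i)^k = \frac{(1+\alpha\lambda_i)^m-1}{\alpha\lambda_i}$; multiplying by $\alpha$ recovers $D^{(m)}_{ii} = \frac{(\alpha\lambda_i+1)^m-1}{\lambda_i}$. When $\lambda_i = 0$ every term equals $1$, the sum is $m$, and the prefactor yields $D^{(m)}_{ii} = \alpha m$; this degenerate case is precisely the limit in which the generic formula becomes the indeterminate $0/0$, so it must be recorded separately rather than derived from the closed form. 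Beyond this case split the argument is a routine geometric-series summation made transparent by orthogonal diagonalization, so I anticipate no substantive obstacle.
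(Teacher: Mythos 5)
Your proposal is correct and follows essentially the same route as the paper's proof: invoke Lemma~\ref{lemma:induction} for the per-step increments, sum the resulting matrix geometric series, diagonalize via $H=V\Lambda V^T$ with orthonormal $V$ so that the series collapses to a scalar geometric sum on each eigenvalue, identify $V^Tg=\boldsymbol{\gamma}$, and handle $\lambda_i=0$ as the separate degenerate case. No gaps.
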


    \begin{proof}
        According to Eq.~{(\ref{eq:multi_update})} and  Lemma~\ref{lemma:induction},
the adversarial perturbation of the $m$-step attack can be written as
        \begin{equation} \label{eq:delta-m}
            \begin{split}
                \delta^{(m)} &= \sum_{t=1}^{m} \Delta x^{(t)} \\
                &= \alpha \left[\sum_{t=1}^m  (I+\alpha H)^{t-1} \right]g.
            \end{split}
        \end{equation}

        Because the Hessian matrix $H \triangleq \nabla_x^2 \textit{Loss}(x)$ of the loss function~\textit{w.r.t.} the input sample is a real symmetric matrix,
        we can use eigen-decomposition to decompose the Hessian matrix $H$ as
        \begin{equation} \label{eq:H-decompose}
            H=V \Lambda V^{\text{T}},
        \end{equation}
        where $\Lambda=\text{diag}(\lambda_1,\lambda_2,\ldots,\lambda_n)$ is a diagonal matrix, whose diagonal elements are the corresponding eigenvalues, $\Lambda_{ii} = \lambda_i$.
        The square matrix $V=[v_1,v_2,\ldots,v_n] \in \mathbb{R}^{n \times n}$ contains $n$ mutually orthonormal eigenvectors, \emph{i.e.} $\forall i\neq k, v_i^Tv_k=0$; and $\forall i, v_i^T v_i=1$.
        Thus, $V$ is a orthonormal matrix, \emph{i.e.} $VV^T = V^T V =I$.

        In this way, based on Eq.~\eqref{eq:delta-m} and Eq.~\eqref{eq:H-decompose}, the adversarial perturbation $\delta^{(m)}$ can be re-written as
        \begin{equation}
        \label{eq:delta-m-closed-form}
            \begin{split}
                \delta^{(m)} &= \alpha \left[\sum_{t=1}^m  (I+\alpha H)^{t-1} \right]g \\
                &= \alpha \left[\sum_{t=1}^m  (VV^T+\alpha V \Lambda V^{T})^{t-1} \right]g
                 \quad // \quad\text{According to Eq.~\eqref{eq:H-decompose}}
                \\
                &= \alpha \left[\sum_{t=1}^m  \left[V (I + \alpha\Lambda)V^T \right]^{t-1} \right] g
                \\
                & = \alpha \left[ I +  V (I + \alpha\Lambda)V^T+  V \underbrace{(I + \alpha\Lambda) \overbrace{V^T V}^{=I} (I + \alpha\Lambda)}_{=(I+\alpha\Lambda)^2}V^T + \dots + V \underbrace{ (I + \alpha\Lambda)V^T\dots V (I + \alpha\Lambda)}_{=(I + \alpha\Lambda)^{m-1}}V^T  \right] g \\
                & = \alpha V\left[\sum_{t=1}^m  (I + \alpha\Lambda)^{t-1}  \right]V^T g.
            \end{split}
        \end{equation}

For simplicity, let {$D^{(m)} \in \mathbb{R}^{n\times n}$} denote the term {$\alpha \sum_{t=1}^m  (I + \alpha\Lambda)^{t-1}$} in Eq.~\eqref{eq:delta-m-closed-form},~\textit{i.e.,}
{$D^{(m)}=\alpha \sum_{t=1}^m  (I + \alpha\Lambda)^{t-1}$}.
Because both {$I$} and $\Lambda$ are diagonal matrices, matrix {$D^{(m)}$} is also a diagonal matrix, \emph{i.e.} $\forall i\ne j, D^{(m)}_{ij}=0.$
In this way, let us focus on the $i$-th diagonal element {$D^{(m)}_{ii} \in \mathbb{R}$}.
\begin{equation}
\begin{aligned}
\label{supp_eqn:diagonal_m_1}
D^{(m)}_{ii}&=\alpha(1+(1+\alpha \lambda_i)+\cdots+(1+\alpha \lambda_i)^{m-1})\\
&=\begin{cases}
        \alpha(1\times\frac{1-(1+\alpha\lambda_i)^m}{1-(1+\alpha\lambda_i)}) ,  & \text{if $\lambda_i\ne 0$} \\
        \alpha (1 + 1 + \dots + 1), & \text{if $\lambda_i=0$}
    \end{cases} \\
&=\begin{cases}
        \frac{(1+\alpha\lambda_i)^m-1}{\lambda_i} ,  & \text{if $\lambda_i\ne 0$} \\
        \lim_{\lambda_i\rightarrow 0} \frac{(1+\alpha\lambda_i)^m-1}{\lambda_i} =  \alpha m, & \text{ if $\lambda_i=0$}
    \end{cases} \\
\end{aligned}
\end{equation}

Note that if $\lim {\lambda_i \rightarrow 0}$, according to L'Hospital's Rule, $\lim_{\lambda_i \rightarrow 0} \frac{(1+\alpha\lambda_i)^m-1}{\lambda_i} =\lim_{\lambda_i \rightarrow 0} \frac{ d(1+\alpha\lambda_i)^m-1 ) / d \lambda_i}{ d\lambda_i /d\lambda_i }= \lim_{\lambda_i \rightarrow 0} \frac{\alpha m(1+\alpha\lambda_i)^{m-1}}{1} = \alpha m$.

Moreover, as $n$ eigenvectors $v_i$ of the Hessian matrix form a set of unit orthogonal basis, the gradient {$g= \nabla_x \textit{Loss}(x)\in\mathbb{R}^n$} can be represented as follows.
{$\gamma_i$} is the projection length of {$g$} on {$v_i$}.
\begin{equation} \label{eq:g-decompose}
                g = \nabla_x \textit{Loss}(x)
                = \sum_{i=1}^n \gamma_i v_i,
        \end{equation}
where $\gamma_i = g^T v_i$.
In this way, based on Eq.~(\ref{eq:g-decompose}), the adversarial perturbation $\delta^{(m)}$ in Eq.~\eqref{eq:delta-m-closed-form} can be written as follows, where
{$\boldsymbol{\gamma}=[\gamma_1,\gamma_2,\ldots,\gamma_n]^T$}.

        \begin{equation}
            \begin{split}
                \delta^{(m)}
                & =  \alpha V\left[\sum_{t=1}^m  (I + \alpha\Lambda)^{t-1}  \right]V^T g \\
                &=  V D^{(m)} V^T g
                \\
                &= V D^{(m)} [v_1,v_2,\ldots,v_n]^T \sum_{i=1}^n \gamma_i v_i
                \quad // \quad\text{According to Eq.~\eqref{eq:g-decompose}}
                \\
                &= V D^{(m)} \begin{bmatrix}
                    v_1^T \sum_{i=1}^n \gamma_i v_i \\
                    v_2^T \sum_{i=1}^n \gamma_i v_i \\
                    \vdots \\
                    v_n^T \sum_{i=1}^n \gamma_i v_i \\
                    \end{bmatrix} \\
                & = V D^{(m)} \boldsymbol{\gamma}.
            \end{split}
        \end{equation}

Thus, Lemma~\ref{lemma:delta} in Section~\ref{subsec:multi-step vs single-step} of the main paper is proven.
 \end{proof}

\subsection{Proof of Corollary~\ref{corollary:delta} in Section~\ref{subsec:multi-step vs single-step}}
\label{corollaryappendix:delta}

In this subsection, we prove Corollary~\ref{corollary:delta} in Section~\ref{subsec:multi-step vs single-step} of the main paper, which analyzes the adversarial perturbation $\delta^{(\infty)}$ generated by the infinite-step attack with an infinitesimal step size.

\begin{corollaryappendix}
 \label{corollaryappendix:delta-inf}
Let $\beta=\alpha m$ denote the total adversarial strength.
If we use the infinite-step adversarial attack with an infinitesimal step size to simplify the multi-step adversarial attack, \textit{i.e.}, $m\!\to\! \infty$ and $\alpha = \beta/m$ is infinitesimal,
then the adversarial perturbation can be represented as $\delta^{(\infty)}=V  D  \boldsymbol{\gamma}=\sum_{i=1}^n D_{ii} \gamma_i v_i$, where if $\lambda_i\ne0$, $D_{ii}=\frac{\exp(\beta\lambda_i)-1}{\lambda_i}$; otherwise, $D_{ii}=\beta$.
\end{corollaryappendix}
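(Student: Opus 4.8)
The plan is to obtain $\delta^{(\infty)}$ directly as the $m\to\infty$ limit of the closed-form expression for $\delta^{(m)}$ already established in Lemma~\ref{lemma:delta}, specializing the step size to $\alpha=\beta/m$. Since the eigenbasis $V=[v_1,\dots,v_n]$ of the Hessian $H=\nabla_x^2 \textit{Loss}(x)$ and the coefficients $\boldsymbol{\gamma}$ of the gradient $g=\nabla_x \textit{Loss}(x)$ are evaluated at the fixed base point $x$ and therefore do not depend on $m$, the only $m$-dependence sits in the diagonal matrix $D^{(m)}$. Hence it suffices to compute the entrywise limit $D_{ii}=\lim_{m\to\infty}D^{(m)}_{ii}$ and then conclude $\delta^{(\infty)}=VD\boldsymbol{\gamma}=\sum_{i=1}^n D_{ii}\gamma_i v_i$ by continuity of the linear map $D^{(m)}\mapsto VD^{(m)}\boldsymbol{\gamma}$.

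First I would treat the case $\lambda_i\neq 0$. Substituting $\alpha=\beta/m$ into the formula $D^{(m)}_{ii}=\frac{(\alpha\lambda_i+1)^m-1}{\lambda_i}$ from Lemma~\ref{lemma:delta} gives $D^{(m)}_{ii}=\frac{(1+\beta\lambda_i/m)^m-1}{\lambda_i}$, and invoking the standard limit $\lim_{m\to\infty}(1+c/m)^m=e^{c}$ with $c=\beta\lambda_i$ yields $D_{ii}=\frac{\exp(\beta\lambda_i)-1}{\lambda_i}$. Second, for $\lambda_i=0$ the same lemma gives $D^{(m)}_{ii}=\alpha m=\beta$ for every $m$, so the limit is trivially $D_{ii}=\beta$. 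The two cases are mutually consistent, since the Taylor expansion of $\frac{\exp(\beta\lambda_i)-1}{\lambda_i}$ about $\lambda_i=0$ recovers $\beta$; I would record this as a sanity check that $D$ depends continuously on the eigenvalue spectrum.

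The one point that genuinely requires care --- and which I expect to be the main obstacle --- is the legitimacy of passing to the $m\to\infty$ limit while keeping $H$ and $g$ fixed at $x$. The closed form of Lemma~\ref{lemma:delta} rests on the second-order Taylor approximation of Assumption~\ref{assumption:taylor} applied at each of the $m$ steps, so I would argue that as $\alpha=\beta/m\to 0$ each step visits a shrinking neighborhood of $x$ and the neglected higher-order remainder $R_2$ accumulated over the steps vanishes in the limit, leaving the linearized ascent dynamics $\dot{\delta}_\tau=g+H\delta_\tau$ whose solution at total time $\tau=\beta$ is exactly $\sum_{i=1}^n \frac{e^{\beta\lambda_i}-1}{\lambda_i}\gamma_i v_i$. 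This identifies $\delta^{(\infty)}$ with the time-$\beta$ flow of the linearized dynamics and confirms the stated formula; the remaining manipulations are the routine limit computations above.
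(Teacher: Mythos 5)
Your proposal is correct and follows essentially the same route as the paper's own proof: both take the closed form $\delta^{(m)}=VD^{(m)}\boldsymbol{\gamma}$ from Lemma~\ref{lemma:delta}, substitute $\alpha=\beta/m$, and compute the entrywise limit of $D^{(m)}_{ii}$ via $\lim_{m\to\infty}(1+\beta\lambda_i/m)^m=\exp(\beta\lambda_i)$, with the $\lambda_i=0$ case handled separately (the paper additionally records the L'H\^opital consistency check you mention). Your closing discussion of the linearized ODE flow is a nice extra justification but does not change the argument.
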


\begin{proof}
The adversarial perturbation $\delta^{(\infty)}$ generated by the infinite-step attack with an infinitesimal step size is defined as
\begin{equation}
\begin{aligned}
 \label{eq:delta-inf-def}
\delta^{(\infty)} &  \triangleq  \lim_{m\rightarrow\infty} \alpha\sum_{t=0}^{m-1}\nabla_{x'}\textit{Loss}(x' = x+\delta^{(t)})
\\
& =  \lim_{m\rightarrow\infty} \delta^{(m)}.
\end{aligned}
\end{equation}

Subsequently, based on Lemma~\ref{lemmaappend:delta-close-form}, the adversarial perturbation $\delta^{(\infty)}$ in Eq.~\eqref{eq:delta-inf-def} can be re-written as
\begin{equation}
\label{eq:delta-inf}
    \begin{split}
        \delta^{(\infty)} &= \lim_{m\rightarrow\infty}\delta^{(m)} \\
        &=\lim_{m\rightarrow\infty}V D^{(m)} \boldsymbol{\gamma}
        \quad // \quad\text{According to Lemma~\ref{lemmaappend:delta-close-form}}
        \\
        &=V \left[\lim_{m\rightarrow\infty} D^{(m)} \right] \boldsymbol{\gamma}.
    \end{split}
\end{equation}

 For simplicity, we use $D\in\mathbb{R}^{n\times n}$ to denote $\lim_{m\rightarrow\infty} D^{(m)}$,~\textit{i.e.,}
 {$D = \lim_{m\rightarrow\infty} D^{(m)}= \lim_{m\rightarrow\infty} \alpha \sum_{t=1}^m  (I + \alpha\Lambda)^{t-1}$}.
 Because matrix $D^{(m)}$ is a diagonal matrix, matrix $D$ is also a diagonal matrix, \emph{i.e.} $\forall i\ne j, D_{ij}=0.$
 Then, to simplify Eq.~\eqref{eq:delta-inf}, let us focus on the $i$-th diagonal element $D_{ii} \in \mathbb{R}$.
\begin{equation}
\label{eq:Dii-inf}
\begin{aligned}
D_{ii} &= \lim_{m\rightarrow +\infty} D^{(m)}_{ii} \\
&=\lim_{m\rightarrow +\infty} \big[\alpha(1+(1+\alpha \lambda_i)+\cdots+(1+\alpha \lambda_i)^{m-1})\big]\\
&=\begin{cases}
        \lim_{m\rightarrow +\infty} \frac{(1+\alpha\lambda_i)^m-1}{\lambda_i}  ,  & \text{if $\lambda_i\ne 0$} \\
        \lim_{m\rightarrow +\infty}\alpha (1 + 1 + \dots + 1), & \text{if $\lambda_i=0$}
    \end{cases} \\
&=\begin{cases}
         \frac{\lim_{m\rightarrow +\infty} (1+\frac{\beta\lambda_i}{m})^m-1}{\lambda_i} ,  & \text{if $\lambda_i\ne 0$} \\
         \lim_{m\rightarrow +\infty} \alpha m, & \text{if $\lambda_i=0$}
    \end{cases} \\
&=\begin{cases}
        \frac{\exp(\beta\lambda_i)-1}{\lambda_i} ,  & \text{if $\lambda_i\ne 0$} \\
        \lim_{\lambda_i\rightarrow 0} \frac{\exp(\beta\lambda_i)-1}{\lambda_i}  =  \beta, & \text{ if $\lambda_i=0$}
    \end{cases}
\end{aligned}
\end{equation}

Note that if $\lim {\lambda_i \rightarrow 0}$, according to L'Hospital's Rule, $\lim_{\lambda_i \rightarrow 0} \frac{\exp(\beta\lambda_i)-1}{\lambda_i} =\lim_{\lambda_i \rightarrow 0} \frac{ d(\exp(\beta\lambda_i)-1 ) / d \lambda_i}{ d\lambda_i /d\lambda_i }= \lim_{\lambda_i \rightarrow 0} \frac{\beta \exp(\beta\lambda_i)}{1} = \beta$.

In this way, based on Eq.~\eqref{eq:Dii-inf}, the adversarial perturbation $\delta^{(\infty)}$ in Eq.~\eqref{eq:delta-inf} can be further represented as follows.
\begin{equation}
\begin{aligned}
\delta^{(\infty)}
&= V \lim_{m\rightarrow\infty} D^{(m)} \; \boldsymbol{\gamma}
\\
&= V  D \; \boldsymbol{\gamma}
\\
& = \sum_{i=1}^n D_{ii} \gamma_i v_i
\end{aligned}
\end{equation}

Thus, Corollary~\ref{corollary:delta} in Section~\ref{subsec:multi-step vs single-step} of the main paper is proven.
\end{proof}

\section{Relationship between interactions and adversarial perturbations} \label{append:interaction}

In this section, we prove Lemma~\ref{lemma:interaction} in Section~\ref{subsec:multi-step vs single-step}, which provides a closed-form solution for interactions between  perturbation units.

Before introducing the interaction between adversarial perturbations, we first focus on the attacking utility $v(\Omega)=\textit{Loss}(x'=x+\delta)-\textit{Loss}(x)$.
The interaction is defined based on the attacking utility.
Specifically, we consider the adversarial attack as a game and use the Shapley value in game theory~\cite{shapley1953value} to analyze the attacking utility of adversarial perturbations $v(\Omega)$.
We can consider the perturbation in each pixel (\emph{i.e.} a perturbation unit) as a player in the game, and the attacking utility $v(\Omega) = \textit{Loss}(x'=x+\delta)-\textit{Loss}(x)$ can be considered as the overall reward won by the collaboration of all perturbation units.

Given an adversarial perturbation $\delta\in \mathbb{R}^{n}$, let $\Omega\!=\!\{1, 2, \dots, n\}$ denote the set of all players (perturbation units).
Let $\delta^{(S)}\!\in\!\mathbb{R}^n$ denote the perturbation map that  contains only a subset of perturbation units $S\subseteq \Omega$, as follows.
\begin{equation}
\label{eq:delta_i_s}
    \delta^{(S)}_a = \begin{cases}
        \delta_a, \qquad &\text{if $a\in S$},\\
        0, \qquad &\text{if $a\notin S$}.
    \end{cases}
\end{equation}

Then, the attacking utility $v(S)$ of perturbation units in $S$ is defined as the loss increase when only perturbation units in $S$ are added to the input.
\begin{equation}
\label{eq:attack_utility}
        v(S)\triangleq \textit{Loss}(x+\delta^{(S)})-\textit{Loss}(x).
    \end{equation}

In this way, the overall attacking utility of all perturbation units corresponds to
\begin{equation}
v(\Omega)=\textit{Loss}(x'=x+\delta)-\textit{Loss}(x).
 \end{equation}

Based on the attacking utility, we focus on interactions inside the adversarial perturbation.
This is because, in the multi-step attack, perturbation units in the adversarial perturbation do not work independently to contribute to the attacking utility.
Instead, different perturbation units may interact with each other to form certain patterns.
Then, the interaction between two perturbation units $(a,b)$ measures the compositional contribution of the interaction between $(a,b)$ to the overall attacking utility.
\textit{I.e.,} the interaction $I_{ab}$ measures the difference between (1) the joint contribution of two units when they are perturbed together and (2) the sum of their individual contributions~\cite{grab1999An}, as follows.
\begin{equation}
 \label{eq:interaction_appendix}
    I_{a b}(\delta) \triangleq \phi(S_{a b}|\Omega') - \left[\phi(a|\Omega\setminus\{b\}) + \phi(b|{\Omega\setminus\{a\}})\right],
\end{equation}
where we consider the coalition of $\{a,b\}$ as a singleton player in the game, which is either perturbed together or not perturbed together. 
The singleton player is denoted as $S_{ab}=\{a,b\}$.
In this case, the set of all perturbation units $\Omega'=\Omega \cup S_{ab}\setminus \{a,b\}$, and the Shapley value (numerical contribution) of  $(a,b)$ to the attacking utility when they work together is quantified as $\phi(S_{a b}|\Omega')$.
Moreover, if we do not consider perturbation units $a$ and $b$ as a singleton player, then
$\phi(a|{\Omega\setminus\{b\}})$ denotes the Shapley value (numerical contribution) of the perturbation unit $a$, when the unit $a$ works individually and the unit $b$ is always absent.
Similarly, $\phi(b|{\Omega\setminus\{a\}})$ denotes the Shapley value (numerical contribution) of the perturbation unit $b$ when the unit $a$ is always absent.

Essentially, the above definition of the interaction reflects the relationship like ``$1+ 1 > 2$'' between two perturbation units.
Specifically, the positive interaction $I_{a b} > 0$ implies that perturbation units $a$ and $b$ cooperate to boost the attacking utility (\emph{i.e.}, ``$1+ 1 > 2$''). 
By contrast, the negative interaction $I_{a b}<0$ implies that perturbation units $a$ and $b$ conflict, thereby weakening the attacking utility (\emph{i.e.}, ``$1+ 1 < 2$'').
When $I_{ab}\approx 0$, it means no interaction exists between perturbation units $a$ and $b$ (\emph{i.e.}, ``$1+ 1\approx 2$'').

As proven in Appendix~\ref{append:consistent-interaction}, the interaction $I_{a b}$ in Eq.~\eqref{eq:interaction_appendix} can be calculated as follows.
\begin{equation}
\begin{aligned}
 \label{eq:shapley-interaction-index}
    I_{ab}(\delta) &  \triangleq \phi(S_{a b}|\Omega') - \left[\phi(a|\Omega\setminus\{b\}) + \phi(b|{\Omega\setminus\{a\}})\right]
    \\
    &= \underbrace{\mathbb{E}_{S \subseteq \Omega \backslash\{a, b\}} \left[v(S \cup\{a, b\})-v(S)\right]}_{\phi(S_{a b}|\Omega')}
    -  \underbrace{\mathbb{E}_{S \subseteq \Omega \backslash\{a, b\}}\left[ v(S \cup\{a\}) -v(S)\right]}_{\phi(a|\Omega\setminus\{b\})}
    - \underbrace{\mathbb{E}_{S \subseteq \Omega \backslash\{a, b\}}\left[  v(S \cup\{b\}) -v(S)\right]}_{\phi(b|{\Omega\setminus\{a\}})}.
\end{aligned}
\end{equation}

\begin{lemmaappendix}[the closed-form solution to interactions]
\label{lemmaappendix:interaction-closed-form}
    Based on Assumption~\ref{assumption:taylor}, for the adversarial perturbation $\delta$, the sum of pairwise interactions between perturbation units is {$\sum_{a, b\in \Omega} I_{ab}(\delta)=\delta^T H \delta$. Moreover, according to Corollary~\ref{corollaryappendix:delta-inf}, if the adversarial perturbation  is given as the result of the infinite-step adversarial
    attack with an infinitesimal step size $\delta^{(\infty)}$ in Eq.~(\ref{eq:delta-inf-def}), then the sum of pairwise interactions inside is given as $\sum_{a,b\in \Omega} I_{ab}(\delta^{(\infty)})=\sum_{i=1}^n \lambda_{i}(D_{ii}\gamma_{i})^2$.}
    \end{lemmaappendix}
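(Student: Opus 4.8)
The plan is to reduce the sum of all pairwise interactions to a sum of per-unit ``marginal'' quantities via the identity already established in Eq.~\eqref{eq:expectation}, and then to evaluate each marginal under the second-order Taylor approximation. Concretely, Eq.~\eqref{eq:expectation} gives $\sum_{a,b\in\Omega} I_{ab}(\delta)=\sum_{a\in\Omega}\big[v(\Omega)-v(\Omega\setminus\{a\})-v(\{a\})+v(\emptyset)\big]$, so the entire task is to compute the four-term combination $v(\Omega)-v(\Omega\setminus\{a\})-v(\{a\})+v(\emptyset)$ for a fixed unit $a$. Invoking Assumption~\ref{assumption:taylor}, I would substitute $v(S)=\textit{Loss}(x+\delta^{(S)})-\textit{Loss}(x)\approx (\delta^{(S)})^T g+\tfrac12(\delta^{(S)})^T H\,\delta^{(S)}$ from Eq.~\eqref{eq:loss_taylor_1}, noting that $v(\emptyset)=0$ since $\delta^{(\emptyset)}=0$.

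The next step is the bookkeeping of linear and quadratic terms. First I would show the linear parts cancel exactly: the gradient contributions are $\delta^T g$ from $v(\Omega)$, minus $\sum_{i\ne a}\delta_i g_i$ from $v(\Omega\setminus\{a\})$, minus $\delta_a g_a$ from $v(\{a\})$, which telescopes to $\delta_a g_a-\delta_a g_a=0$. Then, using the symmetry of $H$, I would evaluate the quadratic difference $\delta^T H\delta-(\delta^{(\Omega\setminus\{a\})})^T H\,\delta^{(\Omega\setminus\{a\})}-\delta_a^2 H_{aa}$; the first difference isolates exactly the rows and columns indexed by $a$, giving $2\delta_a\sum_j H_{aj}\delta_j-H_{aa}\delta_a^2$, and after subtracting the $v(\{a\})$ contribution the marginal reduces to $\delta_a\sum_{j\ne a}H_{aj}\delta_j$. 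Summing over $a$ yields $\sum_{a\ne b}\delta_a H_{ab}\delta_b=\delta^T H\delta-\sum_a H_{aa}\delta_a^2$, i.e.\ the full quadratic form minus its diagonal. The clean statement $\sum_{a,b\in\Omega}I_{ab}(\delta)=\delta^T H\delta$ then follows once the self-interaction terms $I_{aa}=\delta_a H_{aa}\delta_a$ (per the footnote convention) are folded back in to restore the diagonal. This diagonal-versus-off-diagonal accounting, together with confirming the exact cancellation of the linear terms, is the step I expect to require the most care.

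Finally, for the infinite-step case I would substitute the closed form $\delta^{(\infty)}=VD\boldsymbol{\gamma}$ from Corollary~\ref{corollaryappendix:delta-inf} into $\delta^T H\delta$ and diagonalize. Writing $H=V\Lambda V^T$ and exploiting the orthonormality $V^T V=I$ together with the fact that $D$ and $\Lambda$ are diagonal, one gets
\begin{equation}
(\delta^{(\infty)})^T H\,\delta^{(\infty)}=\boldsymbol{\gamma}^T D^T V^T (V\Lambda V^T) V D\boldsymbol{\gamma}=\boldsymbol{\gamma}^T D\Lambda D\,\boldsymbol{\gamma}=\sum_{i=1}^n \lambda_i (D_{ii}\gamma_i)^2,
\end{equation}
which is the claimed expression. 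The only subtlety here is keeping the bases aligned so that the $V^T V$ factors collapse to the identity, after which the sum over eigen-directions is immediate; no further approximation beyond Assumption~\ref{assumption:taylor} and the closed form of Corollary~\ref{corollaryappendix:delta-inf} is needed.
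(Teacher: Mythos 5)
Your proof is correct, and for the first identity it takes a genuinely different route from the paper. The paper computes the \emph{per-pair} interaction directly from the Shapley interaction index: it substitutes the second-order Taylor form of $v(S)$ into $\mathbb{E}_{S\subseteq\Omega\setminus\{a,b\}}[v(S\cup\{a,b\})-v(S\cup\{a\})-v(S\cup\{b\})+v(S)]$ and shows, by a telescoping of the quadratic forms over subsets, that every $S$-dependent term cancels, leaving $I_{ab}=\delta_a H_{ab}\delta_b$ for each pair; the sum $\delta^T H\delta$ then follows immediately once the diagonal convention $I_{aa}=\delta_a H_{aa}\delta_a$ is adopted. You instead bypass the per-pair computation entirely by invoking the efficiency-based identity $\sum_{a,b}I_{ab}(\delta)=\sum_a[v(\Omega)-v(\Omega\setminus\{a\})-v(\{a\})+v(\emptyset)]$ of Eq.~\eqref{eq:expectation}, so only four evaluations of $v$ per unit are needed rather than an expectation over all contexts $S$ — a more elementary calculation. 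Your diagonal bookkeeping is also correct and in fact surfaces a notational subtlety the paper glosses over: Eq.~\eqref{eq:expectation} is derived for off-diagonal pairs only, so it yields $\sum_{a\ne b}\delta_a H_{ab}\delta_b$, and the clean form $\delta^T H\delta$ requires folding the self-interaction terms back in, exactly as you do. What your route does not buy is the pointwise formula $I_{ab}=\delta_a H_{ab}\delta_b$ itself, which the paper uses elsewhere (to motivate the $I_{aa}$ convention and to visualize per-pair interactions); if that formula is needed downstream, the paper's direct computation is the one to keep. Your treatment of the infinite-step case is identical to the paper's: substitute $\delta^{(\infty)}=VD\boldsymbol{\gamma}$, use $V^TV=I$, and read off $\sum_{i=1}^n\lambda_i(D_{ii}\gamma_i)^2$.
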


 \begin{proof}
According to Eq.~\eqref{eq:shapley-interaction-index}, the pairwise interaction between perturbation units $I_{ab}(\delta)$ is calculated as
\begin{equation}
\begin{aligned}
\label{eq:shapley-interaction-index_1}
 I_{ab}(\delta)
&= \mathbb{E}_{S \subseteq \Omega \backslash\{a, b\}} \bigg[v(S \cup\{a, b\})-v(S)\bigg]
    -  \mathbb{E}_{S \subseteq \Omega \backslash\{a, b\}} \bigg[v(S \cup\{a\})-v(S) \bigg]
    - \mathbb{E}_{S \subseteq \Omega \backslash\{a, b\}}\bigg[ v(S \cup\{b\})-v(S)\bigg]
    \\
&=  \mathbb{E}_{S \subseteq \Omega \backslash\{a, b\}} \bigg[v(S \cup\{a, b\})- v(S \cup\{a\})-v(S \cup\{b\})+v(S)\bigg]
\end{aligned}
\end{equation}

In order to simplify Eq.~\eqref{eq:shapley-interaction-index_1}, let us focus on the attacking utility $v(S)$ defined in Eq.~\eqref{eq:attack_utility}.
 According to Assumption~\ref{assumption:taylor} and Eq.~\eqref{eq:loss_taylor_1}, the classification loss can be approximated via second-order Taylor expansion as
 $\textit{Loss}(x+\delta)= \textit{Loss}(x) +  \delta^T g  + \frac{1}{2}\delta^T H \delta$.
Here,  $g\triangleq \nabla_x \textit{Loss}(x)$ represents the gradient of the loss function~\textit{w.r.t.} the input sample $x$, and $H \triangleq \nabla_x^2 \textit{Loss}(x)$ denotes the Hessian matrix of the loss function~\textit{w.r.t.} the input sample $x$.
Thus, the attacking utility $v(S)$ defined in Eq.~\eqref{eq:attack_utility} can be re-written as
\begin{equation} \label{eq:v-s-taylor}
    \begin{split}
        v(S) & \triangleq \textit{Loss}(x+\delta^{(S)})-\textit{Loss}(x) \\
        &=\textit{Loss}(x) + (\delta^{(S)})^T g +\frac{1}{2}(\delta^{(S)})^T H \delta^{(S)} - \textit{Loss}(x)
        \quad // \quad \text{According to Eq.~\eqref{eq:loss_taylor_1}}
        \\
        &=\sum_{a\in \Omega} g_{a} \delta_{a}^{(S)} + \frac{1}{2} \sum_{a,b\in \Omega} \delta_{a}^{(S)}  H_{a b} \delta_{b}^{(S)}
         \\
         &=\sum_{a\in S} g_{a} \delta_{a} + \frac{1}{2} \sum_{a, b\in S} \delta_{a}  H_{a b} \delta_{b}. \quad // \quad \text{According to Eq.~\eqref{eq:delta_i_s}}
    \end{split}
\end{equation}

In this way, based on Eq.~\eqref{eq:v-s-taylor}, the pairwise interaction between perturbation units $I_{ab}(\delta)$ in Eq.~\eqref{eq:shapley-interaction-index_1} can be simplified as
\begin{align} \label{eq:shapley-interaction-index_2}
 I_{ab}(\delta) &= \mathbb{E}_{S \subseteq \Omega \backslash\{a, b\}} \bigg[v(S \cup\{a, b\})- v(S \cup\{a\})-v(S \cup\{b\})+v(S)\bigg] \nonumber
 \\
&=  \mathbb{E}_{S \subseteq \Omega \backslash\{a, b\}} \left[ \sum_{a'\in S\cup\{a, b\}} g_{a'} \delta_{a'} + \frac{1}{2} \sum_{a',b'\in S\cup\{a, b\}} \delta_{a'}  H_{a' b'} \delta_{b'}\right]
\quad // \quad \text{According to Eq.~\eqref{eq:v-s-taylor}} \nonumber
 \\
& \quad -  \mathbb{E}_{S \subseteq \Omega \backslash\{a, b\}} \left[\sum_{a'\in S\cup\{a\}} g_{a'} \delta_{a'} + \frac{1}{2} \sum_{a', b'\in S\cup\{a\}} \delta_{a'}  H_{a' b'} \delta_{b'}\right] \nonumber\\
& \quad -  \mathbb{E}_{S \subseteq \Omega \backslash\{a, b\}} \left[\sum_{a'\in S\cup\{b\}} g_{a'} \delta_{a'} + \frac{1}{2} \sum_{a', b'\in S\cup\{b\}} \delta_{a'}  H_{a' b'} \delta_{b'} \right] \nonumber\\
& \quad +  \mathbb{E}_{S \subseteq \Omega \backslash\{a, b\}} \left[\sum_{a'\in S} g_{a'}(x) \delta_{a'} + \frac{1}{2} \sum_{a', b'\in S} \delta_{a'}  H_{a' b'} \delta_{b'}\right] \nonumber\\
&= \frac{1}{2} \cdot \mathop{\mathbb{E}} \limits_{S \subseteq \Omega \backslash\{a, b\}}  \Bigg[ \sum_{a',b'\in S\cup\{a, b\}} \delta_{a'}  H_{a' b'} \delta_{b'}
         - \sum_{a', b'\in S\cup\{a\}} \delta_{a'}  H_{a' b'} \delta_{b'}
         -\sum_{a', b'\in S\cup\{b\}} \delta_{a'}  H_{a' b'} \delta_{b'}
         + \sum_{a', b'\in S} \delta_{a'}  H_{a' b'} \delta_{b'}\Bigg]
         \nonumber\\
&=\frac{1}{2} \cdot  \mathop{\mathbb{E}} \limits_{S \subseteq \Omega \backslash\{a, b\}}
 \Bigg[ \sum_{a',b'\in S\cup\{a\}} \delta_{a'}  H_{a' b'} \delta_{b'} +  \sum_{a'\in S\cup\{a\}} \delta_{a'}  H_{a' b} \delta_{b} + \sum_{b'\in S\cup\{a\}} \delta_{b}  H_{b b'} \delta_{b'} +   \delta_{b}  H_{b b} \delta_{b}\Bigg]
 \nonumber\\
& \quad -\frac{1}{2} \cdot  \mathop{\mathbb{E}} \limits_{S \subseteq \Omega \backslash\{a, b\}} \bigg[\sum_{a', b'\in S\cup\{a\}} \delta_{a'}  H_{a' b'} \delta_{b'}\bigg]
\nonumber\\
 &\quad  -\frac{1}{2} \cdot  \mathop{\mathbb{E}} \limits_{S \subseteq \Omega \backslash\{a, b\}} \Bigg [\sum_{a',b'\in S} \delta_{a'}  H_{a' b'} \delta_{b'} +  \sum_{a'\in S } \delta_{a'}  H_{a' b} \delta_{b} + \sum_{ b'\in S} \delta_{b}  H_{b b'} \delta_{b'} +   \delta_{b}  H_{b b} \delta_{b} \Bigg]
 \nonumber\\
&\quad  +\frac{1}{2} \cdot  \mathop{\mathbb{E}} \limits_{S \subseteq \Omega \backslash\{a, b\}}  \bigg[\sum_{a', b'\in S} \delta_{a'}  H_{a' b'} \delta_{b'} \bigg]
\nonumber\\
&= \frac{1}{2} \cdot  \mathop{\mathbb{E}} \limits_{S \subseteq \Omega \backslash\{a, b\}}
\Bigg[  \sum_{a\in S\cup\{a\} } \delta_{a'}  H_{a' b} \delta_{b} + \sum_{ b'\in S\cup\{a\}} \delta_{b}  H_{b b'} \delta_{b'}
 - \sum_{a\in S } \delta_{a'}  H_{a' b} \delta_{b}- \sum_{ b'\in S} \delta_{b}  H_{b b'} \delta_{b'} \Bigg]
 \nonumber\\
&=  \mathbb{E}_{S \subseteq \Omega \backslash\{a, b\}} \bigg[\delta_a H_{ab} \delta_b \bigg]
\nonumber\\
&= \sum\nolimits_{S \subseteq \Omega \backslash\{a, b\}} \frac{|S| !(n-|S|-2) !}{(n-1) !}  \bigg[\delta_a H_{ab} \delta_b \bigg]
\nonumber\\
&= \sum\nolimits_{s=0}^{n-2} \sum\nolimits_{\overset{S\subseteq\Omega\setminus\{a, b\},} {|S|=s}}\frac{s !(n-s-2) !}{(n-1) !} \delta_a H_{ab} \delta_b
 \nonumber\\
&=  \sum\nolimits_{s=0}^{n-2} \frac{(n-2) !}{ s ! (n-s-2) !}  \frac{s !(n-s-2) !}{(n-1) !} \delta_a H_{ab} \delta_b
 \nonumber\\
&= \delta_a H_{ab} \delta_b,
\end{align}

Note that  we can extend the definition of the interaction in Eq.~\eqref{eq:shapley-interaction-index_2} to measure the interactive influence of a unit $a$ on itself as $I_{aa}= \delta_a H_{aa} \delta_a$.
Specifically, if we add an additional perturbation $\delta_a$ on the adversarial image $x'=x+\delta$, then the interactive influence $I_{aa}$ measures the additional increase of the importance of the perturbation unit $a$, caused by the additional perturbation $\delta_a$.

Based on Eq.~\eqref{eq:shapley-interaction-index_2}, the sum of pairwise interactions between perturbation units in Lemma~\ref{lemmaappendix:interaction-closed-form} can be represented as
\begin{align}
\label{eq:sum-of-pairwise-interaction}
    \sum_{a,b\in \Omega} I_{ab}(\delta)&= \sum_{a,b\in \Omega} \delta_{a} H_{ab} \delta_b = \delta^T H \delta.
\end{align}

Furthermore, let us focus on the interactions of the adversarial perturbation $\delta^{(\infty)}$, which is generated by the infinite-step attack with an infinitesimal step size.
Based on Eq.~\eqref{eq:sum-of-pairwise-interaction}, the sum of pairwise interactions inside $\delta^{(\infty)}$ can be written as
 \begin{equation}
 \label{eq:infi_inter}
 \sum_{a,b\in \Omega} I_{ab}(\delta^{(\infty)})=(\delta^{(\infty)})^T H \delta^{(\infty)}.
 \end{equation}

Here, $H \triangleq \nabla_x^2 \textit{Loss}(x)$ denotes the Hessian matrix of the loss function~\textit{w.r.t.} the input sample $x$.
According to Corollary~\ref{corollaryappendix:delta-inf}, the adversarial perturbation $\delta^{(\infty)}$ can be represented as follows.
\begin{equation}
 \label{eq:infi_inter_1}
        \delta^{(\infty)} = \sum_{i=1}^n D_{ii} \gamma_{i} v_{i},
    \end{equation}
    where if $\lambda_i\ne0$, $D_{ii}=\frac{\exp(\beta\lambda_i)-1}{\lambda_i}$; otherwise, $D_{ii}=\beta$.
$v_i$ denotes the eigenvector corresponding to the $i$-th eigenvalue of the Hessian matrix $H$, and $\gamma_i=g^T v_i$ is referred to as the projection of the gradient $g$ on the eigenvector $v_i$.

Then, based on Eq.~\eqref{eq:infi_inter_1}, the sum of pairwise interactions of the adversarial perturbation $\delta^{(\infty)}$ in  Eq.~\eqref{eq:infi_inter} can be further simplified as
    \begin{align} \label{eq:delta-inf-interaction}
            \sum_{a,b\in \Omega} I_{ab}(\delta^{(\infty)})&=(\delta^{(\infty)})^T H \delta^{(\infty)} \nonumber\\
            &= (\sum_{i=1}^n D_{ii} \gamma_{i} v_{i}^T)(\sum_{j=1}^n \lambda_{j} v_{j} v_{j}^T) (\sum_{k=1}^n D_{kk} \gamma_{k} v_{k}) \nonumber
             \quad // \quad \text{According to Eq.~\eqref{eq:H-decompose} and Eq.~\eqref{eq:infi_inter_1}}
            \\
            &=\left[\sum_{i=1}^n \underbrace{ (D_{ii} \gamma_{i} v_{i}^T)( \lambda_{i} v_{i} v_{i}^T) }_{\text{$ v_{i}^T v_{i}=1$}}+ \sum_{1\le i,j\le n \atop i\ne j} \underbrace{(D_{ii} \gamma_{i} v_{i}^T)( \lambda_{j} v_{j} v_{j}^T)}_{\text{if $i\ne j, v_{i}^T v_{j}=0$}} \right] (\sum_{k=1}^n D_{kk} \gamma_{k} v_{k}) \nonumber \\
            &=(\sum_{i=1}^n \lambda_{i} D_{ii} \gamma_{i} v_{i}^T ) (\sum_{k=1}^n D_{kk} \gamma_{k} v_{k}) \nonumber\\
            &=\sum_{i=1}^n \underbrace{ (\lambda_{i} D_{ii} \gamma_{i} v_{i}^T)( D_{ii} \gamma_{i} v_{i})}_{\text{$ v_{i}^T v_{i}=1$}}+ \sum_{1\le i, k \le n \atop i\ne k} \underbrace{ (\lambda_{i} D_{ii} \gamma_{i} v_{i}^T)( D_{kk} \gamma_{k} v_{k})}_{\text{if $i\ne k, v_{i}^T v_{k}=0$}} \nonumber \\
            &= \sum_{i=1}^n \lambda_{i}(D_{ii}\gamma_{i})^2.
        \end{align}
\end{proof}

Thus, Lemma~\ref{lemma:interaction} in Section~\ref{subsec:multi-step vs single-step} is proven.

\section{Proof of the positive semi-definiteness of the Hessian matrix}
\label{appendix:positive-semi-definite}

In this section, we prove Corollary~\ref{corollary:semi-definite} in Section~\ref{subsec:multi-step vs single-step}, which shows that if the DNN is a ReLU network, the Hessian matrix of the loss function~\textit{w.r.t.} the input sample is positive semi-definite.
In fact, the positive semi-definiteness of the Hessian matrix $H=\nabla_x^2 \textit{Loss}(x)$ has been proven in~\cite{yao2018hessian}.

Specifically, we consider an input sample $x\in\mathbb{R}^n$, and a ReLU network with a total of $d$ linear layers.
We use $z$ to denote the output of this ReLU network, which is formulated as follows.
 \begin{equation}
 \label{eq:network_output}
        z = W_d^T \Sigma_{d-1}( W_{d-1}^T \Sigma_{d-2}( \cdots (W_1^{T} x + b_1) \cdots) + b_{d-1}) +  b_{d},
\end{equation}
where $W_l \in \mathbb{R}^{n_{l-1} \times n_{l}}$ and $b_l\in \mathbb{R}^{n_l}$ represent the weight and bias of the $l$-th linear layer, respectively.
The diagonal matrix $\Sigma_l = [\sigma_{11},\sigma_{22},\cdots, \sigma_{n_{l} n_{l}}] \in \mathbb{R}^{n_{l} \times n_{l}}$ represents the activation states of the $l$-th ReLU layer, where $\sigma_{ii} \in\{0, 1\}$.

\begin{assumption} \label{assumption:activation}
        Because the DNN is highly non-linear, the change of the activation states $\Sigma_l$ is unpredictable for analysis.
        To simplify our analysis, we assume that either the adversarial perturbation does not significantly change the activation states in each layer, or changes of activation states have various effects on the  following gradient $\tilde{W}_x$ that are usually cancel each other out.
        Thus, the ReLU network can be approximately regarded as piecewise linear, and the forward propagation can be approximated as $z \approx \tilde{W}_x^T x + b \in\mathbb{R}^{c}$, where $\tilde{W}_x^T =  W_d^T \Sigma_{d-1} W_{d-1}^T \dots \Sigma_{1} W_{1}^T  \in\mathbb{R}^{c\times n}$.
    \end{assumption}

\begin{lemma}
\label{lemma:hessian-closed-form}
Based on Assumption~\ref{assumption:activation}, the Hessian matrix of the loss function~\textit{w.r.t.} the input sample $H \triangleq \nabla_x^2 \textit{Loss}(x)$ can be represented as $H = g_z \nabla_z^2 \textit{Loss}(x) g_z^T$, where $g_z = \nabla_x z$ denotes the gradient of the network output $z$~\textit{w.r.t.} the input sample $x$.
\end{lemma}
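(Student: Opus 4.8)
The plan is to regard the loss as a composition through the network output and apply the chain rule twice. Writing the logits as $z = z(x) \in \mathbb{R}^c$ and viewing $\textit{Loss}$ as a function of $z$ (consistent with the lemma's notation $\nabla_z^2 \textit{Loss}$), the first step is to compute $\nabla_x \textit{Loss}$. Let $g_z = \nabla_x z \in \mathbb{R}^{n\times c}$ denote the Jacobian of the output with respect to the input, whose $k$-th column is $\nabla_x z_k$. Then the chain rule yields $\nabla_x \textit{Loss}(x) = g_z \, \nabla_z \textit{Loss} \in \mathbb{R}^n$.

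The second step is to differentiate $g_z \, \nabla_z \textit{Loss}$ once more with respect to $x$. For a general twice-differentiable forward map this produces two contributions. The first, obtained by letting the derivative act on $\nabla_z \textit{Loss}$ and propagating it back through $z$, has components $\partial (\nabla_z \textit{Loss})_k / \partial x_j = \sum_l (\nabla_z^2 \textit{Loss})_{kl} (g_z)_{jl}$, which reassemble precisely into the matrix $g_z \, \nabla_z^2 \textit{Loss} \, g_z^T \in \mathbb{R}^{n\times n}$. The second, obtained by letting the derivative act on $g_z$, equals $\sum_k (\nabla_z \textit{Loss})_k \, \nabla_x^2 z_k$, i.e. the Hessians of the individual output coordinates weighted by the output-layer gradient.

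The crux — and the only place Assumption~\ref{assumption:activation} is needed — is to show that this second contribution vanishes. Under that assumption the activation patterns $\Sigma_l$ are treated as locally fixed, so the forward pass collapses to the affine form $z \approx \tilde{W}_x^T x + b$ with $\tilde{W}_x$ independent of $x$; hence each $z_k$ is affine in $x$, its Hessian $\nabla_x^2 z_k$ is zero, and $g_z = \nabla_x z = \tilde{W}_x$ is constant, as the first step tacitly used. Dropping the vanishing term leaves $H = \nabla_x^2 \textit{Loss}(x) = g_z \, \nabla_z^2 \textit{Loss}(x) \, g_z^T$, proving the lemma. I expect the main obstacle to be the careful bookkeeping that separates the two contributions of the product rule and the clean appeal to the piecewise-linear assumption to annihilate the second one; the chain-rule algebra itself is routine. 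As a bonus, this factorization makes the positive semi-definiteness of $H$ used in Corollary~\ref{corollary:semi-definite} immediate, since $g_z M g_z^T \succeq 0$ whenever $M \succeq 0$.
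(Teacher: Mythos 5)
Your proposal is correct and follows essentially the same route as the paper's proof: apply the chain rule to get $\nabla_x \textit{Loss} = g_z \nabla_z \textit{Loss}$, differentiate again with the product rule, and invoke Assumption~\ref{assumption:activation} (piecewise linearity, so $\nabla_x^2 z_k = 0$ and $g_z$ is locally constant) to kill the term $\sum_k (\nabla_z \textit{Loss})_k \nabla_x^2 z_k$, leaving $H = g_z \nabla_z^2 \textit{Loss}\, g_z^T$. Your closing observation about positive semi-definiteness is exactly how the paper uses this factorization in Corollary~\ref{corollary:semi-definite}.
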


   \begin{proof}
According to the chain rule, the Hessian matrix $H \triangleq \nabla_x^2 \textit{Loss}(x)$ can be re-written as
        \begin{align}
            \label{eq:hessain_re}
                H &= \frac{\partial^2 \textit{Loss}(x)}{\partial x \partial x^T} \nonumber \\
                &= \frac{\partial (\frac{\partial \textit{Loss}(x)}{\partial z^T } \frac{\partial z}{\partial x^T} )^T}{ \partial x^T} \nonumber \\
                &= \underbrace{\frac{(\frac{\partial z}{\partial x^T})^T}{\partial x^T}}_{=0} \frac{\partial \textit{Loss}(x)}{\partial z} + (\frac{\partial z}{\partial x^T})^T \frac{\partial (\frac{\partial \textit{Loss}}{\partial z})}{\partial x^T} \qquad // \text{According to Assumption~\ref{assumption:activation}} \nonumber \\
                &= (\frac{\partial z}{\partial x^T})^T \frac{\partial (\frac{\partial \textit{Loss}}{\partial z})}{\partial z^T} \frac{\partial z}{\partial x^T}\nonumber \\
                &= (\frac{\partial z}{\partial x^T})^T \frac{\partial^2 \textit{Loss}(x)}{\partial z \partial z^T} \frac{\partial z}{\partial x^T} \nonumber \\
                &= g_z \nabla_z^2 \textit{Loss}(x) (g_z)^T.
        \end{align}

Thus, Lemma~\ref{lemma:hessian-closed-form} is proven.
\end{proof}
$ $\newline

We aim to prove that the Hessian matrix $H=\nabla_x^2 \textit{Loss}(x)$ is positive semi-definite under both multi-category classification and binary classification.

    \textbf{Multi-category classification.}
    The classification has $c$ categories.
    Given the network output $z\in\mathbb{R}^c$ and the ground-truth label $y\in \{1, 2, \dots, c \}$, the loss function  is formulated as a cross-entropy loss upon  the softmax function, as follows.
    \begin{equation} \label{eq:softmax-ce}
        \begin{split}
            p_i &= \frac{\exp(z_i)}{\sum_{j=1}^c \exp(z_j)} \\
            \textit{Loss}(x) &= -\sum_{i=1}^c p^*_i \log(p_i)
        \end{split}
    \end{equation}
    where if $i=y, p^*_i =1$; otherwise $p^*_i=0$.

    \textbf{Binary classification.}
    There are two categories in binary classification.
    Given an output $z\in\mathbb{R}$ and the ground-truth label $y\in \{0, 1\}$, the loss function  is formulated as a cross-entropy loss upon  the sigmoid function, as follows.
    Note that the output $z\in\mathbb{R}$ in binary classification is a scalar.
    \begin{equation} \label{eq:sigmoid-ce}
        \begin{split}
            p(y=1\mid x) &= \text{Sigmoid}(z) =  \frac{\exp(z)}{1 + \exp(z)} \\
            p(y=0\mid x) &= 1- p(y=1\mid x) = \frac{1}{1 + \exp(z)} \\
            \textit{Loss}(x) &= -p_0^*\log(p(y=0\mid x)) -p_1^*\log(p(y=1\mid x)) \\
            &= -yz + \log(1 + \exp(z))
        \end{split}
    \end{equation}
    where $p^*_0=1-y$ and $p^*_1=y$.

\begin{lemma}[Gershgorin circle theorem]
\label{lemma:circle-theorem}
        Let $A\in\mathbb{C}^{n\times n}$,  $R_i=\sum_{j\ne i} |A_{ij}|$.
    Each eigenvalue of $A$ lies within at least one of the Gershgorin discs $D(A_{ii},R_{i})$, where the Gershgorin disc is defined as $D(A_{ii}, R_{i})= \{z\in C\mid |z-A_{ii}|\le R_i\}$
    \end{lemma}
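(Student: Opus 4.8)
The plan is to prove the Gershgorin circle theorem by a direct argument that pins down each eigenvalue through the component of its eigenvector with largest magnitude. First I would take an arbitrary eigenvalue $\lambda$ of $A$ together with a corresponding nonzero eigenvector $x=(x_1,\ldots,x_n)^T$, and then single out the index $i$ at which $|x_i|$ attains its maximum over all coordinates. Two features of this choice will be used: that $|x_i|\ge|x_j|$ for every $j$, and that $|x_i|>0$, the latter following from $x\neq 0$.

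Next I would write out the $i$-th row of the eigenvalue equation $Ax=\lambda x$, namely $\sum_{j}A_{ij}x_j=\lambda x_i$, and isolate the diagonal term to obtain $(\lambda-A_{ii})x_i=\sum_{j\neq i}A_{ij}x_j$. Taking absolute values and invoking the triangle inequality gives $|\lambda-A_{ii}|\,|x_i|\le\sum_{j\neq i}|A_{ij}|\,|x_j|$. The decisive step is then to replace each $|x_j|$ by the larger quantity $|x_i|$, which is licensed precisely by the maximality of the chosen index, yielding $|\lambda-A_{ii}|\,|x_i|\le\big(\sum_{j\neq i}|A_{ij}|\big)|x_i|=R_i\,|x_i|$.

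Finally, dividing both sides by the strictly positive number $|x_i|$ produces $|\lambda-A_{ii}|\le R_i$, which is exactly the assertion that $\lambda\in D(A_{ii},R_i)$; since $\lambda$ was an arbitrary eigenvalue, every eigenvalue lies in at least one Gershgorin disc. The only real subtlety, and the closest thing to an obstacle, is the selection of the dominant index $i$: the estimate would break down for a generic row, and it is exactly committing to the row where $|x_i|$ is maximal that makes the off-diagonal bound $\sum_{j\neq i}|A_{ij}|\,|x_j|\le R_i\,|x_i|$ hold. Beyond that choice, the argument reduces to the triangle inequality and one division by a positive quantity, so I expect no further difficulty.
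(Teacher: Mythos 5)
Your proof is correct and complete: it is the standard argument for Gershgorin's theorem, selecting the coordinate of maximal modulus in the eigenvector, isolating the diagonal entry in that row of $Ax=\lambda x$, and applying the triangle inequality before dividing by $|x_i|>0$. The paper states this lemma as a classical result without supplying a proof, so there is nothing to compare against; your argument fills that in correctly.
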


\begin{corollaryappendix}
    \label{corollaryappendix:semi-definite}
    According to \cite{yao2018hessian}, if the loss function is formulated as the cross-entropy upon the softmax function defined in Eq.~(\ref{eq:softmax-ce}) or upon the sigmoid function defined in Eq.~(\ref{eq:sigmoid-ce}), then the Hessian matrix $H=\nabla_x^2 \textit{Loss}(x)$ is positive semi-definite.
    Therefore, $\sum_{a, b\in\Omega} I_{ab}(\delta) \ge 0$.
    \end{corollaryappendix}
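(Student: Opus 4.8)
The plan is to exploit the factorization of the input-space Hessian provided by Lemma~\ref{lemma:hessian-closed-form}, namely $H = g_z \nabla_z^2 \textit{Loss}(x) g_z^T$ with $g_z = \nabla_x z$, which reduces the problem to showing that the \emph{output-space} Hessian $M \triangleq \nabla_z^2 \textit{Loss}(x)$ is positive semi-definite. Indeed, a congruence transformation preserves positive semi-definiteness: for any $u \in \mathbb{R}^n$, writing $w = g_z^T u \in \mathbb{R}^c$, we have $u^T H u = u^T g_z M g_z^T u = w^T M w$, so $M \succeq 0$ immediately yields $H \succeq 0$. First I would therefore establish that $M$ is positive semi-definite separately for the two loss functions.

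For the multi-category case, I would compute $M$ explicitly from Eq.~(\ref{eq:softmax-ce}). Differentiating $\nabla_z \textit{Loss} = p - p^*$ once more gives $M = \text{diag}(p) - p p^T$, where $p$ is the softmax output. To show $M \succeq 0$ I would invoke the Gershgorin circle theorem (Lemma~\ref{lemma:circle-theorem}): the diagonal entry is $M_{ii} = p_i(1-p_i)$, while the off-diagonal absolute row sum is $R_i = \sum_{j\ne i} p_i p_j = p_i(1-p_i) = M_{ii}$. Hence every eigenvalue $\lambda$ of $M$ lies in the disc $|\lambda - M_{ii}| \le M_{ii}$, which forces $\lambda \ge 0$, so $M$ is positive semi-definite. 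Equivalently, one can note $w^T M w = \sum_i p_i w_i^2 - (\sum_i p_i w_i)^2 = \text{Var}_{i\sim p}(w_i) \ge 0$, interpreting $p$ as a probability distribution.

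For the binary case the output $z \in \mathbb{R}$ is a scalar, so $M$ is simply the $1\times 1$ quantity $\nabla_z^2 \textit{Loss}$. Differentiating Eq.~(\ref{eq:sigmoid-ce}) twice gives $\nabla_z^2 \textit{Loss} = \sigma(z)(1-\sigma(z)) = p(y=1\mid x)\, p(y=0\mid x) \ge 0$, which is trivially positive semi-definite.

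Finally, combining either case with the congruence argument above yields $H \succeq 0$. The conclusion about interactions then follows directly from the closed-form solution of Lemma~\ref{lemmaappendix:interaction-closed-form}: since $\sum_{a,b\in\Omega} I_{ab}(\delta) = \delta^T H \delta$ and $H$ is positive semi-definite, we obtain $\sum_{a,b\in\Omega} I_{ab}(\delta) \ge 0$. The only nontrivial step is verifying the positive semi-definiteness of the softmax output Hessian $\text{diag}(p) - p p^T$; everything else (the congruence preservation and the scalar binary case) is routine.
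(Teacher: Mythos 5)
Your proposal is correct and follows essentially the same route as the paper's proof: both factor $H = g_z\,\nabla_z^2\textit{Loss}(x)\,g_z^T$ via Lemma~\ref{lemma:hessian-closed-form} (with $g_z \approx \tilde{W}_x$ under Assumption~\ref{assumption:activation}), establish positive semi-definiteness of $\nabla_z^2\textit{Loss}(x) = \text{diag}(p) - pp^T$ via the Gershgorin circle theorem in the softmax case and the scalar computation $\sigma(z)(1-\sigma(z)) \ge 0$ in the sigmoid case, and conclude by congruence together with $\sum_{a,b\in\Omega} I_{ab}(\delta) = \delta^T H \delta$. The variance identity $w^T(\text{diag}(p)-pp^T)w = \text{Var}_{i\sim p}(w_i)$ you mention as an alternative is a nice shortcut not used in the paper, but it does not change the overall argument.
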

\begin{proof}

\textbf{Multi-category classification.}
We first prove that the Hessian matrix is positive semi-definite in the scenario of multi-category classification with a softmax function.
Based on Assumption~\ref{assumption:activation}, the network can be approximated as $z \approx \tilde{W}_x^T x + b$, the gradient $g_z$ of the network output $z$~\textit{w.r.t.} the input sample $x$ can be simplified as
 \begin{equation}
 \label{eq:g_z_re}
        \begin{split}
        g_z &\triangleq \nabla_x z \\
        &\approx \tilde{W}_x
        = (W_d^T \Sigma_{d-1} W_{d-1}^T \dots \Sigma_{1} W^T_{1})^T.
   \end{split}
    \end{equation}

Then, based on Lemma~\ref{lemma:hessian-closed-form}, the Hessian matrix $H=\nabla_x^2 \textit{Loss}(x)$ can be written as
\begin{equation}
    \begin{split}
    \label{eq:hessian_re_1}
        H &= \nabla_x^2 \textit{Loss}(x) \\
        &= g_z \nabla_z^2 \textit{Loss}(x) g_z^T
        \quad // \quad \text{According to Eq.~\eqref{eq:hessain_re}}
         \\
        &= \tilde{W}_x \nabla_z^2 \textit{Loss}(x) \tilde{W}_x^T.
        \quad // \quad \text{According to Eq.~\eqref{eq:g_z_re}}
    \end{split}
    \end{equation}

To further simplify Eq.~\eqref{eq:hessian_re_1}, let us focus on each element $(\nabla_z^2 \textit{Loss}(x))_{ij}$ of matrix $\nabla_z^2 \textit{Loss}(x)$ in Eq.~\eqref{eq:hessian_re_1}.
\begin{equation} \label{eq:loss-to-z}
        \begin{split}
            (\nabla_z^2 \textit{Loss}(x))_{ij} &= \frac{\partial^2 \textit{Loss}(x)}{ \partial z_i \partial z_j} \\
            &=  \frac{\partial (\frac{\partial\textit{Loss}(x) }{\partial z_i})}{\partial z_j} \\
            &= \frac{\partial (\sum_{k=1}^c \frac{\partial\textit{Loss}(x) }{\partial p_k}\frac{\partial p_k}{\partial z_i})}{\partial z_j} \\
            &= \frac{\partial ( \frac{\partial\textit{Loss}(x) }{\partial p_y}\frac{\partial p_y}{\partial z_i})}{\partial z_j}
             \quad// \quad  \forall k\ne y, \frac{\partial\textit{Loss}(x) }{\partial p_k}=0 \\
            &= \frac{\partial ( -\frac{1}{p_y}\frac{\partial p_y}{\partial z_i})}{\partial z_j}.
        \end{split}
    \end{equation}

Here, the term $\frac{\partial p_y}{\partial z_i}$ in Eq.~\eqref{eq:loss-to-z} can be further simplified as follows.
Specifically, when the category $i$ equals to the ground-truth label $y$,~\textit{i.e.,} $i=y$, we have
\begin{equation} \label{eq:dp-dz-y}
        \begin{split}
            \frac{\partial p_y}{\partial z_{y}}
            &=\frac{\partial \bigg(\frac{\exp(z_y)}{\sum_{k=1}^c\exp{(z_{k})}}\bigg)} {\partial z_{y}}
            \\
            &= \frac{\exp(z_y)(\sum_{k=1}^c \exp(z_k)) -\exp(z_y)\exp(z_y) }{(\sum_{k=1}^c \exp(z_k))^2} \\
            &= \frac{\exp(z_y)}{\sum_{k=1}^c \exp(z_k)} \left(1 - \frac{\exp(z_y)}{\sum_{k=1}^c \exp(z_k)} \right) \\
            & =p_y (1- p_y).
        \end{split}
    \end{equation}

For the other case, when the category $i$ does not equal to the ground-truth label $y$,~\textit{i.e.,} $i\ne y$, we have
 \begin{equation} \label{eq:dp-dz-i}
        \begin{split}
            \frac{\partial p_y}{\partial z_{i}}
            &=\frac{\partial \bigg(\frac{\exp(z_y)}{\sum_{k=1}^c\exp{(z_{k})}}\bigg)} {\partial z_{i}}
            \\
            &= \frac{-\exp(z_y)\exp(z_i) }{(\sum_{k=1}^c \exp(z_k))^2} \\
            &= -\frac{\exp(z_y)}{\sum_{k=1}^c \exp(z_k)}\frac{\exp(z_i)}{\sum_{k=1}^c \exp(z_k)} \\
            & = -p_y p_i.
        \end{split}
    \end{equation}

Based on Eq.~(\ref{eq:dp-dz-y}) and Eq.~(\ref{eq:dp-dz-i}), each element $(\nabla_z^2 \textit{Loss}(x))_{ij}$ of matrix $\nabla_z^2 \textit{Loss}(x)$ in Eq.~\eqref{eq:loss-to-z} can be represented as follows.
        \begin{align}
        \label{eq:loss_z_re}
            (\nabla_z^2 \textit{Loss}(x))_{ij} &=\frac{\partial^2 \textit{Loss}(x)}{ \partial z_i \partial z_j} \\
                &= \frac{\partial ( -\frac{1}{p_y}\frac{\partial p_y}{\partial z_i})}{\partial z_j}
                \quad // \quad \text{According to Eq.~\eqref{eq:loss-to-z}.}
               \nonumber \\
                &=\begin{cases}
                    \frac{\partial (p_y-1)}{\partial z_j}, &\text{if $i=y$},
                     \quad // \quad \text{According to Eq.~\eqref{eq:dp-dz-y}.}
                     \\
                    \frac{\partial p_i}{\partial z_j}, &\text{if $i\ne y $.}
                    \quad // \quad \text{According to Eq.~\eqref{eq:dp-dz-i}.}
                \end{cases} \nonumber \\
                &=
                \begin{cases}
                    p_y(1-p_y), &\text{if $i=y, j=i$}, \\
                    -p_y p_j, &\text{if $i=y, j\ne i$}, \\
                    p_i(1-p_i), &\text{if $i\ne y, j=i $}, \\
                    -p_i p_j, &\text{if $i\ne y, j\ne i $}. \\
                \end{cases} \nonumber \\
                &=
                \begin{cases}
                    p_i(1-p_i), &\text{if $i=j $}, \\
                    -p_i p_j, &\text{if $i \ne j $}. \\
                \end{cases} 
        \end{align}

In order to prove that the Hessian matrix $H = \nabla_x^2 \textit{Loss}(x) = \tilde{W}_x \nabla_z^2 \textit{Loss}(x) \tilde{W}_x^T$ is positive semi-definite, we first prove that matrix $\nabla_z^2 \textit{Loss}(x)$ is positive semi-definite.
Because $\nabla_z^2 \textit{Loss}(x)$ is a real symmetric matrix, we can prove the positive semi-definiteness of matrix $\nabla_z^2 \textit{Loss}(x)$, by proving that all eigenvalues of matrix $\nabla_z^2 \textit{Loss}(x)$  are non-negative.
To this end, we use Lemma~\ref{lemma:circle-theorem} (Gershgorin circle theorem) to estimate the bounds of such eigenvalues.

Lemma~\ref{lemma:circle-theorem} indicates that each eigenvalue of $\nabla_z^2 \textit{Loss}(x)$ lies within at least one of the Gershgorin discs $D((\nabla_z^2 \textit{Loss}(x))_{ii}, R_i)$, where $R_i = \sum_{j\ne i} |(\nabla_z^2 \textit{Loss}(x))_{ij}|= \sum_{j\ne i} p_i p_j = p_i (\sum_{j\ne i} p_j) = p_i (1-p_i)$, based on Eq.~\eqref{eq:loss_z_re}.
Thus, for each eigenvalue $\lambda_i'$ of $\nabla_z^2 \textit{Loss}(x)$, we have $0\le \lambda_i' \le  \max_j 2p_j (1-p_j)$.
That is, each eigenvalue $\lambda_i'$ of $\nabla_z^2 \textit{Loss}(x)$ is non-negative, therefore matrix $\nabla_z^2 \textit{Loss}(x)$ is positive semi-definite.

In this way, to prove that the Hessian matrix $H=\nabla_x^2 \textit{Loss}(x)$ is positive semi-definite, for any vector $a\in \mathbb{R}^n$, we have
\begin{equation}
        \begin{split}
            a^T H a &= a^T \tilde{W}_x \nabla_z^2 \textit{Loss}(x) \tilde{W}_x^T a
            \quad // \quad \text{According to Eq.~\eqref{eq:hessian_re_1}}
            \\
            & = ( \tilde{W}_x^T a)^T \nabla_z^2 \textit{Loss}(x) ( \tilde{W}_x^T a) \ge 0.
             \quad// \quad \text{Because $\nabla_z^2 \textit{Loss}(x)$ is positive semi-definite.}
        \end{split}
    \end{equation}

Thus, for multi-category classification, if the loss function is formulated as the cross-entropy with a softmax function, then we have proven that the Hessian matrix $H=\nabla_x^2 \textit{Loss}(x)$ is positive semi-definite.

    \textbf{Binary classification.}
    We then prove that the Hessian matrix is positive semi-definite in binary classification scenario with a sigmoid function.
    Based on Assumption~\ref{assumption:activation}, for binary classification, the forward propagation of the ReLU network can be approximated as follows.
    \begin{equation}
        z \approx \tilde{w}_x^T x + b.
    \end{equation}

    Note that in binary classification, $\tilde{w}\in\mathbb{R}^n$ is a vector and $z\in\mathbb{R}$ is a scalar.
    In this way, the gradient $g_z$ of the network output $z$~\textit{w.r.t.} the input sample $x$ can be simplified as
     \begin{equation}
     \label{eq:g_z_re-binary}
            \begin{split}
            g_z &\triangleq \nabla_x z \\
            &\approx \tilde{w}_x
            = (W_d^T \Sigma_{d-1} W_{d-1}^T \dots \Sigma_{1} W^T_{1})^T.
       \end{split}
        \end{equation}

    According to Lemma~\ref{lemma:hessian-closed-form}, the Hessian matrix $H=\nabla_x^2 \textit{Loss}(x)$ is given as follows.
    \begin{equation}
    \begin{split}
        H = \nabla_x^2 \textit{Loss}(x) &= g_z \nabla_z^2 \textit{Loss}(x) g_z^T
        \quad // \quad \text{According to Eq.~\eqref{eq:hessain_re}}
        \\
        &= \tilde{w}_x \nabla_z^2 \textit{Loss}(x) \tilde{w}_x^T.
        \quad // \quad \text{According to Eq.~\eqref{eq:g_z_re-binary}}
    \end{split}
    \end{equation}

    Note that in binary classification, the loss function is formulated as the cross-entropy loss upon the sigmoid function.
    Thus, according to Eq.~(\ref{eq:sigmoid-ce}), $\nabla_z^2 \textit{Loss}(x)$ can be derived as follows.
    \begin{equation}
        \begin{split}
            \nabla_z^2 \textit{Loss}(x) &= \frac{\partial^2 \textit{Loss}(x)}{\partial z^2} \\
            & =\frac{\partial (\frac{\partial \textit{Loss}(x)}{\partial z})}{\partial z} \\
            & =\frac{\partial (-y + \frac{\exp(z)}{1 + \exp(z)})}{\partial z} \\
            &=\frac{\exp(z)}{(1+\exp(z))^2} > 0.
        \end{split}
    \end{equation}

    Then, given $H=\nabla_x^2 \textit{Loss}(x)$, for any vector $a\in \mathbb{R}^n$, we have
    \begin{equation}
        \begin{split}
            a^T H a &= a^T \tilde{w}_x \nabla_z^2 \textit{Loss}(x) \tilde{w}_x^T a \\
            & =  \frac{\exp(z)}{(1+\exp(z))^2} ( \tilde{w}_x^T a)^2 \ge 0.
        \end{split}
    \end{equation}

    Thus, for binary classification, if the loss function is formulated as the cross-entropy with the sigmoid function, then we have proven the Hessian matrix $H=\nabla_x^2 \textit{Loss}(x)$ is positive semi-definite.

        \textbf{The sum of pairwise interactions.}
        According to Lemma~\ref{lemmaappendix:interaction-closed-form}, given the adversarial perturbation $\delta$, the sum of pairwise interactions inside $\delta$ is given as follows.
        \begin{equation}
            \sum_{a,b\in \Omega} I_{ab}(\delta) = \delta^T H \delta.
        \end{equation}

        As proven above, $H=\nabla_x^2 \textit{Loss}(x)$ is positive semi-definite.
        Thus, $\sum_{a,b\in \Omega} I_{ab}(\delta) = \delta^T H \delta \ge 0$.

\end{proof}

\section{Multi-step attack generating perturbations with larger interactions than the single-step attack}
\label{append:single vs multi}

In this section, we prove Theorem~\ref{theorem: multi-step have large interactions} in Section~\ref{subsec:multi-step vs single-step}, which demonstrates that the sum of pairwise interactions for the multi-step attack is larger than that for the single-step attack.

Before the proof of Theorem~\ref{theorem: multi-step have large interactions}, we first revisit the multi-step attack and the single-step attack.

\subsection{Multi-step attack}

\textbf{Perturbation generated by the multi-step attack.}
We revisit the formulation of the adversarial perturbation $\delta^{(\textrm{multi})}$ of the multi-step attack, which is defined in Appendix~\ref{append:delta-closed-form}.
    \begin{equation} \label{eq:delta-multi-inf}
        \begin{split}
            &\delta^{(\textrm{multi})} \triangleq  \delta^{(\infty)} = \sum_{i=1}^n D_{ii} \gamma_i v_i, 
            \quad // \quad \text{According to Corollary~\ref{corollaryappendix:delta-inf}}
            \\
            D_{ii} &= \begin{cases}
                \frac{\exp(\beta\lambda_i)-1}{\lambda_i}, &\text{ if $\lambda_i\ne0$} \\
                \lim_{\lambda_i\rightarrow 0} \frac{\exp(\beta\lambda_i)-1}{\lambda_i}  =  \beta, & \text{ if $\lambda_i=0$}
            \end{cases}
        \end{split}
    \end{equation}
    where $\beta=\alpha m$ denotes the adversarial strength.
    $\lambda_i$ denotes the $i$-th eigenvalue of the Hessian matrix $H=\nabla_x^2 \textit{Loss}(x)$, and $v_i$ indicates the eigenvector corresponding to the $i$-th eigenvalue $\lambda_i$.
       $\gamma_i = g^T v_i$ represents the projection of the gradient $g=\nabla_x \textit{Loss}(x)$ on the eigenvector $v_i$.

       In fact, this adversarial perturbation $\delta^{(\textrm{multi})}$ is generated byout normalization effects.
       However, as discussed in  Appendix~\ref{append:delta-closed-form}, 
       {the perturbation $\delta^{(\textrm{multi})}$ well approximates the perturbation $\delta_{\textrm{general}}$ (defined in Eq.~\eqref{eq:multi-step-attack-ori}), because the step size $\alpha$ is sufficiently small and the step number $m$ is sufficiently large.}

       \textbf{Pairwise interaction inside $\delta^{(\text{multi})}$.}
       Based on the multi-step attack $\delta^{(\text{multi})}$, the interaction between each pair of perturbation units $(a,b)$ can be computed as 
       \begin{equation}
       I_{ab}(\delta^{(\textrm{multi})}) = \delta^{(\textrm{multi})}_a H_{ab} \delta^{(\textrm{multi})}_b.
       \quad // \quad \text{According to Eq.~\eqref{eq:shapley-interaction-index_2} in Appendix~\ref{append:interaction}}.
       \end{equation}
       Furthermore, we sum the interactions among all pairs of perturbation units $(a, b)$ as follows.
       \begin{equation} \label{eq:interaction-multi}
            \sum_{a, b\in\Omega}[{I_{a b}(\delta^{(\text{multi})})}] = \sum_{i=1}^n \lambda_{i}(D_{ii}\gamma_{i})^2.
            \quad // \quad \text{According to Eq.~\eqref{eq:delta-inf-interaction}}.
        \end{equation}

    \subsection{Single-step attack}
    \textbf{Perturbation generated by the single-step attack.}
     Given an input sample $x\in\mathbb{R}^n$, the corresponding adversarial perturbation of the single-step attack generated by the gradient ascent is defined as follows.
    \begin{equation} \label{eq:delta-single-eta}
        \begin{split}
            \delta^{(\textrm{single})} &\triangleq \eta \nabla_x \textit{Loss}(x)  \\
            &=  \sum_{i=1}^n \eta \gamma_i v_i.
            {\quad // \quad \text{According to Eq.~\eqref{eq:g-decompose}}}
        \end{split}
    \end{equation}
    where $\eta$ denotes the step size for the single-step attack.
    
    \textbf{Pairwise interaction inside $\delta^{(\text{single})}$.}
    Based on the perturbation of the single-step attack $\delta^{(\text{single})}$, the interaction between each pair of perturbation units $(a,b)$ can be computed as 
    \begin{equation}
    I_{ab}(\delta^{(\textrm{single})}) = \delta^{(\textrm{single})}_a H_{ab} \delta^{(\textrm{single})}_b.
     \quad // \quad \text{According to Eq.~\eqref{eq:shapley-interaction-index_2}}.
    \end{equation}
    In this scenario,  the sum of interactions among all pairs of perturbation units $(a, b)$ is given as follows.
 
    \begin{align} \label{eq:interaction-single}
        &\sum_{a, b\in\Omega}[{I_{a b}(\delta^{(\text{single})})}] \nonumber \\
        &= (\delta^{(\text{single})})^T H \delta^{(\text{single})} \nonumber\\
        &= ( \sum_{i=1}^n\eta \gamma_{i}v_{i})^T (\sum_{j=1}^n \lambda_{j}v_{j}v_{j}^T) ( \sum_{k=1}^n \eta\gamma_{k}v_{k})
            \quad // \quad \text{According to Eq.~\eqref{eq:interaction-single-def}}
            \nonumber
            \\
        &=\left[\sum_{i=1}^n \underbrace{ (\eta \gamma_{i} v_{i}^T)( \lambda_{i} v_{i} v_{i}^T) }_{\text{$ v_{i}^T v_{i}=1$}}+ \sum_{1\le i, j \le n \atop i\ne j} \underbrace{(\eta \gamma_{i} v_{i}^T)( \lambda_{j} v_{j} v_{j}^T)}_{\text{if $i\ne j, v_{i}^T v_{j}=0$}} \right] (\sum_{k=1}^n \eta \gamma_{k} v_{k}) \nonumber \\
        &=(\sum_{i=1}^n \lambda_{i} \eta \gamma_{i} v_{i}^T ) (\sum_{k=1}^n\eta \gamma_{k} v_{k}) \nonumber\\
        &=\sum_{i=1}^n \underbrace{ (\lambda_{i} \eta \gamma_{i} v_{i}^T)(\eta \gamma_{i} v_{i})}_{\text{$ v_{i}^T v_{i}=1$}}+ \sum_{1\le i, k\le n \atop i\ne k} \underbrace{ (\lambda_{i} \eta\gamma_{i} v_{i}^T)( \eta \gamma_{k} v_{k})}_{\text{if $i\ne k, v_{i}^T v_{k}=0$}}\nonumber \\
        &=  \sum_{i=1}^n \lambda_{i} (\eta \gamma_i)^2.
    \end{align}

    Thus, Theorem~\ref{theorem: multi-step have large interactions} focuses on comparisons between
    interactions $\sum_{a, b\in\Omega}[{I_{a b}(\delta^{(\text{multi})})}]$ for the multi-step attack and interactions $\sum_{a, b\in\Omega}[{I_{a b}(\delta^{(\text{single})})}]$ for the single-step attack.

    \textbf{Fair comparisons between the multi-step attack and the single-step attack.}
    In order to conduct fair comparisons between  $ \sum_{a, b\in\Omega}[{I_{a b}(\delta^{(\text{multi})})}]$ based on perturbations $\delta^{(\text{multi})}$
     and $ \sum_{a, b\in\Omega}[{I_{a b}(\delta^{(\text{single})})}]$ based on perturbations $\delta^{(\text{single})}$, the magnitudes of such perturbations should be controlled at the same level.
    Specifically, we consider two perspectives to constrain the magnitude of the perturbation $\delta^{(\text{single})}$.
    
    \textbf{Perspective 1.} 
    We set the adversarial strength of the single-step attack to be the same as that of the multi-step attack, \emph{i.e.} $\eta=\beta$.

   \textbf{Perspective 2.} 
     We set the $L_2$ norm of the perturbation $\delta^{(\text{single})}$ to be the same as that of the perturbation $\delta^{(\text{multi})}$,~\textit{i.e.,} $\|\delta^{(\text{single})}\|_2 =\|\delta^{(\text{multi})}\|_2$, \emph{i.e.} $\eta = \frac{\sqrt{\sum_{i=1}^n (D_{ii} \gamma_i)^2}}{\sqrt{\sum_{i=1}^n ( \gamma_i)^2}}$.

     \subsection{Proof of Theorem~\ref{theoremappendix: multi-step have large interactions}}

    \begin{theoremappendix}
    \label{theoremappendix: multi-step have large interactions}
        {If the loss function for attacking $\textit{Loss}(x)$  is formulated as the cross-entropy loss in Eq.~\eqref{eq:softmax-ce}},
        we use the infinite-step adversarial attack with an infinitesimal step size to simplify  the multi-step attack.
        The sum of pairwise interactions~\textit{w.r.t.} the perturbation $\delta^{(\text{multi})}$ of the multi-step attack is larger than that of the perturbation $\delta^{(\text{single})}$ of the single-step attack,
       $\sum_{a, b\in\Omega}[{I_{a b}(\delta^{(\text{multi})})}]\ge\sum_{a, b\in\Omega}[{I_{a b}(\delta^{(\text{single})})}]$.
    \end{theoremappendix}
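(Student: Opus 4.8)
The plan is to reduce the theorem to a clean algebraic comparison using the closed-form expressions for the sum of interactions already derived in Lemma~\ref{lemma:interaction} and its appendix counterpart. From Eq.~\eqref{eq:interaction-multi} we have $\sum_{a,b\in\Omega}[I_{ab}(\delta^{(\text{multi})})]=\sum_{i=1}^n \lambda_i (D_{ii}\gamma_i)^2$, and from Eq.~\eqref{eq:interaction-single} we have $\sum_{a,b\in\Omega}[I_{ab}(\delta^{(\text{single})})]=\sum_{i=1}^n \lambda_i (\eta\gamma_i)^2$. The whole statement therefore collapses to establishing the termwise-weighted inequality $\sum_{i=1}^n \lambda_i (D_{ii})^2 \gamma_i^2 \ge \sum_{i=1}^n \lambda_i \eta^2 \gamma_i^2$ under each of the two fairness constraints on $\eta$. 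The crucial structural fact I would invoke first is Corollary~\ref{corollary:semi-definite}: since the loss is cross-entropy on a softmax, the Hessian $H$ is positive semi-definite, so every eigenvalue satisfies $\lambda_i\ge 0$. This is what makes each summand nonnegative and lets me argue coefficient by coefficient.

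First I would handle Perspective~1, where $\eta=\beta$. Here the comparison is between $D_{ii}=\frac{\exp(\beta\lambda_i)-1}{\lambda_i}$ and $\beta$ for each eigenvalue. The key elementary inequality is that for $\lambda_i\ge 0$ and $\beta>0$ we have $\exp(\beta\lambda_i)-1\ge \beta\lambda_i$, which is just the standard bound $e^x\ge 1+x$. Dividing by $\lambda_i>0$ gives $D_{ii}\ge\beta$, hence $D_{ii}^2\ge\beta^2=\eta^2$; for the degenerate case $\lambda_i=0$ we have $D_{ii}=\beta=\eta$ exactly, so the two coefficients coincide. Multiplying the inequality $D_{ii}^2\ge\eta^2$ by the nonnegative weight $\lambda_i\gamma_i^2$ and summing over $i$ yields the desired termwise domination, and the sum inequality follows immediately. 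This case is essentially routine once the $e^x\ge 1+x$ observation is in place.

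The main obstacle will be Perspective~2, where the norm constraint forces $\eta=\sqrt{\sum_j (D_{jj}\gamma_j)^2}\big/\sqrt{\sum_j \gamma_j^2}$, so $\eta$ is a weighted root-mean-square of the $D_{jj}$ rather than a fixed constant. Now the termwise argument of Perspective~1 no longer applies directly, since for small eigenvalues one may have $D_{ii}<\eta$. Instead I would treat the two sides as two different quadratic forms in the gradient coefficients and compare them globally. The natural approach is to note that under this constraint $\|\delta^{(\text{multi})}\|_2=\|\delta^{(\text{single})}\|_2$, so both perturbations lie on the same sphere, and the single-step perturbation is proportional to $g=\sum_i\gamma_i v_i$ while the multi-step perturbation reweights the components by the increasing factors $D_{ii}$. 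I expect the cleanest route is a rearrangement/Chebyshev-type correlation inequality: because $D_{ii}$ is a monotonically increasing function of $\lambda_i$ (larger eigenvalues get amplified more), the weights $\lambda_i D_{ii}^2$ are positively correlated with the "mass" pattern $(D_{ii}\gamma_i)^2$ that defines the common squared norm, so the multi-step quadratic form concentrates more weight on the high-$\lambda_i$ directions. Formally I would write both sums as expectations against the probability weights $w_i=\gamma_i^2/\sum_j\gamma_j^2$ and show $\sum_i w_i\lambda_i D_{ii}^2 \ge \big(\sum_i w_i D_{ii}^2\big)\cdot\frac{\sum_i w_i\lambda_i D_{ii}^2}{\sum_i w_i D_{ii}^2}$ is controlled by the covariance between $\lambda_i$ and $D_{ii}^2$ being nonnegative, which holds precisely because both are nondecreasing in $\lambda_i$. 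Verifying the monotonicity of $D_{ii}$ in $\lambda_i$ and assembling this covariance inequality carefully is the step I anticipate requiring the most attention; everything else is bookkeeping with the positive semi-definiteness of $H$.
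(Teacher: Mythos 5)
Your proposal is correct, and for Perspective~1 ($\eta=\beta$) it is essentially the paper's argument: the paper also reduces to $D_{ii}\ge\beta$ termwise, proving it via the Taylor expansion $D_{ii}=\sum_{t\ge 1}\beta^t\lambda_i^{t-1}/t!\ge\beta$, which is the same content as your $e^x\ge 1+x$. Where you genuinely diverge is Perspective~2 (equal $\ell_2$ norms). The paper sorts the eigenvalues decreasingly and applies Chebyshev's sum inequality to the sequences $a_i=\lambda_i$ and $b_i=\gamma_i^2(D_{ii}^2-\eta^2)$, which requires $b_i$ to be sorted in the same order; since the $\gamma_i^2$ are arbitrary this need not hold, and the paper explicitly concedes it ``cannot fully prove'' this case and only argues it ``roughly.'' Your weighted formulation repairs this: with $w_i=\gamma_i^2/\sum_j\gamma_j^2$ one has $\eta^2=\mathbb{E}_w[D^2]$, and the claim reduces exactly to $\mathbb{E}_w[\lambda D^2]\ge\mathbb{E}_w[\lambda]\,\mathbb{E}_w[D^2]$, i.e.\ $\mathrm{Cov}_w(\lambda,D^2)\ge 0$, which follows from the one-dimensional Chebyshev correlation inequality because $D(\lambda)=(e^{\beta\lambda}-1)/\lambda=\sum_{t\ge1}\beta^t\lambda^{t-1}/t!$ is nonnegative and nondecreasing on $[0,\infty)$ (with $D(0)=\beta$), hence so is $D^2$. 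This holds for arbitrary $\gamma_i$, so your route turns the paper's heuristic into a complete proof. One caution: the displayed inequality in your last paragraph, $\sum_i w_i\lambda_i D_{ii}^2 \ge \bigl(\sum_i w_i D_{ii}^2\bigr)\cdot\frac{\sum_i w_i\lambda_i D_{ii}^2}{\sum_i w_i D_{ii}^2}$, is a trivial identity as written; replace it with the covariance inequality $\sum_i w_i\lambda_i D_{ii}^2\ge\bigl(\sum_i w_i\lambda_i\bigr)\bigl(\sum_i w_i D_{ii}^2\bigr)$, which is the statement your monotonicity argument actually establishes and the one the theorem needs.
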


    \begin{proof}
     We prove Theorem~\ref{theoremappendix: multi-step have large interactions} based on perturbations generated from the above two perspectives.

    \textbf{Perspective 1.}
    We generate the perturbation of the single-step attack by setting its adversarial strength to be the same as that of the multi-step attack,~\textit{i.e.,} $\eta=\beta$.
     To this end, the perturbation $\delta^{(\textrm{single})}$ of the single-step attack can be represented as
        \begin{equation} \label{eq:interaction-single-def}
            \begin{split}
                \delta^{(\textrm{single})} = \beta \nabla_x \textit{Loss}(x) =  \sum_{i=1}^n \beta \gamma_i v_i.
            \end{split}
        \end{equation}

        In this way, according to Eq.~\eqref{eq:interaction-single}, the sum of pairwise interactions inside $\delta^{(\textrm{single})} $ is given as follows.
        \begin{equation}\label{eq:interaction-single-beta}
            \sum_{a, b\in\Omega}[{I_{a b}(\delta^{(\text{single})})}] =\sum_{i=1}^n \lambda_{i} (\beta \gamma_i)^2.
        \end{equation}

        From this perspective, the difference between interactions $\sum_{a, b\in\Omega}[{I_{a b}(\delta^{(\text{multi})})}]$ for the multi-step attack and interactions $\sum_{a, b\in\Omega}[{I_{a b}(\delta^{(\text{single})})}]$ can be written as
        \begin{equation}
        \label{eq:interaction-diff_1}
            \begin{split}
                &\sum_{a, b\in\Omega}[{I_{a b}(\delta^{(\text{multi})})}]-\sum_{a, b\in\Omega}[{I_{a b}(\delta^{(\text{single})})}]
                \\
                &= \sum_{i=1}^n \lambda_{i} \left[(D_{ii}^2 - \beta^2) \gamma_i^2 \right]
                   \quad // \quad \text{According to Eq.~(\ref{eq:interaction-multi}) and Eq.~(\ref{eq:interaction-single-beta})}
                  \\
                &= \sum_{i=1}^n \lambda_{i} \left[ (D_{ii} + \beta)(D_{ii} - \beta) \gamma_i^2 \right].
            \end{split}
        \end{equation}

        Here, this difference is proven to be positive, because each term in Eq.~\eqref{eq:interaction-diff_1} is positive.
        The proof is as follows.

        \textbullet{ $\lambda_i \ge 0$.} According to Corollary~\ref{corollaryappendix:semi-definite}, the Hessian matrix $H=\nabla_x^2 \textit{Loss}(x)$ is positive semi-definite.
        Thus, all eigenvalues of the Hessian matrix $H$ are non-negative, \emph{i.e.} $\forall i, \lambda_i\ge 0$.

         \textbullet{ $ \gamma_i^2 \ge 0$.}

        \textbullet{ $ D_{ii} + \beta \ge 0$.} According to Corollary~\ref{corollaryappendix:delta-inf}, the value of $D_{ii}$ is given as follows.
        \begin{equation}
        \label{eq:interaction-diff_2_before}
            D_{ii} = \begin{cases}
                \beta, &\text{if $\lambda_i=0$;} \\
                \frac{\exp(\beta \lambda_i) - 1}{\lambda_i},  &\text{if $\lambda_i > 0$.}
            \end{cases}
        \end{equation}

        Note that $\beta=\alpha m > 0$, and $\lambda_i > 0$.
        In order to prove that $D_{ii}$ in Eq.~\eqref{eq:interaction-diff_2_before} is positive, we use the Taylor series to decompose the term $D_{ii}=\frac{\exp(\beta\lambda_i)-1}{\lambda_i}$ as follows.
        \begin{equation}
            \begin{split}
            \label{eq:interaction-diff_2}
                D_{ii}&=\frac{\exp(\beta\lambda_i)-1}{\lambda_i} \\
                & = \frac{\sum_{t=0}^{+\infty}\frac{(\beta \lambda_i)^t}{t!} -1}{\lambda_i} \\
                & = \sum_{t=1}^{+\infty}\frac{\beta^t \lambda_i^{t-1}}{t!} \ge 0 
                \quad // \quad \beta>0, \lambda_i \ge 0.
            \end{split}
        \end{equation}

        Thus, $D_{ii} + \beta \ge 0$, because both $\beta >0$ and $D_{ii} \ge 0$.

        \textbullet{ $ D_{ii} - \beta \ge 0$.} According to Eq.~\eqref{eq:interaction-diff_2},
        \begin{equation}
            D_{ii} - \beta = \begin{cases}
                0, &\text{if $\lambda_i=0$;} \\
                \sum_{t=1}^{+\infty}\frac{\beta^t \lambda_i^{t-1}}{t!} - \beta =\sum_{t=2}^{+\infty}\frac{\beta^t \lambda_i^{t-1}}{t!} \ge 0,  &\text{if $\lambda_i > 0$.}
            \end{cases}
        \end{equation}

        In summary, 
        \begin{equation}
            \begin{split}
                &\sum_{a, b\in\Omega}[{I_{a b}(\delta^{(\text{multi})})}]-\sum_{a, b\in\Omega}[{I_{a b}(\delta^{(\text{single})})}]
                \\
                &= \sum_{i=1}^n \underbrace{\lambda_{i}}_{\ge 0} \left[\underbrace{(D_{ii} - \beta)}_{\ge 0} \underbrace{(D_{ii} + \beta)}_{\ge 0} \underbrace{\gamma_i^2 }_{\ge 0}\right] \ge 0.
             \end{split}
        \end{equation}
        
        Thus, from the first perspective of constraining the strength of the single-step attack, we have proven that the interactions $\sum_{a, b\in\Omega}[{I_{a b}(\delta^{(\text{multi})})}]$ for the multi-step attack is larger than interactions $\sum_{a, b\in\Omega}[{I_{a b}(\delta^{(\text{single})})}]$ for the single step.

\textbf{Perspective 2.}
From the second perspective of constraining the magnitude of perturbation, we generate the perturbation $\delta^{(\text{single})}$ of the single-step attack by setting its $L_2$ norm to be the same as that of the multi-step attack,~\textit{i.e.,} $\|\delta^{(\text{single})}\|_2 =\|\delta^{(\text{multi})}\|_2$.
To this end, the step size $\eta$ to generate the perturbation of the single-step attack can be calculated as follows.
         \begin{equation} \label{eq:same-norm}
            \begin{split}
                &\|\delta^{(\text{single})}\|_2 =\|\delta^{(\text{multi})}\|_2
                \\
                \Rightarrow&\quad \sqrt{\sum_{i=1}^n (\eta \gamma_i)^2} = \sqrt{\sum_{i=1}^n (D_{ii} \gamma_i)^2}
                 \quad // \quad \text{According to Eq.~(\ref{eq:delta-multi-inf}) and Eq.~(\ref{eq:delta-single-eta}).}
                \\
                \Rightarrow&\quad \eta = \frac{\sqrt{\sum_{i=1}^n (D_{ii} \gamma_i)^2}}{\sqrt{\sum_{i=1}^n ( \gamma_i)^2}}.
            \end{split}
        \end{equation}

         In this perspective, based on Eq.~\eqref{eq:interaction-single}, the sum of pairwise interactions of the perturbation $\delta^{(\text{single})}$ can be represented as
          \begin{equation}
          \label{eq:inter-single-1}
            \begin{split}
                &\sum_{a, b\in\Omega}[{I_{a b}(\delta^{(\text{single})})}] \\
                &= (\delta^{(\text{single})})^T H \delta^{(\text{single})}\\
                &= \sum_{i=1}^n \lambda_i (\eta \gamma_i)^2.
            \end{split}
        \end{equation}

        Thus, based on Eq.~\eqref{eq:interaction-multi} and Eq.~\eqref{eq:inter-single-1}, the difference  
        between interactions $\sum_{a, b\in\Omega}[{I_{a b}(\delta^{(\text{multi})})}]$ for the multi-step attack and interactions $\sum_{a, b\in\Omega}[{I_{a b}(\delta^{(\text{single})})}] $ for the single-step attack can be represented as
        \begin{equation} \label{eq:multi-single-interaction-diff}
            \begin{split}
                &\sum_{a, b\in\Omega}[{I_{a b}(\delta^{(\text{multi})})}]-\sum_{a, b\in\Omega}[{I_{a b}(\delta^{(\text{single})})}] \\
                &= \sum_{i=1}^n \lambda_i \left[(D_{ii}\gamma_i)^2 - (\eta \gamma_i)^2 \right]. \\
            \end{split}
        \end{equation}

        Without loss of generality, let the eigenvalue $\lambda_i$ of the Hessian matrix $H=\nabla_x^2 \textit{Loss}(x)$ monotonically decrease with the index $i$,~\textit{i.e.,}
        $ \lambda_1 \ge  \lambda_2 \ge \dots \ge \lambda_n \ge 0$.
        The proof of $\forall i, \lambda_i\ge 0$ is demonstrated in Appendix~\ref{appendix:positive-semi-definite}.
        From the second perspective, we admit that we cannot fully prove that the sum of pairwise interactions~\textit{w.r.t.} the perturbation $\delta^{(\text{multi})}$ being greater than that of the perturbation $\delta^{(\text{single})}$,~\textit{i.e.,} Eq.~\eqref{eq:multi-single-interaction-diff} is positive.
        However, we can roughly consider that $\left[(D_{ii}\gamma_i)^2 - (\eta \gamma_i)^2 \right]$ also monotonically decreases with the index $i$.
        This is because $(D_{ii})^2 =(\frac{\exp(\beta\lambda_i)-1}{\lambda_i})^{2}$ rapidly decreases with the $\lambda_i$ in an exponential manner.
     In other words, we have
     \begin{equation}
        D_{11}^2\gamma_1^2 - \eta^2\gamma_1^2 \ge  D_{22}^2\gamma_2^2 - \eta^2\gamma_2^2 \ge \dots \ge  D_{nn}^2\gamma_n^2 - \eta^2\gamma_n^2
     \end{equation}

       Based on this, we can use Chebyshev's sum inequality to prove that  the sum of pairwise interactions~\textit{w.r.t.} the perturbation $\delta^{(\text{multi})}$ is likely to be greater than that of the perturbation $\delta^{(\text{single})}$.

     \begin{lemma}
     \label{lemma:csinequality}
                (Chebyshev's sum inequality) If $a_1 \ge a_2 \ge \dots \ge a_n$ and $b_1 \ge b_2 \ge \dots \ge b_n$, then $\frac{1}{n}\sum_{k=1}^n a_k b_k \ge \left(\frac{1}{n}\sum_{k=1}^n a_k \right)\left(\frac{1}{n}\sum_{k=1}^n b_k \right) $.
            \end{lemma}

            \begin{align}
                &\sum_{a, b\in\Omega}[{I_{a b}(\delta^{(\text{multi})})}]-\sum_{a, b\in\Omega}[{I_{a b}(\delta^{(\text{single})})}] \nonumber \\
                &= \sum_{k=1}^n \lambda_i \left[(D_{ii}\gamma_i)^2 - (\eta \gamma_i)^2 \right] \nonumber \\
                &\ge n \left(\frac{1}{n}\sum_{i=1}^n \lambda_i\right)  \left[\frac{1}{n} \sum_{i=1}^n \gamma_i^2\left( D_{ii}^2 - \eta^2 \right) \right]
                 \quad // \quad \text{Chebyshev's sum inequality} \nonumber
                 \\
                &= \left(\frac{1}{n}\sum_{i=1}^n \lambda_i\right)  \left[ \|\delta^{(\text{multi})}\|^{2}_2 -\|\delta^{(\text{single})}\|^{2}_2 \right]
                \quad // \quad \text{According to Eq.~(\ref{eq:delta-multi-inf}) and Eq.~(\ref{eq:delta-single-eta})} \nonumber
                 \\
                &=0.
                \quad // \quad \text{According to Eq.~(\ref{eq:same-norm})}.
            \end{align}

       Thus, from the second perspective of constraining the $L_2$ norm of the adversarial perturbation, interactions  $\sum_{a, b\in\Omega}[{I_{a b}(\delta^{(\text{multi})})}]$ for the multi-step attack is very likely to be larger than interactions $\sum_{a, b\in\Omega}[{I_{a b}(\delta^{(\text{single})})}]$ for the single-step attack.

       In summary, Theorem~\ref{theorem: multi-step have large interactions} in Section~\ref{subsec:multi-step vs single-step} is proven from the above two perspectives.
        
    \end{proof}

\section{Proof that more balanced prediction probabilities can result in smaller interactions in the single-step attack} \label{append:balance}

In this section, we aim to prove that given a input sample $x$, if the prediction probabilities of $x$ are more balanced, then adversarial perturbations generated by the single-step attack exhibit smaller interactions.
Before the proof, we introduce a lemma.

\subsection{Proof of Lemma~\ref{lemma:g-H-in-p-form}} \label{append:g-H-in-p-form} 
In this subsection, given an input $x$ and a DNN using ReLU activations, we use $\textit{Loss}(x)$ to denote the classification loss.
 If the classification loss $\textit{Loss}(x)$  is formulated as the cross-entropy loss in Eq.~\eqref{eq:softmax-ce}, then we can derive the closed-form solution to the gradient $\nabla_x \textit{Loss}(x)$ and the Hessian matrix $\nabla_x^2 \textit{Loss}(x)$.

Let the classification have $c$ categories.
Given the network output $z\in\mathbb{R}^c$ and the ground-truth label $y\in \{1, 2, \dots, c \}$, the loss function  is formulated as a cross-entropy loss upon  the softmax function, as follows.
\begin{equation}  \label{eq:softmax-in-balance}
    \begin{split}
        p_i &= \frac{\exp(z_i)}{\sum_{j=1}^c \exp(z_j)} \\
        \textit{Loss}(x) &= -\sum_{i=1}^c p^*_i \log(p_i)
    \end{split}
\end{equation}
where if $i=y, p^*_i =1$; otherwise $p^*_i=0$.

\begin{lemma} \label{lemma:g-H-in-p-form}
Based on Assumption~\ref{assumption:activation}, if the classification loss $\textit{Loss}(x)$  is formulated as the cross-entropy loss in Eq.~\eqref{eq:softmax-ce}, then the output of the ReLU network can be written as 
    $z = \tilde{W}_x^T x + b\in\mathbb{R}^c$,  where $\tilde{W}_x^T =  W_d^T \Sigma_{d-1} W_{d-1}^T \dots \Sigma_{1} W_{1}^T \in \mathbb{R}^{c\times n}$.
    Then, given the input $x$,  the gradient \emph{w.r.t} the input and the Hessian matrix \emph{w.r.t} the input are given as $  g = \nabla_x \textit{Loss}(x)  = \tilde{W}_x (p- Y)$  and  $H = \nabla_x^2 \textit{Loss}(x) =  \tilde{W}_x (\text{diag}(p) - p p^T) \tilde{W}_x^T$, respectively. Here, $p\in [0, 1]^c$ denotes prediction probabilities of the $c$ categories, and $Y\in \{0, 1 \}^c$ is a one-hot vector, where $Y_y = 1$; and $\forall i\ne y, Y_i = 0$.
\end{lemma}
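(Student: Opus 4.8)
The plan is to compute both derivatives by composing the chain rule through the network output $z$, invoking the piecewise-linear approximation of Assumption~\ref{assumption:activation} exactly once. Under that assumption the forward pass collapses to $z = \tilde{W}_x^T x + b$, so the Jacobian $\partial z / \partial x = \tilde{W}_x^T$ is a constant matrix and, crucially, the second derivative of $z$ with respect to $x$ is treated as zero. This is what makes both closed forms clean, and it is the only place the approximation enters.

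First I would record the derivatives of the loss with respect to the \emph{output} $z$. Writing the cross-entropy as $\textit{Loss}(x) = -\log p_y = -z_y + \log \sum_{j=1}^c \exp(z_j)$, a one-line differentiation gives $\partial \textit{Loss}/\partial z_i = p_i - Y_i$, that is $\nabla_z \textit{Loss} = p - Y$, where $Y$ is the one-hot vector at the ground-truth label $y$. For the second-order term I would simply reuse the element-wise computation already carried out in Appendix~\ref{appendix:positive-semi-definite}, Eq.~\eqref{eq:loss_z_re}, which shows $(\nabla_z^2 \textit{Loss})_{ij} = p_i(1-p_i)$ for $i=j$ and $-p_i p_j$ for $i \ne j$; in matrix form this is precisely $\nabla_z^2 \textit{Loss} = \text{diag}(p) - p p^T$.

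Then I would assemble the input-space quantities. For the gradient, the chain rule gives $\nabla_x \textit{Loss} = (\partial z/\partial x)^T \nabla_z \textit{Loss} = \tilde{W}_x (p - Y)$, which is the claimed formula for $g$. For the Hessian I would invoke Lemma~\ref{lemma:hessian-closed-form}, namely $H = g_z \nabla_z^2 \textit{Loss}(x) g_z^T$ with $g_z = \nabla_x z \approx \tilde{W}_x$; substituting the output-space Hessian computed above yields $H = \tilde{W}_x (\text{diag}(p) - p p^T) \tilde{W}_x^T$, exactly as required. Both expressions then follow by matching dimensions ($\tilde{W}_x \in \mathbb{R}^{n\times c}$, so $g \in \mathbb{R}^n$ and $H \in \mathbb{R}^{n\times n}$).

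The computations themselves are routine once the output-space derivatives are in hand, so there is no genuinely hard step. The one point demanding care is the rigorous use of Assumption~\ref{assumption:activation}: treating $\tilde{W}_x$ as independent of $x$ is what suppresses the term containing $\partial^2 z/\partial x^2$ in the Hessian, the same cancellation already flagged in the derivation of Lemma~\ref{lemma:hessian-closed-form}. I would therefore state explicitly that this term is \emph{dropped} under the piecewise-linear approximation rather than being exactly zero, so that the lemma is presented as holding under Assumption~\ref{assumption:activation} and not unconditionally.
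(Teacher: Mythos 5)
Your proposal is correct and follows essentially the same route as the paper's proof: chain rule through $z=\tilde{W}_x^T x + b$ for the gradient, Lemma~\ref{lemma:hessian-closed-form} together with the output-space Hessian $\text{diag}(p)-pp^T$ from Eq.~\eqref{eq:loss_z_re} for the input-space Hessian. The only (immaterial) difference is that you differentiate the log-sum-exp form of the loss directly rather than passing through $-\tfrac{1}{p_y}\partial p_y/\partial z_i$ as the paper does.
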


\begin{proof}
According to the chain rule, the gradient $g = \nabla_x \textit{Loss}(x)$ can be re-written as follows.
\begin{align}
    g &= \nabla_x \textit{Loss}(x) \nonumber \\
    &= \left( \frac{\partial \textit{Loss}}{\partial z^T} \frac{\partial z}{\partial x^T}  \right)^T \nonumber \\
    &= \left( \frac{\partial \textit{Loss}}{\partial z^T} \tilde{W}_x^T \right)^T \nonumber \\
    &= \tilde{W}_x  \frac{\partial \textit{Loss}}{\partial z},
\end{align}
where
\begin{align}
    \frac{\partial \textit{Loss}}{\partial z_i} &= \sum_{k=1}^{c} \frac{\partial \textit{Loss}}{\partial p_k} \frac{\partial p_k}{\partial z_i} \nonumber \\ 
    &= \frac{\partial\textit{Loss}}{\partial p_y}\frac{\partial p_y}{\partial z_i}
             \quad// \quad  \forall k\ne y, \frac{\partial\textit{Loss}(x) }{\partial p_k}=0 \nonumber \\
    &= -\frac{1}{p_y} \frac{\partial p_y}{\partial z_i} \nonumber \\
    &=
    \begin{cases}
        p_y - 1, \quad &\text{if $i = y$},  \quad // \quad \text{According to Eq.~\eqref{eq:dp-dz-y}}\\
        p_i, \quad &\text{if $i \ne y$}, \quad // \quad \text{According to Eq.~\eqref{eq:dp-dz-i}}\\
    \end{cases} \nonumber
\end{align}

In this way, $g$ can be written as follows.
\begin{align}
    g_= \nabla_x \textit{Loss}(x)  = \tilde{W}_x (p- Y),
\end{align}
where $Y\in \{0, 1 \}^c$ is a one-hot vector, where $Y_y = 1$; and $\forall i\ne y, Y_i = 0$.

According to Lemma~\ref{lemma:hessian-closed-form}, the Hessian matrix $H_{\theta} = \nabla_x^2 \textit{Loss}_{\theta}(x)$ can be written as
\begin{equation}
     H = (\frac{\partial z}{\partial x^T})^T \frac{\partial^2 \textit{Loss}}{\partial z \partial z^T} \frac{\partial z}{\partial x^T}
\end{equation}

Because $z = \tilde{W}_x^T x + b$, we have $\frac{\partial z}{\partial x^T} = \tilde{W}_x^T$.
Moreover, according to Eq.~\eqref{eq:loss_z_re}, 
\begin{equation}
     (\nabla_z^2 \textit{Loss}(x))_{ij} =
                \begin{cases}
                    p_i(1-p_i), &\text{if $i=j $}, \\
                    -p_i p_j, &\text{if $i \ne j $}. \\
                \end{cases} 
\end{equation}

Thus, 
\begin{equation}
    \frac{\partial^2 \textit{Loss}}{\partial z \partial z^T} = {diag}(p) - p p^T
\end{equation}

In this way, $H$ can be written as follows.
\begin{align}
    H = \nabla_x^2 \textit{Loss}(x)  = \tilde{W}_x ({diag}(p) - p p^T) \tilde{W}_x^T
\end{align}

Thus, the lemma is proven.
\end{proof}
$ $\newline

\subsection{Proof of Theorem~\ref{theoremappendix:balance}}
Given an input $x$ and a DNN $\theta$ using ReLU activations, we use $\textit{Loss}_{\theta}(x)$ to denote the classification loss.
Note that according to Assumption~\ref{assumption:activation}, given the input $x$, the output before the softmax operation of the $i$-th category is given as  $z_i = (\tilde{W}_x)_i^T x + b_i$, where $(\tilde{W}_x)_i$ denotes the $i$-th column of $\tilde{W}_x$.
Therefore, we consider the class information of the $i$-th category to be encoded by $(\tilde{W}_x)_i$. 
In this study, we roughly consider that different categories share little information; therefore, we can assume that the weight vectors of different categories $(\tilde{W}_x)_i$ are orthogonal to each other.
This assumption is partially supported in previous studies~\cite{papyan2020prevalence}.
Moreover, we assume that  the $(\tilde{W}_x)_i$ of different categories have similar strengths.
In this way, we make the following assumption.

\begin{assumption} \label{assum:ortho-weights}
Given the input $x$ and the ReLU network, the network output can be written as $z = \tilde{W}_x^T x + b$.
We assume that $\tilde{W}_x^T  \tilde{W}_x  = \kappa\cdot I$, where $\kappa \in \mathbb{R}$ is a constant scalar.
\end{assumption}

Then, the adversarial perturbation generated by the single-step attack on the input $x$ is given as follows.
\begin{equation}
    \delta^{(\text{single})}_{\theta} = \eta g_{\theta},
\end{equation}
where $\eta$ denotes the step size, and $g_{\theta} = \nabla_x \textit{Loss}_{\theta}(x)$.
According to Lemma~\ref{lemma:interaction}, the sum of pairwise interactions of perturbation units $(a, b)$ is given as follows.
\begin{align}
    \sum_{a, b\in\Omega}[{I_{a b}(\delta_{\theta}^{(\text{single})})}] &= (\delta_{\theta}^{(\text{single})})^T H_{\theta} \delta_{\theta}^{(\text{single})},
\end{align}
where $H_{\theta} = \nabla_x^2 \textit{Loss}_{\theta}(x)$.

\begin{definition} \label{def:balance}
We are given the input $x$ and two DNNs parameterized by $\theta$ and $\theta'$, respectively. 
    We use $p \in [0, 1]^c$  and $q \in [0, 1]^c$ to denote two probability distributions of the prediction over $c$ categories on the same input sample $x$, which are predicted by the DNN $\theta$ and $\theta'$, respectively.
We assume $p_y=q_y\ge 0.5$ for the ground-truth category $y$ of the input sample $x$. 
For the other $c-1$ class probabilities, without loss of generality, let $p_{i_1}\ge p_{i_2} \ge \dots \ge p_{i_{c-1}}$ and $q_{j_1} \ge q_{j_2} \ge \dots \ge q_{j_{c-1}}$. If $\forall 1\le k \le c-2$, $q_{j_{k}} - q_{j_{k+1}} \le p_{i_{k} } - p_{i_{k+1}}$, then we say that $q$ is more balanced than $p$.
\end{definition}

In Definition~\ref{def:balance}, we discuss the probability distributions estimated using two DNNs. 
We consider the two DNNs to have the same classification ability, \emph{i.e.} we assume $p_y = q_y$, which is the condition required to compare their interactions further. 
Otherwise, it is unfair to compare pairwise interactions generated on two different DNNs, if we do not constrain the discrimination power of the two DNNs into the same level. 
Therefore, we assume that the input can be correctly classified by both DNNs with high confidence. 
Then, the main difference between these two DNNs is that the probability distribution of $q$ over the remaining $c-1$ classes is more balanced than that of $p$ over the remaining $c-1$ classes.

In this subsection, we compare the interaction of perturbations generated on these two DNNs, as shown in the following theorem.

\begin{theoremappendix} \label{theoremappendix:balance}
In multi-category classification,  let the loss function for attacking $\textit{Loss}(x)$  be formulated as the cross-entropy loss in Eq.~\eqref{eq:softmax-ce}.
    We are given the input $x$ and two DNNs parameterized by $\theta$ and $\theta'$, respectively. 
    We use $p \in [0, 1]^c$  and $q \in [0, 1]^c$ to denote the probability of $x$ that belongs to each of the $c$ categories, which are predicted by the DNN $\theta$ and $\theta'$, respectively.
    Let the two DNNs both satisfy Assumption~\ref{assum:ortho-weights} and Definition~\ref{def:balance}.
    Then, if $q$ is more balanced than $p$, then we have $\sum_{a, b\in\Omega}[{I_{a b}(\delta_{\theta'}^{(\text{single})})}] \le \sum_{a, b\in\Omega}[{I_{a b}(\delta_{\theta}^{(\text{single})})}]$.
\end{theoremappendix}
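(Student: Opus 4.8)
The plan is to reduce both interaction sums to scalar quadratic forms in the class-probability vectors and then exploit the majorization implied by the balance condition. First I would combine Lemma~\ref{lemma:interaction} (which gives $\sum_{a,b\in\Omega} I_{ab}(\delta)=\delta^T H\delta$) with Lemma~\ref{lemma:g-H-in-p-form} (which gives $g=\tilde{W}_x(p-Y)$ and $H=\tilde{W}_x(\text{diag}(p)-pp^T)\tilde{W}_x^T$) and Assumption~\ref{assum:ortho-weights} ($\tilde{W}_x^T\tilde{W}_x=\kappa I$). Since $\delta^{(\text{single})}_\theta=\eta g_\theta$, the two occurrences of $\tilde{W}_x^T\tilde{W}_x$ collapse into factors of $\kappa$, yielding $\sum_{a,b}I_{ab}(\delta^{(\text{single})}_\theta)=\eta^2\kappa^2\,(p-Y)^T(\text{diag}(p)-pp^T)(p-Y)=:\eta^2\kappa^2 f(p)$, and likewise $f(q)$ for $\theta'$. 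As $\eta,\kappa$ are common positive constants, the claim reduces to showing $f(q)\le f(p)$.

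Next I would expand $f$ in terms of the categorical moments $s=\sum_i p_i^2$ and $t=\sum_i p_i^3$. A direct computation, separating the $i=y$ term and using that $Y$ is one-hot, gives $(p-Y)^T\text{diag}(p)(p-Y)=t-2p_y^2+p_y$ and $\big(p^T(p-Y)\big)^2=(s-p_y)^2$, so $f(p)=t-(s-p_y)^2-2p_y^2+p_y$. Because $p_y=q_y$ by Definition~\ref{def:balance}, the terms depending only on $p_y$ cancel in the difference. Writing the tail moments $S_2=\sum_{i\ne y}p_i^2,\ S_3=\sum_{i\ne y}p_i^3$ (and $T_2,T_3$ for $q$), and using $s=p_y^2+S_2$, I obtain $f(p)-f(q)=(S_3-T_3)-(S_2-T_2)\big[(S_2+T_2)-2p_y(1-p_y)\big]$.

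The core step is to show that the balance condition forces $p$ to majorize $q$ on the $c-1$ incorrect classes, so that convexity yields $S_2\ge T_2$ and $S_3\ge T_3$. Denoting the sorted tails by $a_k,b_k$ with gaps $\alpha_k=a_k-a_{k+1}\ge\beta_k=b_k-b_{k+1}$ and equal tail sum $1-p_y$, I would write each partial sum as $A_j=ja_{c-1}+\sum_l \min(j,l)\,\alpha_l$, eliminate $a_{c-1}-b_{c-1}$ through the equal-sum constraint, and verify that $A_j-B_j=\tfrac{1}{c-1}\sum_l(\alpha_l-\beta_l)\,\kappa_{j,l}\ge 0$ for every $j$, where each coefficient $\kappa_{j,l}$ (namely $l(c-1-j)$ for $l\le j$ and $j(c-1-l)$ for $l>j$) is nonnegative. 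Majorization together with Karamata's inequality applied to $x^2$ and $x^3$ (both convex on $[0,\infty)$) then delivers $S_2-T_2\ge 0$ and $S_3-T_3\ge 0$.

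Finally I would control the sign of the bracket using $p_y\ge 0.5$. Since each $a_k\ge 0$ and $\sum_k a_k=1-p_y$, we have $S_2\le(1-p_y)^2$ and likewise $T_2\le(1-p_y)^2$, so $S_2+T_2\le 2(1-p_y)^2$. The hypothesis $p_y\ge 0.5$ gives $(1-p_y)^2\le p_y(1-p_y)$, hence $S_2+T_2\le 2p_y(1-p_y)$ and the bracket $(S_2+T_2)-2p_y(1-p_y)\le 0$. As $S_2-T_2\ge 0$, the subtracted product is $\le 0$, and combined with $S_3-T_3\ge 0$ this yields $f(p)-f(q)\ge 0$, i.e.\ the more balanced DNN produces the smaller interaction sum, proving $\sum_{a,b}I_{ab}(\delta^{(\text{single})}_{\theta'})\le\sum_{a,b}I_{ab}(\delta^{(\text{single})}_{\theta})$. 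I expect the majorization step to be the main obstacle: the gap-domination of Definition~\ref{def:balance} is weaker than standard majorization, so the partial-sum positivity must be checked explicitly, and the concluding inequality genuinely relies on $p_y\ge 0.5$ to overcome the sign of the concave $-(s-p_y)^2$ term.
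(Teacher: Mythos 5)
Your proposal is correct and follows essentially the same architecture as the paper's proof: the same reduction to $\eta^2\kappa^2 f(p)$ via Lemma~\ref{lemma:g-H-in-p-form} and Assumption~\ref{assum:ortho-weights}, the same algebraic decomposition of $f(p)-f(q)$ into $(S_3-T_3)-(S_2-T_2)\bigl[(S_2+T_2)-2p_y(1-p_y)\bigr]$ (the paper's terms (I), (II), (III)), and the same use of $p_y\ge 0.5$ together with $S_2,T_2\le(1-p_y)^2$ to make the bracket nonpositive. The only divergence is the middle step: the paper gets $S_2\ge T_2$ and $S_3\ge T_3$ directly by applying Chebyshev's sum inequality (Lemma~\ref{lemma:csinequality}) to $\sum_k(p_{i_k}-q_{j_k})(p_{i_k}+q_{j_k})$ and $\sum_k(p_{i_k}-q_{j_k})(p_{i_k}^2+p_{i_k}q_{j_k}+q_{j_k}^2)$, exploiting that gap-domination in Definition~\ref{def:balance} makes $p_{i_k}-q_{j_k}$ monotone with zero sum, whereas you first upgrade gap-domination plus equal tail sums to full majorization and then invoke Karamata. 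Both are valid; Chebyshev is the shorter path here, while your majorization argument is marginally more general in that it would cover any convex test function rather than just $x^2$ and $x^3$.
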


\begin{proof}
According to Lemma~\ref{lemma:interaction}, the sum of pairwise interactions of perturbation units $(a, b)$ is given as follows.
\begin{align} \label{eq:p-interction}
    \sum_{a, b\in\Omega}[{I_{a b}(\delta_{\theta}^{(\text{single})})}] &= (\delta_{\theta}^{(\text{single})})^T H_{\theta} \delta_{\theta}^{(\text{single})}  \nonumber \\
    &=\eta^2 \left[ \tilde{W}_x (p- Y)\right]^T \tilde{W}_x (\text{diag}(p) - p p^T)  \tilde{W}_x^T \left[ \tilde{W}_x (p- Y)\right] \quad//\quad \text{According to Lemma~\ref{lemma:g-H-in-p-form}}  \nonumber\\
    &= \eta^2 (p-Y)^T \tilde{W}_x^T  \tilde{W}_x (\text{diag}(p) - p p^T)  \tilde{W}_x^T \tilde{W}_x  (p- Y)  \nonumber\\
    &= \eta^2 \kappa^2 (p-Y)^T  (\text{diag}(p) - p p^T)   (p- Y)  \quad//\quad \text{According to Assumption~\ref{assum:ortho-weights}}  \nonumber \\
    &= \eta^2\kappa^2 \left[ (p-Y)^T\text{diag}(p) (p- Y)-   (p- Y)^T p p^T   (p- Y)  \right]  \nonumber \\
    &= \eta^2\kappa^2 \left[ (p-Y)^T\text{diag}(p) (p- Y) - \left(  p^T   (p- Y)\right)^2   \right]  \nonumber \\
    &= \eta^2\kappa^2 \left[ \sum_{i=1}^c p_i (p_i - Y_i)^2 - \left( \sum_{i=1}^c p_i (p_i - Y_i)\right)^2   \right]  \nonumber \\
    &= \eta^2\kappa^2 \left[ \sum_{i\ne y} p_i^3 + p_y (p_y - 1)^2  - \left( \sum_{i\ne y} p_i^2 + p_y (p_y -1) \right)^2  \right]
\end{align}

Similarly, we have 
\begin{align} \label{eq:q-interction}
    \sum_{a, b\in\Omega}[{I_{a b}(\delta_{\theta'}^{(\text{single})})}] &= (\delta_{\theta'}^{(\text{single})})^T H_{\theta} \delta_{\theta}^{(\text{single})} \nonumber \\
    &= \eta^2\kappa^2 \left[ \sum_{i\ne y} q_i^3 + q_y (q_y - 1)^2  - \left( \sum_{i\ne y} q_i^2 + q_y (q_y -1) \right)^2  \right]
\end{align}

Note that $p_y = q_y$, and for the other $c-1$ class probabilities, without loss of generality, let $p_{i_1}\ge p_{i_2} \ge \dots \ge p_{i_{c-1}}$ and $q_{j_1} \ge q_{j_2} \ge \dots \ge q_{j_{c-1}}$.
In this way, the difference between $\sum_{a, b\in\Omega}[{I_{a b}(\delta_{\theta}^{(\text{single})})}]$ and $\sum_{a, b\in\Omega}[{I_{a b}(\delta_{\theta'}^{(\text{single})})}]$ is given as follows.
\begin{align} \label{eq:p-q-interaction-diff}
&\sum_{a, b\in\Omega}[{I_{a b}(\delta_{\theta}^{(\text{single})})}] - \sum_{a, b\in\Omega}[{I_{a b}(\delta_{\theta'}^{(\text{single})})}] \nonumber \\
    &= \eta^2\kappa^2 \left\{\sum_{k = 1}^{c-1} p_{i_k}^3 + p_y(p_y-1)^2 - \left( \sum_{k = 1}^{c-1} p_{i_k}^2 + p_y (p_y -1) \right)^2 - \left[\sum_{k = 1}^{c-1} q_{j_k}^3 +q_y(q_y-1)^2  - \left(\sum_{k = 1}^{c-1}q_{j_k}^2  + q_y (q_y -1) \right)^2 \right] \right\} \nonumber\\
    &= \eta^2\kappa^2 \left\{\sum_{k = 1}^{c-1} \left( p_{i_k}^3 - q_{j_k}^3 \right) + \underbrace{p_y(p_y-1)^2 - q_y(q_y-1)^2}_{=0} - \left[\left( \sum_{k = 1}^{c-1}p_{i_k}^2 + p_y (p_y -1) \right)^2 - \left(\sum_{k = 1}^{c-1} q_{j_k}^2 + p_y (p_y -1) \right)^2  \right]\right\} \nonumber \\
    &=  \eta^2\kappa^2\left\{\underbrace{ \sum_{k = 1}^{c-1} \left( p_{i_k} - q_{j_k} \right) \left(p_{i_k}^2 + p_{i_k}q_{j_k} + q_{j_k}^2 \right)}_{\text{(I)} } - \underbrace{\left(  \sum_{k = 1}^{c-1} (p_{i_k}^2 - q_{j_k}^2) \right)}_{\text{(II)} }  \underbrace{\left(  \sum_{k = 1}^{c-1} p_{i_k}^2 + p_y (p_y -1) +  \sum_{k = 1}^{c-1}q_{j_k}^2 + q_y (q_y -1) \right)}_{\text{(III)} }\right\}
\end{align}

\textbullet{ Term (I) $\ge 0$. } 
Because $q_{j_1}\ge q_{j_2} \ge \dots \ge q_{j_{c-1}} \ge 0$, then $q_{j_1}^2\ge q_{j_2}^2 \ge \dots \ge q_{j_{c-1}}^2 \ge 0$.
Because $p_{i_1}\ge p_{i_2} \ge \dots \ge p_{i_{c-1}}\ge 0$, then we have $p_{i_1}^2\ge p_{i_2}^2 \ge \dots \ge p_{i_{c-1}}^2\ge 0$ and $p_{i_1} q_{j_1}\ge p_{i_2} q_{j_2} \ge \dots \ge p_{i_{c-1}} q_{j_{c-1}} \ge 0$.
Therefore, $p_{i_k}^2 + p_{i_k}q_{j_k} + q_{j_k}^2 \ge p_{i_{k+1}}^2 + p_{i_{k+1}}q_{j_{k+1}} + q_{j_{k+1}}^2$.
Moreover, according to Definition~\ref{def:balance}, if $g_{\theta'}$ is more balanced than $g_{\theta}$, then $q_{j_{k}} - q_{j_{k+1}} \le p_{i_{k} } - p_{i_{k+1}}$.
In other words, we have $p_{i_{k} } - q_{j_{k}} \ge p_{i_{k+1}}- q_{j_{k+1}}$ as follows.
\begin{equation} \label{eq:p-q-order}
    p_{i_1} -q_{j_1}\ge p_{i_2} -q_{j_2}\ge \dots \ge p_{i_c} -q_{j_c}
\end{equation}

So according to Lemma~\ref{lemma:csinequality} (Chebyshev's sum inequality), we have
\begin{equation}
    \text{(I)}=\sum_{k = 1}^{c-1} \left( p_{i_k} - q_{j_k} \right) \left(p_{i_k}^2 + p_{i_k}q_{j_k} + q_{j_k}^2 \right) \ge  \frac{1}{c-1}  \underbrace{\left(  \sum_{k = 1}^{c-1} \left( p_{i_k} - q_{j_k} \right)\right)}_{=0}  \underbrace{\left( \sum_{k = 1}^{c-1} \left(p_{i_k}^2 + p_{i_k} q_{j_k} + q_{j_k}^2 \right) \right)}_{\ge 0} = 0
\end{equation}

\textbullet{ Term (II) $\ge 0$. } 
Because $p_{i_1}\ge p_{i_2} \ge \dots \ge p_{i_{c-1}} \ge 0$ and $q_{j_1}\ge q_{j_2} \ge \dots \ge q_{j_{c-1}} \ge 0$, we have $p_{i_1} + q_{j_1}\ge p_{i_2} + q_{j_2} \ge \dots \ge p_{i_{c-1}} + q_{j_{c-1}} \ge 0$.
So according to and Eq.~\eqref{eq:p-q-order} and Lemma~\ref{lemma:csinequality} (Chebyshev's sum inequality), we have
\begin{equation}
    \text{(II)}=\sum_{k = 1}^{c-1} (p_{i_k}^2 - q_{j_k}^2) =   \sum_{k = 1}^{c-1} (p_{i_k} - q_{j_k}) (p_{i_k} + q_{j_k}) \ge  \frac{1}{c-1}  \underbrace{\left(  \sum_{k = 1}^{c-1} \left( p_{i_k} - q_{j_k} \right)\right)}_{=0}  \underbrace{\left( \sum_{k = 1}^{c-1} \left(p_{i_k} + q_{j_k} \right) \right)}_{\ge 0} = 0
\end{equation}

\textbullet{ Term (III) $\le 0$. } 
Let us focus on $ \sum_{k = 1}^{c-1} p_{i_k}^2 + p_y (p_y -1)$.
$\forall  p_{i_k}$, we have
\begin{align}
    p_{i_k} p_{i_k} \le (\max_{k} p_{i_k}) p_{i_k} \le (1-p_y) p_{i_k}
\end{align}

In this way, 
\begin{align}
    \sum_{k = 1}^{c-1} p_{i_k}^2 \le (1-p_y)  \sum_{k = 1}^{c-1} p_{i_k}  = (1-p_y)^2
\end{align}

Then, $ \sum_{k = 1}^{c-1} p_{i_k}^2 + p_y (p_y -1)$ is bounded as follows.
\begin{align}
    \sum_{k = 1}^{c-1} p_{i_k}^2 + p_y (p_y -1)  & \le (1-p_y)^2 + p_y (p_y-1) \nonumber \\
    & = ( p_y^2 -2p_y + 1) + (p_y^2 - p_y) \nonumber \\
    & = 2p_y^2 -3 p_y + 1 \le 0 \quad // \quad \text{Because $0.5 \le p_y \le 1$}
\end{align}

Similarly, we have 
\begin{equation}
    \sum_{k = 1}^{c-1}q_{j_k}^2 + q_y (q_y -1) \le 0
\end{equation}

Thus, (III) $\le 0$.

In summary,  we have
\begin{align} 
&\sum_{a, b\in\Omega}[{I_{a b}(\delta^{(\text{single})}_{\theta})}] - \sum_{a, b\in\Omega}[{I_{a b}(\delta^{(\text{single})}_{\theta’})}] \nonumber \\
    &=  \underbrace{\eta^2\kappa^2}_{\ge 0}\left\{\underbrace{ \sum_{k = 1}^{c-1} \left( p_{i_k} - q_{j_k} \right) \left(p_{i_k}^2 + p_{i_k}q_{j_k} + q_{j_k}^2 \right)}_{\text{(I)} \ge 0 } - \underbrace{\left(  \sum_{k = 1}^{c-1} (p_{i_k}^2 - q_{j_k}^2) \right)}_{\text{(II)} \ge 0 }  \underbrace{\left(  \sum_{k = 1}^{c-1} p_{i_k}^2 + p_y (p_y -1) +  \sum_{k = 1}^{c-1}q_{j_k}^2 + q_y (q_y -1) \right)}_{\text{(III)} \le 0 }\right\} \nonumber \\
    & \ge 0
\end{align}

Thus, the theorem is proven.
\end{proof}

\section{Proof that the MI Attack generates perturbations with smaller interactions than those of the multi-step attack}
\label{append:mi}

In this section, we prove Proposition~\ref{pro:mi} in Section~\ref{sec:explaining}, which shows that perturbations of the MI Attack~\cite{mim} exhibit smaller interactions than those of the multi-step attack.

Before the proof of Proposition~\ref{pro:mi}, we revisit the MI Attack.

\subsection{MI Attack}
    The MI Attack~\cite{mim} boosts adversarial transferability by incorporating  the momentum of gradients in attacking as follows.
    \begin{equation} \label{eq:ori-mi}
        \begin{split}
            p^{(t)}_{\text{mi-ori}} &=  p^{(t-1)}_{\text{mi-ori}} + \frac{\nabla_{x'}\textit{Loss}(x'=x + \delta^{(t-1)}_{\text{mi-ori}})}{\|\nabla_{x'}\textit{Loss}(x'=x + \delta^{(t-1)}_{\text{mi-ori}}) \|_1}, \\
            \delta^{(t)}_{\text{mi-ori}} &= \operatorname{clip}\left\{\delta^{(t-1)}_{\text{mi-ori}} + \eta\cdot \operatorname{sign}(p^{(t)}_{\text{mi-ori}})  \right\},
        \end{split}
    \end{equation}
    where $\delta^{(t)}_{\text{mi-ori}} $ denotes the adversarial perturbation generated by the MI Attack after $t$ steps, $t \ge 1$. $p^{(0)}_{\text{mi-ori}}$ is initialized as $\mathbf{0}$.

For the MI Attack in Eq.~(\ref{eq:ori-mi}), the normalization of the gradient $\nabla_{x'}\textit{Loss}(x'=x + \delta^{(t-1)}_{\text{mi-ori}})$, sign operation, and clip operation are used to constrain the adversarial perturbation $\delta^{(t)}_{\text{mi-ori}}$.
In fact, the sign operation and clip operation, as well as the normalization of the gradient, are common operations in generating adversarial perturbations.
Therefore, the distinctive technique in the MI Attack is the use of the momentum of gradients $p^{(t)}_{\text{mi-ori}}$, instead of the raw gradient $\nabla_{x'}\textit{Loss}(x'=x + \delta^{(t-1)}_{\text{mi-ori}})$, to update the perturbation. 
Thus, to simplify the proof, we ignore the normalization of the gradient, the sign operation and the clip operation, without hurting the trustworthiness of the analysis of the MI Attack.
In this way, the MI Attack can be simplified as follows.

    \begin{equation} \label{eq:revised-mi}
        \begin{split}
            p^{(t)}_{\text{mi}} &=  p^{(t-1)}_{\text{mi}} + {\nabla_{x'}\textit{Loss}(x' = x + \delta^{(t-1)}_{\text{mi}})},\\
            \delta^{(t)}_{\text{mi}} &= \delta^{(t-1)}_{\text{mi}} + \alpha\cdot \underbrace{\text{norm}( p^{(t)}_{\text{mi}})}_{g^{(t)}_{\text{mi}}}.
        \end{split}
    \end{equation}
    
Here we define the normalized accumulated gradient $p^{(t)}_{\text{mi}}$ as $g^{(t)}_{\text{mi}} = \frac{p^{(t)}_{\text{mi}}}{t}$.
Note that the sign operation, $\text{sign}(p^{(t)}_{\text{mi}})$, can be considered as a type of normalization on $p^{(t)}_{\text{mi}}$.
Thus, we need use a normalization on $p^{(t)}_{\text{mi}}$, in order to avoid the increasing gradient strength in the attack owing to the accumulation of gradient.
Such a gradient $g^{(t)}_{\text{mi}}$ can be further represented as follows.

        \begin{align} \label{eq:our-revised-mi}
            g_{\text{mi}}^{(t)} = \frac{ p^{(t)}_{\text{mi}}}{t}&= \frac{\sum_{t'=1}^{t} \nabla_{x'}\textit{Loss}(x'=x + \delta^{(t'-1)}_{\text{mi}})}{t} \nonumber \\
            &= \frac{\sum_{t'=1}^{t-1} \nabla_{x'}\textit{Loss}(x'=x + \delta^{(t'-1)}_{\text{mi}}) +  \nabla_{x'}\textit{Loss}(x'=x+\delta^{(t-1)}_{\textrm{mi}})}{t}  \nonumber\\
            &= \frac{(t-1) g_{\textrm{mi}}^{(t-1)} +  \nabla_{x'}\textit{Loss}(x'=x+\delta^{(t-1)}_{\textrm{mi}})}{t} 
           \quad // \quad \text{According to $g_{\text{mi}}^{(t-1)} = { p^{(t-1)}_{\text{mi}}}/{(t-1)}$}
             \nonumber\\
            &=\mu g_{\textrm{mi}}^{(t-1)} + {(1-\mu)} \nabla_{x'}\textit{Loss}(x'=x+\delta^{(t-1)}_{\textrm{mi}}),
        \end{align}
    where $t$ denotes the step, and $\mu={(t-1)} / {t}$.

Moreover, the perturbation generated by the $m$-step MI Attack in Eq.~\eqref{eq:revised-mi} can be re-written as 
\begin{equation} 
\begin{aligned}
\label{eq:mi_update}
 \delta_{\textrm{mi}}^{(m)} 
&\triangleq  \delta^{(m-1)}_{\text{mi}} + \alpha\cdot g^{(m)}_{\text{mi}}
\quad //\quad \text{According to Eq.~\eqref{eq:revised-mi}}
 \\
 &=  \delta^{(m-2)}_{\text{mi}} + \alpha\cdot g^{(m-1)}_{\text{mi}} + \alpha\cdot g^{(m)}_{\text{mi}}
 \\
 & \dots
 \\
 &= \sum_{t=1}^{m} \alpha g^{(t)}_{\textrm{mi}}.
 \end{aligned}
\end{equation}

\begin{lemma} \label{lemma:mi-form}
    Based on Assumption~\ref{assumption:taylor}, the gradient $ g^{(t)}_{\textrm{mi}}$ 
    \textit{w.r.t.} the $t$-step MI Attack
    can be written as $g^{(t)}_{\textrm{mi}} = \sum_{s=0}^{t-1} a_{s}^{(t)} H^s g$, where $H=\nabla_x^2 \textit{Loss}(x)$ denotes the Hessian matrix of the loss function~\textit{w.r.t} the input sample $x$, the coefficient $\forall s,t, \,a_{s}^{(t)} \ge 0$, and the step number $1\le t \le m$. 
\end{lemma}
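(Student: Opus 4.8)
The plan is to prove the claimed closed-form $g^{(t)}_{\textrm{mi}} = \sum_{s=0}^{t-1} a_{s}^{(t)} H^s g$ by induction on the step number $t$, exploiting the recurrence $g_{\textrm{mi}}^{(t)} = \mu\, g_{\textrm{mi}}^{(t-1)} + (1-\mu)\, \nabla_{x'}\textit{Loss}(x' = x + \delta^{(t-1)}_{\textrm{mi}})$ from Eq.~\eqref{eq:our-revised-mi} together with the quadratic (Taylor) approximation of the loss from Assumption~\ref{assumption:taylor}. The key observation is that under Assumption~\ref{assumption:taylor}, the gradient at any perturbed point is affine in the perturbation, namely $\nabla_{x'}\textit{Loss}(x' = x + \delta) = g + H\delta$ (this is exactly Eq.~\eqref{eq:grad_taylor}), so each Newton-style update feeds back a term proportional to $H$ times the accumulated perturbation, generating successively higher powers of $H$ acting on $g$.

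First I would establish the base case $t=1$: since $p^{(1)}_{\text{mi}} = \nabla_x\textit{Loss}(x) = g$ and $g^{(1)}_{\text{mi}} = p^{(1)}_{\text{mi}}/1 = g$, we have $g^{(1)}_{\text{mi}} = a^{(1)}_0 H^0 g$ with $a^{(1)}_0 = 1 \ge 0$. For the inductive step, I would assume the representation holds for all steps up to $t-1$, and in particular that $g^{(t')}_{\textrm{mi}} = \sum_{s=0}^{t'-1} a^{(t')}_s H^s g$ with nonnegative coefficients. Using Eq.~\eqref{eq:mi_update}, the accumulated perturbation is $\delta^{(t-1)}_{\textrm{mi}} = \sum_{t'=1}^{t-1} \alpha\, g^{(t')}_{\textrm{mi}}$, which by the inductive hypothesis is itself a nonnegative-coefficient polynomial in $H$ applied to $g$. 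Substituting $\nabla_{x'}\textit{Loss}(x' = x + \delta^{(t-1)}_{\textrm{mi}}) = g + H\delta^{(t-1)}_{\textrm{mi}}$ into the recurrence $g_{\textrm{mi}}^{(t)} = \mu\, g_{\textrm{mi}}^{(t-1)} + (1-\mu)(g + H\delta^{(t-1)}_{\textrm{mi}})$, every term on the right-hand side becomes a polynomial in $H$ times $g$: the $\mu\, g^{(t-1)}_{\textrm{mi}}$ term contributes powers $H^0$ through $H^{t-2}$, the $(1-\mu)g$ term contributes $H^0$, and the $(1-\mu)H\delta^{(t-1)}_{\textrm{mi}}$ term raises the degree by one (since multiplying by $H$ shifts powers up, yielding up to $H^{t-1}$). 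Collecting by powers of $H$ then yields the desired form with the new coefficients $a^{(t)}_s$ expressed as nonnegative combinations ($\mu \ge 0$, $1-\mu \ge 0$, $\alpha > 0$) of the previous coefficients, so nonnegativity is preserved.

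The main obstacle I anticipate is twofold: first, carefully tracking the index bookkeeping when the shift-by-one from the $H\delta^{(t-1)}_{\textrm{mi}}$ term combines with the un-shifted terms, so that the top power $H^{t-1}$ appears exactly once and no negative coefficients sneak in — this requires writing out the explicit recurrence $a^{(t)}_s = \mu\, a^{(t-1)}_s + (1-\mu)\big[\mathbbm{1}_{s=0} + \alpha \sum_{t'=1}^{t-1} a^{(t')}_{s-1}\big]$ and verifying each summand is nonnegative. Second, I would need to be explicit that Assumption~\ref{assumption:taylor} is what licenses replacing the true gradient by its affine approximation at every step, since without it the feedback would not be polynomial in $H$. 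Once the recurrence for $a^{(t)}_s$ is written down, nonnegativity is immediate by induction because all building blocks $\mu$, $1-\mu$, $\alpha$, and the inductive coefficients are nonnegative, completing the proof.
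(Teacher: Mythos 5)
Your proposal is correct and follows essentially the same route as the paper: induction on $t$, using the affine gradient $\nabla_{x'}\textit{Loss}(x+\delta)=g+H\delta$ from Assumption~\ref{assumption:taylor} and $\delta^{(t-1)}_{\textrm{mi}}=\sum_{t'=1}^{t-1}\alpha g^{(t')}_{\textrm{mi}}$ to collect powers of $H$, arriving at the same coefficient recurrence (your sum over $t'=1,\dots,t-1$ of $a^{(t')}_{s-1}$ agrees with the paper's sum over $t'=s,\dots,t-1$ once the out-of-range coefficients are taken to be zero) and the same nonnegativity argument. No gaps.
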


\begin{proof}
    We  prove Lemma~\ref{lemma:mi-form} by induction.

    \textit{Base case}:
    When the step $t=1$, we have   
    \begin{equation}
    \begin{aligned}
    g^{(1)}_{\textrm{mi}} 
   & =\frac{1-1}{1} g_{\textrm{mi}}^{(0)} + \frac{1}{1} \nabla_{x'}\textit{Loss}(x'=x+\delta^{(0)}_{\textrm{mi}})
    \quad // \quad \text{According to Eq.~\eqref{eq:our-revised-mi}}
   \\
   & = \nabla_{x}\textit{Loss}(x)
   \\ 
    & = g = a_0^{(1)} H^0 g,
    \end{aligned}
    \end{equation}
where $a_0^{(1)}=1\ge 0$ and $H^0=I$.

    \textit{Inductive step}:
    For $1< t' \le t-1$, we assume $g^{(t')}_{\textrm{mi}} = \sum_{s=0}^{t'-1} a_{s}^{(t')} H^s g$, where $a_{s}^{(t')} \ge 0$.
    Thus, at the step $t$, we have
    \begin{align}
        g_{\text{mi}}^{(t)} &= \mu g_{\textrm{mi}}^{(t-1)} + {(1-\mu)} \nabla_{x'}\textit{Loss}(x'=x+\delta^{(t-1)}_{\textrm{mi}}) 
        \quad // \quad \text{According to Eq.~\eqref{eq:our-revised-mi}}
         \nonumber\\
        &= \frac{t-1}{t}  g_{\textrm{mi}}^{(t-1)} +  \frac{1}{t}\nabla_{x'}\textit{Loss}(x'=x+\delta^{(t-1)}_{\textrm{mi}}) 
        \nonumber\\
        &= \frac{t-1}{t}  g_{\textrm{mi}}^{(t-1)} +  \frac{1}{t} \left[ g + H\delta^{(t-1)}_{\textrm{mi}}  \right] \quad // \quad \text{According to Eq.~\eqref{eq:grad_taylor}} 
         \nonumber\\
        &= \frac{t-1}{t}  g_{\textrm{mi}}^{(t-1)} +  \frac{1}{t} \left[ g + H\sum_{t'=1}^{t-1} \alpha g^{(t')}_{\text{mi}}  \right] 
        \quad // \quad \text{According to Eq.~\eqref{eq:mi_update}} 
        \nonumber\\
        &= \frac{t-1}{t}   \sum_{s=0}^{t-2} a_{s}^{(t-1)} H^s g +  \frac{1}{t} \left[ g +\alpha \sum_{t'=1}^{t-1}  \sum_{s=0}^{t'-1} a_{s}^{(t')} H^{s+1} g  \right] 
         \nonumber\\
        &= \frac{t-1}{t}   \sum_{s=0}^{t-2} a_{s}^{(t-1)} H^s g +  \frac{1}{t} \left[ g +\alpha \sum_{t'=1}^{t-1}  \sum_{s'=1}^{t'} a_{s'-1}^{(t')} H^{s'} g  \right] 
        \quad // \quad s'=s+1
         \nonumber\\
        &= \frac{t-1}{t}   \sum_{s=0}^{t-2} a_{s}^{(t-1)} H^s g +  \frac{1}{t} \left[ g +\alpha  \sum_{s'=1}^{t-1} \sum_{t'=s'}^{t-1}  a_{s'-1}^{(t')} H^{s'} g  \right] 
        \quad // \quad \text{Swapping the order of summations }
        \nonumber\\
        &= \frac{t-1}{t} \left[g + \sum_{s=1}^{t-2} a_{s}^{(t-1)} H^s g  \right]+ \frac{1}{t} \left[ g +\alpha  \sum_{s'=1}^{t-1} \sum_{t'=s'}^{t-1}  a_{s'-1}^{(t')} H^{s'} g  \right]
        \nonumber\\
        & = g +  \sum_{s=1}^{t-2} \left[ \frac{t-1}{t} a_{s}^{(t-1)}  + \frac{1}{t}  \sum_{t'=s}^{t-1}  \alpha\cdot a_{s-1}^{(t')}  \right] H^s g + \frac{\alpha }{t} \cdot a_{t-2}^{(t-1)} H^{t-1} g   
        \nonumber\\
         & =H^0 g +  \sum_{s=1}^{t-2} \left[ \frac{t-1}{t} a_{s}^{(t-1)}  + \frac{1}{t}  \sum_{t'=s}^{t-1}  \alpha\cdot a_{s-1}^{(t')}  \right] H^s g + \frac{\alpha }{t} \cdot a_{t-2}^{(t-1)} H^{t-1} g   
        \nonumber\\
        & =a_{0}^{(t)} H^0 g +  \sum_{s=1}^{t-2} a_{s}^{(t)} H^s g + a_{t-1}^{(t)} H^{t-1} g  
        \nonumber\\
        & =  \sum_{s=0}^{t-1} a_{s}^{(t)} H^s g,
    \end{align}
   
    where
    \begin{equation}
        a_{s}^{(t)} =
        \begin{cases}
            1 \ge 0, \quad &\text{if $s=0$} \\
            \frac{t-1}{t} a_{s}^{(t-1)}  + \frac{1}{t}  \sum_{t'=s}^{t-1}  \alpha a_{s-1}^{(t')} \ge 0, \quad &\text{if $0 < s < t-1$} \\
            \frac{\alpha }{t} a_{t-2}^{(t-1)} \ge 0, \quad &\text{if $s=t-1$}
        \end{cases}
    \end{equation}
    \textit{Conclusion}: Because both the base case and the inductive step have been proven to be true, we have $\forall 1\le t \le m,  g^{(t)}_{\textrm{mi}} = \sum_{s=0}^{t-1} a_{s}^{(t)} H^s g$, where $a_{s}^{(t)} \ge 0$.

    Thus, Lemma~\ref{lemma:mi-form} is proven.
\end{proof}

\begin{corollary} \label{corollary:mi-eigen-form}
    Let $\lambda_i \in \mathbb{R}$ and $v_i\in \mathbb{R}^{n}$ denote the $i$-th eigenvalue and its corresponding eigenvector of the Hessian matrix $H=\nabla_x^2 \textit{Loss}(x)$, respectively.
    Based on Lemma~\ref{lemma:mi-form},  the gradient $ g^{(t)}_{\textrm{mi}}$~\textit{w.r.t.} the MI Attack in the $t$-th step can be re-written as  $g^{(t)}_{\textrm{mi}} = \sum_{i=1}^n A^{(t)}_{ii} \gamma_i v_i$, where $A^{(t)}_{ii} = \sum_{s=0}^{t-1} a_{s}^{(t)} \lambda_i^s \ge 0$, and $\gamma_i = g^T v_i \in \mathbb{R}$ denotes the projection of the gradient $g = \nabla_{x} \textit{Loss}(x)$.
\end{corollary}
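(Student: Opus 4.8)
The plan is to obtain Corollary~\ref{corollary:mi-eigen-form} as an immediate consequence of Lemma~\ref{lemma:mi-form} by re-expressing the matrix polynomial $\sum_{s=0}^{t-1} a_s^{(t)} H^s$ in the eigenbasis of the Hessian. By Lemma~\ref{lemma:mi-form}, the accumulated gradient of the MI Attack at step $t$ is $g^{(t)}_{\textrm{mi}} = \sum_{s=0}^{t-1} a_s^{(t)} H^s g$, with coefficients satisfying $a_s^{(t)} \ge 0$. First I would invoke the eigen-decomposition $H = V \Lambda V^T$ from Eq.~\eqref{eq:H-decompose} together with the expansion of the gradient along the orthonormal eigenvectors, $g = \sum_{i=1}^n \gamma_i v_i$ with $\gamma_i = g^T v_i$, as in Eq.~\eqref{eq:g-decompose}.

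Second, I would compute the action of each power $H^s$ on $g$. Since the $v_i$ are eigenvectors, $H^s v_i = \lambda_i^s v_i$, and hence $H^s g = \sum_{i=1}^n \lambda_i^s \gamma_i v_i$. Substituting this into the expression from Lemma~\ref{lemma:mi-form} and interchanging the two finite summations yields
\[
g^{(t)}_{\textrm{mi}} = \sum_{s=0}^{t-1} a_s^{(t)} \sum_{i=1}^n \lambda_i^s \gamma_i v_i = \sum_{i=1}^n \Big( \sum_{s=0}^{t-1} a_s^{(t)} \lambda_i^s \Big) \gamma_i v_i,
\]
which is precisely $g^{(t)}_{\textrm{mi}} = \sum_{i=1}^n A^{(t)}_{ii} \gamma_i v_i$ upon setting $A^{(t)}_{ii} = \sum_{s=0}^{t-1} a_s^{(t)} \lambda_i^s$. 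This accounts for the stated closed form.

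Third, to establish the non-negativity $A^{(t)}_{ii} \ge 0$, I would observe that each summand is a product of two non-negative factors: the coefficients $a_s^{(t)} \ge 0$ supplied by Lemma~\ref{lemma:mi-form}, and the powers $\lambda_i^s$, which are non-negative because every eigenvalue of $H$ satisfies $\lambda_i \ge 0$. The latter fact is the only nontrivial ingredient, and it follows from the positive semi-definiteness of the Hessian under the cross-entropy loss, proven in Corollary~\ref{corollaryappendix:semi-definite}. A sum of non-negative terms is non-negative, so $A^{(t)}_{ii} \ge 0$ for all $i$ and $t$.

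The argument is essentially bookkeeping once Lemma~\ref{lemma:mi-form} is available, so there is no serious combinatorial difficulty. The main (and really the only) obstacle is guaranteeing $\lambda_i \ge 0$: without it, the odd powers $\lambda_i^s$ could be negative and the sign of $A^{(t)}_{ii}$ would no longer be controlled. This is exactly why the cross-entropy assumption is imposed, so that Corollary~\ref{corollaryappendix:semi-definite} applies and the eigenvalues are guaranteed non-negative.
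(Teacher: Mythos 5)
Your proposal is correct and follows essentially the same route as the paper: both diagonalize $H$ via $H=V\Lambda V^{T}$, reduce $\sum_{s}a_{s}^{(t)}H^{s}g$ to the eigenbasis so that $A^{(t)}_{ii}=\sum_{s}a_{s}^{(t)}\lambda_{i}^{s}$, and deduce non-negativity from $a_{s}^{(t)}\ge 0$ (Lemma~\ref{lemma:mi-form}) together with $\lambda_{i}\ge 0$ (Corollary~\ref{corollaryappendix:semi-definite}). The only cosmetic difference is that you apply $H^{s}$ directly to the expansion $g=\sum_{i}\gamma_{i}v_{i}$ and swap the two finite sums, whereas the paper manipulates the matrix product $VA^{(t)}V^{T}g$; these are the same computation.
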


\begin{proof}
    According to  Lemma~\ref{lemma:mi-form}, the gradient $ g^{(t)}_{\textrm{mi}}$ can be written as
    \begin{align} \label{eq:mi-eigen-form}
        g^{(t)}_{\textrm{mi}} &= \sum_{s=0}^{t-1} a_{s}^{(t)} H^s g \nonumber 
        \quad // \quad\text{According to Lemma~\ref{lemma:mi-form}}
        \\
        &= \sum_{s=0}^{t-1} a_{s}^{(t)} \left[V \Lambda V^T \right]^s g 
        \quad // \quad\text{According to Eq.~\eqref{eq:H-decompose}} \nonumber\\
        &= V \left[\sum_{s=0}^{t-1} a_{s}^{(t)} \Lambda^s \right] V^T g.
       \quad // \quad  a_{s}^{(t)} \in \mathbb{R}.
    \end{align}

    For simplicity, let {$A^{(t)} \in \mathbb{R}^{n\times n}$} denote the term $\sum_{s=0}^{t-1} a_{s}^{(t)} \Lambda^s$ in Eq.~\eqref{eq:mi-eigen-form},~\textit{i.e.,}
{$A^{(t)}=\sum_{s=0}^{t-1} a_{s}^{(t)} \Lambda^s$}.
Because  $\Lambda$ is a diagonal matrix, matrix {{$A^{(t)}$}} is also a diagonal matrix, {\emph{i.e.} $\forall i\ne j, A^{(t)}_{ij}=0$, and $A^{(t)}_{ii} = \sum_{s=0}^{t-1} a_{s}^{(t)} \lambda_i^s  $}.

In this way, the gradient $ g^{(t)}_{\textrm{mi}}$~\textit{w.r.t.} the $t$-th step MI Attack in Eq.~\eqref{eq:mi-eigen-form} can be further written as
\begin{align}
    g^{(t)}_{\textrm{mi}} &=  V  A^{(t)} V^T g \nonumber \\
     &= [v_1,v_2,\ldots,v_n]
    \text{diag}(A^{(t)}_{11}, A^{(t)}_{22}, \dots, A^{(t)}_{nn})
      [v_1,v_2,\ldots,v_n]^T \sum_{i=1}^n \gamma_i v_i \nonumber \\
&=
\left[v_1 A^{(t)}_{11}, v_2 A^{(t)}_{22},\ldots,v_n A^{(t)}_{nn} \right]
\begin{bmatrix}
     \gamma_1 \\
     \gamma_2 \\
    \vdots \\
    \gamma_n \\
    \end{bmatrix} \nonumber\\
& = \sum_{i=1}^n A_{ii}^{(t)} \gamma_i v_i.
\end{align}

Here, the term $A^{(t)}_{ii} = \sum_{s=0}^{t-1} a_{s}^{(t)} \lambda_i^s \ge 0$ is positive, because both the term $a_{s}^{(t)} \ge 0$ (according to Lemma~\ref{lemma:mi-form}) and the eigenvalue $\lambda_i \ge 0$ (according to Corollary~\ref{corollaryappendix:semi-definite}) are positive.

Thus, Corollary~\ref{corollary:mi-eigen-form} is proven.
\end{proof}
$ $\newline

\textbf{The perturbation generated by the MI Attack.}
According to Corollary~\ref{corollary:mi-eigen-form}, 
the perturbation generated by the $m$-step MI Attack in Eq.~\eqref{eq:mi_update} can be written as
\begin{equation} \label{eq:mi-perturbation}
    \begin{split}
        \delta^{(\text{mi})}  \triangleq  \delta_{\textrm{mi}}^{(m)} &= \sum_{t=1}^m \alpha g^{(t)}_{\textrm{mi}}
        \quad // \quad\text{According to Eq.~\eqref{eq:mi_update}}
         \\
        &= \sum_{t=1}^m \alpha  \sum_{i=1}^n A_{ii}^{(t)} \gamma_i v_i 
          \quad // \quad\text{According to Corollary~\ref{corollary:mi-eigen-form}}
        \\
        &= \sum_{i=1}^n R_{ii}^{(m)} \gamma_i v_i,
    \end{split}
\end{equation}
where $R_{ii}^{(m)} =  \sum_{t=1}^m \alpha A_{ii}^{(t)}$.

\textbf{The pairwise interactions inside perturbation $ \delta^{(\text{mi})}$.}
Based on perturbations of the MI Attack $ \delta^{(\text{mi})}$, the interaction between each pair of perturbation units  can be computed as
\begin{equation} \label{eq: mi-pairwise-interaction}
 I_{ab}(\delta^{(\text{mi})}) =(\delta^{(\text{mi})})_{a}^T H_{ab}\delta^{(\text{mi})}_{b}.
 \quad // \quad \text{According to Eq.~\eqref{eq:shapley-interaction-index_2} in Appendix~\ref{append:interaction}}.
 \end{equation}
 
 In this scenario, the sum of interactions among all pairs of perturbation units $(a,b)$ in $\delta^{(\text{mi})}$ is given as follows.
    \begin{equation} \label{eq:mi_interaction}
        \begin{aligned}
            \sum_{a,b\in \Omega} I_{ab}(\delta^{(\text{mi})})&=(\delta^{(\text{mi})})^T H\delta^{(\text{mi})} \\
            &= (\sum_{i=1}^n R^{(m)}_{ii} \gamma_{i} v_{i}^T)(\sum_{j=1}^n \lambda_{j} v_{j} v_{j}^T) (\sum_{k=1}^n  R^{(m)}_{kk} \gamma_{k} v_{k})
             \quad // \quad \text{According to Eq.~\eqref{eq:mi-perturbation} and Eq.~\eqref{eq:H-decompose} }
            \\
            &=\left[\sum_{i=1}^n \underbrace{ ( R^{(m)}_{ii} \gamma_{i} v_{i}^T)( \lambda_{i} v_{i} v_{i}^T) }_{\text{$ v_{i}^T v_{i}=1$}}+ \sum_{1\le i,j\le n \atop i\ne j} \underbrace{( R^{(m)}_{ii} \gamma_{i} v_{i}^T)( \lambda_{j} v_{j} v_{j}^T)}_{\text{if $i\ne j, v_{i}^T v_{j}=0$}} \right] (\sum_{k=1}^n R^{(m)}_{kk} \gamma_{k} v_{k}) \\
            &=(\sum_{i=1}^n \lambda_{i}  R^{(m)}_{ii} \gamma_{i} v_{i}^T ) (\sum_{k=1}^n  R^{(m)}_{kk} \gamma_{k} v_{k})\\
            &=\sum_{i=1}^n \underbrace{ (\lambda_i  R^{(m)}_{ii} \gamma_{i} v_{i}^T)(  R^{(t)}_{ii} \gamma_{i} v_{i})}_{\text{$ v_{i}^T v_{i}=1$}}+ \sum_{1\le i, k \le n \atop i\ne k} \underbrace{ (\lambda_{i}  R^{(m)}_{ii} \gamma_{i} v_{i}^T)(  R^{(m)}_{kk} \gamma_{k} v_{k})}_{\text{if $i\ne k, v_{i}^T v_{k}=0$}} \\
            &= \sum_{i=1}^n \lambda_i( R^{(m)}_{ii}\gamma_{i})^2.
        \end{aligned}
    \end{equation}

\subsection{Multi-step attack}
In this subsection, we revisit the perturbation of the multi-step attack, which is defined in Appendix~\ref{append:delta-closed-form}.

According to Eq.~\eqref{eq:multi-step-attack}, the perturbation generated by the $m$-step attack can be represented as 
\begin{equation} \label{eq:multi-step-attack-in-mi}
\delta^{(\text{multi})}  \triangleq  \delta^{(m)} 
= \sum_{t=1}^m \alpha g^{(t)}_{\textrm{multi}}.
\end{equation}

Here, the gradient for the $t$-th step attack is defined as 
\begin{align}
    g^{(t)}_{\textrm{multi}} \triangleq \nabla_{x'}   \textit{Loss}(x' = x + \delta^{(t-1)}).
\end{align}

\begin{corollary} \label{corollary:multi-eigen-form}
    Let $\lambda_i \in \mathbb{R}$ and $v_n\in \mathbb{R}^{n}$ denote the $i$-th eigenvalue and its corresponding eigenvector of the Hessian matrix $H=\nabla_x^2 \textit{Loss}(x)$, respectively.
    Based on Lemma~\ref{lemma:induction},  the gradient $ g^{(t)}_{\textrm{mi}}$ for the $t$-th step attack
    can be written as  $g^{(t)}_{\textrm{multi}} = \sum_{i=1}^n B^{(t)}_{ii} \gamma_i v_i$, where $B^{(t)}_{ii} = (1+\lambda_i)^{t-1} \in \mathbb{R}$, and $\gamma_i = g^T v_i\in\mathbb{R}$ represents the projection of the gradient {$g=\nabla_x \textit{Loss}(x)$} on the eigenvector $v_i$.
\end{corollary}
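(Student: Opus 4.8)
The plan is to mirror the derivation of Corollary~\ref{corollary:mi-eigen-form}, but the argument is considerably shorter because the required closed form has effectively already been produced by Lemma~\ref{lemma:induction}. First I would recall the definition $g^{(t)}_{\textrm{multi}} \triangleq \nabla_{x'} \textit{Loss}(x' = x + \delta^{(t-1)})$ together with the relation $\Delta x^{(t)} = \alpha\, g^{(t)}_{\textrm{multi}}$ coming from Eq.~\eqref{eq:delta_multi}. Since Lemma~\ref{lemma:induction} supplies the closed form $\Delta x^{(t)} = \alpha\,[I + \alpha H]^{t-1} g$, dividing through by the step size yields $g^{(t)}_{\textrm{multi}} = [I + \alpha H]^{t-1} g$. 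This is the only substantive identity, and it is handed to us directly by the earlier lemma, so no fresh dynamical recurrence needs to be unrolled.

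The second step is purely algebraic diagonalization. Substituting the eigen-decomposition $H = V\Lambda V^T$ from Eq.~\eqref{eq:H-decompose} and using orthonormality of $V$ gives $[I + \alpha H]^{t-1} = V\,[I + \alpha\Lambda]^{t-1}\,V^T$, where $[I + \alpha\Lambda]^{t-1}$ is diagonal with entries $(1+\alpha\lambda_i)^{t-1}$. Expanding the gradient along the eigenbasis via Eq.~\eqref{eq:g-decompose} as $g = \sum_{i=1}^n \gamma_i v_i$ with $\gamma_i = g^T v_i$, and invoking $v_i^T v_j = 0$ for $i\neq j$ together with $v_i^T v_i = 1$, the projection collapses term by term exactly as in the MI computation, producing $g^{(t)}_{\textrm{multi}} = \sum_{i=1}^n (1+\alpha\lambda_i)^{t-1}\,\gamma_i v_i$. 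Identifying $B^{(t)}_{ii} = (1+\alpha\lambda_i)^{t-1}$ then matches the claimed form, and $B^{(t)}_{ii} \in \mathbb{R}$ is immediate.

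There is no genuine obstacle here: the statement reduces to a one-line consequence of Lemma~\ref{lemma:induction} followed by a routine diagonalization, with the simple geometric coefficient $(1+\alpha\lambda_i)^{t-1}$ replacing the polynomial $\sum_s a_s^{(t)}\lambda_i^s$ of the MI case. The only point warranting care is bookkeeping of the step size: the corollary records the coefficient as $(1+\lambda_i)^{t-1}$, so I would either absorb $\alpha$ into the scaling of $H$ (consistent with the normalized comparison used throughout this section) or carry $\alpha$ explicitly and remark that it enters identically in the multi-step and MI perturbations, hence plays no role in the downstream interaction comparison. I would also note in passing that, by Corollary~\ref{corollaryappendix:semi-definite}, each $\lambda_i \geq 0$, so in fact $B^{(t)}_{ii} \geq 0$ whenever $\alpha > 0$, a monotonicity that will be convenient when these coefficients are later compared against those of the MI Attack.
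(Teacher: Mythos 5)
Your proof is correct and follows essentially the same route as the paper's: invoke Lemma~\ref{lemma:induction} to get $g^{(t)}_{\textrm{multi}} = [I+\alpha H]^{t-1}g$, diagonalize via $H=V\Lambda V^T$, and project onto the eigenbasis to read off $B^{(t)}_{ii}=(1+\alpha\lambda_i)^{t-1}$. You are also right to flag the step-size bookkeeping: the corollary as stated writes $(1+\lambda_i)^{t-1}$, but the paper's own appendix derivation (and all downstream uses, e.g.\ Lemma~\ref{lemma:r-d}) carries the $\alpha$, so the statement simply has a dropped factor and your version with $\alpha$ is the intended one.
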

\begin{proof}
    According to Eq.~\eqref{eq:delta_multi}, 
    the update of the perturbation at each step $t$ is represented as
    \begin{equation}
        \begin{split}
        \label{eq:g_pre}
            \Delta x^{(t)} \triangleq \alpha\cdot \nabla_{x'}   \textit{Loss}(x' = x + \delta^{(t-1)}) 
            &= \alpha\cdot \left[I + \alpha H \right]^{t-1}g
            \quad // \quad\text{According to Lemma~\ref{lemma:induction}}\\
            \Rightarrow \quad \nabla_{x'}   \textit{Loss}(x' = x + \delta^{(t-1)}) &= \left[I + \alpha H \right]^{t-1}g.
        \end{split}
    \end{equation}

    In this way, according to Eq.~\eqref{eq:g_pre}, the gradient $ g^{(t)}_{\textrm{mi}}$ for the $t$-th step attack can be written as
       \begin{equation}
        \begin{split}
        \label{eq:g_pre1}
            g^{(t)}_{\textrm{multi}} \triangleq \nabla_{x'}   \textit{Loss}(x' = x + \delta^{(t-1)}) & = \left[I + \alpha H \right]^{t-1}g
             \quad // \quad\text{According to Eq.~\eqref{eq:g_pre}}
             \\
            &=  \left[VV^T + \alpha V \Lambda V^T \right]^{t-1}g \quad // \quad\text{According to Eq.~\eqref{eq:H-decompose}} \\
            &= \left[ V \left(  I + \alpha \Lambda \right) V^T\right]^{t-1} g \\
            &= V \left(  I + \alpha \Lambda \right)^{t-1} V^T g. \\
        \end{split}
    \end{equation}

    For simplicity, let $B^{(t)} \in \mathbb{R}^{n\times n}$ denote the term $\left(  I + \alpha \Lambda \right)^{t-1}$ in Eq.~\eqref{eq:g_pre1},~\textit{i.e.,}
    $B^{(t)}=\left(  I + \alpha \Lambda \right)^{t-1}$.
    Because $\Lambda$ is a diagonal matrix, matrix $B^{(t)}$ is also a diagonal matrix, \emph{i.e.} $\forall i\ne j, B^{(t)}_{ij}=0$, and $B^{(t)}_{ii} =(1 + \alpha \lambda_i)^{t-1}$.

    Then, the gradient $ g^{(t)}_{\textrm{mi}}$ for the $t$-th step attack in Eq.~\eqref{eq:g_pre1} can be further re-written as
    \begin{align} \label{eq:g-multi-decompose}
        g^{(t)}_{\textrm{multi}} &=  V  B^{(t)} V^T g \nonumber \\
         &= [v_1,v_2,\ldots,v_n]
        \text{diag}(B^{(t)}_{11}, B^{(t)}_{22}, \dots, B^{(t)}_{nn})
          [v_1,v_2,\ldots,v_n]^T \sum_{i=1}^n \gamma_i v_i \nonumber \\
    &=
    \left[v_1 B^{(t)}_{11}, v_2 B^{(t)}_{22},\ldots,v_n B^{(t)}_{nn} \right]
    \begin{bmatrix}
         \gamma_1 \\
         \gamma_2 \\
        \vdots \\
        \gamma_n \\
        \end{bmatrix} \nonumber\\
    & = \sum_{i=1}^n B_{ii}^{(t)} \gamma_i v_i.
    \end{align}
    Thus, Corollary~\ref{corollary:multi-eigen-form} is proven.
\end{proof}
$\newline$

\textbf{The perturbation generated by the multi-step attack.}
According to Lemma~\ref{lemma:induction}, 
the perturbation generated by the $m$-step attack in Eq.~\eqref{eq:multi-step-attack-in-mi} can be written as
\begin{equation}
    \begin{split}
        \delta^{(\text{multi})}  \triangleq  \delta^{(m)} &= \sum_{t=1}^m \alpha g^{(t)}_{\textrm{multi}} 
        \quad // \quad\text{According to Eq.~\eqref{eq:g-multi-decompose}}
        \\
        &= \sum_{t=1}^m \alpha  \sum_{i=1}^n B_{ii}^{(t)} \gamma_i v_i 
         \quad // \quad\text{According to Corollary~\ref{corollary:multi-eigen-form}}
         \\
        &= \sum_{i=1}^n D_{ii}^{(m)} \gamma_i v_i,
    \end{split}
\end{equation}
where$D_{ii}^{(m)} =  \sum_{t=1}^m \alpha B_{ii}^{(t)}=\frac{(1 + \alpha \lambda_i)^m - 1}{\lambda_i}$.
$\newline$

 \textbf{Pairwise interactions inside $\delta^{(\text{multi})}$.}
       Based on the perturbation $\delta^{(\text{multi})}$ of the multi-step attack, the interaction between each pair of perturbation units $(a,b)$ can be computed as 
       \begin{equation}  \label{eq: multi-pairwise-interaction}
       I_{ab}(\delta^{(\textrm{multi})}) = \delta^{(\textrm{multi})}_a H_{ab} \delta^{(\textrm{multi})}_b.
       \quad // \quad \text{According to Eq.~\eqref{eq:shapley-interaction-index_2} in Appendix~\ref{append:interaction}}.
       \end{equation}
       Furthermore, we sum the interactions among all pairs of perturbation units $(a, b)$ as follows.
       \begin{equation} \label{eq:interaction-multi_1}
            \sum_{a, b\in\Omega}[{I_{a b}(\delta^{(\text{multi})})}] = \sum_{i=1}^n \lambda_{i}(D^{(m)}_{ii}\gamma_{i})^2.
            \quad // \quad \text{According to Eq.~\eqref{eq:delta-inf-interaction}}.
        \end{equation}

\subsection{Proof of Proposition~\ref{propositionappendix:mi}}
In order to fairly compare the MI Attack and the multi-step attack, we assume that the adversarial strength $\beta=\alpha m$ is the same.
Then, we prove the following proposition, which shows that perturbations generated by the MI Attack exhibit smaller interactions than perturbations generated by the multi-step attack.
\begin{propositionappendix}[Perturbations of the MI Attack exhibit smaller interactions than perturbations of the multi-step attack]
\label{propositionappendix:mi}
    {If the classification loss  for attacking $\textit{Loss}(x)$  is formulated as the cross-entropy loss in Eq.~\eqref{eq:softmax-ce}}, then we have $\sum_{a, b\in\Omega}[{I_{a b}(\delta^{(\text{mi})})}]\le\sum_{a, b\in\Omega}[{I_{a b}(\delta^{(\text{multi})})}]$.
\end{propositionappendix}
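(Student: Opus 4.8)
The plan is to reduce the claim to a clean per-eigendirection comparison and then settle that comparison by induction on the attack step. By the closed-form interaction formulas already derived in Eq.~\eqref{eq:mi_interaction} and Eq.~\eqref{eq:interaction-multi_1}, the two quantities to be compared are $\sum_{a,b\in\Omega}I_{ab}(\delta^{(\text{mi})})=\sum_{i=1}^n\lambda_i(R^{(m)}_{ii}\gamma_i)^2$ and $\sum_{a,b\in\Omega}I_{ab}(\delta^{(\text{multi})})=\sum_{i=1}^n\lambda_i(D^{(m)}_{ii}\gamma_i)^2$. Since $\lambda_i\ge 0$ by Corollary~\ref{corollaryappendix:semi-definite} and $\gamma_i^2\ge 0$, it suffices to prove the termwise bound $0\le R^{(m)}_{ii}\le D^{(m)}_{ii}$ for every $i$: squaring then preserves the inequality, and weighting by the nonnegative factors $\lambda_i\gamma_i^2$ and summing over $i$ yields the proposition. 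Note that the fairness requirement of equal accumulated step size $\beta=\alpha m$ is automatic here, because both attacks are run with the same $\alpha$ and the same number of steps $m$; the only difference is how that budget is distributed across directions.

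Next I would recast $R^{(m)}_{ii}=\sum_{t=1}^m\alpha A^{(t)}_{ii}$ and $D^{(m)}_{ii}=\sum_{t=1}^m\alpha B^{(t)}_{ii}$, so that the termwise bound reduces to a per-step comparison $A^{(t)}_{ii}\le B^{(t)}_{ii}$. The multi-step coefficient obeys the recursion $B^{(t)}_{ii}=1+\alpha\lambda_i\sum_{t'=1}^{t-1}B^{(t')}_{ii}$ with $B^{(1)}_{ii}=1$, which one checks closes to $B^{(t)}_{ii}=(1+\alpha\lambda_i)^{t-1}$; since $\alpha\lambda_i\ge 0$, this is nonnegative and nondecreasing in $t$. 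For the MI coefficient, projecting the normalized-momentum update Eq.~\eqref{eq:our-revised-mi} together with the Taylor identity Eq.~\eqref{eq:grad_taylor} onto the $i$-th eigendirection gives $A^{(t)}_{ii}=\tfrac{t-1}{t}A^{(t-1)}_{ii}+\tfrac1t\big(1+\alpha\lambda_i\sum_{t'=1}^{t-1}A^{(t')}_{ii}\big)$ with $A^{(1)}_{ii}=1$. The sole structural difference from the multi-step recursion is the running-average factor $1/t$, and this is exactly the damping that the momentum introduces.

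The heart of the argument, and the step I expect to be the main obstacle, is the induction $A^{(t)}_{ii}\le B^{(t)}_{ii}$. Assuming $A^{(t')}_{ii}\le B^{(t')}_{ii}$ for all $t'<t$ and substituting these bounds term by term into the MI recursion, one obtains $A^{(t)}_{ii}\le \tfrac{t-1}{t}B^{(t-1)}_{ii}+\tfrac1t\big(1+\alpha\lambda_i\sum_{t'=1}^{t-1}B^{(t')}_{ii}\big)$. The key observation is that the parenthesized expression is precisely $B^{(t)}_{ii}$ by the multi-step recursion, so the bound becomes $A^{(t)}_{ii}\le\tfrac{t-1}{t}B^{(t-1)}_{ii}+\tfrac1t B^{(t)}_{ii}$; invoking the monotonicity $B^{(t-1)}_{ii}\le B^{(t)}_{ii}$ then collapses the right-hand side to $B^{(t)}_{ii}$, closing the induction. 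Summing over $t$ with nonnegativity of every $A^{(t)}_{ii},B^{(t)}_{ii}$ yields $0\le R^{(m)}_{ii}\le D^{(m)}_{ii}$, completing the proof. The two delicate points are thus (i) correctly deriving the eigendirection recursion for $A^{(t)}_{ii}$ from the accumulated-and-normalized gradient, and (ii) spotting the ``absorption'' identity $1+\alpha\lambda_i\sum_{t'<t}B^{(t')}_{ii}=B^{(t)}_{ii}$ which, together with the monotonicity of $B^{(t)}_{ii}$, makes the one-line collapse possible.
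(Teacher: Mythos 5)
Your proposal is correct and follows essentially the same route as the paper's proof: reduce to the per-eigendirection coefficients via $\sum\lambda_i(R^{(m)}_{ii}\gamma_i)^2$ versus $\sum\lambda_i(D^{(m)}_{ii}\gamma_i)^2$, then establish $A^{(t)}_{ii}\le B^{(t)}_{ii}$ by induction on $t$ (the paper's Lemma on $A^{(t)}_{ii}\le B^{(t)}_{ii}$). Your ``absorption identity plus monotonicity'' collapse is just a cleaner packaging of the paper's explicit geometric-series computation ending in $(1+\alpha\lambda_i)^{t-2}\bigl[1+\tfrac{\alpha\lambda_i}{t}\bigr]\le(1+\alpha\lambda_i)^{t-1}$; the two are the same calculation.
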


\begin{proof}
    According to Eq.~\eqref{eq:mi_interaction} and Eq.~\eqref{eq:interaction-multi_1}, the difference between interactions $\sum_{a, b\in\Omega}[{I_{a b}(\delta^{(\text{multi})})}]$ for the multi-step attack and interactions $\sum_{a, b\in\Omega}[{I_{a b}(\delta^{(\text{mi})})}]$ for the MI Attack is represented as
    \begin{align} \label{eq:diff-multi-mi}
        &\sum_{a, b\in\Omega}[{I_{a b}(\delta^{(\text{multi})})}]-\sum_{a, b\in\Omega}[{I_{a b}(\delta^{(\text{mi})})}] \nonumber \\
        &= (\delta^{(\text{multi})})^T H (\delta^{(\text{multi})})- (\delta^{(\text{mi})})^T H (\delta^{(\text{mi})})  
         \quad // \quad \text{According to Eq.~\eqref{eq: mi-pairwise-interaction} and Eq.~\eqref{eq: multi-pairwise-interaction}}
        \nonumber \\
    &= \sum_{i=1}^n \lambda_i (D_{ii}^{(m)} \gamma_i)^2 - \sum_{i=1}^n \lambda_i (R_{ii}^{(m)} \gamma_i)^2  
    \quad // \quad \text{According to Eq.~\eqref{eq:mi_interaction} and Eq.~\eqref{eq:interaction-multi_1}}
    \nonumber \\
    &= \sum_{i=1}^n \lambda_i \gamma_i^2 (D_{ii}^{(m)} -R_{ii}^{(m)} ) (D_{ii}^{(m)} + R_{ii}^{(m)}).
\end{align}

    Here, we prove that the difference is non-negative, because each term in Eq.~\eqref{eq:diff-multi-mi} is proven to be non-negative, as follows.

    \textbullet{ $\lambda_i \ge 0$.} According to Corollary~\ref{corollaryappendix:semi-definite}, the Hessian matrix $H=\nabla_x^2 \textit{Loss}(x)$ is positive semi-definite.
    Thus, all eigenvalues of the Hessian matrix $H$ are non-negative, \emph{i.e.} $\forall i, \lambda_i\ge 0$.

    \textbullet{ $ \gamma_i^2 \ge 0$.}

    \textbullet{ $D_{ii}^{(m)} +R_{ii}^{(m)} \ge 0$.} 
    This is because both the term $D_{ii}^{(m)} \ge 0$ and the term $R_{ii}^{(m)} \ge 0$ are positive.
    Specifically, according to Corollary~\ref{corollary:multi-eigen-form}, the term $D_{ii}^{(m)}=\alpha\sum_{t=1}^{m}  B_{ii}^{(t)} = \alpha\sum_{t=1}^{m}  (1  +\alpha \lambda_i)^{t-1} \ge 0$ is positive.
    Besides, according to Lemma~\ref{corollary:mi-eigen-form}, the term $R_{ii}^{(m)} = \alpha\sum_{t=1}^{m}  A_{ii}^{(t)} \ge 0$ is positive, because $A_{ii}^{(t)} \ge 0$.
    Thus, $D_{ii}^{(m)} +R_{ii}^{(m)} \ge 0$.

    \textbullet{ $D_{ii}^{(m)} -R_{ii}^{(m)} \ge 0$.} 
     The term $D_{ii}^{(m)} -R_{ii}^{(m)} = \alpha\sum_{t=1}^{m}(  B_{ii}^{(t)} -  A_{ii}^{(t)} ) \ge 0$ is positive, because the term $ B_{ii}^{(t)} -  A_{ii}^{(t)} \ge 0$ is positive, which is proven in Lemma~\ref{lemma:r-d} as follows.

\begin{lemma} \label{lemma:r-d}
    Given  $g^{(t)}_{\textrm{mi}} = \sum_{i=1}^n A^{(t)}_{ii} \gamma_i v_i$ and  $g^{(t)}_{\textrm{multi}} = \sum_{i=1}^n B^{(t)}_{ii} \gamma_i v_i$, we have $\forall i, A^{(t)}_{ii} \le B^{(t)}_{ii}$.
\end{lemma}

\begin{proof}
    We prove Lemma~\ref{lemma:r-d} by induction.

    \textit{Base case}:
    When the step $t=1$, we have   $g^{(1)}_{\textrm{mi}} = g^{(1)}_{\textrm{multi}}  = g$.
    Hence, $\forall i, A^{(1)}_{ii} = B^{(1)}_{ii}$.

    \textit{Inductive step}:
    For $1< t' < t$, we assume $\forall i, A^{(t')}_{ii} \le B^{(t')}_{ii} = (1 + \alpha \lambda_i)^{t'-1}$.
    Thus, at the step $t$, according to Eq.~\eqref{eq:our-revised-mi}, the gradient for the MI Attack can be re-written as
    \begin{align}
    \label{eq:g_mi_new}
        g_{\text{mi}}^{(t)} &= \mu g_{\textrm{mi}}^{(t-1)} + {(1-\mu)} \nabla_{x'}\textit{Loss}(x'=x+\delta^{(t-1)}_{\textrm{mi}}) 
        \quad // \quad \text{According to Eq.~\eqref{eq:our-revised-mi}}
        \nonumber \\
        &= \frac{t-1}{t}  g_{\textrm{mi}}^{(t-1)} +  \frac{1}{t}\nabla_{x'}\textit{Loss}(x'=x+\delta^{(t-1)}_{\textrm{mi}}) \nonumber \\
        &= \frac{t-1}{t}  g_{\textrm{mi}}^{(t-1)} +  \frac{1}{t} \left[ g + H\delta^{(t-1)}_{\textrm{mi}}  \right] \quad // \quad \text{According to Eq.~\eqref{eq:grad_taylor}} \nonumber \\
        &= \frac{t-1}{t}  g_{\textrm{mi}}^{(t-1)} +  \frac{1}{t} \left[ g + H\sum_{t'=1}^{t-1} \alpha \cdot g^{(t')}_{\text{mi}}  \right] \quad // \quad \text{According to Eq.~\eqref{eq:mi_update}} \nonumber \\
        &= \frac{t-1}{t}  g_{\textrm{mi}}^{(t-1)} +  \frac{1}{t} \left[ \sum_{i=1}^n \gamma_i v_i + \left(\sum_{i=1}^n \lambda_i v_i v_i^T\right)\sum_{t'=1}^{t-1} \alpha\cdot  g^{(t')}_{\text{mi}}  \right] \quad // \quad \text{According to Eq.~\eqref{eq:H-decompose} and Eq.~\eqref{eq:g-decompose} } \nonumber \\
        &= \frac{t-1}{t}  \sum_{i=1}^{n} A_{ii}^{(t-1)}\gamma_i v_i  +  \frac{1}{t} \left[ \sum_{i=1}^n \gamma_i v_i + \left(\sum_{i=1}^n \lambda_i v_i v_i^T\right)\sum_{t'=1}^{t-1} \alpha \sum_{i=1}^{n} A_{ii}^{(t')}\gamma_i v_i  \right]  \quad // \quad \text{According to Corollary~\ref{corollary:mi-eigen-form}} \nonumber \\
        &= \frac{t-1}{t}  \sum_{i=1}^{n} A_{ii}^{(t-1)}\gamma_i v_i  +  \frac{1}{t} \left[ \sum_{i=1}^n \gamma_i v_i +\sum_{t'=1}^{t-1} \alpha \sum_{i=1}^{n} \lambda_i A_{ii}^{(t')}\gamma_i v_i  \right] \nonumber \\
        &=  \sum_{i=1}^{n}   \frac{1}{t}\left[  (t-1) A_{ii}^{(t-1)} + 1 + \sum_{t'=1}^{t-1} \alpha \lambda_i A_{ii}^{(t')} \right]\gamma_i v_i.
    \end{align}

  Moreover, according to Corollary~\ref{corollary:mi-eigen-form}, the gradient for the $t$-step MI Attack can also be represented as $ g_{\text{mi}}^{(t)} = \sum_{i=1}^{n}  A_{ii}^{(t)}  \gamma_i v_i$.
  Hence, based on Corollary~\ref{corollary:mi-eigen-form} and Eq.~\eqref{eq:g_mi_new}, we obtain
   \begin{align}
   g_{\text{mi}}^{(t)} =  \sum_{i=1}^{n}  A_{ii}^{(t)}\gamma_i v_i 
   &= \sum_{i=1}^{n}   \frac{1}{t}\left[  (t-1) A_{ii}^{(t-1)} + 1 + \sum_{t'=1}^{t-1} \alpha \lambda_i A_{ii}^{(t')} \right]\gamma_i v_i 
   \quad // \quad \text{According to Eq.~\eqref{eq:g_mi_new} and Corollary~\ref{corollary:mi-eigen-form}}
   \nonumber \\
   \Rightarrow \quad A_{ii}^{(t)} &=  \frac{1}{t}  \left[   (t-1) A_{ii}^{(t-1)} + 1 + \sum_{t'=1}^{t-1} \alpha \lambda_i A_{ii}^{(t')} \right] 
   \nonumber \\
    &\le \frac{1}{t}  \left[   (t-1) B_{ii}^{(t-1)} + 1 + \sum_{t'=1}^{t-1} \alpha \lambda_i B_{ii}^{(t')} \right] \quad // \quad \text{For $1< t' < t$, $\forall i, A^{(t')}_{ii} \le B^{(t')}_{ii}$; \; $\alpha, \lambda_i \ge 0$. }
     \nonumber \\
    &= \frac{1}{t}  \left[   (t-1) (1+\alpha \lambda_i)^{t-2} + 1 + \sum_{t'=1}^{t-1} \alpha \lambda_i (1  + \alpha \lambda_i)^{t'-1} \right] \quad // \quad \text{$B^{(t')}_{ii} = (1 + \alpha \lambda_i)^{t'-1}$. }
     \nonumber \\
    &= \frac{1}{t}  \left[   (t-1) (1+\alpha \lambda_i)^{t-2} + 1 +  \alpha \lambda_i \frac{1 - (1+\alpha \lambda_i)^{t-1}}{-\alpha \lambda_i} \right] \nonumber \\
    &= (1+\alpha \lambda_i)^{t-2} \left[ \frac{t-1}{t} + \frac{1+\alpha \lambda_i}{t} \right]  \nonumber \\
    &= (1+\alpha \lambda_i)^{t-2} \left[ 1 + \frac{\alpha \lambda_i}{t} \right]  \nonumber \\
    &\le (1+\alpha \lambda_i)^{t-2} \left[1 + \alpha \lambda_i \right]
     \nonumber \\
    &= (1+\alpha \lambda_i)^{t-1} = B_{ii}^{(t)}.
   \end{align}

    \textit{Conclusion}: Because both the base case and the inductive step are proven to be true, we have $\forall i, A^{(t)}_{ii} \le B^{(t)}_{ii} = (1 + \alpha \lambda_i)^{t-1}$.

    Thus, Lemma~\ref{lemma:r-d} is proven.
\end{proof}

$ $\newline
In summary, the difference between interactions $\sum_{a, b\in\Omega}[{I_{a b}(\delta^{(\text{multi})})}]$ for the multi-step attack and interactions $\sum_{a, b\in\Omega}[{I_{a b}(\delta^{(\text{mi})})}]$ for the MI Attack in Eq.~\eqref{eq:diff-multi-mi} is proven to be positive.

\begin{align}
        &\sum_{a, b\in\Omega}[{I_{a b}(\delta^{(\text{multi})})}]-\sum_{a, b\in\Omega}[{I_{a b}(\delta^{(\text{mi})})}] \nonumber \\
    &= \sum_{i=1}^n \underbrace{\lambda_i}_{\ge 0} \underbrace{\gamma_i^2}_{\ge 0} \underbrace{(\alpha\sum_{t=1}^{m}  B_{ii}^{(t)} -  A_{ii}^{(t)} )}_{\ge 0 } \underbrace{ (\alpha\sum_{t=1}^{m}  B_{ii}^{(t)} +  A_{ii}^{(t)} )}_{\ge 0}  \nonumber \\
    & \ge 0
\end{align}

Thus, Proposition~\ref{pro:mi} in Section~\ref{sec:explaining} is proven.
\end{proof}

\section{Proof that the LinBP Attack generates perturbations with smaller interactions than those of the multi-step attack}
\label{append:lbp}

In this section, we prove Proposition~\ref{theoremappendix:linbp} in Section~\ref{sec:explaining}, which shows that perturbations of the LinBP Attack~\cite{guo2020backpropagating}  exhibit smaller interactions than those of the multi-step attack.

Before the proof of Proposition~\ref{theoremappendix:linbp}, we first revisit the LinBP Attack.

\subsection{LinBP Attack}

The LinBP Attack~\cite{guo2020backpropagating} boosts the adversarial transferability via  \textit{linear back-propagation}.
In contrast to standard back-propagation, linear back-propagation is referred to as the process in which the gradient back-propagates linearly as if  no activation function is encountered.
In addition, in the LinBP Attack, the forward propagation is conducted in the same way as the traditional forward propagation.

\textbf{Forward propagation.}
In \cite{guo2020backpropagating}, Guo~\textit{et al} consider a simplified stacked neural network, without bias terms.
Specifically, given an input sample $x\in\mathbb{R}^n$, the network output $z$ of a DNN with $d$ layers is given as follows.
\begin{equation}
    z = W_d^T \sigma(W_{d-1}^T\dots \sigma( W_1^T x)),
\end{equation}
where $\sigma(\cdot)$ denotes the activation function.

If ReLU is selected as the activation, as discussed in Appendix~\ref{appendix:positive-semi-definite}, the network output $z$ can be re-written as follows.
\begin{equation}
    z = W_d^T \Sigma_{d-1} W_{d-1}^T  \dots \Sigma_{1} W_1^T x,
\end{equation}
where $\Sigma_l = diag\bigg((\Sigma_l)_{11},(\Sigma_l)_{22},\cdots,(\Sigma_l)_{n_l n_l}\bigg) \in \mathbb{R}^{n_l\times n_l}$ represents the activation state of the $l$-th ReLU layer, $(\Sigma_l)_{ii}\in\{0, 1\}$.

\textbf{Standard back-propagation.}
Given the classification loss  $\textit{Loss}(x)$, the gradient of the loss function $\nabla_x \textit{Loss}(x)$~\textit{w.r.t.} the input sample $x$ is computed as follows.
\begin{equation} \label{eq:std-bp}
\begin{aligned}
    \nabla_x \textit{Loss}(x) &= \left(\frac{\partial \textit{Loss}}{\partial z^T} \frac{\partial z} {\partial x^T}\right)^T
    \\
    & = \left(\frac{\partial \textit{Loss}}{\partial z^T} W_d^T \Sigma_{d-1} W_{d-1}^T  \dots \Sigma_{1} W_1^T \right)^T.
\end{aligned}
\end{equation}

\textbf{Linear back-propagation.}
Compared with standard back-propagation, linear back-propagation in the LinBP Attack~\cite{guo2020backpropagating},  is computed by skipping all ReLU functions between the $k$-th  and the $d$-th layer.
In this way, the gradient computed with linear back-propagation is given as follows.
\begin{equation} \label{eq:linear-bp}
\begin{aligned}
    \tilde{\nabla}_x \textit{Loss} \triangleq  \left(\frac{\partial \textit{Loss}}{\partial z^T} \left(\prod_{l=k}^{d} W_{l}^T \right) \left(\prod_{l=1}^{k-1} \Sigma_{l} W_{l}^T\right) \right)^T.
\end{aligned}
\end{equation}

\textbf{LinBP Attack.}
Based on linear back-propagation, the adversarial perturbation generated after $t$-step LinBP Attack~\cite{guo2020backpropagating} is represented as follows.
\begin{equation} 
\begin{aligned}
\label{eq:ori-linbp}
    \delta^{(t)}_{\text{lbp-ori}} =& \operatorname{clip}\left\{\delta^{(t-1)}_{\text{lbp-ori}} + \alpha \cdot \operatorname{sign}(g^{(t-1)}_{\text{lbp-ori}})\right\},
    \\
    g^{(t-1)}_{\text{lbp-ori}} &\triangleq \tilde{\nabla}_{x'} \textit{Loss}(x'=x + \delta^{(t-1)}_{\text{lbp-ori}}).
 \end{aligned}
\end{equation}

For the LinBP Attack in Eq.~\eqref{eq:ori-linbp}, both the sign operation and the clip operation are used to constrain the adversarial perturbation $\delta^{(t)}_{\text{lbp-ori}}$.
In fact, the sign operation, and the clip operation are common operations in generating adversarial perturbations.
Therefore, the distinctive technique in the LinBP Attack is linear back-propagation $g^{(t-1)}_{\text{lbp-ori}}$, which replaces the standard backward propagation to craft the perturbation.
Thus, to simplify the proof, we ignore the sign operation and the clip operation, without hurting the trustworthiness of the analysis of the LinBP Attack.
In this way, we focus on a simplified version of the LinBP Attack, as follows.
\begin{equation} 
\begin{aligned}
\label{eq:linbp}
    \delta^{(t)}_{\text{lbp}} &= \delta^{(t-1)}_{\text{lbp}} + \alpha \cdot g^{(t-1)}_{\text{lbp}}
    \\
    g^{(t-1)}_{\text{lbp}} &\triangleq \tilde{\nabla}_{x'} \textit{Loss}(x'=x + \delta^{(t-1)}_{\text{lbp}}).
\end{aligned}
\end{equation}

\textbf{Perturbation generated by the LinBP Attack.}
We use the infinite-step adversarial attack with an infinitesimal step size to simplify the LinBP Attack.
Let $\beta = \alpha m$.
Based on Eq.~\eqref{eq:linbp}, the perturbation generated by the LinBP Attack can be calculated as
\begin{equation} \label{eq:lbp-infty}
    \delta^{(\text{lbp})} =\lim_{m\rightarrow +\infty} \delta_{\textrm{lbp}}^{(m)}.
\end{equation}

Note that the eigenvectors $\{v_1,v_2,\ldots,v_n\}$ of the Hessian matrix $H=\nabla_x^2 \textit{Loss}(x)$ can be regarded as an orthonormal basis.
In addition, note that $H$ is calculated for the gradient in Eq.~\eqref{eq:std-bp}, instead of the gradient in Eq.~\eqref{eq:linear-bp}.
Moreover, the Hessian matrix calculated based on Eq.~\eqref{eq:linear-bp} is not the real Hessian matrix of LinBP, because the forward propagation and back propagation in LinBP correspond to two different functions.
In this way, the perturbation $\delta^{(\text{lbp})}$ of the $m$-step LinBP Attack in Eq.~\eqref{eq:lbp-infty} can be re-written as
\begin{equation} \label{eq:perturbation-lbp}
    \delta^{(\text{lbp})} = \sum_{k=1}^n \gamma^{(\text{lbp})}_k v_k,
\end{equation}
where $\gamma^{(\text{lbp})}_k = (\delta^{(\text{lbp})})^T v_k \in \mathbb{R}$.

\textbf{Pairwise interactions inside perturbation $\delta^{(\text{lbp})}$.}
Based on perturbations of the LinBP Attack $\delta^{(\text{lbp})}$, the interaction between each pair of perturbation units $(a,b)$ can be computed as
\begin{equation} \label{eq:pair-inte-linbp}
I_{ab}(\delta^{(\text{lbp})}) = \delta^{(\text{lbp})}_{a} H_{ab} \delta^{(\text{lbp})}_{b}.
\quad // \quad \text{According to Eq.~\eqref{eq:shapley-interaction-index_2} in Appendix~\ref{append:interaction}}.
\end{equation}

In this scenario, the sum of interactions among all pairs of perturbation units $(a,b)$ in $\delta^{(\text{lbp})}$ is given as follows.
\begin{equation}  \label{eq:lbp-interaction}
    \begin{split}
        &\sum_{a, b\in \Omega} I_{ab}(\delta^{(\text{lbp})}) \\
        &= (\delta^{(\text{lbp})})^T H \delta^{(\text{lbp})}
        \quad // \quad \text{According to Eq.~\eqref{eq:pair-inte-linbp}}
        \\
        &= \left( \sum_{i=1}^n \gamma^{(\text{lbp})}_{i} v_{i}^T\right) \left(\sum_{j=1}^n \lambda_j v_j v_j^T \right) \left( \sum_{k=1}^n \gamma^{(\text{lbp})}_{k} v_{k}\right)
        \quad // \quad \text{According to Eq.~\eqref{eq:H-decompose} and Eq.~\eqref{eq:perturbation-lbp}}
        \\
        &=\left[\sum_{i=1}^n \underbrace{ ( \gamma^{(\text{lbp})}_{i} v_{i}^T)( \lambda_{i} v_{i} v_{i}^T) }_{\text{$ v_{i}^T v_{i}=1$}}+ \sum_{1\le i,j\le n \atop i\ne j} \underbrace{( \gamma^{(\text{lbp})}_{i} v_{i}^T)( \lambda_{j} v_{j} v_{j}^T)}_{\text{if $i\ne j, v_{i}^T v_{j}=0$}} \right] (\sum_{k=1}^n \gamma^{(\text{lbp})}_{k} v_{k}) \\
        &=(\sum_{i=1}^n \lambda_{i}\gamma^{(\text{lbp})}_{i} v_{i}^T ) ( \sum_{k=1}^n \gamma^{(\text{lbp})}_{k} v_{k})\\
        &=\sum_{i=1}^n \underbrace{ ( \lambda_{i}\gamma^{(\text{lbp})}_{i} v_{i}^T)( \gamma^{(\text{lbp})}_{i} v_{i})}_{\text{$ v_{i}^T v_{i}=1$}}+ 
        \sum_{1\le i, k \le n \atop i\ne k} \underbrace{ ( \lambda_{i}\gamma^{(\text{lbp})}_{i} v_{i}^T)(\gamma^{(\text{lbp})}_{k} v_{k})}_{\text{if $i\ne k, v_{i}^T v_{k}=0$}} \\
            &= \sum_{i=1}^n \lambda_{i} (\gamma^{(\text{lbp})}_{i})^2.
    \end{split}
\end{equation}

\subsection{Multi-step attack}
In this subsection, we revisit the perturbation of the multi-step attack, which is defined in Appendix~\ref{append:delta-closed-form}.

\textbf{Perturbation generated by the multi-step attack.}
According to Eq.~\eqref{eq:multi-step-attack}, the perturbation generated by the $\infty$-step attack can be represented as 
\begin{equation} 
\label{eq:multi-in-linbp-1}
\delta^{(\text{multi})}  \triangleq  \delta^{(\infty)} 
=  \sum_{i=1}^n D_{ii} \gamma_i v_i.
\end{equation}
where $\beta=\alpha m$; $\lambda_{i}$ and $v_i$ denote the $i$-th eigenvalue of the Hessian matrix $H=\nabla_x^2 \textit{Loss}(x)$ and its corresponding eigenvector, respectively.
If $\lambda_i\ne0$, $D_{ii}=\frac{\exp(\beta\lambda_i)-1}{\lambda_i}$; otherwise, $D_{ii}=\beta$. 
$\gamma_i = g^T v_i \in \mathbb{R}$ represents the projection of the gradient $g=\nabla_x \textit{Loss}(x)$ on the eigenvector $v_i$.
$\newline$

 \textbf{Pairwise interaction inside $\delta^{(\text{multi})}$.}
       Based on the perturbation $\delta^{(\text{multi})}$ of the multi-step attack, the interaction between each pair of perturbation units $(a,b)$ can be computed as 
       \begin{equation}
       I_{ab}(\delta^{(\textrm{multi})}) = \delta^{(\textrm{multi})}_a H_{ab} \delta^{(\textrm{multi})}_b.
       \quad // \quad \text{According to Eq.~\eqref{eq:shapley-interaction-index_2} in Appendix~\ref{append:interaction}}.
       \end{equation}
       Furthermore, we sum the interactions among all pairs of perturbation units $(a, b)$ as follows.
       \begin{equation} \label{eq:interaction-multi-in-lbp-1}
            \sum_{a, b\in\Omega}[{I_{a b}(\delta^{(\text{multi})})}] = \sum_{i=1}^n \lambda_{i}(D_{ii}\gamma_{i})^2.
            \quad // \quad \text{According to Eq.~\eqref{eq:delta-inf-interaction}}.
        \end{equation}

\textbf{Fair comparisons between the LinBP Attack and the multi-step attack.}
In order to conduct fair comparisons between interactions $\sum_{a, b\in\Omega}[{I_{a b}(\delta^{(\text{multi})})}]$ based on perturbations $\delta^{(\text{multi})}$ and interactions 
$\sum_{a, b\in\Omega}[{I_{a b}(\delta^{(\text{lbp})})}]$ based on perturbations $\delta^{(\text{lbp})}$, we should control the magnitudes of these two perturbations to be the same level.
Specifically, we control the $L_2$ norm of the perturbation $\delta^{(\text{lbp})}$ to be the same as that of the perturbation $\delta^{(\textrm{multi})}$,~\textit{i.e.,} $\|\delta^{(\textrm{multi})}\|_2 = \| \delta^{(\text{lbp})}\|_2$.
Note that the $L_2$ norm of the perturbation $\delta^{(\text{multi})}$ is determined by the adversarial strength $\beta = \alpha m$.
In this way, we can control $\|\delta^{(\textrm{multi})}\|_2 = \| \delta^{(\text{lbp})}\|_2$ by setting the adversarial strength $\beta = \alpha m$ of the multi-step attack to a proper value.
$ $\newline

\subsection{Proof of Proposition~\ref{theoremappendix:linbp}}

In this subsection, we prove Proposition~\ref{theoremappendix:linbp} in Section~\ref{sec:explaining} under the condition that $\|\delta^{(\textrm{multi})}\|_2 = \| \delta^{(\text{lbp})}\|_2$. 
To simplify the story, we prove Proposition~\ref{theoremappendix:linbp} based on binary classification.
In this scenario, we have the following lemma.
$ $\newline

\begin{lemma} \label{lemma:H-binary}
    Based on Assumption~\ref{assumption:activation}, if the loss function is formulated as the cross-entropy loss on the sigmoid function, then the forward propagation can be written as $z =\tilde{w}_x^T x + b\in\mathbb{R}$, where $\tilde{w}_x^T =  W_d^T \Sigma_{d-1} W_{d-1}^T \dots \Sigma_{1} W_{1}^T$.
    Then, the Hessian matrix $H=\nabla^2_x \textit{Loss}(x)$ has a single positive eigenvalue and $n-1$ zero eigenvalues.
    Without loss of generality, let $\lambda_1 > 0$ and $\forall 2\le i \le n, \lambda_i=0$.
    The value of $\lambda_1$  is given as $\lambda_1 = \frac{\partial^2 \textit{Loss}(x)}{\partial z^2}\|\tilde{w}_x\|_2^2$.
    The eigenvector $v_1$ is give as $v_1=\frac{\tilde{w}_x}{\|\tilde{w}_x\|_2}$.
\end{lemma}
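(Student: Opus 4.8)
The plan is to reduce the statement to the spectral analysis of a scaled rank-one outer product. First I would invoke Lemma~\ref{lemma:hessian-closed-form}, which under Assumption~\ref{assumption:activation} gives the factorization $H = g_z \nabla_z^2 \textit{Loss}(x) g_z^T$, where $g_z = \nabla_x z$. The crucial simplification in the binary setting is that the network output $z = \tilde{w}_x^T x + b$ is a scalar, so $g_z = \nabla_x z = \tilde{w}_x \in \mathbb{R}^n$ is simply a column vector, while $\nabla_z^2 \textit{Loss}(x) = \partial^2 \textit{Loss}(x)/\partial z^2$ is a scalar. Substituting, the Hessian collapses to the rank-one form
\[
    H = \frac{\partial^2 \textit{Loss}(x)}{\partial z^2}\, \tilde{w}_x \tilde{w}_x^T.
\]

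Next I would read off the spectrum of this matrix directly. Writing $c = \partial^2 \textit{Loss}(x)/\partial z^2$ and $u = \tilde{w}_x$, the matrix $c\, u u^T$ satisfies $(c\, u u^T) u = c\|u\|_2^2\, u$, so the unit vector $v_1 = u/\|u\|_2 = \tilde{w}_x/\|\tilde{w}_x\|_2$ is an eigenvector with eigenvalue $\lambda_1 = c\|u\|_2^2 = \frac{\partial^2 \textit{Loss}(x)}{\partial z^2}\|\tilde{w}_x\|_2^2$. Moreover, for any vector $w$ orthogonal to $u$ we have $(c\, u u^T) w = c\, u (u^T w) = 0$, and since the orthogonal complement of $u$ is $(n-1)$-dimensional, this accounts for $n-1$ zero eigenvalues. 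This yields exactly one nonzero eigenvalue and $n-1$ zero eigenvalues, as claimed.

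It then remains to verify that $\lambda_1$ is strictly positive. Here I would reuse the scalar computation already carried out in Appendix~\ref{appendix:positive-semi-definite}: for the cross-entropy loss on the sigmoid function in Eq.~\eqref{eq:sigmoid-ce}, one has $\nabla_z^2 \textit{Loss}(x) = \exp(z)/(1+\exp(z))^2 > 0$. Combined with $\|\tilde{w}_x\|_2^2 > 0$ (assuming the effective weight vector $\tilde{w}_x$ does not vanish, i.e. the input is not a degenerate stationary point), this gives $\lambda_1 > 0$ and completes the argument.

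The point requiring care is not any hard estimate but the bookkeeping around the rank-one structure: one must observe that the binary output being a genuine scalar is precisely what forces $g_z$ to be a vector rather than a matrix, which is what makes $H$ rank one. In the multi-category case the same factorization instead yields a full-rank positive semi-definite Hessian (as in Corollary~\ref{corollaryappendix:semi-definite}), so the collapse to a single positive eigenvalue is a genuine feature of the binary setting rather than a general fact, and the statement should be read with that scope in mind.
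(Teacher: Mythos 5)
Your proposal is correct and follows essentially the same route as the paper: both reduce $H$ to the rank-one form $\frac{\partial^2 \textit{Loss}(x)}{\partial z^2}\,\tilde{w}_x\tilde{w}_x^T$ via Lemma~\ref{lemma:hessian-closed-form}, read off $v_1=\tilde{w}_x/\|\tilde{w}_x\|_2$ and $\lambda_1=\frac{\partial^2 \textit{Loss}(x)}{\partial z^2}\|\tilde{w}_x\|_2^2$, and account for the $n-1$ zero eigenvalues from the rank-one structure (the paper cites positive semi-definiteness from Corollary~\ref{corollaryappendix:semi-definite}, while you verify $\exp(z)/(1+\exp(z))^2>0$ directly, which is the same underlying computation). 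Your closing caveat about the non-vanishing of $\tilde{w}_x$ matches the paper's own condition.
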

\begin{proof}
According to Lemma~\ref{lemma:hessian-closed-form}, the Hessian matrix $H=\nabla_x^2 \textit{Loss}(x)$ is given as follows.
    \begin{equation}
    \begin{split}
    \label{eq:hessian-in-lbp-1}
        H = \nabla_x^2 \textit{Loss}(x) &= g_z \nabla_z^2 \textit{Loss}(x) g_z^T
        \\
        &= \frac{\partial z}{\partial x} \frac{\partial^2 \textit{Loss}(x)}{\partial z^2} (\frac{\partial z}{\partial x})^T
        \\
        &= \tilde{w}_x \frac{\partial^2 \textit{Loss}(x)}{\partial z^2} \tilde{w}_x^T.
    \end{split}
    \end{equation}
    
    Note that $\frac{\partial^2 \textit{Loss}(x)}{\partial z^2}\in\mathbb{R}$ is a scalar, and $\tilde{w}_x\in\mathbb{R}^n$ is a vector.
Thus, if $\tilde{w}_x \ne\mathbf{0}$, then $\text{rank}(H)=1$.
Moreover, according to Corollary~\ref{corollaryappendix:semi-definite}, $H$ is positive semi-definite.
Thus, $H$ has a single positive eigenvalue and $n-1$ zero eigenvalues.

Then, based on Eq.~\eqref{eq:hessian-in-lbp-1}, eigenvalue $\lambda_1$ and its corresponding eigenvector $v_1$ of $H$ can be represented as
\begin{equation}
\begin{aligned}
\label{eq:hessian_lambda_v_lbp}
    H &= \tilde{w}_x\frac{\partial^2 \textit{Loss}(x)}{\partial z^2} \tilde{w}_x^T \\
    \Rightarrow \quad H \frac{\tilde{w}_x}{\|\tilde{w}_x\|_2}& = \tilde{w}_x \frac{\partial^2 \textit{Loss}(x)}{\partial z^2} \tilde{w}_x^T \frac{\tilde{w}_x}{\|\tilde{w}_x\|_2} =  \underbrace{\frac{\partial^2 \textit{Loss}(x)}{\partial z^2}\|\tilde{w}_x\|_2^2}_{\lambda_1} \underbrace{\frac{\tilde{w}_x}{\|\tilde{w}_x\|_2}}_{v_1}.
    \\
    \Rightarrow \quad 
    \lambda_1 &= \frac{\partial^2 \textit{Loss}(x)}{\partial z^2}\|\tilde{w}_x\|_2^2,\;\;
    v_1 = \frac{\tilde{w}_x}{\|\tilde{w}_x\|_2}.
\end{aligned}    
\end{equation}
\end{proof}
$ $\newline

\begin{propositionappendix}[Perturbations of the LinBP Attack exhibit smaller interactions than those of the multi-step attack]
    \label{theoremappendix:linbp}
    {If the classification loss  for attacking $\textit{Loss}(x)$  is formulated as the cross-entropy loss on the sigmoid function for binary classification in Eq.~\eqref{eq:sigmoid-ce}}, we use the infinite-step adversarial attack with an infinitesimal step size to simplify the LinBP Attack and the multi-step attack.
  Then under the condition that $\|\delta^{(\textrm{multi})}\|_2 = \| \delta^{(\text{lbp})}\|_2$,  we have $\sum_{a, b\in\Omega}[{I_{a b}(\delta^{(\text{lbp})})}]\le\sum_{a, b\in\Omega}[{I_{a b}(\delta^{(\text{multi})})}]$.
\end{propositionappendix}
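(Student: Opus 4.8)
The plan is to exploit the rank-one structure of the Hessian in binary classification, which collapses both interaction sums to a single scalar comparison. By Lemma~\ref{lemma:H-binary}, under the sigmoid cross-entropy loss the Hessian $H=\nabla_x^2\textit{Loss}(x)$ has exactly one positive eigenvalue $\lambda_1>0$ with eigenvector $v_1=\tilde{w}_x/\|\tilde{w}_x\|_2$, while $\lambda_2=\cdots=\lambda_n=0$. Substituting this into the closed-form interaction expressions already derived in Eq.~\eqref{eq:interaction-multi-in-lbp-1} and Eq.~\eqref{eq:lbp-interaction}, every term with $i\ge 2$ vanishes, so that
\begin{equation}
\sum_{a,b\in\Omega} I_{ab}(\delta^{(\text{multi})}) = \lambda_1 (D_{11}\gamma_1)^2, \qquad \sum_{a,b\in\Omega} I_{ab}(\delta^{(\text{lbp})}) = \lambda_1 (\gamma^{(\text{lbp})}_1)^2.
\end{equation}
Since $\lambda_1\ge 0$, the entire claim reduces to showing $(\gamma^{(\text{lbp})}_1)^2 \le (D_{11}\gamma_1)^2$.

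Next I would pin down the geometry of the multi-step perturbation. In binary classification the standard gradient is $g=\nabla_x\textit{Loss}(x)=(p-y)\tilde{w}_x$, a scalar multiple of $\tilde{w}_x$ and hence of $v_1$; because $v_2,\dots,v_n$ are orthogonal to $v_1$, the projections $\gamma_i=g^Tv_i$ vanish for every $i\ge 2$. Consequently $\delta^{(\text{multi})}=\sum_i D_{ii}\gamma_i v_i = D_{11}\gamma_1 v_1$ lies entirely along the single curvature direction $v_1$, so its whole norm budget is concentrated there, giving $\|\delta^{(\text{multi})}\|_2^2=(D_{11}\gamma_1)^2$.

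Finally I would close the argument using the equal-norm hypothesis together with the trivial projection bound. Writing $\delta^{(\text{lbp})}=\sum_k \gamma^{(\text{lbp})}_k v_k$ in the orthonormal eigenbasis of $H$, the projection onto the unit vector $v_1$ satisfies $(\gamma^{(\text{lbp})}_1)^2 \le \sum_k (\gamma^{(\text{lbp})}_k)^2 = \|\delta^{(\text{lbp})}\|_2^2$. Invoking the hypothesis $\|\delta^{(\text{lbp})}\|_2=\|\delta^{(\text{multi})}\|_2$ then yields $(\gamma^{(\text{lbp})}_1)^2 \le \|\delta^{(\text{multi})}\|_2^2 = (D_{11}\gamma_1)^2$, and multiplying through by $\lambda_1\ge 0$ delivers $\sum_{a,b\in\Omega} I_{ab}(\delta^{(\text{lbp})}) \le \sum_{a,b\in\Omega} I_{ab}(\delta^{(\text{multi})})$.

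The conceptual heart of the argument, and the step deserving the most care, is the realization that I never need to track how linear back-propagation evolves $\delta^{(\text{lbp})}$ step by step: the rank-one Hessian makes only the $v_1$-component relevant to interactions, while the equal-norm budget combined with the perfect alignment of the multi-step attack along $v_1$ forces any LinBP perturbation carrying weight off $v_1$ to have a no-larger relevant component. The main obstacle is therefore merely justifying that the modified (linear) gradient genuinely tilts the perturbation away from $v_1$, which would make the inequality strict; but for the stated non-strict inequality the projection bound alone suffices and no dynamical analysis of the LinBP iterates is required.
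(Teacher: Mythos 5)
Your proposal is correct and follows essentially the same route as the paper's proof: both exploit the rank-one Hessian $H=\lambda_1 v_1 v_1^T$ from Lemma~\ref{lemma:H-binary} to reduce each interaction sum to a single scalar, use $g=\gamma_1 v_1$ (so $\gamma_i=0$ for $i\ge 2$, concentrating $\delta^{(\text{multi})}$ along $v_1$), and then combine the equal-norm condition with the projection bound $(\gamma^{(\text{lbp})}_1)^2\le\sum_k(\gamma^{(\text{lbp})}_k)^2$ to conclude. No substantive difference from the paper's argument.
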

$ $\newline

\begin{proof}
We prove Proposition~\ref{theoremappendix:linbp} based on binary classification.
In this scenario, according to Lemma~\ref{lemma:H-binary} the Hessian matrix $H=\nabla^2_x \textit{Loss}(x)$ has a single positive eigenvalue and $n-1$ zero eigenvalues.
Without loss of generality, let $\lambda_1 > 0$ and $\forall 2\le i \le n, \lambda_i=0$.
Then, the Hessian matrix $H$ can be written as
\begin{equation}
    H = \lambda_1 v_1 v_1^T.
\end{equation}
where $v_1$ denotes the eigenvector corresponding to the eigenvalue $\lambda_1$.

In this way, the interactions $\sum_{a, b\in \Omega} I_{ab}(\delta^{(\textrm{multi})})$ for the multi-step attack in Eq.~\eqref{eq:interaction-multi-in-lbp-1} can be re-written as
\begin{equation} \label{eq:multi-interaction-in-lbp}
    \begin{split}
        \sum_{a, b\in \Omega} I_{ab}(\delta^{(\textrm{multi})})
        & = \sum_{i=1}^n \lambda_{i}(D_{ii}\gamma_{i})^2
         \quad // \quad \text{According to Eq.~\eqref{eq:interaction-multi-in-lbp-1}}
        \\
        & =  \sum_{i=1}^n \lambda_{i}(\frac{\exp(\beta\lambda_i)-1}{\lambda_i}\gamma_{i})^2
        \\
        &= \lambda_1 \left( \frac{\exp(\beta\lambda_1)-1}{\lambda_1} \gamma_1\right)^2.
    \end{split}
\end{equation}

Similarly, the interactions $\sum_{a, b\in \Omega} I_{ab}(\delta^{(\textrm{lbp})})$ for the LinBP Attack in Eq.~\eqref{eq:lbp-interaction} can be re-written as
\begin{equation}  \label{eq:lbp-interaction-binary}
    \begin{split}
        \sum_{a, b\in \Omega} I_{ab}(\delta^{(\text{lbp})}) 
        & =  \sum_{i=1}^n \lambda_{i} (\gamma^{(\text{lbp})}_{i})^2
        \quad // \quad \text{According to Eq.~\eqref{eq:lbp-interaction}}
        \\
        & =\lambda_1 (\gamma^{(\text{lbp})}_1)^2.
    \end{split}
\end{equation}

Thus, according to Eq.~\eqref{eq:multi-interaction-in-lbp} and Eq.~\eqref{eq:lbp-interaction-binary}, the difference between interactions $\sum_{a, b\in \Omega} I_{ab}(\delta^{(\textrm{multi})})$ for the multi-step attack and interactions $\sum_{a, b\in \Omega} I_{ab}(\delta^{(\textrm{lbp})})$ for the LinBP Attack is computed as
\begin{equation} \label{eq:interaction-multi-lbp}
    \begin{split}
        &\sum_{a, b\in \Omega} I_{ab}(\delta^{(\textrm{multi})})- \sum_{a, b\in \Omega} I_{ab}(\delta^{(\text{lbp})}) \\
        & = \lambda_1 (\frac{\exp(\beta\lambda_1)-1}{\lambda_1} \gamma_1)^2 - \lambda_1 (\gamma^{(\text{lbp})}_1)^2
        \\
        & = \lambda_1 \left[(\frac{\exp(\beta\lambda_1)-1}{\lambda_1} \gamma_1)^2 - (\gamma^{(\text{lbp})}_1)^2 \right].
    \end{split}
\end{equation}

Here, we prove that the difference is non-negative, because each term in Eq.~\eqref{eq:interaction-multi-lbp} is proven to be non-negative, as follows.

\textbullet{ $\lambda_1 \ge 0$.} According to Corollary~\ref{corollaryappendix:semi-definite}, the Hessian matrix $H=\nabla_x^2 \textit{Loss}(x)$ is positive semi-definite.
Thus, the eigenvalue $\lambda_1$ is non-negative,~\emph{i.e.} $\lambda_1 \ge 0$.
$ $\newline    

\textbullet{ $\left[(\frac{\exp(\beta\lambda_1)-1}{\lambda_1} \gamma)^2 - (\gamma^{(\text{lbp})}_1)^2 \right] \ge 0$,} which is proven as follows.
Note that in order to ensure the fair comparison between the LinBP Attack and the multi-step attack, we generated perturbations $\delta^{(\text{lbp})}$ of the LinBP Attack by controling its $L_2$ norm to be the same as that of the multi-step attack,~\textit{i.e.,} $\|\delta^{(\textrm{multi})}\|_2 = \|\delta^{(\text{lbp})}\|_2 $.
We can control $\|\delta^{(\textrm{multi})}\|_2 = \| \delta^{(\text{lbp})}\|_2$ by setting the adversarial strength $\beta = \alpha m$ of the multi-step attack to a proper value.
\begin{equation} \label{eq:multi-lbp-norm-equal}
    \begin{split}
        \|\delta^{(\textrm{multi})}\|_2 &= \|\delta^{(\text{lbp})}\|_2 
        \\
        \Rightarrow\quad \| \sum_{i=1}^n D_{ii} \gamma_i v_i \|_2 &=\| \sum_{k=1}^n \gamma^{(\text{lbp})}_k v_k\|_2 
         \quad // \quad \text{According to Eq.~\eqref{eq:multi-in-linbp-1} and Eq.~\eqref{eq:perturbation-lbp}}
        \\
        \Rightarrow\quad \| \sum_{i=1}^n  \frac{\exp(\beta\lambda_i)-1}{\lambda_i} \gamma_i v_i \|_2 &=\| \sum_{k=1}^n \gamma^{(\text{lbp})}_k v_k\|_2 
        \\
       \Rightarrow\quad \| \sum_{i=1}^n  \frac{\exp(\beta\lambda_i)-1}{\lambda_i} \gamma_i v_i \|_2 &=\sqrt{\sum_{k=1}^n (\gamma^{(\text{lbp})}_k)^2}. 
        \quad // \quad v_k^{T}v_k=1.
     \end{split}
\end{equation}

\begin{lemma} \label{lemma:H-g-binary}
    Based on Assumption~\ref{assumption:activation}, if the loss function is formulated as the cross-entropy loss on the sigmoid function, then the forward propagation can be written as $z =\tilde{w}_x^T x + b\in\mathbb{R}$, where $\tilde{w}_x^T =  W_d^T \Sigma_{d-1} W_{d-1}^T \dots \Sigma_{1} W_{1}^T$.
    Then, the gradient $g=\nabla_x \textit{Loss}(x)$ can be represented by $g=\gamma_1 v_1$, where $\gamma_1=\|\tilde{w}_x\|_2 \frac{\partial \textit{Loss}(x)}{\partial z}$.
\end{lemma}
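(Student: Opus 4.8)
The plan is to obtain the claim from a single application of the chain rule, piggy-backing on the rank-one structure of the Hessian that Lemma~\ref{lemma:H-binary} has already exposed. The essential observation is that in binary classification the network output $z\in\mathbb{R}$ is a scalar, so the loss depends on the input only through this one scalar; consequently the input gradient must point along a single fixed direction determined by $\tilde{w}_x$, and that direction turns out to be exactly the lone nonzero eigendirection $v_1$ of $H$.

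First I would invoke Assumption~\ref{assumption:activation} to linearize the forward pass as $z=\tilde{w}_x^T x + b$, which immediately gives $\nabla_x z = \tilde{w}_x$. Applying the chain rule to $\textit{Loss}(x)=\textit{Loss}(z(x))$ and using that $z$ is scalar yields
\begin{equation}
g = \nabla_x \textit{Loss}(x) = \frac{\partial \textit{Loss}(x)}{\partial z}\,\nabla_x z = \frac{\partial \textit{Loss}(x)}{\partial z}\,\tilde{w}_x .
\end{equation}
Next I would recall from Lemma~\ref{lemma:H-binary} that the single nonzero eigenvector of $H$ is $v_1 = \tilde{w}_x/\|\tilde{w}_x\|_2$, so $\tilde{w}_x = \|\tilde{w}_x\|_2\,v_1$. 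Substituting this into the expression for $g$ gives $g = \|\tilde{w}_x\|_2\,\frac{\partial \textit{Loss}(x)}{\partial z}\,v_1 = \gamma_1 v_1$ with $\gamma_1 = \|\tilde{w}_x\|_2\,\frac{\partial \textit{Loss}(x)}{\partial z}$, which is precisely the asserted form.

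There is no serious obstacle here; the only point worth checking is internal consistency with the projection convention $\gamma_i = g^T v_i$ used throughout the LinBP analysis. I would verify that $g^T v_1 = \frac{\partial \textit{Loss}(x)}{\partial z}\,\tilde{w}_x^T v_1 = \|\tilde{w}_x\|_2\,\frac{\partial \textit{Loss}(x)}{\partial z}$ recovers $\gamma_1$, and that for every $i\ge 2$ the orthogonality $v_i^T v_1 = 0$ forces $\gamma_i = g^T v_i = \frac{\partial \textit{Loss}(x)}{\partial z}\,\tilde{w}_x^T v_i = 0$. This confirms that $g$ has no component outside $\mathrm{span}\{v_1\}$, so the representation $g=\gamma_1 v_1$ is exact rather than merely a leading term, which is exactly the fact the subsequent interaction comparison for the LinBP Attack will rely on.
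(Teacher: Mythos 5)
Your proof is correct and follows essentially the same route as the paper: linearize the forward pass via Assumption~\ref{assumption:activation}, apply the chain rule to get $g=\frac{\partial \textit{Loss}(x)}{\partial z}\tilde{w}_x$, and substitute $\tilde{w}_x=\|\tilde{w}_x\|_2 v_1$ from Lemma~\ref{lemma:H-binary}. Your extra consistency check that $\gamma_i=g^T v_i=0$ for $i\ge 2$ is a nice addition that the paper records separately (in the derivation following the lemma) rather than inside the proof itself.
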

\begin{proof}
According to Lemma~\ref{lemma:H-binary}, $v_1 = \frac{\tilde{w}_x}{\|\tilde{w}_x\|_2}$.
In this way, the gradient $g=\nabla_x \textit{Loss}(x)$ is given as follows.
\begin{equation}
    \begin{split}
    \label{eq:gradient-in-lbp-1}
        g &= \nabla_x \textit{Loss}(x) \\
        &=\frac{\partial \textit{Loss}(x)}{\partial z} \frac{\partial z}{\partial x} \\
        &= \frac{\partial \textit{Loss}(x)}{\partial z}\tilde{w}_x \\
        &= \frac{\partial \textit{Loss}(x)}{\partial z} \|\tilde{w}_x\|_2  v_1
         \quad // \quad v_1 = \frac{\tilde{w}_x}{\|\tilde{w}_x\|_2} \text{in Eq.~\eqref{eq:hessian_lambda_v_lbp}}
        \\
        &=\gamma_1 v_1.
    \end{split}
\end{equation}
where $\gamma_1=\|\tilde{w}_x\|_2 \frac{\partial \textit{Loss}(x)}{\partial z}$.
\end{proof}
$ $\newline

Thus, based on Lemma~\ref{lemma:H-g-binary}, $\forall i\ne 1$, the projection length $\gamma_i$ can be represented as
\begin{equation}
\begin{aligned}
\label{eq:gamma-in-lbp}
\gamma_i=g^T v_i = \gamma_1 v_1^T v_i=0.
\end{aligned}
\end{equation}

Then, based on Eq.~\eqref{eq:gamma-in-lbp}, Eq.~\eqref{eq:multi-lbp-norm-equal} can be further simplified as follows.
\begin{equation} \label{eq:multi-lbp-norm-equal-1}
    \begin{split}
        \|\delta^{(\textrm{multi})}\|_2 &= \|\delta^{(\text{lbp})}\|_2 
        \\
        \Rightarrow\quad \| \sum_{i=1}^n  \frac{\exp(\beta\lambda_i)-1}{\lambda_i} \gamma_i v_i \|_2^2 &=\sum_{k=1}^n (\gamma^{(\text{lbp})}_k)^2 
        \quad // \quad \text{According to Eq.~\eqref{eq:multi-lbp-norm-equal}}
        \\
        \Rightarrow\quad \left[ \frac{\exp(\beta\lambda_1)-1}{\lambda_1}\gamma_1\right]^2 &= \sum_{k=1}^n (\gamma^{(\text{lbp})}_k)^2
         \quad // \quad \text{According to Eq.~\eqref{eq:gamma-in-lbp}}
        \\
    \Rightarrow \quad \left[ \frac{\exp(\beta\lambda_1)-1}{\lambda_1}\gamma_1\right]^2 -(\gamma^{(\text{lbp})}_1)^2 &= \sum_{k=1}^n (\gamma^{(\text{lbp})}_k)^2- (\gamma^{(\text{lbp})}_1)^2  
    =  \sum_{k=2}^n (\gamma^{(\text{lbp})}_k)^2   \ge 0.
     \end{split}
\end{equation}

In summary, interactions $\sum_{a, b\in \Omega} I_{ab}(\delta^{(\textrm{multi})})$ for the multi-step attack are larger than interactions $\sum_{a, b\in \Omega} I_{ab}(\delta^{(\textrm{lbp})})$ for the LinBP Attack,~\textit{i.e.,} Eq.~\eqref{eq:interaction-multi-lbp} is positive.
\begin{equation}
    \begin{split}
        &\sum_{a, b\in \Omega} I_{ab}(\delta^{(\textrm{multi})})- \sum_{a, b\in \Omega} I_{ab}(\delta^{(\text{lbp})}) \\
        & = \underbrace{\lambda_1}_{> 0} \underbrace{ \left[ (\frac{\exp(\beta\lambda_1)-1}{\lambda_1} \gamma)^2 - (\gamma^{(\text{lbp})}_1)^2 \right]}_{\ge 0} \ge 0
    \end{split}
\end{equation}

Thus, Proposition~\ref{theoremappendix:linbp} in Section~\ref{sec:explaining} is proven.
\end{proof}

\section{Proof that perturbations generated by the multi-step attack on the adversarially-trained DNN have smaller interactions  than those generated on the normally-trained DNN}
\label{append:at}
Spring \emph{et al.}~\cite{springer2021adversarial} find that the adversarial perturbation generated on the adversarially-trained DNN is more transferable than the adversarial perturbation generated on the normally-trained DNN.
In this section, we prove that the interaction inside the adversarial perturbation generated on the adversarially-trained DNN is smaller than  the interaction inside the adversarial perturbation generated on the normally-trained DNN.

\textbf{Adversarial training.}
Adversarial training~\cite{pgd2018} is the most widely-used method to defend adversarial attacking.
Given a DNN parameterized by $\theta$ and an input $x$, we use $\textit{Loss}_\theta (x)$ to denote the classification loss.
The adversarial attack aims to fool the DNN by adding a human-imperceptible perturbation on the input $x$, as follows.
\begin{equation} \label{eq:adv-attack}
    \delta_{\theta} = \arg\max_{\|\delta\|_p \le \epsilon} \textit{Loss}_\theta(x + \delta)
\end{equation}
where $\|\cdot\|_p$ is the $L_p$ norm constraint.

The adversarial training aims to defend against the adversarial attack, which is formulated as a min-max game~\cite{pgd2018} as follows.
\begin{equation} \label{eq:adv-training}
    \theta^{(\text{adv})} =\arg\min_\theta \mathbb{E}_x\left[\textit{Loss}_\theta(x + \delta_\theta) \right]=\arg\min_\theta \mathbb{E}_x \left[\max_{\|\delta\|_p \le \epsilon} \textit{Loss}_\theta(x + \delta) \right]
\end{equation}

\textbf{Normal training.}
The aim of the standard training is to minimize the classification loss, which is formulated as follows.
\begin{equation} \label{eq:std-training}
    \theta^{(\text{nor})}=\arg\min_\theta \mathbb{E}_x \left[\textit{Loss}_\theta(x ) \right]
\end{equation}

$ $\newline

\begin{propositionappendix}[Perturbations generated by the multi-step attack on the adversarially-trained DNN have smaller interactions  than perturbations generated on the normally-trained DNN]
    \label{theoremappendix:at}
    Let $\delta_{\theta^{(\text{adv})}}=\arg\max_{\|\delta\|_p \le \epsilon} \textit{Loss}_{\theta^{\text{(adv)}}}(x + \delta)$ and $\delta_{\theta^{(\text{nor})}}=\arg\max_{\|\delta\|_p \le \epsilon} \textit{Loss}_{\theta^{\text{(nor)}}}(x + \delta)$ be perturbations generated on the adversarially-trained DNN and perturbations generated on the normally-trained DNN, respectively.
    Then, we have $\mathbb{E}_x \left[\sum_{a, b\in\Omega}I_{ab}\left(\delta_{\theta^{(\text{adv})}}\right) \right] \le  \mathbb{E}_x \left[\sum_{a, b\in\Omega}I_{ab}\left(\delta_{\theta^{(\text{nor})}}\right) \right]$.
  \end{propositionappendix}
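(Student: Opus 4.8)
The plan is to reduce the sum of pairwise interactions to a quadratic form in the input Hessian and then exploit the min-max optimality that defines adversarial training. First I would invoke Lemma~\ref{lemmaappendix:interaction-closed-form} to write, for each input $x$ and either network $\theta$, $\sum_{a,b\in\Omega}I_{ab}(\delta_\theta) = \delta_\theta^T H_\theta \delta_\theta$ with $H_\theta = \nabla_x^2 \textit{Loss}_\theta(x)$; by Corollary~\ref{corollaryappendix:semi-definite} each $H_\theta$ is positive semi-definite, so every such quantity is nonnegative. Next, using the second-order Taylor expansion of Assumption~\ref{assumption:taylor}, the attacking utility of the worst-case perturbation is $v_\theta(\Omega) = \textit{Loss}_\theta(x+\delta_\theta) - \textit{Loss}_\theta(x) = \delta_\theta^T g_\theta + \frac{1}{2}\delta_\theta^T H_\theta \delta_\theta$, which rearranges to $\delta_\theta^T H_\theta \delta_\theta = 2 v_\theta(\Omega) - 2\,\delta_\theta^T g_\theta$. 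This identity is what links the interaction sum to the loss increase that adversarial training actually controls.

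The central mechanism is the optimality of $\theta^{(\text{adv})}$ in Eq.~\eqref{eq:adv-training}. Because $\delta_\theta$ is by definition the maximizer of $\textit{Loss}_\theta(x+\delta)$ over $\|\delta\|_p\le\epsilon$, we have $\mathbb{E}_x[\max_{\|\delta\|_p\le\epsilon}\textit{Loss}_\theta(x+\delta)] = \mathbb{E}_x[\textit{Loss}_\theta(x+\delta_\theta)]$, and minimality of $\theta^{(\text{adv})}$ then gives $\mathbb{E}_x[\textit{Loss}_{\theta^{(\text{adv})}}(x+\delta_{\theta^{(\text{adv})}})] \le \mathbb{E}_x[\textit{Loss}_{\theta^{(\text{nor})}}(x+\delta_{\theta^{(\text{nor})}})]$. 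This is precisely why the statement is phrased in expectation: the inequality is an aggregate property of the training objective and need not hold pointwise in $x$. Invoking the fair-comparison assumption that the two networks have the same classification power, so that $\mathbb{E}_x[\textit{Loss}_{\theta^{(\text{adv})}}(x)] = \mathbb{E}_x[\textit{Loss}_{\theta^{(\text{nor})}}(x)]$, and subtracting the common clean loss yields $\mathbb{E}_x[v_{\theta^{(\text{adv})}}(\Omega)] \le \mathbb{E}_x[v_{\theta^{(\text{nor})}}(\Omega)]$.

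Combining this with the identity of the first paragraph, $\mathbb{E}_x[\delta_\theta^T H_\theta \delta_\theta] = 2\mathbb{E}_x[v_\theta(\Omega)] - 2\mathbb{E}_x[\delta_\theta^T g_\theta]$, reduces the proposition to controlling the first-order (non-interactive) term $\mathbb{E}_x[\delta_\theta^T g_\theta]$. Here I would use the fair-comparison assumption that the perturbations have equal magnitude, together with the interpretation of $\delta_\theta^T g_\theta$ as the aggregate of the individual single-unit attacking contributions, a quantity tied to the networks' (assumed equal) discriminative power rather than to their curvature, to treat the expected first-order terms as matched across the two networks. Under that matching, the inequality $\mathbb{E}_x[v_{\theta^{(\text{adv})}}(\Omega)]\le\mathbb{E}_x[v_{\theta^{(\text{nor})}}(\Omega)]$ transfers directly to $\mathbb{E}_x[\delta_{\theta^{(\text{adv})}}^T H_{\theta^{(\text{adv})}}\delta_{\theta^{(\text{adv})}}]\le\mathbb{E}_x[\delta_{\theta^{(\text{nor})}}^T H_{\theta^{(\text{nor})}}\delta_{\theta^{(\text{nor})}}]$, which is the claim by Lemma~\ref{lemmaappendix:interaction-closed-form}.

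The hard part will be exactly this isolation of the curvature term from the first-order term: for an $\ell_2$ attack the first-order contribution is of order $\epsilon\|g_\theta\|_2$, and flattening by adversarial training tends to shrink $\|g_\theta\|_2$ as well, so the two terms move together and a bare min-max inequality on the total utility does not by itself separate them. I would therefore either justify the matching of first-order terms as an explicit fairness assumption (equal perturbation magnitude and equal clean prediction on the ground-truth class), or route the argument instead through Theorem~\ref{theoremappendix:balance}, mirroring the argument used for the IA Attack: show that adversarial training produces more balanced prediction scores over the non-ground-truth classes in the sense of Definition~\ref{def:balance}, and then invoke the balance-to-interaction monotonicity under Assumption~\ref{assum:ortho-weights}. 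Establishing that balance property from the min-max objective is the remaining obstacle along that alternative route.
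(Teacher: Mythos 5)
Your proposal follows the same basic route as the paper's proof: both reduce the sum of interactions to the quadratic form $\delta_\theta^T H_\theta \delta_\theta$ via Lemma~\ref{lemmaappendix:interaction-closed-form}, and both extract the conclusion from the second-order Taylor expansion of the adversarial-training objective in Eq.~\eqref{eq:adv-training}. The difference is one of ambition. The paper's argument is deliberately minimal: it expands $\mathbb{E}_x[\textit{Loss}_\theta(x+\delta_\theta)] \approx \mathbb{E}_x[\textit{Loss}_\theta(x) + \delta_\theta^T g_\theta + \tfrac{1}{2}\delta_\theta^T H_\theta \delta_\theta]$, observes that the term $\tfrac{1}{2}\delta_\theta^T H_\theta\delta_\theta$ appears in the adversarial-training objective but not in the standard-training objective \eqref{eq:std-training}, and concludes that adversarial training ``penalizes'' this term, so it is ``more likely'' that its expectation is smaller for $\theta^{(\text{adv})}$ than for $\theta^{(\text{nor})}$. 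It makes no attempt to subtract the clean loss or the first-order term, and it never invokes an inequality between the two networks' total adversarial losses. Your proposal instead tries to turn this into a genuine chain of inequalities---min-max optimality gives $\mathbb{E}_x[\textit{Loss}_{\theta^{(\text{adv})}}(x+\delta_{\theta^{(\text{adv})}})] \le \mathbb{E}_x[\textit{Loss}_{\theta^{(\text{nor})}}(x+\delta_{\theta^{(\text{nor})}})]$, then match clean losses, then match first-order terms---and you correctly identify that the last step is where the argument breaks: the min-max inequality controls the sum $\delta_\theta^T g_\theta + \tfrac{1}{2}\delta_\theta^T H_\theta\delta_\theta$, and since adversarial training flattens the gradient as well as the curvature, the reduction cannot be attributed to the quadratic term alone without an extra, unproven matching assumption. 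That gap is real, and the paper does not close it either; it simply never attempts the decomposition that exposes it. So your diagnosis of the weak point is accurate, but note that the published proof is the heuristic ``the objective contains this term, hence minimization shrinks it'' argument rather than the rigorous derivation you were aiming for, and the alternative route through Theorem~\ref{theoremappendix:balance} is not used for this proposition (it is the mechanism for the IA and VR attacks instead).
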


\begin{proof}
Based on Taylor series and Eq.~{(\ref{eq:adv-training})}, adversarial training can be written as follows.
\begin{align} \label{eq:adv-train-taylor}
    \theta^{(\text{adv})} &= \arg\min_\theta \mathbb{E}_x \left[\max_{\|\delta\|_p \le \epsilon} \textit{Loss}_\theta(x + \delta) \right] \nonumber \\
    &=\arg\min_\theta \mathbb{E}_x\left[\textit{Loss}_\theta(x + \delta_\theta) \right] \nonumber\\
    &\approx\arg\min_\theta \mathbb{E}_x\left[\textit{Loss}_\theta(x) + (\delta_\theta)^T g_\theta + \frac{1}{2} (\delta_\theta)^T H_\theta \delta_\theta \right] \qquad // \text{According to Assumption~\ref{assumption:taylor}}
\end{align}
where $g_\theta = \nabla_x \textit{Loss}_\theta(x )$ and $H_\theta = \nabla_x^2 \textit{Loss}_\theta(x )$.

According to Eq.\eqref{eq:std-training}, compared with standard training $\theta^{(\text{nor})}=\arg\min_\theta \mathbb{E}_x \left[\textit{Loss}_\theta(x ) \right]$, adversarial training penalizes $\frac{1}{2} (\delta_\theta)^T H_\theta \delta_\theta$.
Therefore, compared with standard training, it is more likely that
\begin{dmath} \label{eq:adv-loss-effect}
 \mathbb{E}_x \left[ \frac{1}{2} (\delta_{\theta^{(\text{adv})}})^T H_{\theta^{(\text{adv})}} \delta_{\theta^{(\text{adv})}}\right] \le  \mathbb{E}_x \left[ \frac{1}{2} (\delta_{\theta^{(\text{nor})}})^T H_{\theta^{(\text{nor})}} \delta_{\theta^{(\text{nor})}}\right]
\end{dmath}

Note that, according to Lemma~\ref{lemmaappendix:interaction-closed-form}, the sum of pairwise interactions inside the adversarial perturbation $\delta_{\theta^{(\text{adv})}}$  and $\delta_{\theta^{(\text{nor})}}$ are given as follows.
\begin{equation} \label{eq:interaction-adv-std}
    \begin{split}
        &\sum_{a, b\in\Omega}I_{ab}\left(\delta_{\theta^{(\text{adv})}}\right) = (\delta_{\theta^{(\text{adv})}})^T H_{\theta^{(\text{adv})}} \delta_{\theta^{(\text{adv})}} \\
        &\sum_{a, b\in\Omega}I_{ab}\left(\delta_{\theta^{(\text{nor})}}\right) = (\delta_{\theta^{(\text{nor})}})^T H_{\theta^{(\text{nor})}} \delta_{\theta^{(\text{nor})}}
    \end{split}
\end{equation}
where $H_{\theta^{(\text{adv})}}=\nabla_x^2 \textit{Loss}_{\theta^{(\text{adv})}}(x)$ and $H_{\theta^{(\text{nor})}}=\nabla_x^2 \textit{Loss}_{\theta^{(\text{nor})}}(x)$ represent the Hessian matrices of the adversarially and  normally-trained DNNs on the input $x$, respectively.

Thus, according to Eq.~(\ref{eq:adv-loss-effect}) and Eq.~(\ref{eq:interaction-adv-std}), we have
\begin{equation}
    \begin{split}
        &\mathbb{E}_x \left[ \frac{1}{2} (\delta_{\theta^{(\text{adv})}})^T H_{\theta^{(\text{adv})}} \delta_{\theta^{(\text{adv})}}\right] \le  \mathbb{E}_x \left[ \frac{1}{2} (\delta_{\theta^{(\text{nor})}})^T H_{\theta^{(\text{nor})}} \delta_{\theta^{(\text{nor})}}\right] \\
        \Rightarrow & \quad \mathbb{E}_x \left[\sum_{a, b\in\Omega}I_{ab}\left(\delta_{\theta^{(\text{adv})}}\right) \right] \le  \mathbb{E}_x \left[\sum_{a, b\in\Omega}I_{ab}\left(\delta_{\theta^{(\text{nor})}}\right) \right].
    \end{split}
\end{equation}
\end{proof}

\section{Proof of the PI Attack generates perturbations with smaller interactions than the multi-step attack}
\label{append:pi}

In this section, we prove Proposition~\ref{pro:pi} in Section~\ref{sec:explaining}, which shows that perturbations of the PI Attack~\cite{gao2020patch} exhibit smaller interactions than those of the multi-step attack.

Before the proof of Proposition~\ref{pro:pi}, we first revisit the PI Attack.

\subsection{PI Attack}
The PI Attack~\cite{gao2020patch} boosts the transferability by distributing the perturbations out of the norm constraint $\epsilon$ to neighboring pixels.
In particular, for a standard $L_\infty$ attack, if a perturbation unit $\delta_a$ exceeds the $\epsilon$-ball, $\delta_a$ will be clipped.
\emph{I.e.} for each perturbation unit $a$, if $|\delta_a|> \epsilon$, then $\delta_a^{\text{clip}}=\operatorname{clip}(\delta_a) = \epsilon\cdot \operatorname{sign}(\delta_a)$.
However, Gao \emph{et al.}~\cite{gao2020patch} claim that such a direct clip loses some gradient information.
Hence, the proposed PI Attack uniformly distributes the perturbation out of the norm $C_a\triangleq \delta_a-\delta_a^{\text{clip}}$  to neighboring pixels, instead of directly clipping the perturbation exceeding the $\epsilon$-ball.
In this way, the PI Attack~\cite{gao2020patch} re-distributes the gradient of the classification loss to generate the adversarial perturbation.
\begin{equation}
\label{eq:pi-def}
    \delta^{(t)}_{\text{pi-ori}}=\operatorname{clip}\{ \delta^{(t-1)}_{\text{pi-ori}} + \alpha \cdot \operatorname{sign}(\nabla_{x'} \textit{Loss}(x'=x + \delta^{(t-1)}_{\text{pi-ori}}))
    +\eta_{\text{pi}} \cdot \operatorname{sign}(W_{p} * C^{(t-1)}_{\text{pi-ori}})\},
\end{equation}
where $W_{p}$ denotes an uniform project kernel, and $C^{(t-1)}_{\text{pi-ori}}$ represents the  perturbation out of the norm at the $t-1$
step.
$\eta_{\text{pi}}\in\mathbb{R}$ represents the projection factor, and $*$ denotes the convolution operation.

The PI Attack is originally defined as Eq.~\eqref{eq:pi-def}, where the sign operation and the clip operation are used to constrain the adversarial perturbation $\delta^{(t)}_{\text{pi-ori}}$.
In fact, the sign operation and the clip operation are common operations in generating adversarial perturbations.
Therefore, the distinctive technique in the PI Attack is to uniformly distribute the perturbation out of the norm $C^{(t-1)}_{\text{pi-ori}}$.
Thus, to simplify the proof, we ignore the sign operation and the clip operation, without hurting the trustworthiness of the analysis of the PI Attack. 
In this way, we focus on a simplified version of the PI Attack, which uniformly distributes the perturbation out of the norm on each pixel to its K surrounding pixels.
\begin{equation} \label{eq:pi-generalized}
    \delta^{(t)}_{\textrm{pi}}= \delta^{(t-1)}_{\textrm{pi}} + \alpha\cdot A \nabla_{x'} \textit{Loss}(x'=x + \delta^{(t-1)}_{\textrm{pi}}),
\end{equation}
where $A\in\mathbb{R}^{n\times n}$ is given as follows.
\begin{dmath} \label{eq:matrix-A}
A=\left[\begin{array}{cccc}
    1-\tau_{1} & \frac{\tau_{2}}{K} & \ldots & \frac{\tau_{n}}{K} \\
    \frac{\tau_{1}}{K} & 1-\tau_{2} & \ldots & 0 \\
    0 & \frac{\tau_{2}}{K} & \ldots & \frac{\tau_{n}}{K} \\
    \vdots & \vdots & \ddots & \vdots \\
    \frac{\tau_{1}}{K} & 0 & \ldots & 1-\tau_{n}
    \end{array}\right]
\end{dmath}

If the perturbation unit $a$ does not exceeds the norm, then $\tau_a = 0$.
Otherwise, $C_{\text{pi}, a}^{(t-1)} = \text{clip}(\delta^{(t-1)}_{\textrm{pi}, a} + \alpha g_{\text{pi, a}}^{(t-1)}) \ne 0$, where $g_{\text{pi, a}}^{(t-1)} =  \nabla_x \textit{Loss}(x + \delta^{(t-1)}_{\textrm{pi}})$.
In this way, $\tau_a = C_{\text{pi}, a}^{(t-1)} / (\alpha g_{\text{pi, a}}^{(t-1)} )$.
Thus, matrix $A$ uniformly assigns the perturbation out of the norm to surrounding pixels, where $0 \le \tau_a < 1$ represents the strength of the perturbation out of the norm. 
Generally speaking, there are only four neighbors or eight neighbors for each image pixel, thus $K=4$ or $8$. Here, we might as well ignore the pixels at the edge of the image.

Generally speaking, in the early stage of attacking, adversarial perturbations are usually weak, and often do not reach the constraint for perturbations.
Hence, we divide the PI Attack into two stages.
Specifically, in the first stage, most perturbation units do not exceed the $\epsilon$-ball and are updated via straightforward gradient ascent as $\delta^{(t)}_{\textrm{pi}} = \delta^{(t-1)}_{\textrm{pi}} + \alpha\cdot \nabla_x \textit{Loss}(x + \delta^{(t-1)}_{\textrm{pi}})$.
In the second stage, many perturbations exceed the $\epsilon$-ball and are updated with Eq.~(\ref{eq:pi-generalized}).
\begin{equation} \label{eq:pi-final}
    \delta^{(t)}_{\textrm{pi}} =
    \begin{cases}
        \delta^{(t-1)}_{\textrm{pi}} + \alpha\cdot \nabla_x \textit{Loss}(x + \delta^{(t-1)}_{\textrm{pi}}),  & \text{if $t \le m_1$} \\
        \delta^{(t-1)}_{\textrm{pi}} + \alpha\cdot A \nabla_x \textit{Loss}(x + \delta^{(t-1)}_{\textrm{pi}}), & \text{if $ m_1 < t \le m_1+m_2$}
        \end{cases}
\end{equation}
Here, $m_1$ denotes the step in which the perturbations begin to exceed the constraint.
The total number of steps is referred to as $m_1 + m_2$.

In fact, we can tentatively consider $A$ remains relatively consistent after enough attacking steps, because all perturbation units that reach the norm of epsilon will probably stay on the surface of the epsilon ball in all subsequent attacking steps. 
Therefore, we ignore the minor change of $A$ in the following proof for simplicity. In this scenario, the perturbation of the PI Attack updated in each step can be calculated as
\begin{equation} \label{eq:pi-update}
    \Delta x^{(t)}_{\text{pi}}= \delta^{(t)}_{\textrm{pi}} -\delta^{(t-1)}_{\textrm{pi}} =
    \begin{cases}
        \alpha\cdot \nabla_x \textit{Loss}(x + \delta^{(t-1)}_{\textrm{pi}}),  & \text{if $t \le m_1$} \\
        \alpha\cdot A \nabla_x \textit{Loss}(x + \delta^{(t-1)}_{\textrm{pi}}). & \text{if $ m_1 < t \le m_1+m_2 $}
        \end{cases}
\end{equation}

Then, the adversarial perturbation generated after two stages of attacks by the PI Attack can be represented as follows.

\begin{equation} \label{eq:delta-pi}
    \delta^{(t)}_{\textrm{pi}} = \sum_{t'=1}^{t'} \Delta x^{(t')}_{\text{pi}}, \qquad\Delta x^{(t')}_{\text{pi}} = 
    \begin{cases}
        \Delta x^{(t')}_{\text{pi-first}},  & \text{if $t' \le m_1$} \\
        \Delta x^{(s')}_{\text{pi-second}}. & \text{$s' = t'-m_1$, if $ m_1 < t' \le m_1+m_2 $}
        \end{cases}
\end{equation}
where 
\begin{align}
    \Delta x^{(t)}_{\text{pi-first}}&=\alpha \left[ I + \alpha H\right]^{t-1}g \quad//\quad \text{According to Lemma~\ref{lemma:induction}}, \nonumber \\
    \Delta x^{(s)}_{\text{pi-second}}&=\left[ I + \alpha A H\right]^{s -1} \Delta x^{( 1)}_{\text{pi-second}}. \quad//\quad \text{According to Lemma~\ref{lemma:pi-induction}} 
\end{align}

$ $\newline

\begin{lemma} \label{lemma:pi-induction}
    Based on Assumption~\ref{assumption:taylor}, the perturbation of the PI Attack updated at the $s$-th step in the second stage of attacking in Eq.~(\ref{eq:delta-pi}) can be written as $\Delta x^{(s)}_{\text{pi-second}} =\left[ I + \alpha A H\right]^{s -1} \Delta x^{( 1)}_{\text{pi-second}}$.
    \end{lemma}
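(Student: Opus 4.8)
The plan is to prove Lemma~\ref{lemma:pi-induction} by induction on the second-stage step index $s$, in exact parallel to the proof of Lemma~\ref{lemma:induction} for the ordinary multi-step attack, the only structural change being the insertion of the redistribution matrix $A$ in front of the gradient. First I would rewrite the second-stage update rule in Eq.~\eqref{eq:pi-update} using the Taylor-expanded gradient from Eq.~\eqref{eq:grad_taylor}, namely $\nabla_{x'}\textit{Loss}(x'=x+\delta^{(t-1)}_{\text{pi}}) = g + H\delta^{(t-1)}_{\text{pi}}$ under Assumption~\ref{assumption:taylor}. Writing $s = t - m_1$ for the second-stage index, this gives $\Delta x^{(s)}_{\text{pi-second}} = \alpha A\,(g + H\delta^{(m_1+s-1)}_{\text{pi}})$, where $\delta^{(m_1+s-1)}_{\text{pi}}$ is the accumulated perturbation through step $m_1+s-1$.

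The base case $s=1$ is immediate, since $[I+\alpha A H]^0 = I$. For the inductive step I would not expand the full cumulative sum $\delta^{(m_1+s-1)}_{\text{pi}}$ directly; instead I would take the difference of two consecutive second-stage updates. For $s \ge 2$,
\begin{align}
\Delta x^{(s)}_{\text{pi-second}} - \Delta x^{(s-1)}_{\text{pi-second}}
&= \alpha A H\,[\delta^{(m_1+s-1)}_{\text{pi}} - \delta^{(m_1+s-2)}_{\text{pi}}] \nonumber \\
&= \alpha A H\,\Delta x^{(m_1+s-1)}_{\text{pi}} = \alpha A H\,\Delta x^{(s-1)}_{\text{pi-second}}, \nonumber
\end{align}
where the telescoping identity $\delta^{(k)}_{\text{pi}} - \delta^{(k-1)}_{\text{pi}} = \Delta x^{(k)}_{\text{pi}}$ follows from Eq.~\eqref{eq:delta-pi}, and the step $m_1+s-1$ lies in the second stage because $s \ge 2$. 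Rearranging yields the one-step recursion $\Delta x^{(s)}_{\text{pi-second}} = [I+\alpha A H]\,\Delta x^{(s-1)}_{\text{pi-second}}$, and applying the inductive hypothesis $\Delta x^{(s-1)}_{\text{pi-second}} = [I+\alpha A H]^{s-2}\Delta x^{(1)}_{\text{pi-second}}$ closes the induction.

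The main obstacle, and the only real subtlety, is the treatment of the redistribution matrix $A$ as a fixed operator across all second-stage steps. In reality $A$ depends on which units currently exceed the $\epsilon$-ball through the coefficients $\tau_a$, so it varies with $t$; the clean closed form $[I+\alpha A H]^{s-1}$ requires exactly the constant-$A$ approximation justified just before the lemma, namely that all units reaching the norm boundary stay on it in subsequent steps. I would state this assumption explicitly at the outset so the telescoping-difference argument goes through verbatim. A secondary point to handle carefully is the stage boundary: the recursion only relates steps \emph{within} the second stage, which is why the difference argument is restricted to $s \ge 2$ and the first second-stage increment $\Delta x^{(1)}_{\text{pi-second}}$ is retained as the free initial term rather than being expanded in terms of the first-stage dynamics.
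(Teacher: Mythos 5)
Your proposal is correct and follows essentially the same route as the paper: both derive the one-step recursion $\Delta x^{(s)}_{\text{pi-second}} = [I+\alpha A H]\,\Delta x^{(s-1)}_{\text{pi-second}}$ from the Taylor-expanded gradient and the telescoping structure of the accumulated perturbation (your difference of consecutive updates is just the paper's splitting of the cumulative sum written more compactly), and then iterate down to $\Delta x^{(1)}_{\text{pi-second}}$. Your explicit flagging of the constant-$A$ approximation and of the $s\ge 2$ stage-boundary restriction matches the caveats the paper states just before the lemma.
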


\begin{proof}
    According to Eq.~(\ref{eq:pi-update}), if $ 1 \le s \le m_2$, let $t = s + m_1$. Then, we have
    \begin{align} \label{eq:Delta-x-pi}
        \Delta x^{(s)}_{\text{pi-second}} &= 
        \Delta x^{(t)}_{\text{pi}} \nonumber\\
        &=\alpha \cdot A \nabla_{x'} \textit{Loss}(x'=x + \delta^{(t-1)}_{\textrm{pi}})
        \quad //\quad \text{According to Eq.~(\ref{eq:pi-update})}
        \nonumber \\
        &=  \alpha \cdot A \left[g + H\delta^{(t-1)}_{\textrm{pi}} \right] \quad //\quad\text{According to Assumption~\ref{assumption:taylor}}\nonumber \\
        &=  \alpha \cdot A \left[g + H\sum_{t'=1}^{t-1} \Delta x^{(t')}_{\text{pi}} \right]. \quad //\quad\text{According to Eq.~(\ref{eq:delta-pi})}
    \end{align}

Based on Eq.~(\ref{eq:Delta-x-pi}), $\Delta x^{(s)}_{\text{pi-second}}$ can be further written as
\begin{align} \label{eq:Delta-x-pi-recursive}
    \Delta x^{(s)}_{\text{pi-second}} &=\alpha \cdot A \left[g + H\sum_{t'=1}^{t-1} \Delta x^{(t')}_{\text{pi}} \right] \nonumber\\
        &= \alpha\cdot A \left[g + H\sum_{t'=1}^{t-2} \Delta x^{(t')}_{\text{pi}} \right] + \alpha \cdot A H\Delta x^{(t-1)}_{\text{pi} }\nonumber\\
        & = \Delta x^{(t-1)}_{\text{pi}}+ \alpha\cdot AH \Delta x^{(t-1)}_{\text{pi}}
        \quad //\quad\text{According to Eq.~(\ref{eq:Delta-x-pi})}
        \nonumber\\
        & = \left[ I + \alpha\cdot A H\right] \Delta x^{(t-1)}_{\text{pi}} \nonumber\\
        &= \left[ I + \alpha\cdot A H\right] \Delta x^{(s-1)}_{\text{pi-second}}.
\end{align}

In this way, based on Eq.~(\ref{eq:Delta-x-pi-recursive}),  $\forall 1 < s \le m_2$, $ \Delta x^{(s)}_{\text{pi-second}} $ can be represented as 
\begin{equation} \label{eq:Delta-x-pi-final}
    \begin{split}
    \Delta x^{(s)}_{\text{pi-second}} &=  \left[ I + \alpha \cdot A H\right] \Delta x^{(s-1)}_{\text{pi-second}}
    \quad //\quad\text{According to Eq.~(\ref{eq:Delta-x-pi-recursive})}
    \\
    &= \left[ I + \alpha\cdot A H\right] \left[ I + \alpha\cdot A H\right] \Delta x^{(s-2)}_{\text{pi-second}}
    \quad //\quad\text{According to Eq.~(\ref{eq:Delta-x-pi-recursive})}
    \\
    &= \left[ I + \alpha\cdot A H\right]^{2} \Delta x^{(s-2)}_{\text{pi-second}}
    \\
    &\vdots\\
    &=\left[ I + \alpha\cdot A H\right]^{s-1}\Delta x^{( 1)}_{\text{pi-second}}.
    \end{split}
\end{equation}
Thus, Lemma~\ref{lemma:pi-induction} is proven.
\end{proof}
$ $\newline

\textbf{Perturbation generated by the PI Attack.}
Let we focus on the adversarial perturbation of the PI Attack based on binary classification.
\emph{I.e.} we formulate the loss function $\textit{Loss}(x)$ as the cross-entropy loss with a sigmoid function.
\begin{equation} \label{eq:sigmoid-ce-in-linbp}
    \begin{split}
        p(y=1\mid x) &= \text{Sigmoid}(z) =  \frac{\exp(z)}{1 + \exp(z)} \\
        p(y=0\mid x) &= 1- p(y=1\mid x) = \frac{1}{1 + \exp(z)} \\
        \textit{Loss}(x) &= -p_0^*\log(p(y=0\mid x)) -p_1^*\log(p(y=1\mid x)) \\
        &= -yz + \log(1 + \exp(z)),
    \end{split}
\end{equation}
where the label $y\in \{0, 1\}$; $p^*_0=1-y$;  $p^*_1=y$.

In the  binary classification scenario, the adversarial perturbation of the PI Attack can be calculated as follows.
 We use the infinite-step adversarial attack with an infinitesimal step size to simplify the PI Attack.
Given $\beta_1 = \alpha m_1$ and $\beta_2 = \alpha m_2$, we have
\begin{align} \label{eq:pi-perturbation-definition}
    \delta^{(\text{pi})} = \delta_{\text{pi}}^{(\text{first}, \infty)} + \delta_{\text{pi}}^{(\text{second}, \infty)}, \nonumber \\
    \delta_{\text{pi}}^{(\text{first}, \infty)} = \lim_{m_1 \rightarrow \infty }\sum_{t=1}^{m_1} \Delta x_{\text{pi-first}}^{(t)}, \qquad \text{s.t.} \quad \alpha m_1 = \beta_1 \nonumber \\
    \delta_{\text{pi}}^{(\text{second}, \infty)} = \lim_{m_2 \rightarrow \infty }\sum_{s=1}^{m_2} \Delta x_{\text{pi-second}}^{(s)}. \qquad \text{s.t.} \quad \alpha m_2 = \beta_2
\end{align}

$ $\newline

\begin{lemma} \label{lemma:pi-perturbation}
    If the loss function is formulated as the cross-entropy loss on the sigmoid function, we use the infinite-step adversarial attack with an infinitesimal step size to simplify the PI Attack.
    Then, the perturbation generated by the PI Attack defined in Eq.~(\ref{eq:pi-perturbation-definition}) is given as $\delta^{(\text{pi})}=\gamma_1\left[\frac{\exp \left(\beta_{1} \lambda_1\right)-1}{\lambda_1}\right] v_1 + \gamma_1^{\prime}\left[\frac{\exp \left(\beta_{2} \lambda_1 v_1^{T} A v_1\right)-1}{\lambda v_1^{T} A v_1}\right] A v_1$, where $\beta_1=\alpha m_1$, $\beta_2 = \alpha m_2$, $\gamma_1=g^T v_1$ and $\gamma_1^\prime=\exp(\beta_1\lambda_1)\gamma_1$.
\end{lemma}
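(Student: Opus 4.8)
The plan is to exploit the very special spectral structure of binary classification, which collapses every matrix operation to a scalar one. By Lemma~\ref{lemma:H-binary} the Hessian is rank one, $H=\lambda_1 v_1 v_1^T$ with $\lambda_1>0$, and by Lemma~\ref{lemma:H-g-binary} the gradient is aligned with the single informative eigenvector, $g=\gamma_1 v_1$ with $\gamma_1=g^T v_1$. Consequently the whole attack stays inside the (at most) two-dimensional span of $v_1$ and $Av_1$, and I would organize the proof as the sum of the two stages defined in Eq.~\eqref{eq:pi-perturbation-definition}.

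First I would evaluate the first stage. Because $g$ has no component along the zero-eigenvalue directions, the first-stage accumulation $\lim_{m_1\to\infty}\sum_{t=1}^{m_1}\Delta x^{(t)}_{\text{pi-first}}$ is exactly the infinite-step perturbation of Corollary~\ref{corollaryappendix:delta-inf} applied to $\gamma_1 v_1$; only the $i=1$ summand survives, giving $\delta^{(\text{first},\infty)}_{\text{pi}}=\gamma_1\tfrac{\exp(\beta_1\lambda_1)-1}{\lambda_1}v_1$, which is the first claimed term. Next I would compute the seed of the second stage, $\Delta x^{(1)}_{\text{pi-second}}=\alpha A\nabla_x\textit{Loss}(x+\delta^{(m_1)}_{\text{pi}})$. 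Using Assumption~\ref{assumption:taylor}, the gradient there equals $g+H\delta^{(m_1)}_{\text{pi}}$, and since $Hv_1=\lambda_1 v_1$ this collapses to $(1+\alpha\lambda_1)^{m_1}\gamma_1 v_1$; in the limit $m_1\to\infty$ with $\alpha m_1=\beta_1$ it becomes $\gamma_1' v_1$ with $\gamma_1'=\exp(\beta_1\lambda_1)\gamma_1$, so $\Delta x^{(1)}_{\text{pi-second}}=\alpha\gamma_1' Av_1$.

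The key step is the second-stage geometric series $\delta^{(\text{second},\infty)}_{\text{pi}}=\lim_{m_2\to\infty}\sum_{s=1}^{m_2}[I+\alpha AH]^{s-1}\Delta x^{(1)}_{\text{pi-second}}$ coming from Lemma~\ref{lemma:pi-induction}, where the matrix $AH$ is \emph{not} symmetric and so cannot be eigen-decomposed in the usual way. The resolution I would use is that $AH=\lambda_1(Av_1)v_1^T$ is itself rank one, so $Av_1$ is a right eigenvector: $AH(Av_1)=\lambda_1(v_1^T Av_1)(Av_1)=\mu\,(Av_1)$ with $\mu\triangleq\lambda_1 v_1^T Av_1$. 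Hence $[I+\alpha AH]^{s-1}Av_1=(1+\alpha\mu)^{s-1}Av_1$, and the matrix sum reduces to the scalar sum $\alpha\sum_{s=1}^{m_2}(1+\alpha\mu)^{s-1}=\tfrac{(1+\alpha\mu)^{m_2}-1}{\mu}$. Passing to $m_2\to\infty$ with $\alpha m_2=\beta_2$ and using $(1+\beta_2\mu/m_2)^{m_2}\to\exp(\beta_2\mu)$ yields $\delta^{(\text{second},\infty)}_{\text{pi}}=\gamma_1'\tfrac{\exp(\beta_2\mu)-1}{\mu}Av_1$, the second claimed term; adding the two stages proves the lemma.

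The main obstacle is precisely the non-symmetry of $AH$: the clean eigen-decomposition used elsewhere in the paper does not apply, and it is the rank-one identity $AH=\lambda_1(Av_1)v_1^T$ together with its right eigenvector $Av_1$ that makes the matrix power tractable. A secondary point of care is the joint infinite-step limit: the step size $\alpha=\beta_1/m_1=\beta_2/m_2$ is shared, so $m_1,m_2\to\infty$ at the fixed ratio $\beta_2/\beta_1$, and the seed $\Delta x^{(1)}_{\text{pi-second}}$ carries the $m_1$-dependent factor $(1+\alpha\lambda_1)^{m_1}$ that must be sent to $\exp(\beta_1\lambda_1)$ simultaneously. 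To keep this rigorous I would carry out the whole computation at finite $m_1,m_2$ and only pass to the limit at the very end.
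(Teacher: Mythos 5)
Your proof is correct and follows essentially the same route as the paper: split into the two stages of Eq.~\eqref{eq:pi-perturbation-definition}, use the rank-one structure $H=\lambda_1 v_1 v_1^T$ and $g=\gamma_1 v_1$ from Lemmas~\ref{lemma:H-binary} and~\ref{lemma:H-g-binary} to collapse the first stage and the second-stage seed, then sum the geometric series and pass to the infinite-step limit. The only difference is cosmetic: where the paper evaluates $[I+\alpha AH]^{s-1}Av_1$ by a binomial expansion whose rank-one products telescope, you observe directly that $Av_1$ is a right eigenvector of the non-symmetric matrix $AH=\lambda_1(Av_1)v_1^T$ with eigenvalue $\lambda_1 v_1^TAv_1$ — a slightly cleaner step that yields the same factor $(1+\alpha\lambda_1 v_1^TAv_1)^{s-1}$.
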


\begin{proof}
    Let us first focus on the adversarial perturbation generated in the first stage $ \delta_{\text{pi}}^{(\text{first}, \infty)}$ defined in Eq.~\eqref{eq:pi-perturbation-definition}.
    According to Corollary~\ref{corollaryappendix:delta-inf}, 
    if we use the infinite-step adversarial attack with an infinitesimal step size, then the perturbation of the PI Attack generated in the first stage can be written as
\begin{equation}
\label{eq:pi-first-stage-1}
        \delta_{\text{pi}}^{(\text{first}, \infty)}= \sum_{i=1}^n D_{ii} \gamma_i v_i,
    \end{equation}    
  where if $\lambda_i\ne0$, $D_{ii}=\frac{\exp(\beta_1\lambda_i)-1}{\lambda_i}$; otherwise, $D_{ii}=\beta_1$.    
  $\lambda_i$ and $v_i$ denote the $i$-th eigenvalue of the Hessian matrix $H=\nabla_x^2 \textit{Loss}(x)$ and its corresponding eigenvector, respectively.
  $\gamma_i = g^T v_i \in \mathbb{R}$ denotes the projection of the gradient $g=\nabla_x \textit{Loss}(x)$ on the eigenvector $v_i$.

Furthermore, in binary classification, if the loss function is formulated as the cross-entropy loss on the sigmoid function,  according to Lemma~\ref{lemma:H-binary}, the Hessian matrix only has a single positive eigenvalue and $n-1$ zero eigenvalues.
Without loss of generality, let $\lambda_1 > 0$ and $\forall 2\le i \le n, \lambda_i=0$.
Thus, the Hessian matrix can be re-written as $H=\lambda_1 v_1 v_1^T$.
Moreover, according to Lemma~\ref{lemma:H-g-binary}, the gradient can be represented as $g = \nabla_x \textit{Loss}(x)=\gamma_1 v_1$.
Thus, $\forall i\ne 1$, the projection $\gamma_i$ can be written as $\gamma_i=g^T v_i = \gamma_1 v_1^T v_i=0$.
    
In this scenario, the perturbation $\delta_{\text{pi}}^{(\text{first}, \infty)}$ of the PI Attack generated in the first stage can be written as
\begin{equation} 
\begin{aligned}
\label{eq:pi-first}
        \delta_{\text{pi}}^{(\text{first}, \infty)} 
        &= \sum_{i=1}^n D_{ii} \gamma_i v_i
        \quad // \quad \text{According to Eq.~\eqref{eq:pi-first-stage-1}}
        \\
        &= \sum_{i=1}^n \frac{\exp(\beta_1\lambda_i)-1}{\lambda_i}  \gamma_i v_i
        \\
        &= \frac{\exp(\beta_1\lambda_1 )-1}{\lambda_1} \gamma_1 v_1.
          \quad // \quad  \forall i\ne 1,\; \gamma_i=0
\end{aligned}
\end{equation}

Then, based on Eq.~(\ref{eq:pi-first}), the gradient of the loss function~\textit{w.r.t.} the adversarial example $x' = x + \delta_{\text{pi}}^{(\text{first}, \infty)}$ can be written as
    \begin{align}
    \label{eq:grad-in-pi}
        g' &\triangleq \nabla_{x'} \textit{Loss}(x'=x + \delta_{\text{pi}}^{(\text{first}, \infty)} ) \nonumber \\
        &= g + H  \delta_{\text{pi}}^{(\text{first}, \infty)} 
        \quad //\quad \text{According to Assumption~\ref{assumption:taylor}}\nonumber \\
        &= \gamma_1 v_1 + \lambda_1 v_1 v_1^T  \frac{\exp(\beta_1\lambda_1 )-1}{\lambda_1} \gamma_1 v_1 
        \quad // \quad g =\gamma_1 v_1,  H=\lambda_1 v_1 v_1^T \;\text{in Lemma~\ref{lemma:H-binary} and Lemma~\ref{lemma:H-g-binary}}
        \nonumber \\
        &= \gamma_1 v_1 +  \left[\exp(\beta_1\lambda_1 )-1 \right] \gamma_1 v_1 \nonumber\\
        &=\exp(\beta_1\lambda_1 )\gamma_1 v_1 \nonumber \\
            &= \gamma'_1 v_1,
    \end{align}
    where $\gamma'_1=\exp(\beta_1 \lambda_1) \gamma_1\in\mathbb{R}$.

    In this way, based on Eq.~\eqref{eq:pi-update} and Eq.~\eqref{eq:grad-in-pi}, $\Delta x^{( 1)}_{\text{pi-second}} $ can be written as
    \begin{equation} \label{eq:delta-m-1}
        \begin{split}
            \Delta x^{( 1)}_{\text{pi-second}} &= \alpha\cdot A \nabla_{x'} \textit{Loss}(x'=x + \delta_{\text{pi}}^{(\text{first}, \infty)} )
             \quad //\quad\text{According to Eq.~(\ref{eq:pi-update})}
            \\
            &= \alpha\cdot \gamma'_1\cdot  A v_1.
             \quad //\quad\text{According to Eq.~(\ref{eq:grad-in-pi})}
        \end{split}
    \end{equation}

     Then, we focus on the adversarial perturbation generated in the second stage $\delta^{(\text{second}, m_2)}_{\text{pi}}$.
     For simplicity, $\forall m_1< t \le m_1+m_2$, let $s = t - m_1$ denote the step in the second stage.
    According to Lemma~\ref{lemma:pi-induction}, the perturbation of the PI Attack generated in the second stage can be written as
    \begin{align} \label{eq:pi-seoncd}
            \delta^{(\text{second}, m_2)}_{\text{pi}} &= \sum_{s=1}^{m_2} \Delta x^{(s)}_{\text{pi-second}}
             \quad //\quad\text{According to Eq.~\eqref{eq:pi-perturbation-definition}}
            \nonumber \\
            &= \sum_{s=1}^{m_2}\left[ I + \alpha A H\right]^{s-1}\Delta x^{( 1)}_{\text{pi-second}}
             \quad //\quad\text{According to Lemma~\ref{lemma:pi-induction}}
            \nonumber \\
            &= \sum_{s=1}^{m_2}\left[ \sum_{s'=0}^{s-1}\mybinom{s-1}{s'}(\alpha AH)^{s'}\right] \Delta x^{( 1)}_{\text{pi-second}}
            \nonumber \\
            &= \sum_{s=1}^{m_2}\left[ \sum_{s'=0}^{s-1}\mybinom{s-1}{s'}(\alpha A \lambda_1 v_1 v_1^T)^{s'}\right] \Delta x^{( 1)}_{\text{pi-second}} 
             \quad //\quad\text{According to Lemma~\ref{lemma:H-binary}}
            \nonumber\\
           &= \sum_{s=1}^{m_2}\left[ \sum_{s'=0}^{s-1}\mybinom{s-1}{s'}(\alpha A \lambda_1 v_1 v_1^T)^{s'} \right]\alpha \gamma'_1 A v_1 
           \quad //\quad\text{According to Eq.~(\ref{eq:delta-m-1}})
           \nonumber \\
            &= \sum_{s=1}^{m_2}\left[ I+\sum_{s'=1}^{s-1}\mybinom{s-1}{s'} (\alpha A \lambda_1 v_1 v_1^T)^{s'}\right]\alpha \gamma'_1 A v_1 
            \nonumber \\
            &= \sum_{s=1}^{m_2}\left[ I+\sum_{s'=1}^{s-1}\mybinom{s-1}{s'}(\alpha \lambda_1)^{s'} Av_1\; v_1^TAv_1\; v_1^T \cdots Av_1\; v_1^T\right]\alpha \gamma'_1 A v_1 
            \nonumber \\
            &= \sum_{s=1}^{m_2}\left[ I+\sum_{s'=1}^{s-1}\mybinom{s-1}{s'}(\alpha \lambda_1)^{s'} Av_1 (v_1^TAv_1)^{s'-1} v_1^T\right]\alpha \gamma'_1 A v_1 
            \nonumber \\
            &= \alpha \gamma'_1 \sum_{s=1}^{m_2}\left[ Av_1 + \sum_{s'=1}^{s-1}\mybinom{s-1}{s'}(\alpha \lambda_1 v_1^TAv_1)^{s'} Av_1\right] 
            \quad // \quad  v_1^TAv_1\in\mathbb{R}
            \nonumber \\
            &= \alpha \gamma'_1\sum_{s=1}^{m_2} \left[ 1 + \sum_{s'=1}^{s-1}\mybinom{s-1}{s'}(\alpha \lambda_1 v_1^TAv_1)^{s'} \right]Av_1 
            \nonumber\\
            &= \alpha \gamma'_1\sum_{s=1}^{m_2} \left( 1 + \alpha \lambda_1 v_1^TAv_1 \right)^{s-1}Av_1
             \nonumber\\
            &= \alpha \gamma'_1 \left[ \frac{(1 + \alpha \lambda_1 v^T_1 A v_1)^{m_2}-1}{\alpha \lambda_1 v^T_1 A v_1}\right]Av_1
            \nonumber \\
            &=  \gamma'_1 \left[ \frac{(1 + \frac{\beta_2}{m_2} \lambda_1 v^T_1 A v_1)^{m_2}-1}{\lambda_1 v^T_1 A v_1}\right]Av_1
            \quad // \quad \beta_2 = \alpha m_2 
    \end{align}
  
  In this way, 
  \begin{align}
      \delta^{(\text{second}, \infty)}_{\text{pi}} &= \lim_{m_2\rightarrow +\infty} \delta^{(\text{second}, m_2)}_{\text{pi}} \nonumber \\
      &=\lim_{m_2\rightarrow +\infty}   \gamma'_1 \left[ \frac{(1 + \frac{\beta_2}{m_2} \lambda_1 v^T_1 A v_1)^{m_2}-1}{\lambda_1 v^T_1 A v_1}\right]Av_1 \nonumber\\
      &=\gamma'_1 \left[ \frac{\exp(\beta_2\lambda_1 v^T_1 A v_1)-1}{\lambda_1 v^T_1 A v_1}\right]Av_1
  \end{align}

    Hence, based on Eq.~(\ref{eq:pi-first}) and Eq.~(\ref{eq:pi-seoncd}), the adversarial perturbation $ \delta^{(\text{pi})} $ of the PI Attack in Eq.~\eqref{eq:pi-perturbation-definition} can be written as
    \begin{align}
        \delta^{(\text{pi})}  &= \delta_{\text{pi}}^{(\text{first}, \infty)} + \delta^{(\text{second}, \infty)}_{\text{pi}} 
        \quad //\quad\text{According to Eq.~\eqref{eq:pi-perturbation-definition}}
        \\
            &=\gamma_1\left[\frac{\exp \left(\beta_{1} \lambda_1\right)-1}{\lambda_1}\right] v_1+\gamma_1^{\prime}\left[\frac{\exp \left(\beta_{2} \lambda_1 v_1^{T} A v_1\right)-1}{\lambda_1 v_1^{T} A v_1}\right] A v_1
    \end{align}
    where $\beta_1=\alpha m_1$, $\beta_2 = \alpha m_2$, and $\gamma^\prime_1=\exp(\beta_1\lambda_1)\gamma_1\in\mathbb{R}$.
Thus, Lemma~\ref{lemma:pi-perturbation} is proven.
\end{proof}
$ $\newline

\textbf{Pairwise interactions inside perturbation $\delta^{(\text{pi})}$}.
Based on the perturbations $\delta^{(\text{pi})}$ of the PI Attack,  the interaction between each pair of perturbation units $(a,b)$ can be computed as 
 \begin{equation}  
 \label{eq: pi-interaction-in-pi}
 I_{ab}(\delta^{(\textrm{pi})}) = \delta^{(\textrm{pi})}_a H_{ab} \delta^{(\textrm{pi})}_b.
       \quad // \quad \text{According to Eq.~\eqref{eq:shapley-interaction-index_2} in Appendix~\ref{append:interaction}}.
  \end{equation}

Furthermore, we sum the interactions among all pairs of perturbation units $(a, b)$ as follows.

\begin{small}
\begin{equation} 
\begin{aligned}
\label{eq:pi-interaction}
    \sum_{a, b\in \Omega} I_{ab}(\delta^{(\textrm{pi})} ) 
    &= (\delta^{(\textrm{pi})} )^T H \delta^{(\textrm{pi})}  
    \\
    &= \left(\gamma_1\left[\frac{\exp \left(\beta_{1} \lambda_1\right)-1}{\lambda_1}\right] v_1^T+\gamma_1^{\prime}\left[\frac{\exp \left(\beta_{2} \lambda v_1^{T} A v_1\right)-1}{\lambda v_1^{T} A v_1}\right]  v_1^T A^T\right) 
    \left(\lambda_1 v_1 v_1^T \right) 
    \delta^{(\textrm{pi})}  
     \\
    &= \lambda_1\left(\gamma_1\left[\frac{\exp \left(\beta_{1} \lambda_1\right)-1}{\lambda_1}\right] +\gamma_1^{\prime}\left[\frac{\exp \left(\beta_{2} \lambda v_1^{T} A v_1\right)-1}{\lambda v_1^{T} A v_1}\right] v_1^T A^T v_1 \right) 
    v_1^T  
    \delta^{(\textrm{pi})} 
\end{aligned}
\end{equation}
\end{small}     

Because the term $v_1^{T} A v_1\in\mathbb{R}$ in Eq.~\eqref{eq:pi-interaction} is a scalar, then we have $v_1^{T} A v_1 = (v_1^{T} A v_1)^{T} = v_1^T A^T v_1$.
In this way, interactions $\sum_{a, b\in \Omega} I_{ab}(\delta^{(\textrm{pi})} )$ in Eq.~\eqref{eq:pi-interaction} can be simplified as

\begin{small}
\begin{equation} 
\begin{aligned}
\label{eq:pi-interaction-1}
    &\sum_{a, b\in \Omega} I_{ab}(\delta^{(\textrm{pi})} ) 
    \\
    &= \lambda_1\left(\gamma_1\left[\frac{\exp \left(\beta_{1} \lambda_1\right)-1}{\lambda_1}\right] +\gamma_1^{\prime}\left[\frac{\exp \left(\beta_{2} \lambda v_1^{T} A v_1\right)-1}{\lambda v_1^{T} A v_1}\right] v_1^T A^T v_1 \right) 
    v_1^T  
    \delta^{(\textrm{pi})}       
      \\
      &= \lambda_1  \left( \gamma_1\frac{\exp(\beta_1\lambda_1) -1}{\lambda_1} + \gamma'_1 \frac{\exp(\beta_2\lambda_1 v_1^T A v_1) -1}{\lambda_1} \right)
       v_1^T  
    \delta^{(\textrm{pi})} 
     \\
     &= \lambda_1  \left( \gamma_1\frac{\exp(\beta_1\lambda_1) -1}{\lambda_1} + \gamma'_1 \frac{\exp(\beta_2\lambda_1 v_1^T A v_1) -1}{\lambda_1} \right)
     \left(\gamma_1\left[\frac{\exp \left(\beta_{1} \lambda_1\right)-1}{\lambda_1}\right] v_1^T   v_1+\gamma_1^{\prime}\left[\frac{\exp \left(\beta_{2} \lambda v_1^{T} A v_1\right)-1}{\lambda v_1^{T} A v_1}\right]  v_1^T   A v_1 \right)
     \\
    &= \lambda_1  \left( \gamma_1\frac{\exp(\beta_1\lambda_1) -1}{\lambda_1} + \gamma'_1 \frac{\exp(\beta_2\lambda_1 v_1^T A v_1) -1}{\lambda_1} \right)^2
     \\
    &=\lambda_1  \left( \gamma_1\frac{\exp(\beta_1\lambda_1) -1}{\lambda_1} + \gamma_1 \frac{\exp(\beta_2\lambda_1 v_1^T A v_1  + \beta_1 \lambda_1) -\exp(\beta_1\lambda_1)}{\lambda_1} \right)^2  
    \quad // \quad \gamma^\prime_1=\exp(\beta_1\lambda_1)\gamma_1
    \\
    &=\lambda_1 \left(  \gamma_1\frac{\exp(\beta_2\lambda_1 v_1^T A v_1  + \beta_1 \lambda_1) -1}{\lambda_1}\right)^2.
\end{aligned}
\end{equation}
\end{small}

\subsection{Multi-step attack}
In this subsection, we revisit the perturbation of the multi-step attack, which is defined in Appendix~\ref{append:delta-closed-form}.

\textbf{Perturbation generated by the multi-step attack.}
Note that the total number of attacking steps of the PI Attack is $m_1 + m_2$.
Thus, in order to ensure fair comparisons between the PI Attack and the multi-step attack, we also use $m_1 + m_2$ steps to craft perturbations of the multi-step attack.
Let $\beta_1=\alpha m_1$ and $\beta_2=\alpha m_2$.
Then, according to Corollary~\ref{corollaryappendix:delta-inf}, the adversarial perturbation $\delta^{(\textrm{multi})}$ of the multi-step attack can be represented as
\begin{equation}
\label{eq:delta-multi-inf-in-pi-1}
    \delta^{(\textrm{multi})}\triangleq \delta^{(\infty)} = \sum_{i=1}^n D_{ii} \gamma_i v_i,
\end{equation}
where if $\lambda_i\ne0$, $D_{ii}=\frac{\exp((\beta_1 + \beta_2)\lambda_i)-1}{\lambda_i}$; otherwise, $D_{ii}=\beta_1 + \beta_2$. 
$\gamma_i = g^T v_i \in\mathbb{R}$ represents the projection of the gradient $g=\nabla_x \textit{Loss}(x)$ on the eigenvector $v_i$.

In this section, because our goal is to compare the interaction between perturbation units generated by the PI and multi-step attacks, we require the multi-step attack to follow the same experimental setting as the PI Attack. That is, we conduct the multi-step attack on the DNN for binary classification, like in the PI Attack. In this way, according to Eq.~\eqref{eq:gamma-in-lbp}, the Hessian matrix $H=\nabla_x^2 \textit{Loss}(x)$ has only one positive eigenvalues and $n-1$ zero eigenvalues.
Moreover, we have $\forall i\ne 1, \gamma_i=g^T v_i = \gamma_1 v_1^T v_i=0$.
In this way, the adversarial perturbation $\delta^{(\textrm{multi})}$ of the multi-step attack in Eq.~\eqref{eq:delta-multi-inf-in-pi-1} can be re-written as
\begin{equation} 
\begin{aligned}
\label{eq:delta-multi-inf-in-pi}
\delta^{(\textrm{multi})}&=    \sum_{i=1}^n \frac{\exp((\beta_1 + \beta_2)\lambda_i)-1}{\lambda_i} \gamma_i v_i
\\
&=\frac{\exp((\beta_1 + \beta_2)\lambda_1)-1}{\lambda_1}\gamma_1 v_1,
\quad // \quad \forall i\ne 1,\; \gamma_i=0.
\end{aligned}
\end{equation}
where $\lambda_1$ is referred to as the only positive eigenvalue of $H=\nabla_x^2 \textit{Loss}(x)$, and $v_1$ indicates its corresponding eigenvector.
$ $\newline

\textbf{Pairwise interactions inside perturbation $\delta^{(\text{multi})}$}.
Based on the perturbations $\delta^{(\text{multi})}$ of the multi-step attack, the interaction between each pair of perturbation units $(a,b)$ can be computed as 
       \begin{equation}  \label{eq: multi-pairwise-interaction-in-pi}
       I_{ab}(\delta^{(\textrm{multi})}) = \delta^{(\textrm{multi})}_a H_{ab} \delta^{(\textrm{multi})}_b.
       \quad // \quad \text{According to Eq.~\eqref{eq:shapley-interaction-index_2} in Appendix~\ref{append:interaction}}.
       \end{equation}

       Furthermore, we sum the interactions among all pairs of perturbation units $(a, b)$ as follows.
       \begin{small}
       \begin{equation} \label{eq:multi-interaction-in-pi}
        \begin{split}
            &\sum_{a, b\in \Omega} I_{ab}(\delta^{(\textrm{multi})}) \\
            &= (\delta^{(\textrm{multi})})^T H \delta^{(\textrm{multi})} \\
            &= \left(\frac{\exp((\beta_1 + \beta_2)\lambda_1)-1}{\lambda_1} \gamma_1 v_1^T\right) \left(\lambda_1 v_1 v_1^T \right) \left(\frac{\exp((\beta_1 + \beta_2)\lambda_1)-1}{\lambda_1} \gamma_1 v_1\right)\quad // H=\lambda_1 v_1 v_1^T, \text{according to Lemma~\ref{lemma:H-g-binary}} \\
            &= \left(\frac{\exp((\beta_1 + \beta_2)\lambda_1)-1}{\lambda_1} \gamma_1 \lambda_1 v_1^T \right) \left(\frac{\exp((\beta_1 + \beta_2)\lambda_1)-1}{\lambda_1} \gamma_1 v_1\right) \qquad // \qquad v_1^T v_1=1\\
            &= \lambda_1 \left( \frac{\exp((\beta_1 + \beta_2)\lambda_1)-1}{\lambda_1} \gamma_1\right)^2.
        \end{split}
    \end{equation}\end{small}
$ $\newline

\subsection{Proof of Proposition~\ref{theoremappendix:pi}}

We assume the value of $\tau_a$ in Eq.~\eqref{eq:matrix-A} is under Assumption~\ref{assumption:tau-mono}.
Moreover, under Assumption~\ref{assumption:mean-v}, we consider the gradient of all pixels can be approximated by the small-scale gradient of adjacent pixels. 
  In binary classification scenario, if the adversarial strength $\beta=\alpha m$ is the same for the PI Attack and the multi-step attack, we prove the following proposition, which shows that PI Attack decreases the interaction between perturbation units.

\begin{propositionappendix}[Perturbations of the PI Attack exhibit smaller interactions than those of the multi-step attack]
    \label{theoremappendix:pi}
    In binary classification, if the classification loss for attacking  is formulated as the cross-entropy loss on the sigmoid function in Eq.~\eqref{eq:sigmoid-ce}, we use the infinite-step adversarial attack with an infinitesimal step size to simplify the PI Attack and the multi-step attack.
  If the adversarial strength $\beta=\alpha m$ is the same for the PI Attack and the multi-step attack, then, under Assumptions~\ref{assumption:tau-mono}~and~\ref{assumption:mean-v}, we have  $\sum_{a, b\in\Omega}[I_{a b}(\delta^{(\textrm{pi})})] \le \sum_{a, b\in\Omega}[I_{ab}(\delta^{(\textrm{multi})})]$.
  \end{propositionappendix}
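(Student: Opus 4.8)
The plan is to start from the two closed-form expressions for the sum of pairwise interactions that have already been derived: the PI Attack interaction in Eq.~\eqref{eq:pi-interaction-1} and the multi-step interaction in Eq.~\eqref{eq:multi-interaction-in-pi}. Both have the shape $(\gamma_1^2/\lambda_1)$ times the square of an $(\exp(\cdot)-1)$ factor, where $\lambda_1>0$ is the unique positive eigenvalue of the Hessian (Corollary~\ref{corollaryappendix:semi-definite} and Lemma~\ref{lemma:H-binary}) and $\gamma_1=g^Tv_1$. Cancelling the common nonnegative factor $\gamma_1^2/\lambda_1$, the desired inequality $\sum_{a,b}I_{ab}(\delta^{(\textrm{pi})})\le\sum_{a,b}I_{ab}(\delta^{(\textrm{multi})})$ reduces to comparing the two exponents, namely $\beta_2\lambda_1\,v_1^TAv_1+\beta_1\lambda_1$ for the PI Attack against $(\beta_1+\beta_2)\lambda_1$ for the multi-step attack. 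Since $\exp$ is increasing and both exponents are nonnegative under the assumptions below (so both $\exp(\cdot)-1$ terms are nonnegative and squaring preserves the order), the whole claim collapses to the single scalar inequality $v_1^TAv_1\le 1=v_1^Tv_1$.

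First I would establish the elementary structural properties of the redistribution matrix $A$ in Eq.~\eqref{eq:matrix-A}: all entries are nonnegative because $0\le\tau_a<1$, and each column sums to $(1-\tau_a)+K\cdot(\tau_a/K)=1$, so $A$ is column-stochastic. Writing $A=I-L$ with $L=D_\tau-N$, where $D_\tau=\mathrm{diag}(\tau_1,\dots,\tau_n)$ and $N$ carries the off-diagonal redistribution weights $\tau_a/K$, the target inequality becomes $v_1^TAv_1=1-v_1^TLv_1\le 1$, i.e.\ I must show the quadratic form $v_1^TLv_1\ge 0$. Expanding and grouping the neighbour contributions, this form rewrites as $\sum_a \tau_a\,(v_1)_a\big[(v_1)_a-\tfrac1K\sum_{b\in N(a)}(v_1)_b\big]$, a weighted comparison of each coordinate of $v_1$ with the average of its neighbours.

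The main obstacle is precisely controlling this neighbour-averaging form, and this is where the two hypotheses are indispensable: the inequality $v_1^TAv_1\le 1$ need not hold for arbitrary unit vectors $v_1$, since the symmetric part of $L$ is not positive semidefinite in general. Here I would invoke Assumption~\ref{assumption:mean-v}, under which the gradient (hence $v_1=\tilde w_x/\|\tilde w_x\|_2$ via Lemma~\ref{lemma:H-binary}) varies slowly across adjacent pixels, so that $\tfrac1K\sum_{b\in N(a)}(v_1)_b\approx(v_1)_a$ and the bracketed term is a small, second-order (discrete-Laplacian-type) quantity; together with Assumption~\ref{assumption:tau-mono}, which ties the magnitude of $\tau_a$ monotonically to the size of the perturbation at $a$, the weighting $\tau_a(v_1)_a$ aligns with the sign of the averaging defect so that the aggregate form is nonnegative. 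Intuitively, $A$ only redistributes perturbation mass from high-$\tau$ (norm-exceeding) units to their smaller neighbours, and under a locally smooth gradient this smoothing cannot push the quadratic form $v_1^TAv_1$ above its value $1$ at $A=I$. Verifying that these two approximations indeed yield $v_1^TLv_1\ge 0$ — and, along the way, that the PI exponent stays nonnegative so the squaring step is legitimate — is the delicate part; once it is in place, the exponent comparison and the cancellation of the common factor finish the proof immediately.
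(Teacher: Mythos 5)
Your overall strategy matches the paper's: both reduce the claim to the closed forms in Eq.~\eqref{eq:pi-interaction-1} and Eq.~\eqref{eq:multi-interaction-in-pi}, cancel the common factor $\gamma_1^2/\lambda_1$, and boil everything down to bounding $v_1^TAv_1$ via the quadratic-form expansion of Eq.~\eqref{eq:vAv}. However, there are two genuine gaps. First, your monotonicity-of-$\exp$ argument requires the PI exponent $\beta_1\lambda_1+\beta_2\lambda_1 v_1^TAv_1$ to be nonnegative, and you correctly flag this as unresolved; it does not follow from the bounds you have in hand (even with $v_1^TAv_1\ge -1$ it only gives $(\beta_1-\beta_2)\lambda_1$, which can be negative when the second stage dominates). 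If that exponent is negative, $a\le b$ no longer implies $(\exp(a)-1)^2\le(\exp(b)-1)^2$, so the squaring step fails. The paper avoids this by writing the difference of the two interaction sums as a difference of squares $X^2-Y^2=(X+Y)(X-Y)$ (Eq.~\eqref{eq:diff-interaction-pi}) and proving \emph{both} factors nonnegative: the factor $X-Y\ge 0$ uses the upper bound $v_1^TAv_1\le 1$, while the factor $X+Y\ge 0$ uses the \emph{lower} bound $v_1^TAv_1\ge -1$ (via a termwise Taylor grouping $1+(v_1^TAv_1)^t\ge 0$). That lower bound is established separately by applying Lemma~\ref{lemma:circle-theorem} (Gershgorin) and the Rayleigh quotient to the symmetrized matrix $B=A+A^T$; your proposal omits it entirely, and without it (or a proof that the PI exponent is nonnegative) the argument does not close.

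Second, your mechanism for $v_1^TAv_1\le 1$ rests on a misreading of Assumption~\ref{assumption:mean-v}. That assumption states that the \emph{neighborhood} average $\frac{1}{K}\sum_{j\in N_i}v_{1,j}$ approximates the \emph{global} average $\frac{1}{n}\sum_j v_{1,j}$, not the local value $v_{1,i}$. Your version makes the bracketed term in Eq.~\eqref{eq:vAv} approximately zero with an undetermined sign, and the claim that ``the weighting $\tau_a(v_1)_a$ aligns with the sign of the averaging defect'' is asserted, not proved --- there is no reason the discrete-Laplacian defect at pixel $a$ shares the sign of $\tau_a(v_1)_a$. The paper's actual argument is concrete: order the coordinates so that $v_{1,1}\ge\cdots\ge v_{1,n}$, use Assumption~\ref{assumption:tau-mono} to get $\tau_1v_{1,1}\ge\cdots\ge\tau_nv_{1,n}$, apply Chebyshev's sum inequality (Lemma~\ref{lemma:csinequality}) to obtain $\sum_i\tau_iv_{1,i}^2\ge\bigl(\sum_i\tau_iv_{1,i}\bigr)\bigl(\frac{1}{n}\sum_jv_{1,j}\bigr)$, and only then invoke Assumption~\ref{assumption:mean-v} to replace the global mean by each local neighborhood mean. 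You would need to reproduce that Chebyshev step (or an equivalent) for the upper bound to be a proof rather than a heuristic.
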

$ $\newline

\begin{proof}
According to Eq.~(\ref{eq:multi-interaction-in-pi}) and Eq.~(\ref{eq:pi-interaction-1}), the difference between interactions $\sum_{a, b\in \Omega} I_{ab}(\delta^{(\textrm{multi})})$ for the multi-step attack and interactions $\sum_{a, b\in \Omega} I_{ab}(\delta^{(\textrm{pi})} )$ for the PI Attack  can be written as

\begin{small}
\begin{equation}
\begin{aligned}
 \label{eq:diff-interaction-pi}
    &\sum_{a, b\in \Omega} I_{ab}(\delta^{(\textrm{multi})}) -\sum_{a, b\in \Omega} I_{ab}(\delta^{(\textrm{pi})} )  \\
    &= \lambda_1 \left( \gamma_1\frac{\exp((\beta_1 + \beta_2)\lambda_1)-1}{\lambda_1} \right)^2 - \lambda_1 \left(  \gamma_1\frac{\exp(\beta_2\lambda_1 v_1^T A v_1  + \beta_1 \lambda_1) -1}{\lambda_1}\right)^2 
    \quad // \quad \text{According to Eq.~\eqref{eq:multi-interaction-in-pi} and Eq.~\eqref{eq:pi-interaction-1}}
    \\
    &=  \frac{\gamma_1^2}{\lambda_1} \left\{ \bigg[ \exp(\beta_1\lambda_1) \left[ \exp(\beta_2\lambda_1) + \exp(\beta_2\lambda_1 v_1^T A v_1  ) \right]-2\bigg]  \bigg[ \exp(\beta_1\lambda_1) \left[ \exp(\beta_2\lambda_1) - \exp(\beta_2\lambda_1 v_1^T A v_1  ) \right] \bigg]  \right\}
\end{aligned}
\end{equation}
\end{small}

    Then, in order to prove that Eq.~\eqref{eq:diff-interaction-pi} is non-negative, we focus on the  range of $v_1^T A v_1$.

    \textbullet\;\textbf{The upper bound of  $v_1^T A v_1$.} The value of $v_1^T A v_1$ can be expressed as follows.
    \begin{align}\label{eq:vAv}
        v_1^T A v_1 &= [v_{1, 1}, v_{1, 2} \dots, v_{1, n}]
        \left[\begin{array}{cccc}
            1-\tau_{1} & \frac{\tau_{2}}{K} & \ldots & \frac{\tau_{n}}{K} \\
            \frac{\tau_{1}}{K} & 1-\tau_{2} & \ldots & 0 \\
            0 & \frac{\tau_{2}}{K} & \ldots & \frac{\tau_{n}}{K} \\
            \vdots & \vdots & \ddots & \vdots \\
            \frac{\tau_{1}}{K} & 0 & \ldots & 1-\tau_{n}
            \end{array}\right]
            \begin{bmatrix}
                v_{1, 1} \\
                v_{1, 2} \\
                \vdots \\
                v_{1, n} \\
            \end{bmatrix} \nonumber
            \\
            &= [v_{1, 1}, v_{1, 2} \dots, v_{1, n}]
            \begin{bmatrix}
                (1-\tau_1) v_{1, 1}  + \frac{1}{K} \sum_{j\in N_1} \tau_j v_{1, j} \\
                \vdots \\
                (1-\tau_i) v_{1, i}  + \frac{1}{K} \sum_{j\in N_i} \tau_j v_{1, j} \\
                \vdots \\
                (1-\tau_n) v_{1, n}  + \frac{1}{K} \sum_{j\in N_n} \tau_j v_{1, j} \\
            \end{bmatrix} \nonumber
            \\
            &=\sum_{i=1}^n (1-\tau_i) v_{1, i}^2 + \frac{1}{K}\sum_{i=1}^n\sum_{j\in N_i}\tau_i v_{1, i} v_{1, j} \nonumber \\
            &=1 - \left[\sum_{i=1}^n \tau_i v_{1, i}^2 -\sum_{i=1}^n\tau_i v_{1, i} \frac{1}{K}\sum_{j\in N_i} v_{1, j}\right].
            \quad // \quad \sum_{i=1}^n v_{1, i}^2 = 1
    \end{align}
Here, $N_i$ denotes a set of $K$ pixels closest to the pixel $i$.

        Furthermore, we use the Chebyshev's sum inequality in Lemma~\ref{lemma:csinequality} to determine the upper bound of Eq.~\eqref{eq:vAv}.
        In the second stage of the PI Attack,  we can consider that the perturbation on some pixels has not yet reached the epsilon ball, then $\tau_i=0$, and the gradient strength on this part of the pixels is often very small, that is, $|g_i|$ is small.
        In another part of the pixels with large values of $|g_i|$, they have often reached the edge of the epsilon ball, then in each subsequent steps, the perturbations is simply clipped to the spherical surface of the epsilon-ball, then we can roughly consider $\tau_i \propto |g_i|$, because the perturbation out of the epsilon-ball $\delta_i-\delta_i^{\text{clip}} $ is required to distributed to neighboring pixels.
        Moreover, in binary classification, if the loss is formulated as the cross-entropy loss on a sigmoid function, we have $g = \gamma_1 v_1$.
        In this way, the above claim can be roughly summarized as the following assumption in an ideal case.
        
    \begin{assumption} \label{assumption:tau-mono}
        We assume that $\tau_i(|v_{1, i}|)\in [0, 1]$ is a monotonically increasing function \emph{w.r.t.} $|v_{1, i}|$.
        Thus, if $v_{1, 1} \ge v_{1, 2} \ge \dots \ge v_{1, n}$, then we have $\tau_1v_{1, 1} \ge \tau_2v_{1, 2} \ge\dots \ge \tau_n v_{1, n}$.
    \end{assumption}
    
    Without loss of generality,  we assume $v_{1, 1} \ge v_{1, 2} \ge \dots \ge v_{1, n}$.
    Then, according to Assumption~\ref{assumption:tau-mono}, we have $\tau_1v_{1, 1} \ge \tau_2v_{1, 2} \ge\dots \ge \tau_n v_{1, n}$.
    In this way, applying Lemma~\ref{lemma:csinequality}~(Chebyshev's sum inequality), we obtain
    
    \begin{equation} \label{eq:vAv-csinequality}
        \begin{split}
            &\frac{1}{n}\sum_{i=1}^n \tau_i v_{1, i}^2 \ge \left(\frac{1}{n}\sum_{i=1}^n \tau_i v_{1, i} \right) \left(\frac{1}{n}\sum_{j=1}^n  v_{1, j} \right) \\
            \Rightarrow& \sum_{i=1}^n \tau_i v_{1, i}^2 - \left(\sum_{i=1}^n \tau_i v_{1, i} \right)\left(\frac{1}{n}\sum_{j=1}^n  v_{1, j} \right) \ge 0.
        \end{split}
    \end{equation}

    Moreover, for most pixels in an image, we can assume that mean value of the gradient of each pixel can be approximated as the mean value of gradients on the neighboring pixels of the $i$-th pixel.
    Moreover, in binary classification, if the loss is formulated as the cross-entropy loss on a sigmoid function, we have $g = \gamma_1 v_1$.
    \begin{assumption} \label{assumption:mean-v}
    We consider there are 4 neighbors or 8 neighbors for each image pixel, \emph{i.e.} $K=4$ or $8$.
        We assume that $\frac{1}{n}\sum_{i=1}^n  v_{1, i}\approx  \frac{1}{K}\sum_{j\in N_i} v_{1, j}$, where $N_i$ denotes the neighboring dimension of $v_{1, i}$.
    \end{assumption}
   
    Thus, based on Assumption~\ref{assumption:mean-v}, we have
    \begin{equation}
    \label{eq:vAv-csinequality-1}
    \sum_{i=1}^n \tau_i v_{1, i}^2 - \left(\sum_{i=1}^n \tau_i v_{1, i} \right)\left(\frac{1}{n}\sum_{j=1}^n  v_{1, j} \right)
    \approx
     \sum_{i=1}^n \tau_i v_{1, i}^2 - \sum_{i=1}^n \tau_i v_{1, i} \frac{1}{K}\sum_{j\in N_i} v_{1, j} 
         \ge 0
    \end{equation}

    In this way, based on Eq.~\eqref{eq:vAv-csinequality-1} and  Eq.~\eqref{eq:vAv}, the upper bound of $v_1^T A v_1$ can be represented as
    \begin{equation} 
    \begin{aligned}
    \label{eq:vAv-right}
        v_1^T A v_1 &=1 - \left[\sum_{i=1}^n \tau_i v_{1, i}^2 -\sum_{i=1}^n\tau_iv_{1, i} \frac{1}{K}\sum_{j\in N_i} v_{1, j}\right]
        \quad // \quad \text{According to Eq.~\eqref{eq:vAv}}
        \\        
        & \le 1-0
         \quad // \quad \text{According to  Eq.~\eqref{eq:vAv-csinequality-1}}
        \\
        &= 1.
   \end{aligned}
    \end{equation}

    \textbullet\; \textbf{The lower bound of  $v_1^T A v_1$.}
    Because the term $v_1^T A v_1\in\mathbb{R}$ is a scalar, we have $v_1^T A v_1 = (v_1^T A v_1)^T =v_1^T A^T v_1 $.
    Note that matrix $A$ is not a symmetric matrix, we construct a real symmetric matrix $B = A + A^T$ to derive the lower bound of the term $v_1^T A v_1$.
    To this end, we first use the Rayleigh quotient to derive the range of the term $v_1^T B v_1$.

    Specifically, the Rayleigh quotient of matrix $B$ is defined as $\frac{q^T B q}{q^T q}$, where $q\in\mathbb{R}^n$ is a non-zero vector.
    The  Rayleigh quotient is bounded by the smallest eigenvalue of matrix $B$ and the largest eigenvalue as follows.
    \begin{equation}
    \label{eq:Rayleigh quotient}
       \min_k \lambda^{(B)}_k\le  \frac{q^T B q}{q^T q} \le \max_k \lambda^{(B)}_k
    \end{equation}
    where $\lambda^{(B)}_k$ denotes the $k$-th eigenvalue of matrix $B$.
    According to Lemma~\ref{lemma:circle-theorem}~(Gershgorin circle theorem), each eigenvalue of matrix $B$ lies within at least one of the Gershgorin discs $D(B_{ii}, R_i)$, where $B_{ii}=A_{ii}+A_{ii}^{T}=2(1-\tau_i)$ and $R_i = \sum_{j\ne i} |B_{ij}|= \tau_i + \frac{\sum_{j\in N_i} \tau_j}{K}$.
    Note that $0\le \tau_i \le 1$.
    Thus, each eigenvalue of $B$ is bounded as follows.

    \begin{align}
    \label{eq:eigenvalue-in-pi-att}
        \min_i (B_{ii} - R_i) \le &\lambda^{(B)}_k \le \max_i (B_{ii} + R_i) \nonumber \\
        \Rightarrow\quad \min_i (2 - 3\tau_i - \frac{\sum_{j\in N_i} \tau_j}{K})  \le  &\lambda^{(B)}_k  \le \max_i (2 -\tau_i +\frac{\sum_{j\in N_i} \tau_j}{K})\nonumber \\
        \Rightarrow\quad  -2 \le &\lambda^{(B)}_k \le 3
    \end{align}

    Thus, based on Eq.~\eqref{eq:Rayleigh quotient} and Eq.~\eqref{eq:eigenvalue-in-pi-att}, we obtain
    \begin{align}
    \label{eq:vbv-1}
        v_1^T B v_1 &\ge (\min_k \lambda^{(B)}_k) v_1^T v_1 \ge -2. \quad //\quad v_1^T v_1=1
    \end{align}

    Note that $v_1^T B v_1 = v_1^T (A+A^{T}) v_1=v_1^T A v_1+(v_1^T A v_1)^{T}=2 v_1^T A v_1\in\mathbb{R}$.
    Then, based on Eq.~\eqref{eq:vbv-1}, we obtain the lower bound of the term $v_1^T A v_1$.
    \begin{align} \label{eq:vAv-left}
        v_1^T A v_1 &\ge -1.
    \end{align}

    Hence, combining Eq.~(\ref{eq:vAv-right}) and Eq.~(\ref{eq:vAv-left}), the range of the term $v_1^T A v_1$ can be represented as
    \begin{equation} \label{eq:vAv-inequality}
        -1 \le v_1^T A v_1 \le 1.
    \end{equation}

    Note that $\lambda_1, \beta_1, \beta_2 > 0$.
    According to Eq.~(\ref{eq:diff-interaction-pi}) and Eq.~(\ref{eq:vAv-inequality}), we have proven interactions $\sum_{a, b\in \Omega} I_{ab}(\delta^{(\textrm{multi})})$ for the multi-step attack are larger than interactions $\sum_{a, b\in \Omega} I_{ab}(\delta^{(\textrm{pi})})$ for the PI Attack.
    \begin{align}
        &\sum_{a, b\in \Omega} I_{ab}(\delta^{(\textrm{multi})}) -\sum_{a, b\in \Omega} I_{ab}(\delta^{(\textrm{pi})} ) \nonumber  \\
        &=\frac{\gamma_1^2}{\lambda_1} \left\{ \left[ \exp(\beta_1\lambda_1) \left[ \exp(\beta_2\lambda_1) + \exp(\beta_2\lambda_1 v_1^T A v_1  ) \right]-2\right]  \left[ \exp(\beta_1\lambda_1) \left[ \exp(\beta_2\lambda_1) - \exp(\beta_2\lambda_1 v_1^T A v_1  ) \right] \right]  \right\}
        \nonumber \\
        &\qquad \qquad\qquad\qquad// \quad \text{According to  Eq.~\eqref{eq:diff-interaction-pi}}
        \nonumber \\
        &= \frac{\gamma_1^2}{\lambda_1} \left\{ \left[ \exp(\beta_1\lambda_1) \left[ \sum_{t=0}^{+\infty} \frac{(\beta_2\lambda_1)^t}{t!} + \sum_{t=0}^{+\infty} \frac{(\beta_2\lambda_1 v_1^T A v_1)^t}{t!} \right]-2\right]  \left[ \exp(\beta_1\lambda_1) \bigg[ \exp(\beta_2\lambda_1) - \exp(\beta_2\lambda_1 v_1^T A v_1  ) \bigg] \right]  \right\}\nonumber \\
        &= \frac{\gamma_1^2}{\lambda_1} \left\{ \left[ \exp(\beta_1\lambda_1) \left[ 1 + \sum_{t=1}^{+\infty} \frac{(\beta_2\lambda_1)^t}{t!} + 1 + \sum_{t=1}^{+\infty} \frac{(\beta_2\lambda_1 v_1^T A v_1)^t}{t!} \right]-2\right]  \left[ \exp(\beta_1\lambda_1) \left[ \exp(\beta_2\lambda_1) - \exp(\beta_2\lambda_1 v_1^T A v_1  ) \right] \right]  \right\}\nonumber \\
        &= \underbrace{\frac{\gamma_1^2}{\lambda_1}}_{\ge 0} \left\{ \left[ \underbrace{\exp(\beta_1\lambda_1)}_{> 0} \underbrace{ \left[ \sum_{t=1}^{+\infty} \frac{ (\beta_2 \lambda_1)^t (1 + (v_1^T A v_1)^t)}{t!} \right]}_{\ge 0} + \underbrace{2 (\exp(\beta_1\lambda_1)-1)}_{\ge 0}\right]  \left[ \underbrace{\exp(\beta_1\lambda_1) }_{> 0} \underbrace{\left[ \exp(\beta_2\lambda_1) - \exp(\beta_2\lambda_1 v_1^T A v_1  ) \right]}_{\ge 0} \right]  \right\} \nonumber \\
        & \ge 0.
        \quad // \quad \text{According to  Eq.~\eqref{eq:vAv-inequality}}
    \end{align}

Thus, Proposition~\ref{theoremappendix:pi} is proven.
\end{proof}

\section{Proof that the perturbation generated by the single-step attack on the DNN trained  by the IA Attack exhibit smaller interactions than that generated by the single-step attack on the normally-trained DNN}
\label{append:ia}
\subsection{IA Attack}
It is found in~\cite{zhu2022rethinking} that out-of-distribution (OOD) adversarial examples exhibit high adversarial transferability.
Based on this finding, given a normally-traiend DNN, the IA Attack~\cite{zhu2022rethinking} is proposed to finetune it, in order to enable it to generate more OOD adversarial examples.
Note that the IA Attack is not an attacking method for directly generating adversarial examples.
Instead, the IA Attack finetunes a normally-trained DNN, and then generates adversarial examples on the finetuned DNN.

Given the input $x$ with label $y$ and a DNN parameterized by $\theta$, we use $p_{\theta}(y\mid {x})$ to denote the probability that $x$ is classified to the ground-truth category $y$.
The objective of the IA Attack is given as follows. 
\begin{equation} \label{eq:ori-obj-ia}
    \theta^{\text{(ia)}}  = \arg\max_{\theta}   \mathbb{E}_{x}\left[ -\nabla_{x}\cdot\frac{\nabla_{x} \log ( p_{\theta}(y\mid x))}{\left\|\nabla_{x} \log ( p_{\theta}(y\mid x))\right\|_{2}} \right]
\end{equation}

Note that according to~\cite{zhu2022rethinking}, Eq.\eqref{eq:ori-obj-ia} does not simply compute the higher-order derivative on the normalized gradient $\frac{\nabla_{x} \log ( p_{\theta}(y \mid x))}{\left\|\nabla_{x} \log ( p_{\theta}(y \mid x))\right\|_{2}}$, in a similar way of computing a Hessian matrix.
Instead, $\nabla_{x}\cdot\frac{\nabla_{x} \log ( p_{\theta}(y \mid x))}{\left\|\nabla_{x} \log ( p_{\theta}(y \mid x))\right\|_{2}}$  is equivalent to $(\frac{\partial }{\partial x_1},\frac{\partial }{\partial x_2}, \dots, \frac{\partial }{\partial x_n} ) \frac{\nabla_{x} \log ( p_{\theta}(y \mid x))}{\left\|\nabla_{x} \log ( p_{\theta}(y \mid x))\right\|_{2}}  = \sum_{i=1}^n \frac{\partial }{\partial x_i} ((\frac{\nabla_{x} \log ( p_{\theta}(y \mid x))}{\left\|\nabla_{x} \log ( p_{\theta}(y \mid x))\right\|_{2}})_i)$.

We use  $\theta^{\text{(ia)}}$ to denote the DNN {trained} by the IA Attack.
Note that in~\cite{zhu2022rethinking}, the DNN uses SoftPlus activation functions. Here, to simplify our analysis, we conduct our analysis on the DNN with ReLU activation functions.

Then, we compare adversarial perturbations generated by the single-step attack on the DNN {trained} by the IA Attack with those generated by the single-step attack on the normally-trained DNN.
Note that we denote the the classification loss for attacking computed on the DNN $\theta$ as $\textit{Loss}_{\theta} (x)$.

\textbf{Single-step attack on the  DNN {trained} by the IA Attack.}
We generate the adversarial perturbation by conducting the single-step attack on the DNN $\theta^{\text{(ia)}}$ {trained} by the IA Attack.
In this way, the perturbation is expressed as follows.
\begin{align}
    \delta^{(\text{single})}_{\theta^{(\text{ia})}} \triangleq \eta g_{\theta^{\text{(ia)}}}
\end{align}
where $\eta \in \mathbb{R}$ denotes the step size, and $g_{\theta^{\text{(ia)}}} = \nabla_x \textit{Loss}_{\theta^{\text{(ia)}}} (x)$ denotes the gradient \emph{w.r.t.} the input sample on the DNN $\theta^{\text{(ia)}}$ {trained} by the IA Attack.
Then, the sum of pairwise interactions inside $\delta^{(\text{single})}_{\theta^{(\text{ia})}}$ is given as follows.
\begin{align}
    \sum_{a, b\in\Omega} \left[I_{ab}\left(\delta^{(\text{single})}_{\theta^{(\text{ia})}}\right)\right] = (\delta^{(\text{single})}_{\theta^{(\text{ia})}})^T H_{\theta^{(\text{ia})}} \delta^{(\text{single})}_{\theta^{(\text{ia})}} = \eta^2 (g_{\theta^{(\text{ia})}})^T H_{\theta^{(\text{ia})}} g_{\theta^{(\text{ia})}}
\end{align}

\textbf{Single-step attack on the normally-trained DNN.}
We use $\theta^{\text{(nor)}}$ to denote the normally-trained DNN.
Similarly, we generate the adversarial perturbation by conducting the single-step attack on the  normally-trained DNN $\theta^{\text{(nor)}}$.
In this way, the perturbation is given as follows.
\begin{align}
    \delta^{(\text{single})}_{\theta^{\text{(nor)}}} \triangleq \eta g_{\theta^{\text{(nor)}}}
\end{align}
 where $g_{\theta^{\text{(nor)}}} = \nabla_x \textit{Loss}_{\theta^{\text{(nor)}}} (x)$ denotes the gradient \emph{w.r.t.} the input sample on the normally-trained DNN $\theta^{\text{(nor)}}$.
Then, the sum of pairwise interactions inside $\delta^{(\text{single})}_{\theta^{\text{(nor)}}}$ is given as follows.
\begin{align}
    \sum_{a, b\in\Omega}\left[I_{ab} \left(\delta^{(\text{single})}_{\theta^{\text{(nor)}}}\right)\right] = (\delta^{(\text{single})}_{\theta^{\text{(nor)}}})^T H_{\theta^{\text{(nor)}}} \delta^{(\text{single})}_{\theta^{\text{(nor)}}} = \eta^2 (g_{\theta^{\text{(nor)}}})^T H_{\theta^{\text{(nor)}}} g_{\theta^{\text{(nor)}}}
\end{align}

In the following subsections, under two major conditions that (1) the prediction probability of the ground truth category $p_{\theta^{\text{(nor)}}}(y\mid x)$ and $p_{\theta^{\text{(nor)}}}(y \mid x)$ are the same, and both are similar and confident enough; (2) the vectors corresponding to different categories are orthogonal to each other, we can prove that $\sum_{a, b\in\Omega} \left[I_{ab}\left(\delta^{(\text{single})}_{\theta^{(\text{ia})}}\right)\right] \le \sum_{a, b\in\Omega}\left[I_{ab} \left(\delta^{(\text{single})}_{\theta^{\text{(nor)}}}\right)\right]$.
Before the proof, we first introduce some lemmas.

\subsection{Proof of Lemma~\ref{lemma:same-eigenvalue}} \label{sec:proof-lemma-same-eigenvalue}
In this subsection, we derive the relationship between the eigenvalue of $H = \frac{\partial^2\textit{Loss}}{\partial x \partial x^T}$ and the eigenvalue of $H_z = \frac{\partial^2\textit{Loss}}{\partial z \partial z^T}$, where $x$ is the input of the DNN, and $z\in \mathbb{R}^c$ denote the feature input into the softmax layer to generate the classification probability in the DNN.

In this subsection, we assume that the vectors corresponding to different categories in thd DNN trained by IA Attack are mutually orthogonal.
The reasons for this are as follows.
According to Assumption~\ref{assumption:activation}, given the input $x$, the output before the softmax operation of the $i$-th category is given as  $z_i = (\tilde{W}_x)_i^T x + b_i$, where $(\tilde{W}_x)_i$ denotes the $i$-th column of $\tilde{W}_x$.
Thus, we can consider the class information of the $i$-th category is encoded by $(\tilde{W}_x)_i$. 
In this study, we roughly consider that different categories share very little information, so we can assume that weight vectors of different categories $(\tilde{W}_x)_i$ are orthogonal to each other.
This assumption has been partially supported in previous studies~\cite{papyan2020prevalence}.
Moreover, we assume that  $(\tilde{W}_x)_i$ of different categories have similar strength.

\begin{lemma} \label{lemma:same-eigenvalue}
We are given $H = \tilde{W}_x H_z  \tilde{W}_x^T$, where $H_z = \frac{\partial^2\textit{Loss}}{\partial z \partial z^T}$.
        We use $\lambda_i$ ($1\le i \le n$) and $\lambda_j'$ ($1\le j \le c$), to denote the eigenvalue of $H$ and $ H_z$, respectively.
        Based on Assumption~\ref{assum:ortho-weights}, we have $\forall 1\le i\le c, \lambda_i = \kappa\lambda_i'$; $\forall i > c, \lambda_i=0$.
\end{lemma}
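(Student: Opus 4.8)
The plan is to exploit the near-isometry guaranteed by Assumption~\ref{assum:ortho-weights}, namely $\tilde{W}_x^T \tilde{W}_x = \kappa I$, to transfer the spectrum of the small $c\times c$ matrix $H_z$ onto the large $n\times n$ matrix $H = \tilde{W}_x H_z \tilde{W}_x^T$. First I would observe that $\kappa = \|(\tilde{W}_x)_i\|_2^2 \ge 0$ is a squared column norm, and is strictly positive in the non-degenerate case; assuming $\kappa > 0$, I define the rescaled matrix $U = \tilde{W}_x / \sqrt{\kappa} \in \mathbb{R}^{n\times c}$, whose columns are orthonormal since $U^T U = \frac{1}{\kappa}\tilde{W}_x^T \tilde{W}_x = I_c$. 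In this notation the identity becomes $H = \kappa\, U H_z U^T$.

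Next, since $H_z = \frac{\partial^2 \textit{Loss}}{\partial z \partial z^T}$ is a real symmetric matrix, it admits an eigendecomposition $H_z = \sum_{j=1}^c \lambda_j' u_j u_j^T$ with orthonormal eigenvectors $u_j \in \mathbb{R}^c$. Substituting this into $H = \kappa\, U H_z U^T$ gives $H = \sum_{j=1}^c (\kappa \lambda_j')\, (U u_j)(U u_j)^T$. The central observation is that the vectors $w_j := U u_j \in \mathbb{R}^n$ remain orthonormal, because $w_j^T w_k = u_j^T U^T U u_k = u_j^T u_k = \delta_{jk}$ by $U^T U = I_c$. Hence this displayed sum is already a partial spectral decomposition of $H$, exhibiting each $\kappa \lambda_j'$ as an eigenvalue of $H$ with eigenvector $w_j$.

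Finally I would account for the remaining $n-c$ eigenvalues via a rank argument. The column space of $H$ is contained in $\operatorname{span}\{w_1,\ldots,w_c\}$, so the rank of $H$ is at most $c$; every vector in the orthogonal complement of this span lies in the kernel of $H$, supplying $n-c$ zero eigenvalues. Ordering the eigenvalues of $H$ so that $\kappa\lambda_1',\ldots,\kappa\lambda_c'$ (matching the ordering of $\lambda_1',\ldots,\lambda_c'$) come first and the zeros come last yields exactly $\lambda_i = \kappa\lambda_i'$ for $1\le i \le c$ and $\lambda_i = 0$ for $i > c$, as claimed.

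Rather than any heavy computation, the main subtlety will be the bookkeeping around ordering and the sign of $\kappa$: the identity $\lambda_i = \kappa\lambda_i'$ is really a statement about matched sorted spectra, and the rescaling $U = \tilde{W}_x/\sqrt{\kappa}$ is only legitimate once $\kappa > 0$ is secured from Assumption~\ref{assum:ortho-weights}. I would also note that if $\lambda_j' = 0$ for some $j \le c$, the corresponding value $\kappa\lambda_j'$ simply merges into the block of zero eigenvalues, which is harmless to the stated conclusion.
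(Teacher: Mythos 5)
Your proposal is correct and follows essentially the same route as the paper's proof: eigendecompose $H_z$, push the eigenvectors through $\tilde{W}_x$, use $\tilde{W}_x^T\tilde{W}_x=\kappa I$ to show the images remain (scaled) orthogonal eigenvectors of $H$ with eigenvalues $\kappa\lambda_j'$, and finish with a rank bound for the $n-c$ zero eigenvalues. The only cosmetic difference is that you normalize by $\sqrt{\kappa}$ up front to get genuinely orthonormal columns, whereas the paper works directly with $U=\tilde{W}_x V'$ satisfying $U^TU=\kappa I$ and verifies $Hu_i=\kappa\lambda_i'u_i$ explicitly.
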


\begin{proof}
Because $H_z$ is a symmetric matrix, it can be decomposed into 
\begin{equation} \label{eq:H-z-decompose}
        H_z=V' \Lambda' {V'}^{\text{T}},
\end{equation}
where $\Lambda=\text{diag}(\lambda_1',\lambda_2',\ldots,\lambda_c')$ is a diagonal matrix, whose diagonal elements are the corresponding eigenvalues, $\Lambda_{ii}' = \lambda_i'$.
The square matrix $V'=[v_1',v_2',\ldots,v_c'] \in \mathbb{R}^{c \times c}$ contains $c$ mutually orthonormal eigenvectors, \emph{i.e.} $\forall i\neq j, {v_i'}^Tv_j'=0$; and $\forall i, {v_i'}^T v_i'=1$.
Thus, $V'$ is an orthonormal matrix, \emph{i.e.} $V' {V'}^T = {V'}^T V =I$.

In this way, $H$ can be represented as follows.
\begin{align}
    H &= \tilde{W}_x H_z \tilde{W}_x^T \nonumber \\
    &= \tilde{W}_x V' \Lambda' {V'}^{\text{T}} \tilde{W}_x^T \nonumber \\
    &= U \Lambda' U^T,
\end{align}
where we use $U\in\mathbb{R}^{n\times c}$ to denote $\tilde{W}_x V'$ for simplicity.
In this way, we have
\begin{align}
    H &= U \Lambda' U^T \nonumber \\
    &= [u_1, u_2, \dots, u_c] \Lambda'
    \begin{bmatrix}
        u_1^T \\
        u_2^T \\
        \vdots \\
        u_n^T
    \end{bmatrix} \nonumber \\
    &= [\lambda_1' u_1,\lambda_2' u_2, \dots, \lambda_c' u_c]
    \begin{bmatrix}
        u_1^T \\
        u_2^T \\
        \vdots \\
        u_n^T
    \end{bmatrix} \nonumber \\
    &= \sum_{j=1}^c \lambda_j' u_j u_j^T
\end{align}
where $u_j \in \mathbb{R}^n$

Based on Assumption~\ref{assum:ortho-weights}, we have 
    \begin{equation}
        \tilde{W}_x^T \tilde{W}_x = \kappa\cdot I.
    \end{equation}

Thus, 
\begin{equation}
    U^T U =  {V'}^T \tilde{W}_x^T\tilde{W}_xV' = \kappa\cdot I,
\end{equation}
which means that $\forall j'\neq j, u_{j'}^T u_j=0$; and $\forall j, u_j^T u_j = \kappa$.

We now prove that $u_i$ and $\kappa\lambda_i'$ are the eigenvector and eigenvalue of $H$, respectively.
\begin{align}
    \forall 1\le i\le c, \quad H u_i &= (\sum_{j=1}^c \lambda_j' u_j u_j^T) u_i = \kappa\lambda_i' u_i
\end{align}

Thus,  we have $\forall 1\le i\le c, \lambda_i = \kappa\lambda_i'$.
Moreover, because $\text{rank}(H) \le \min (\text{rank}(H_z\in\mathbb{R}^{c\times c}), \text{rank}(\tilde{W}_x\in\mathbb{R}^{n\times c})) \le c$, we have $\forall i > c, \lambda_i=0$.
Thus, this lemma is proven.
\end{proof}
$ $\newline

\subsection{Proof of Lemma~\ref{lemma:ia-essence}}
In this subsection, we prove that the objective function of the IA Attack on the specific sample $x$ can be roughly understood as the minimization of $\sum_{i\ne y} p_i^2$, where $p_i$ is the prediction probability of $x$ belonging to category $i$ as Eq.~\eqref{eq:softmax-ce}.
In this way, we simply investigate the pairwise interaction between perturbation units, which are generated by a DNN on the specific sample $x$.
Let us simplify the story by focusing on the case of simply applying the distinctive loss function of the IA Attack in Eq.~\eqref{eq:ori-obj-ia} to the input sample $x$.
As mentioned above, we use Assumption~\ref{assum:ortho-weights} and Definition~\ref{def:balance} to ensure the fair comparison between normally-trained DNNs and DNNs trained by the IA Attack.

\begin{lemma} \label{lemma:ia-essence}
Let the DNN trained by the IA Attack satisfies Assumption~\ref{assum:ortho-weights} and Definition~\ref{def:balance}.
If the classification loss  for attacking $\textit{Loss}_{\theta}(x)$  is formulated as the cross-entropy loss in Eq.~\eqref{eq:softmax-ce},
the objective function of the IA Attack on the specific sample $x$ can be roughly understood as the minimization of $\sum_{i\ne y} p_i^2$, where $p_i$ is the prediction probability of $x$ for belonging to category $i$ as Eq.~\eqref{eq:softmax-ce}.
\end{lemma}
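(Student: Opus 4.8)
The plan is to rewrite the IA Attack objective in Eq.~\eqref{eq:ori-obj-ia} entirely in terms of the gradient $g=\nabla_x\textit{Loss}_\theta(x)$ and the Hessian $H=\nabla_x^2\textit{Loss}_\theta(x)$, and then approximate it under the confident-prediction setting of Definition~\ref{def:balance}. Since the loss is the cross-entropy of Eq.~\eqref{eq:softmax-ce}, we have $\log p_\theta(y\mid x)=-\textit{Loss}_\theta(x)$, so the vector field in the objective is $u=\nabla_x\log p_\theta(y\mid x)=-g$, and its Jacobian is $\nabla_x u=-H$, which is symmetric. First I would carry out the elementary differentiation of a normalized vector field,
\[\nabla_x\cdot\frac{u}{\|u\|_2}=\frac{\mathrm{tr}(\nabla_x u)}{\|u\|_2}-\frac{u^{T}(\nabla_x u)\,u}{\|u\|_2^{3}},\]
using the product and quotient rules together with $\partial_{x_i}\|u\|_2=\|u\|_2^{-1}\sum_j u_j\,\partial_{x_i}u_j$ and the symmetry of $\nabla_x u$. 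Substituting $u=-g$ and $\nabla_x u=-H$ turns the IA objective (which maximizes $-\nabla_x\cdot u/\|u\|_2$) into
\[O(\theta)=\frac{\mathrm{tr}(H)}{\|g\|_2}-\frac{g^{T}Hg}{\|g\|_2^{3}}.\]

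Next I would substitute the closed forms $g=\tilde{W}_x(p-Y)$ and $H=\tilde{W}_x(\mathrm{diag}(p)-pp^{T})\tilde{W}_x^{T}$ from Lemma~\ref{lemma:g-H-in-p-form} and use Assumption~\ref{assum:ortho-weights} ($\tilde{W}_x^{T}\tilde{W}_x=\kappa I$) to collapse every $\tilde{W}_x$ factor. This yields $\mathrm{tr}(H)=\kappa\big(1-\sum_i p_i^{2}\big)$, $\|g\|_2^{2}=\kappa\big((1-p_y)^{2}+\sum_{i\ne y}p_i^{2}\big)$, and the scalar $g^{T}Hg=\kappa^{2}(p-Y)^{T}(\mathrm{diag}(p)-pp^{T})(p-Y)$, whose expansion is exactly the quantity already computed inside the proof of Theorem~\ref{theoremappendix:balance}, namely $\sum_{i\ne y}p_i^{3}+p_y(p_y-1)^{2}-\big(\sum_{i\ne y}p_i^{2}+p_y(p_y-1)\big)^{2}$. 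At this stage $O(\theta)$ is written purely in terms of the prediction probabilities $p_1,\dots,p_c$ and the constant $\kappa$, with all network geometry absorbed.

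Finally I would invoke Definition~\ref{def:balance}, under which $p_y$ is held fixed (and $\ge 0.5$) when comparing the normally-trained and IA-finetuned DNNs, so the off-ground-truth probabilities $\{p_i\}_{i\ne y}$ are the only free quantities and are small. Writing $q=1-p_y$ and treating $S_2=\sum_{i\ne y}p_i^{2}$ and $S_3=\sum_{i\ne y}p_i^{3}$ as small, I would Taylor-expand the $\|g\|_2^{-1}$ and $\|g\|_2^{-3}$ denominators consistently and collect terms: the constant parts cancel to leave $O(\theta)=\sqrt{\kappa}-C\,S_2+(\text{higher order})$ with a strictly positive coefficient of the expected form $C=\sqrt{\kappa}\big(\tfrac1q+\tfrac{1+2p_y}{2q^{2}}\big)$, while the cubic $S_3$ and quadratic $S_2^{2}$ contributions are discarded as higher order. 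Hence maximizing the IA objective is, to leading order, the same as minimizing $\sum_{i\ne y}p_i^{2}$, which is the claim. The main obstacle will be the bookkeeping in this last step: each of the three ingredients $\mathrm{tr}(H)$, $\|g\|_2$, and $g^{T}Hg$ feeds into the $S_2$ coefficient through the expanded denominators, so I must track all first-order contributions simultaneously and verify that the surviving coefficient is genuinely negative rather than accidentally cancelling — this is precisely what justifies the word ``roughly'' in the statement.
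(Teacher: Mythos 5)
Your first two steps reproduce the paper's proof of Lemma~\ref{lemma:ia-essence} essentially verbatim: the paper carries out the same divergence computation to arrive at $\nabla_x\cdot\frac{g}{\|g\|_2}=\frac{g^{T}(\mathrm{trace}(H)I-H)g}{\|g\|_2^{3}}=\frac{\mathrm{trace}(H)}{\|g\|_2}-\frac{g^{T}Hg}{\|g\|_2^{3}}$, and then substitutes $g=\tilde{W}_x(p-Y)$, $H=\tilde{W}_x(\mathrm{diag}(p)-pp^{T})\tilde{W}_x^{T}$ and $\tilde{W}_x^{T}\tilde{W}_x=\kappa I$ to obtain $\mathrm{trace}(H)=\kappa\bigl(1-\sum_i p_i^2\bigr)$ and $g^{T}g=\kappa\bigl(\sum_{i\ne y}p_i^2+(p_y-1)^2\bigr)$. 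Where you diverge is the last step. The paper does \emph{not} expand the exact expression: it sandwiches the quadratic form via the Rayleigh quotient, $(\mathrm{trace}(H)-\max_i\lambda_i)\,g^{T}g\le g^{T}(\mathrm{trace}(H)I-H)g\le(\mathrm{trace}(H)-\min_i\lambda_i)\,g^{T}g$, bounds the eigenvalues by Gershgorin ($0\le\lambda_i\le 2\kappa p_y(1-p_y)$), and then shows by direct differentiation that the resulting upper and lower bounds are each monotonically decreasing in $e=\sum_{i\ne y}p_i^2$ alone. That sidesteps the cubic term entirely.

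Your Taylor expansion, by contrast, has a genuine gap. Write $q=1-p_y$, $S_2=\sum_{i\ne y}p_i^2$, $S_3=\sum_{i\ne y}p_i^3$. The constraint $\sum_{i\ne y}p_i=q$ forces $q^2/(c-1)\le S_2\le q^2$, so $S_2/q^2$ is \emph{not} a small parameter: it is bounded below by $1/(c-1)$ and can equal $1$. Worse, $S_3$ satisfies $S_2^2/q\le S_3\le qS_2$, so the discarded contribution $-S_3/(q^2+S_2)^{3/2}\approx-S_3/q^3$ is of the \emph{same} order as the retained term $-S_2/q^2$ precisely in the concentrated regime (one dominant off-class probability) that the IA Attack is supposed to move away from; it is genuinely second order only in the already-balanced regime. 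Your formal first-order coefficient $C=\sqrt{\kappa}\bigl(\tfrac{1}{q}+\tfrac{1+2p_y}{2q^2}\bigr)$ is in fact the correct derivative at $S_2=S_3=0$, and the neglected $-S_3$ term happens to point the same way (it too is minimized by the uniform off-class distribution), so the qualitative conclusion survives. But as written, ``collect terms and discard $S_3$ and $S_2^2$ as higher order'' does not establish that the objective is $\sqrt{\kappa}-CS_2$ to leading order, because the objective genuinely depends on $S_3$ at that order. Either adopt the paper's sandwich-plus-monotonicity argument, or retain the $S_3$ term and argue separately that both $-S_2$ and $-S_3$ are maximized at the balanced distribution, which is what the lemma actually needs.
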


\begin{proof}
    Given the input $x\in\mathbb{R}^n$ and a DNN parameterized by $\theta$,  the cross-entropy loss for classification can be formulated as $\textit{Loss}_{\theta}(x) = -\log\left( p_{\theta}(y\mid {x}) \right)$.
    Thus, the objective of the IA Attack in Eq.~\eqref{eq:ori-obj-ia} on the specific sample $x$ can be re-written as follows.
    \begin{align}
        &\max_{\theta}  -\nabla_{x}\cdot\frac{\nabla_{x} \log ( p_{\theta}(y\mid x))}{\left\|\nabla_{x} \log ( p_{\theta}(y \mid x))\right\|_{2}} \nonumber \\
\Rightarrow & \max_{\theta} \nabla_{x}\cdot\frac{\nabla_{x} (-\log ( p_{\theta}(y \mid x)))}{\left\|\nabla_{x} (-\log ( p_{\theta}(y \mid x)))\right\|_{2}} \nonumber \\
\Rightarrow &\max_{\theta} \nabla_{x}\cdot\frac{\nabla_{x} \textit{Loss}_{\theta}(x)}{\left\|\nabla_{x} \textit{Loss}_{\theta}(x)\right\|_{2}}  \nonumber \\
\Rightarrow & \max_{\theta}\nabla_{x}\cdot\frac{g_{\theta}}{\left\|g_{\theta}\right\|_{2}},
    \end{align}
    where $g_{\theta} \triangleq \nabla_{x} \textit{Loss}_{\theta}(x)$.
    To simplify the notation, we omit the subscript $\theta$.
    Note that according to~\cite{zhu2022rethinking}, here, $\nabla_{x}\cdot\frac{g}{\left\|g\right\|_{2}}$ is not to compute a matrix of second derivatives.
    Instead,
    \begin{align} \label{eq:partial_g_norm_g_partial_x}
         \nabla_{x}\cdot\frac{g}{\left\|g\right\|_{2}} &\triangleq  (\frac{\partial }{\partial x_1},\frac{\partial }{\partial x_2}, \dots, \frac{\partial }{\partial x_n} ) (\frac{g_1}{\|g\|_2}, \frac{g_2}{\|g\|_2}, \dots, \frac{g_n}{\|g\|_2})^T  \nonumber \\
    &= \sum_{i=1}^n \frac{\partial}{\partial x_i}(\frac{g_i}{\|g\|_2}) \nonumber \\
    &= \sum_{i=1}^n(\frac{\frac{\partial g_i}{\partial x_i} \|g\|_2 - g_i\frac{\partial \|g\|_2}{\partial x_i}}{\|g\|_2^2}),
    \end{align}
    where $\frac{\partial \|g\|_2}{\partial x_i}$ is computed as follows.
    \begin{align} \label{eq:partial_norm_g_partial_x}
        \frac{\partial \|g\|_2}{\partial x_i} &= \frac{\partial}{\partial x_i} \left( \sqrt{\sum_{j=1}^n g_j^2}\right), \nonumber \\
    &= \frac{1}{2} \frac{1}{ \sqrt{\sum_{j=1}^n g_j^2}} \left[\frac{\partial}{\partial x_i} \left(\sum_{j=1}^n g_j^2 \right) \right] \nonumber \\
    &= \frac{1}{2} \frac{1}{ \sqrt{\sum_{j=1}^n g_j^2}} \left[ \left(\sum_{j=1}^n \frac{\partial  g_j^2}{\partial x_i} \right) \right] \nonumber \\
    &= \frac{1}{2} \frac{1}{ \sqrt{\sum_{j=1}^n g_j^2}} \left[ \left(\sum_{j=1}^n 2 g_j \frac{\partial  g_j}{\partial x_i} \right) \right] \nonumber \\
    &= \frac{\sum_{j=1}^n g_j \frac{\partial g_j}{\partial x_i}}{\|g\|_2}
    \end{align}
    
    Substituting Eq.~(\ref{eq:partial_norm_g_partial_x}) back to Eq.~(\ref{eq:partial_g_norm_g_partial_x}), we have
    \begin{align} \label{eq:our-ia-obj}
        & \nabla_{x}\cdot\frac{g}{\left\|g\right\|_{2}}  \nonumber \\
    &=\sum_{i=1}^n(\frac{\frac{\partial g_i}{\partial x_i} \|g\|_2 - g_i \frac{\sum_{j=1}^n g_j \frac{\partial g_j}{\partial x_i}}{\|g\|_2}}{\|g\|_2^2})  \nonumber \\
    &= \sum_{i=1}^n(\frac{H_{ii}\|g\|_2^2 - {\sum_{j=1}^n  g_i g_j H_{ij}}}{\|g\|_2^3} ) \nonumber \\
    &= (\frac{\sum_{i=1}^n H_{ii}\|g\|_2^2 - \sum_{i=1}^n{\sum_{j=1}^n  g_i g_j H_{ij}}}{\|g\|_2^3} ) \nonumber \\
    &= \frac{ \text{trace}(H) g^T g - g^T H g}{\|g\|_2^3}  \nonumber \\
    &= \frac{  g^T \left(\text{trace}(H) I - H \right) g}{\|g\|_2^3}  
    \end{align}
    
    In order to investigate the condition when Eq.~\eqref{eq:our-ia-obj} reaches its maximum,
    we first use the  Rayleigh quotient, as defined in Eq.~\eqref{eq:Rayleigh quotient}, to estimate the value range of Eq.~\eqref{eq:our-ia-obj}.
    We consider $Q=\text{trace}(H) I - H$, which is a symmetric matrix.
    Then, according to Eq.~\eqref{eq:Rayleigh quotient}, we have $ \lambda_{\text{min}}^{(Q)} g^T g\le g^T Q g \le  \lambda_{\text{max}}^{(Q)}g^T g$, where $\lambda_{\text{min}}^{(Q)}$ and $\lambda_{\text{max}}^{(Q)}$ represent the minimal and the maximal eigenvalues of $Q$, respectively.
    Note that according to Eq.~\eqref{eq:H-decompose}, we have $H = V \Lambda V^T$, where $V V^T = I$.
    In this way, $Q = \text{trace}(H) I - H = \text{trace}(H) V V^T - V\Lambda V^T = V (\text{trace}(H)I - \Lambda) V^T$.
    Thus, the $i$-th eigenvalue of $Q$ is given as $\lambda_{i}^{(Q)} = \text{trace}(H) - \lambda_i$, where $\lambda_i$ is the $i$-th eigenvalue of $H$.
    In this way, we have
    \begin{align} \label{eq:aai-p-form}
         &\frac{(\text{trace}(H) - \max_i \lambda_i) g^T g}{ \|g\|_2^3} = \frac{\lambda_{\text{min}}^{(Q)} g^T g}{ \|g\|_2^3} \le \frac{  g^T Q g}{\|g\|_2^3} \le \frac{\lambda_{\text{max}}^{(Q)} g^T g}{ \|g\|_2^3} = \frac{(\text{trace}(H) - \min_i \lambda_i) g^T g}{ \|g\|_2^3} \nonumber \\
         &\Rightarrow  \frac{(\text{trace}(H) - \max_i \lambda_i) }{ (g^T g)^{\frac{1}{2}}} \le \frac{  g^T \left(\text{trace}(H) I - H \right) g}{(g^T g)^{\frac{3}{2}}} \le \frac{(\text{trace}(H) - \min_i \lambda_i) }{ (g^T g)^{\frac{1}{2}}}
    \end{align}

    Moreover, if the loss function for classification $Loss(x)$ is formulated as the cross-entropy loss, according to Lemma~\ref{lemma:g-H-in-p-form}, the Hessian matrix $H = \nabla_x^2 \textit{Loss}(x)$ and the gradient $g = \nabla_x \textit{Loss}(x) $ are given as 
    \begin{align}
        H &= \tilde{W}_x \frac{\partial^2 L}{\partial z \partial z^T } \tilde{W}_x^T =  \tilde{W}_x (\text{diag}(p) - p p^T) \tilde{W}_x^T, \label{eq:ia-H-form} \\
        g &= \tilde{W}_x \frac{\partial L}{\partial z}  = \tilde{W}_x (p- Y), \label{eq:ia-g-form}
    \end{align}
    where $p\in [0, 1]^{c}$ denotes the probabilities of the prediction for $c$ categories, and $Y\in\{0, 1\}^{c}$ is a one-hot vector that $Y_y = 1; \forall i\ne y, Y_i = 0$.

    Moreover, based on Assumption~\ref{assum:ortho-weights}, we have 
    \begin{equation} \label{eq:ia-w-ortho}
        \tilde{W}_x^T \tilde{W}_x = \kappa \cdot I,
    \end{equation}
    where $\kappa\in\mathbb{R}$ is a constant scalar.

    Based on Lemma~\ref{lemma:g-H-in-p-form} and Lemma~\ref{lemma:same-eigenvalue}, we can derive values of  $\text{trace}(H)$ and $g^T g$, and determine the value range of $\lambda_i$ as follows.
    
    \textbullet{ The value of $\text{trace}(H)$.} 
    Note that for a square matrix $H\in\mathbb{R}^{n\times n}$, we have $\text{trace}(H) = \sum_{i=1}^n \lambda_i$.
    According to  Lemma~\ref{lemma:g-H-in-p-form} and Lemma~\ref{lemma:same-eigenvalue}, we have
    \begin{equation}
        \text{trace}(H) = \sum_{i=1}^n \lambda_i =\sum_{i=1}^c \kappa\lambda_i'= \kappa\cdot\text{trace}(\frac{\partial^2 \textit{Loss}}{\partial z \partial z^T }) = \kappa\cdot \text{trace}(\text{diag}(p) - p p^T) = \kappa \left(\sum_{i=1}^c (p_i - p_i^2) \right) = \kappa \left(1 - \sum_{i=1}^c p_i^2 \right).
    \end{equation}
    
    \textbullet{ The value of $g^T g$.} According to  Lemma~\ref{lemma:g-H-in-p-form} and Assumption~\ref{assum:ortho-weights}, we have
    \begin{equation}
        g^T g  = (p-Y)^T  \tilde{W}_x^T \tilde{W}_x (p- Y) = \kappa (p-Y)^T (p- Y) =  \kappa\left(\sum_{i\ne y} p_i^2 + (p_y-1)^2 \right).
    \end{equation}
    
    \textbullet{ The value range of $\lambda_i$.} As proven in Appendix~\ref{appendix:positive-semi-definite}, based on Lemma~\ref{lemma:circle-theorem} (Gershgorin circle theorem), we have $0\le \lambda_i' \le  \max_j 2p_j (1-p_j)$, where $\lambda'_i$ is the eigenvalue of $\frac{\partial^2 \textit{Loss}}{\partial z \partial z^T}$. According to Lemma~\ref{lemma:same-eigenvalue}, we have $\forall 1\le i\le c, 0\le \lambda_i = \kappa \lambda_i' \le  \kappa (\max_j 2p_j (1-p_j))$.
     According to Definition~\ref{def:balance}, we assume that the input sample is correctly classified with high confidence that $p_y \ge 0.5$. 
    In this way, $\forall j\ne y, p_j \le 1- p_y\le 0.5$.
    Thus, $\max_j 2p_j (1-p_j) \le 2 p_y (1-p_y)$.
    In this way, we have 
    \begin{align}
        &\forall i, \lambda_i \ge 0 \Rightarrow \min_i \lambda_i\ge 0 \Rightarrow -\min_i \lambda_i \le 0 \\
        &\forall i, \lambda_i \le  \kappa (\max_j 2p_j (1-p_j)) \le 2\kappa p_y (1-p_y)       \Rightarrow  \max_i \lambda_i\le 2 \kappa p_y(1-p_y)    \Rightarrow -\max_i  \lambda_i \ge 2\kappa p_y (p_y - 1)
    \end{align}
    
    In this way, Eq.~\eqref{eq:aai-p-form} can be written as follows.
    \begin{equation} \label{eq:bound-of-ia}
       {\sqrt{\kappa}}\cdot\frac{1 - \sum_{i\ne y} p_i^2 -p_y^2 + 2p_y (p_y-1) }{( \sum_{i\ne y} p_i^2 + (p_y-1)^2)^{\frac{1}{2}}} \le \frac{  g^T \left(\text{trace}(H) I - H \right) g}{(g^T g)^{\frac{3}{2}}} \le {\sqrt{\kappa}}\cdot\frac{1 - \sum_{i\ne y} p_i^2 -p_y^2 }{( \sum_{i\ne y} p_i^2 + (p_y-1)^2)^{\frac{1}{2}}}
    \end{equation}

    Then, we prove that both the lower bound and the upper bound in Eq.\eqref{eq:bound-of-ia} monotonically decrease along with the increase of $\sum_{i\ne y} p_i^2$.
    
    \textbullet{ The lower bound is monotonically decreasing with $\sum_{i\ne y} p_i^2$. We use $e$ to denote $\sum_{i\ne y} p_i^2$, \emph{i.e.} $e = \sum_{i\ne y} p_i^2 \ge 0 \in\mathbb{R}$.
    Let $f(e) = \frac{1-e -p_y^2 + 2p_y (p_y-1)}{( e + (p_y-1)^2)^{\frac{1}{2}}}$.}
    The derivative of $f(e)$ is given as 
    \begin{align}
        \frac{df(e)}{de} &= \frac{-\sqrt{e + (p_y-1)^2} - \frac{1}{2}\frac{1 - e -p_y^2 + 2p_y (p_y-1)}{\sqrt{e + (p_y-1)^2}}}{ e + (p_y-1)^2} \nonumber \\
        &= \frac{-(e + (p_y-1)^2) - \frac{1}{2}\left(1 - e -p_y^2 + 2p_y (p_y-1) \right)}{ \left(e + (p_y-1)^2 \right)^{\frac{3}{2}}} \nonumber \\
        &= \frac{-\frac{1}{2} \left[ 2 \left(e + (p_y-1)^2 \right) + 1 - e-p_y^2 + 2p_y (p_y-1)\right]}{ \left(e + (p_y-1)^2 \right)^{\frac{3}{2}}} \nonumber \\
        &= \frac{-\frac{1}{2} \left[ 2e + 2(p_y^2 -2p_y + 1)  + 1 - e-p_y^2 + 2p_y (p_y-1) \right]}{ \left(e + (p_y-1)^2 \right)^{\frac{3}{2}}} \nonumber \\
        &= \frac{-\frac{1}{2} \left[ e + 3 (p_y-1)^2\right]}{ \left(e + (p_y-1)^2 \right)^{\frac{3}{2}}} \le 0
    \end{align}
    
    \textbullet{ The upper bound is monotonically decreasing with $\sum_{i\ne y} p_i^2$. We use $e$ to denote $\sum_{i\ne y} p_i^2$, \emph{i.e.} $e = \sum_{i\ne y} p_i^2 \ge 0 \in\mathbb{R}$.
    Let $g(e) = \frac{1-e -p_y^2 }{( e + (p_y-1)^2)^{\frac{1}{2}}}$.}
    The derivative of $g(e)$ is given as 
    \begin{align}
        \frac{dg(e)}{de} &= \frac{-\sqrt{e + (p_y-1)^2} - \frac{1}{2}\frac{1 - e -p_y^2 }{\sqrt{e + (p_y-1)^2}}}{ e + (p_y-1)^2} \nonumber \\
        &= \frac{-(e + (p_y-1)^2) - \frac{1}{2}\left(1 - e -p_y^2 \right)}{ \left(e + (p_y-1)^2 \right)^{\frac{3}{2}}} \nonumber \\
        &= \frac{-\frac{1}{2} \left[ 2 \left(e + (p_y-1)^2 \right) + 1 - e-p_y^2 \right]}{ \left(e + (p_y-1)^2 \right)^{\frac{3}{2}}} \nonumber \\
        &= \frac{-\frac{1}{2} \left[ 2e + 2(p_y^2 -2p_y + 1)  + 1 - e-p_y^2 \right]}{ \left(e + (p_y-1)^2 \right)^{\frac{3}{2}}} \nonumber \\
        &= \frac{-\frac{1}{2} \left[ e + (p_y-1)(p_y-3) \right]}{ \left(e + (p_y-1)^2 \right)^{\frac{3}{2}}} \le 0 
        \quad // \quad \text{Given $p_y\in[0, 1]$, $(p_y-1)(p_y-3)\in [0, 3] \ge 0$}
    \end{align}
    
    Because both the lower and upper bounds of $\frac{  g^T \left(\text{trace}(H) I - H \right) g}{(g^T g)^{\frac{1}{2}}}$ are
    monotonically decreasing with $\sum_{i\ne y} p_i^2$, the term $\frac{  g^T \left(\text{trace}(H) I - H \right) g}{(g^T g)^{\frac{1}{2}}}$ can be maximized by minimizing $\sum_{i\ne y} p_i^2$, as follows.
    
    \begin{equation} \label{eq:sum-p-square}
        \min \sum_{i\ne y} p_i^2 \quad \text{s.t.} \quad \sum_{i\ne y} p_i = 1- p_y
    \end{equation}
    
    Thus, the lemma is proven.
\end{proof}

\subsection{Proof of Proposition~\ref{propositionappendix:ia}}
In the this subsection, under two major conditions that (1) the prediction probability of the ground truth category $p_{\theta^{\text{(nor)}}}(y\mid x)$ and $p_{\theta^{\text{(nor)}}}(y \mid x)$ are the same, and both are similar and confident enough; (2) the vectors corresponding to different categories are orthogonal to each other, we can prove that $\sum_{a, b\in\Omega} \left[I_{ab}\left(\delta^{(\text{single})}_{\theta^{(\text{ia})}}\right)\right] \le \sum_{a, b\in\Omega}\left[I_{ab} \left(\delta^{(\text{single})}_{\theta^{\text{(nor)}}}\right)\right]$.

\begin{propositionappendix}[Perturbations generated by the single-step attack on the DNN trained by the IA Attack exhibit smaller interactions  than those generated by the single-step attack on the normally-trained DNN.]
    \label{propositionappendix:ia}
    We are given $\theta^{(\text{ia})}$ as the DNN trained by the IA Attack, and $\theta^{(\text{nor})}$ as the normally-trained DNN.
    Let the two DNNs both satisfy Assumption~\ref{assum:ortho-weights} and Definition~\ref{def:balance}.
    {If the classification loss  for attacking $\textit{Loss}_{\theta}(x)$  is formulated as the cross-entropy loss in Eq.~\eqref{eq:softmax-ce}}, then we have $\sum_{a, b\in\Omega} \left[I_{ab}\left(\delta^{(\text{single})}_{\theta^{(\text{ia})}}\right)\right] \le  \sum_{a, b\in\Omega} \left[I_{ab}\left(\delta^{(\text{single})}_{\theta^{\text{(nor)}}}\right) \right]$.
  \end{propositionappendix}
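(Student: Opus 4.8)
The plan is to reduce the desired interaction inequality to a single application of Theorem~\ref{theoremappendix:balance} (the balance--interaction theorem), using Lemma~\ref{lemma:ia-essence} to show that the IA Attack yields a strictly more balanced prediction over the incorrect categories. Since both DNNs are assumed to satisfy Assumption~\ref{assum:ortho-weights} and Definition~\ref{def:balance}, and since the single-step perturbations are generated with the same step size $\eta$ on a common input $x$, the only quantity that governs $\sum_{a,b\in\Omega} I_{ab}(\delta^{(\text{single})})$ is the softmax output of each network. Thus the whole argument hinges on comparing the prediction vectors $p = p_{\theta^{(\text{nor})}}(\cdot\mid x)$ and $q = p_{\theta^{(\text{ia})}}(\cdot\mid x)$.

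First I would invoke Lemma~\ref{lemma:ia-essence}, which establishes that applying the distinctive IA objective to the sample $x$ is, under the two standing assumptions, equivalent to minimizing $\sum_{i\ne y} q_i^2$ subject to the mass constraint $\sum_{i\ne y} q_i = 1 - q_y$ inherited from the fixed ground-truth confidence. This is exactly the constrained problem displayed in Eq.~\eqref{eq:sum-p-square}. Its unique minimizer is the uniform distribution $q_i = (1-q_y)/(c-1)$ over every incorrect category $i\ne y$, which follows from strict convexity of the squared norm on the simplex slice. Hence, in the idealized limit where the IA Attack attains its optimum, all successive gaps among the sorted incorrect-class probabilities vanish, \emph{i.e.}, $q_{j_k} - q_{j_{k+1}} = 0$ for all $1\le k\le c-2$.

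Next I would translate this into the formal balance condition of Definition~\ref{def:balance}. Because $p_y = q_y \ge 0.5$ is assumed and the sorted gaps $p_{i_k} - p_{i_{k+1}}$ of the normally-trained prediction are nonnegative, the termwise inequality $q_{j_k} - q_{j_{k+1}} = 0 \le p_{i_k} - p_{i_{k+1}}$ holds for every $k$, which is precisely the statement that $q$ is more balanced than $p$. Even when the optimum is only approached rather than reached, the same monotone reasoning applies, since decreasing $\sum_{i\ne y} q_i^2$ contracts the gaps relative to the normal prediction. With $q$ more balanced than $p$, Theorem~\ref{theoremappendix:balance} applies directly with $\theta' = \theta^{(\text{ia})}$ and $\theta = \theta^{(\text{nor})}$, giving $\sum_{a,b\in\Omega} I_{ab}(\delta^{(\text{single})}_{\theta^{(\text{ia})}}) \le \sum_{a,b\in\Omega} I_{ab}(\delta^{(\text{single})}_{\theta^{(\text{nor})}})$, as required.

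The main obstacle will be Lemma~\ref{lemma:ia-essence} and the balance step built on it, not the closing invocation of Theorem~\ref{theoremappendix:balance}. The delicate point is justifying that driving $\sum_{i\ne y} q_i^2$ toward its constrained minimum genuinely enforces the termwise gap ordering of Definition~\ref{def:balance}, rather than merely flattening the distribution in an aggregate ($\ell_2$) sense: two distributions with equal sum of squares need not be comparable under the sorted-gap order. I would address this by leaning on the idealization that the IA Attack reaches (or closely approaches) the uniform optimum over the incorrect classes, for which every gap is zero and the comparison is immediate, and by emphasizing that the standing conditions $p_y = q_y$ and Assumption~\ref{assum:ortho-weights} are exactly what render the two networks' interactions comparable in the first place.
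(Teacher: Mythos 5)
Your proposal is correct and follows essentially the same route as the paper's proof: invoke Lemma~\ref{lemma:ia-essence} to show the IA objective minimizes $\sum_{i\ne y} q_i^2$, argue the minimizer is the uniform distribution over the incorrect classes (the paper uses Cauchy--Schwarz where you cite strict convexity, but the conclusion is identical), roughly identify this with the balance condition of Definition~\ref{def:balance}, and close with Theorem~\ref{theoremappendix:balance}. The delicate point you flag — that an $\ell_2$ flattening does not formally imply the sorted-gap ordering — is also present in the paper, which handles it with the same idealization ("we can roughly consider $p^{(\text{ia})}$ is the more balanced prediction").
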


\begin{proof}
The basic flowchart of the proof is as follows. 
We first prove that compared with the normally-trained DNN, the DNN trained by the IA Attack typically realizes more balance predictions on different categories (except for the ground-truth category) than the normally-trained DNN. 
In other words, the DNN trained by the IA Attack usually makes classification scores of different categories on the sample x more similar to each other. Then, according to Theorem~\ref{theoremappendix:balance}, we can prove that perturbations generated by the single-step attack on the DNN trained by the IA Attack have smaller interactions  than perturbations generated by the single-step attack on the normally-trained DNN.

    According to Lemma~\ref{lemma:ia-essence}, the objective function of the IA Attack on the specific sample $x$ can be roughly understood as the minimization of $\sum_{i\ne y} p_i^2$, where $p_i$ is the prediction probability for category $i$.
    In this way, we can prove that, given an input sample $x$, the IA Attack  essentially pushes the prediction probabilities of $x$ to be more balanced.

    Based on Cauchy–Schwarz inequality, we can prove $\sum_{i\ne y} p_i^2$ reaches its minimal value when prediction probabilities of all categories except for the ground-truth category are equal, \emph{i.e.} $\forall i\ne y, p_i = \frac{1-p_y}{c-1}$.

    \begin{lemma}[Cauchy–Schwarz inequality] $\left( \sum_{i=1}^n a_i^2 \right) \left( \sum_{i=1}^n b_i^2 \right) \ge \left( \sum_{i=1}^n a_i b_i\right)^2$
    \end{lemma}

    Let $a_i =1$ and $b_i=p_i$, we have
    \begin{equation}
    \begin{split}
        &(c-1) \left( \sum_{i\ne y} p_i^2 \right) \ge \left( \sum_{i\ne y} p_i\right)^2 \\
    \Rightarrow & \sum_{i\ne y} p_i^2 \ge \frac{1}{c-1 } \left( 1-p_y\right)^2,
    \end{split}
\end{equation}
where the equality holds if $\forall i\ne y, p_i = \frac{1-p_y}{c-1}$.
In other words, $\sum_{i\ne y} p_i^2$ is minimized when prediction probabilities of other categories are equal, except for the ground truth category.

Given the input $x$, we use $p^{\text{(ia)}}(x)$ to denote the prediction probability predicted by the DNN $\theta^{\text{(ia)}}$.
In summary, the IA Attack  essentially pushes the prediction probabilities of $x$ towards the uniform distribution of prediction probabilities, \emph{i.e.}, towards the ideal case that $\forall i,j\ne y,  p^{\text{(ia, ideal)}}_i(x) = p^{\text{(ia, ideal)}}_j(x)=\frac{1-p_y}{c-1}$.
To this end, we can rough consider $p^{\text{(ia)}}(x)$ is the more balanced prediction probability than $p^{\text{(nor)}}(x)$, which satisfies the requirement in Definition~\ref{def:balance}. 
In this way, according to Theorem~\ref{theoremappendix:balance}, we have

\begin{align}
\sum_{a, b\in\Omega} \left[{I_{a b}(\delta^{(\text{single})}_{\theta^{(\text{nor})}})} \right] - \sum_{a, b\in\Omega}\left[{I_{a b}(\delta^{(\text{single})}_{\theta^{(\text{ia})}})}\right] \ge 0
\end{align}

Thus, the proposition is proven.
\end{proof}

\section{Proof that the RAP Attack generates perturbations with smaller interactions than the multi-step attack}
\label{append:rap}
In this section, we prove Proposition~\ref{pro:rap} in Section~\ref{sec:explaining}, which shows that perturbations of the RAP Attack~\cite{qin2022boosting} exhibit smaller interactions than those of the multi-step attack.
Before the proof of Proposition~\ref{pro:rap}, we revisit the RAP Attack.

\subsection{RAP Attack}

The RAP Attack aims to generate adversarial perturbations located in the flat region of the loss landscape.
The objective of the untargeted RAP Attack~\cite{qin2022boosting} can be formulated as follows.
\begin{equation} \label{eq:ojb-rap}
    \underset{\|\delta_{\text{rap-ori}}\|_p \le \epsilon}{\text{max}}\,\, \underset{\|r_{\text{rap-ori}}\|_p \le \epsilon_r}{\text{min}} \,\, \textit{Loss} (x  + r_{\text{rap-ori}} + \delta_{\text{rap-ori}} ),
\end{equation}

The objective defined in Eq.~\eqref{eq:ojb-rap} is a max-min optimization problem, which can be solved by iteratively optimizing the inner minimization and the outer maximization.

In~\cite{qin2022boosting}, $r_{\text{rap-ori}}$ and $\delta_{\text{rap-ori}}$ are iteratively updated. 
In each step, it first updates $r_{\text{rap-ori}}$ for $m_r$ steps, and then updates $\delta_{\text{rap-ori}}$ for one step.
Specifically, at the $t$-th step of the outer maximization, the inner minimization is iteratively solved as follows.
\begin{equation} \label{eq:ori-r-update}
    r^{(t)}_{\text{rap-ori},s} = \operatorname{clip} \left(r^{(t)}_{\text{rap-ori},s-1} - \eta \cdot \operatorname{sign}(\nabla_{x'} \textit{Loss}(x'=x+\delta_{\text{rap-ori}}^{(t-1)} + r^{(t)}_{\text{rap-ori},s-1})) \right).
\end{equation}
where $s$ indicates the $s$-th step of the gradient descent for the inner minimization for updating $r_{\text{rap-ori}}$.

Without loss of generality, in each iteration of the outer maximization, we  assume that the inner minimization is optimized for  $m_r$ steps, thereby $r_{\text{rap-ori}}^{(t)} \triangleq r^{(t)}_{\text{rap-ori},m_r} $.
In this way, the adversarial perturbation $\delta^{(t)}_{\text{rap-ori}} $ generated after the $t$-step RAP Attack is formulated as
\begin{equation}  \label{eq:ori-delta-update}
    \delta^{(t)}_{\text{rap-ori}} = \operatorname{clip} \left(\delta^{(t-1)}_{\text{rap-ori}} + \eta \cdot \operatorname{sign}(\nabla_{x'} \textit{Loss}(x'=x+\delta_{\text{rap-ori}}^{(t-1)} + r_{\text{rap-ori}}^{(t)})) \right).
\end{equation}

The RAP Attack  is originally defined as Eq.~\eqref{eq:ori-r-update} and Eq.~\eqref{eq:ori-delta-update}, where the sign operation and the clip operation are used to constrain the adversarial perturbation $\delta^{(t)}_{\text{rap-ori}} $.
In fact, the sign operation and the clip operation are common operations in generating adversarial perturbations.
Therefore, the distinctive technique in the RAP Attack is to generate adversarial perturbation via max-min optimization.
Thus, to simplify the proof, we ignore the sign operation and the clip operation, without hurting the trustworthiness of the analysis of the RAP Attack.
In this way, we focus on a simplified version of the RAP Attack, as follows.

\begin{equation} \label{eq:gd-r}
    r^{(t)}_{s} = r^{(t)}_{s-1} - \alpha \cdot \nabla_{x'} \textit{Loss}(x'=x+\delta^{(t-1)}_{\text{rap}} + r^{(t)}_{s-1})
\end{equation}

\begin{equation}  \label{eq:ga-d}
    \delta^{(t)}_{\text{rap}} =\delta^{(t-1)}_{\text{rap}} + \alpha \cdot\nabla_{x'} \textit{Loss}(x'=x+\delta^{(t-1)}_{\text{rap}} + r^{(t)}).
\end{equation}
Here, $r^{(t)} \triangleq r^{(t)}_{m_r}$, and $m_r$ denotes the step number for the inner minimization.
$ $\newline

\textbf{The inner minimization.}
Given the $t$-th step of the outer maximization, the perturbation generated at the $s$-th step of the inner minimization can be calculated as follows.
\begin{equation} \label{eq:delta-r}
    \begin{split}
        \Delta r^{(t)}_s &=  - \alpha \nabla_{x'} \textit{Loss} (x'=x  + \delta^{(t-1)}_{\text{rap}} + r^{(t)}_{s-1} ). \\
    \end{split}
\end{equation}

Then, the perturbation $r^{(t)}_s$ in Eq.~\eqref{eq:gd-r}
can be written as follows.
\begin{equation} \label{eq:r-multi-update}
    r^{(t)}_s = \sum_{s'=1}^{s} \Delta r^{(t)}_{s'}.
\end{equation}

\begin{lemma} \label{lemma:rap-induction}
    Based on Assumption~\ref{assumption:taylor}, the update of the perturbation via gradient descent at the $t$-th step defined in Eq.~\eqref{eq:delta-r}
    can be written as $\Delta r^{(t)}_s = \left[ I - \alpha H\right]^{s-1}\Delta r^{(t)}_1 $.
    \end{lemma}

\begin{proof}
Given $g = \nabla_x \textit{Loss}(x)$ and $H =\nabla_x^2 \textit{Loss}(x) $, based on Assumption~\ref{assumption:taylor}, we have
\begin{equation} \label{eq:delta-r-closed-form}
    \begin{split}
        \Delta r^{(t)}_s
        &= -\alpha \nabla_{x'} \textit{Loss}(x'=x + \delta^{(t-1)}_{\text{rap}} + r^{(t)}_{s-1}) \quad // \quad \text{According to Eq.~\eqref{eq:gd-r}} \\
        &= -\alpha \left[g + H\delta^{(t-1)}_{\text{rap}} +  H  r^{(t)}_{s-1} \right] \quad // \quad \text{According to Eq.~\eqref{eq:grad_taylor}} \\
        &=-\alpha \left[g  + H\delta^{(t-1)}_{\text{rap}} + H \sum_{s'=1}^{s-1} \Delta r^{(t)}_{s'} \right] \quad // \quad \text{According to Eq.~\eqref{eq:r-multi-update}} \\
    \end{split}
\end{equation}

For simplicity, let $g' = g  + H\delta^{(t-1)}_{\text{rap}}$.
According to Eq.~(\ref{eq:delta-r-closed-form}), we can obtain the recursive solution to $\Delta r^{(t)}$ as follows.
\begin{equation}
    \begin{aligned}
    \label{eq: recursive_r_t}
        \Delta r^{(t)}_s &=-\alpha \left[g' + H \sum_{s'=1}^{s-1} \Delta r^{(t)}_{s'} \right] \\
        &= -\alpha \left[g’ + H \sum_{s'=1}^{s-2} \Delta r^{(t)}_{s'} \right] - \alpha H \Delta r^{(t)}_{s-1} \\
        &= \Delta r^{(t)}_{s-1} - \alpha H \Delta r^{(t)}_{s-1} 
        \quad // \quad \text{According to Eq.~\eqref{eq:delta-r-closed-form}}
        \\
        &= \left[I - \alpha H\right]\Delta r^{(t)}_{s-1}.
    \end{aligned}
\end{equation}

In this way, based on Eq.~\eqref{eq: recursive_r_t}, the perturbation $\Delta r^{(t)}_s$ can be further simplified as
\begin{equation}
    \begin{aligned}
        \Delta r^{(t)}_s &= \left[I - \alpha H\right]\Delta r^{(t)}_{s-1}\\
        &= \left[I - \alpha H\right]\left[I - \alpha H\right] \Delta r^{(t)}_{s-2}
        \quad // \quad \text{According to Eq.~\eqref{eq: recursive_r_t}}
        \\
        &= \left[I - \alpha H\right]^2 \Delta r^{(t)}_{s-2}\\
        &\vdots \\
        &= \left[I - \alpha H\right]^{s-1} \Delta r^{(t)}_1.
    \end{aligned}
\end{equation}

Thus, Lemma~\ref{lemma:rap-induction} is proven.
\end{proof}
$ $\newline

Without loss of generality, we assume that the inner minimization is optimized for  $m_r$ steps.
Then, the perturbation $r^{(t)}$ in Eq.~\eqref{eq:ga-d} can be represented as
\begin{equation} \label{eq:r-t}
    r^{(t)} \triangleq \sum_{s=1}^{m_r} \Delta r^{(t)}_s.
\end{equation}

\begin{lemma} \label{lemma:r-closed-form}
    Based on Assumption~\ref{assumption:taylor}, $r^{(t)}$ can be written as $r^{(t)} = V R V^T \Delta r^{(t)}_1$, where  $R\in \mathbb{R}^{n\times n}$ is a diagonal matrix. If $\lambda_i\ne0$, $R_{ii}=\frac{1-(1-\alpha\lambda_i)^{m_r}}{\alpha\lambda_i}$; otherwise, $R_{ii}= {m_r}$.
    $\Delta r^{(t)}_1=- \alpha \nabla_{x'} \textit{Loss} (x'=x  + \delta^{(t-1)}_{\text{rap}} ) $.
\end{lemma}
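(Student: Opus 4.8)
The plan is to mirror the derivation of the closed-form multi-step perturbation in Lemma~\ref{lemmaappend:delta-close-form}, since the inner minimization of the RAP Attack is just gradient descent (rather than ascent) on the Taylor-expanded loss, which replaces the factor $I+\alpha H$ by $I-\alpha H$. First I would substitute the recursion established in Lemma~\ref{lemma:rap-induction}, namely $\Delta r^{(t)}_s = \left[I-\alpha H\right]^{s-1}\Delta r^{(t)}_1$, into the definition $r^{(t)}=\sum_{s=1}^{m_r}\Delta r^{(t)}_s$ of Eq.~\eqref{eq:r-t}. Because $\Delta r^{(t)}_1$ does not depend on $s$, it factors out of the sum, giving
\begin{equation*}
r^{(t)} = \left[\sum_{s=1}^{m_r}\left(I-\alpha H\right)^{s-1}\right]\Delta r^{(t)}_1.
\end{equation*}

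Next I would diagonalize. Using the eigendecomposition $H=V\Lambda V^T$ from Eq.~\eqref{eq:H-decompose} together with the orthonormality $VV^T=V^TV=I$ of $V$, each power collapses as $(I-\alpha H)^{s-1}=V(I-\alpha\Lambda)^{s-1}V^T$, exactly as in the manipulation leading to Eq.~\eqref{eq:delta-m-closed-form}. Pulling $V$ and $V^T$ outside the summation then yields $r^{(t)}=V R V^T \Delta r^{(t)}_1$ with $R=\sum_{s=1}^{m_r}(I-\alpha\Lambda)^{s-1}$. Since $\Lambda$ is diagonal, so is $R$, and its $i$-th diagonal entry is the scalar geometric series $R_{ii}=\sum_{s=1}^{m_r}(1-\alpha\lambda_i)^{s-1}$.

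Finally I would evaluate this series. When $\lambda_i\neq 0$ the common ratio $1-\alpha\lambda_i$ differs from $1$, so the partial sum of the geometric series is $R_{ii}=\frac{1-(1-\alpha\lambda_i)^{m_r}}{1-(1-\alpha\lambda_i)}=\frac{1-(1-\alpha\lambda_i)^{m_r}}{\alpha\lambda_i}$; when $\lambda_i=0$ every summand equals $1$, so $R_{ii}=m_r$, which agrees with the $\lambda_i\to 0$ limit of the first expression (by L'Hospital's rule, just as in Eq.~\eqref{supp_eqn:diagonal_m_1}). This reproduces the claimed form of $R$, and substituting the $s=1$ instance of Eq.~\eqref{eq:delta-r} with $r^{(t)}_0=\mathbf{0}$, namely $\Delta r^{(t)}_1 = -\alpha\nabla_{x'}\textit{Loss}(x'=x+\delta^{(t-1)}_{\text{rap}})$, completes the statement. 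The argument is essentially routine bookkeeping; the only point requiring care is the degenerate $\lambda_i=0$ case, where the geometric-series formula is singular and must be handled either by the direct sum of ones or by the limiting argument, while the positive semi-definiteness of $H$ from Corollary~\ref{corollaryappendix:semi-definite} guarantees that $R$ is well defined.
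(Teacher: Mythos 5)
Your proposal is correct and follows essentially the same route as the paper's own proof: substitute the recursion $\Delta r^{(t)}_s=\left[I-\alpha H\right]^{s-1}\Delta r^{(t)}_1$ from Lemma~\ref{lemma:rap-induction}, diagonalize via $H=V\Lambda V^{T}$ with $V^{T}V=I$ to collapse the powers, and evaluate the entrywise geometric series, treating $\lambda_i=0$ separately (the remark about positive semi-definiteness is harmless but not needed, since the partial geometric sum is well defined for any $\lambda_i$).
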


\begin{proof}
    According to Eq.~\eqref{eq:r-t}, the perturbation $r^{(t)}$ can be expressed as follows.
    \begin{align} \label{eq:r-t-closed-form}
        r^{(t)} &= \sum_{s=1}^{m_r} \Delta r^{(t)}_s \nonumber \\
        &=  \sum_{s=1}^{m_r} \left[I - \alpha H\right]^{s-1} \Delta r^{(t)}_1 \quad// \quad \text{According to Lemma~\ref{lemma:rap-induction}} \nonumber \\
        &=  \sum_{s=1}^{m_r} \left[V V^T - \alpha V \Lambda V^T\right]^{s-1} \Delta r^{(t)}_1 \quad // \quad\text{According to Eq.~\eqref{eq:H-decompose}} \nonumber  \\
        &=  \sum_{s=1}^{m_r} \left[V \left(I -\alpha \Lambda \right) V^T\right]^{s-1} \Delta r^{(t)}_1 
        \nonumber \\
        &=  \sum_{s=1}^{m_r} \left[V \left(I -\alpha \Lambda \right) V^T \;  V \left(I -\alpha \Lambda \right) V^T \cdots V \left(I -\alpha \Lambda \right) V^T     \right] \Delta r^{(t)}_1 
        \nonumber \\
        &=  \sum_{s=1}^{m_r} \left[V \left(I -\alpha \Lambda \right)^{s-1} V^T\right] \Delta r^{(t)}_1
        \quad // \quad V^T V = I
         \nonumber \\
        &= V  \left[ \sum_{s=1}^{m_r}\left(I -\alpha \Lambda \right)^{s-1} \right] V^T \Delta r^{(t)}_1.
    \end{align}

For simplicity, let {$R \in \mathbb{R}^{n\times n}$} denote the term {$ \sum_{s=1}^{m_r}\left(I -\alpha \Lambda \right)^{s-1} $} in Eq.~\eqref{eq:r-t-closed-form},~\textit{i.e.,}
{$R= \sum_{s=1}^{m_r}\left(I -\alpha \Lambda \right)^{s-1}$}.
Because both {$I$} and $\Lambda$ are diagonal matrices, matrix {$R$} is also a diagonal matrix, \emph{i.e.} $\forall i\ne j, R_{ij}=0.$
In this way, let us focus on the $i$-th diagonal element $R_{ii} \in \mathbb{R}$.
\begin{equation}
\begin{aligned}
\label{eq:diagonal_r}
R_{ii}&=(1+(1-\alpha \lambda_i)+\cdots+(1-\alpha \lambda_i)^{m_r-1})
\quad // \quad \text{According to Eq.~\eqref{eq:r-t-closed-form}}
\\
&=\begin{cases}
        (1\times\frac{1-(1-\alpha\lambda_i)^{m_r}}{1-(1-\alpha\lambda_i)}) ,  & \text{if $\lambda_i\ne 0$} \\
         (1 + 1 + \dots + 1), & \text{if $\lambda_i=0$}
    \end{cases} \\
&=\begin{cases}
        \frac{1-(1-\alpha\lambda_i)^{m_r}}{\alpha\lambda_i} ,  & \text{if $\lambda_i\ne 0$} \\
        \lim_{\lambda_i \rightarrow 0} \frac{1-(1-\alpha\lambda_i)^{m_r}}{\lambda_i} = m_r, & \text{if $\lambda_i=0$}
    \end{cases} \\
\end{aligned}
\end{equation}

Note that if $\lim {\lambda_i \rightarrow 0}$, according to L'Hospital's Rule, $\lim_{\lambda_i \rightarrow 0} \frac{1-(1-\alpha\lambda_i)^{m_r}}{\alpha \lambda_i} =\lim_{\lambda_i \rightarrow 0} \frac{ d(1-(1-\alpha\lambda_i)^{m_r} ) / d \lambda_i}{ d\; (\alpha \lambda_i) /d\lambda_i }= \lim_{\lambda_i \rightarrow 0} \frac{\alpha m_r(1-\alpha\lambda_i)^{m_r-1}}{\alpha} = m_r$.

Thus, Lemma~\ref{lemma:r-closed-form} is proven.
\end{proof}

$ $\newline

\textbf{The outer maximization.}
After obtaining the perturbation $r^{(t)}$, the final adversarial perturbation generated by the RAP Attack can be obtained with Eq.~{(\ref{eq:ga-d})} with gradient ascent.

In this way, the perturbation with the RAP Attack generated at each step $t$ can be calculated as follows.
\begin{equation} \label{eq:rap-update}
    \Delta x^{(t)}_{\text{rap}} = \alpha\cdot \nabla_{x'}   \textit{Loss}(x' = x + r^{(t)} + \delta^{(t-1)}_{\text{rap}}).
\end{equation}

Without loss of generality, we assume that the outer maximization is optimized for $m$ step in total.
Hence, the final adversarial perturbation $\delta^{(m)}_{\text{rap}}$  of the RAP Attack can be represented as
\begin{equation} \label{eq:delta-rap-0}
    \delta^{(m)}_{\text{rap}} = \sum_{t=1}^{m} \Delta x^{(t)}_{\text{rap}}.
\end{equation}

\begin{lemma} \label{lemma:delta-x-rap-closed-form}
    We are given $H = \nabla_x^2 \textit{Loss}(x) = V\lambda V^T$ and $g=\nabla_x \textit{Loss}(x)$.
    Based on Assumption~\ref{assumption:taylor}, the update of the perturbation via gradient descent at $t$-th step in Eq.~\eqref{eq:rap-update}
    can be written as $\Delta x^{(t)}_{\text{rap}} =  V G^{(t)}  V^T g $, where $G^{(t)}\in\mathbb{R}^{n\times n}$ is a diagonal matrix, and $G^{(t)}_{ii}= \alpha \big(1 + \alpha \lambda_i (1-\alpha \lambda_i)^{m_r}\big)^{t-1} (1-\alpha \lambda_i)^{m_r}$
\end{lemma}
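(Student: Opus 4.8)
The plan is to prove Lemma~\ref{lemma:delta-x-rap-closed-form} by induction on the outer-maximization step $t$, exactly mirroring the inductive structure used for the plain multi-step attack in Lemma~\ref{lemma:induction}. The key simplification is that I can decouple the inner-minimization and outer-maximization computations: Lemma~\ref{lemma:r-closed-form} already gives a closed form for the inner perturbation $r^{(t)} = V R V^T \Delta r^{(t)}_1$, with $\Delta r^{(t)}_1 = -\alpha \nabla_{x'}\textit{Loss}(x' = x + \delta^{(t-1)}_{\text{rap}})$. So the first step is to substitute the Taylor expansion (Assumption~\ref{assumption:taylor}, via Eq.~\eqref{eq:grad_taylor}) into Eq.~\eqref{eq:rap-update}, yielding
\begin{equation}
\Delta x^{(t)}_{\text{rap}} = \alpha \left[ g + H r^{(t)} + H \delta^{(t-1)}_{\text{rap}} \right].
\end{equation}
I would then express $\Delta r^{(t)}_1 = -\alpha(g + H\delta^{(t-1)}_{\text{rap}})$ and $r^{(t)} = -\alpha V R V^T (g + H\delta^{(t-1)}_{\text{rap}})$, and substitute $\delta^{(t-1)}_{\text{rap}} = \sum_{t'=1}^{t-1}\Delta x^{(t')}_{\text{rap}}$ from Eq.~\eqref{eq:delta-rap-0} to get a recursion purely in terms of the accumulated $\Delta x^{(t')}_{\text{rap}}$.

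The second step is to diagonalize everything in the eigenbasis of $H$. Writing $g = \sum_i \gamma_i v_i$ and using $H = V\Lambda V^T$, every matrix appearing in the recursion ($H$, $R$, and hence $I - \alpha H R$) is simultaneously diagonalized by $V$. This reduces the vector recursion to $n$ scalar recursions, one per eigen-coordinate. Concretely, I expect that after collecting terms the per-step update in coordinate $i$ picks up a scalar factor of the form $\alpha(1 - \alpha\lambda_i R_{ii})$ acting on the gradient component, where $\alpha \lambda_i R_{ii}$ comes from the inner-loop contraction. Using $R_{ii} = \frac{1-(1-\alpha\lambda_i)^{m_r}}{\alpha\lambda_i}$ from Lemma~\ref{lemma:r-closed-form}, one computes $1 - \alpha\lambda_i R_{ii} = (1-\alpha\lambda_i)^{m_r}$, which is precisely the factor appearing inside $G^{(t)}_{ii}$. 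This is the crucial algebraic identity that makes the stated closed form clean.

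The induction itself follows the template of Lemma~\ref{lemma:induction}: the base case $t=1$ gives $\Delta x^{(1)}_{\text{rap}} = \alpha\, g$ (since $\delta^{(0)}_{\text{rap}}=0$ and $r^{(1)}$ depends only on $\delta^{(0)}_{\text{rap}}$), matching $G^{(1)}_{ii}=\alpha$; then assuming the form for all $t' < t$, I would multiply the telescoped partial-sum identity (as in Eqs.~\eqref{eq:lemma_1}--\eqref{eq:lemma_3}) by the appropriate factor to extract the geometric-series closed form, yielding $G^{(t)}_{ii} = \alpha\big(1 + \alpha\lambda_i(1-\alpha\lambda_i)^{m_r}\big)^{t-1}(1-\alpha\lambda_i)^{m_r}$. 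I expect the main obstacle to be the bookkeeping when substituting the full history $\delta^{(t-1)}_{\text{rap}} = \sum_{t'<t}\Delta x^{(t')}_{\text{rap}}$ into both the $Hr^{(t)}$ term and the $H\delta^{(t-1)}_{\text{rap}}$ term simultaneously, since $r^{(t)}$ itself depends on $\delta^{(t-1)}_{\text{rap}}$; care is needed to verify that these two contributions combine so that the net outer-loop growth factor per step is exactly $1 + \alpha\lambda_i(1-\alpha\lambda_i)^{m_r}$ rather than something involving $R_{ii}$ in an unsimplified form. Once that per-step scalar factor is pinned down correctly, the geometric summation and the L'Hôpital limit for the $\lambda_i = 0$ case (handled just as in Eq.~\eqref{supp_eqn:diagonal_m_1}) are routine.
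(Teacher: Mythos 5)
Your overall strategy is the same as the paper's: Taylor-expand the gradient, substitute the closed form $r^{(t)} = V R V^T \Delta r^{(t)}_1$ with $\Delta r^{(t)}_1 = -\alpha(g + H\delta^{(t-1)}_{\text{rap}})$, diagonalize in the eigenbasis of $H$, and reduce to a scalar recursion per eigencoordinate. You also correctly isolate the key identity $1-\alpha\lambda_i R_{ii} = (1-\alpha\lambda_i)^{m_r}$ and the per-step growth factor $1+\alpha\lambda_i(1-\alpha\lambda_i)^{m_r}$, which is exactly how the paper's recursion $\Delta x^{(t)}_{\text{rap}} = [I+\alpha H - \alpha^2 H V R V^T H]\,\Delta x^{(t-1)}_{\text{rap}}$ simplifies.

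However, your base case is wrong, and the error is not cosmetic. You claim $\Delta x^{(1)}_{\text{rap}} = \alpha g$ ``since $\delta^{(0)}_{\text{rap}}=0$ and $r^{(1)}$ depends only on $\delta^{(0)}_{\text{rap}}$,'' matching $G^{(1)}_{ii}=\alpha$. But $r^{(1)}$ depending on $\delta^{(0)}_{\text{rap}}=0$ does not make $r^{(1)}$ zero: the inner minimization still runs $m_r$ gradient-descent steps on $\textit{Loss}(x+r)$ starting from $r=0$, so $\Delta r^{(1)}_1 = -\alpha g$ and $r^{(1)} = -\alpha V R V^T g \ne 0$. Hence
\begin{equation}
\Delta x^{(1)}_{\text{rap}} = \alpha\left[g + H r^{(1)}\right] = \alpha\left[I - \alpha H V R V^T\right]g,
\end{equation}
which in the eigenbasis gives $G^{(1)}_{ii} = \alpha(1-\alpha\lambda_i R_{ii}) = \alpha(1-\alpha\lambda_i)^{m_r}$ --- consistent with the stated formula at $t=1$, and \emph{not} equal to $\alpha$ unless $\lambda_i=0$. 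Your claimed base case actually contradicts the lemma you are proving; if you ran the induction from $G^{(1)}_{ii}=\alpha$ with the (correct) per-step factor $1+\alpha\lambda_i(1-\alpha\lambda_i)^{m_r}$, you would end up with $G^{(t)}_{ii}=\alpha\bigl(1+\alpha\lambda_i(1-\alpha\lambda_i)^{m_r}\bigr)^{t-1}$, missing the trailing $(1-\alpha\lambda_i)^{m_r}$ factor. Fix the base case by evaluating $r^{(1)}$ explicitly and the rest of your argument goes through. (A minor additional point: the geometric summation and the L'H\^{o}pital limit you mention belong to the subsequent lemma for $\delta^{(m)}_{\text{rap}}=\sum_t \Delta x^{(t)}_{\text{rap}}$, not to this one; here the $\lambda_i=0$ case reads off directly as $G^{(t)}_{ii}=\alpha$.)
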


\begin{proof}
    According to Eq.~\eqref{eq:rap-update}, the perturbation $\Delta x^{(t)}_{\text{rap}}$ generated at the $t$-th step can be re-written as
    \begin{align} \label{eq:delta-x-rap-t-pre}
        \Delta x^{(t)}_{\text{rap}} &= \alpha\cdot \nabla_{x'}   \textit{Loss}(x' = x + r^{(t)} + \delta^{(t-1)}_{\text{rap}}) \nonumber \\
        &= \alpha \left[g + H \left(r^{(t)} + \delta^{(t-1)}_{\text{rap}} \right) \right]  \quad // \quad \text{According to Eq.~\eqref{eq:grad_taylor}} \nonumber \\
        &= \alpha \left[g + H \left(V R V^T \Delta r^{(t)}_1 + \delta^{(t-1)}_{\text{rap}} \right) \right]  \quad // \quad \text{According to Lemma~\ref{lemma:r-closed-form}} \nonumber \\
        &= \alpha \left[g + H \left(V R V^T \left(- \alpha \nabla_{x'} \textit{Loss} (x'=x  + \delta^{(t-1)}_{\text{rap}} )  \right) + \delta^{(t-1)}_{\text{rap}} \right) \right]  \quad // \quad \text{According to Eq.~\eqref{eq:delta-r}}  \nonumber \\
        &= \alpha \left[g + H \bigg(- \alpha V R V^T \left(g + H\delta^{(t-1)}_{\text{rap}}  \right) + \delta^{(t-1)}_{\text{rap}} \bigg) \right] 
         \quad // \quad \text{According to Eq.~\eqref{eq:grad_taylor}} \nonumber \\
        &= \alpha \left[g - \alpha H V R V^T g + H \delta^{(t-1)}_{\text{rap}} - \alpha H V R V^T H \delta^{(t-1)}_{\text{rap}} \right]   \nonumber \\
        &= \alpha \left[ \left(I- \alpha H V R V^T \right)g  + \left(I- \alpha H V R V^T  \right) H\delta^{(t-1)}_{\text{rap}}\right]   \nonumber \\
        &= \alpha \left[ \left(I- \alpha H V R V^T \right) \left(g +  H\delta^{(t-1)}_{\text{rap}} \right)\right]
    \end{align}
    where $R\in \mathbb{R}^{n\times n}$ is a diagonal matrix, which is defined in Eq.~\eqref{eq:diagonal_r}.

    Thus, according to Eq.~\eqref{eq:delta-x-rap-t-pre}, the perturbation $\Delta x^{(t-1)}_{\text{rap}}$ generated at the $(t-1)$-th step can be re-written as
    \begin{align}  \label{eq:delta-x-rap-t-1}
        \Delta x^{(t-1)}_{\text{rap}} &=  \alpha \left[ \left(I- \alpha H V R V^T \right) \left(g +  H\delta^{(t-2)}_{\text{rap}} \right)\right].
    \end{align}

    Then, the difference between Eq.~\eqref{eq:delta-x-rap-t-pre} and Eq.~\eqref{eq:delta-x-rap-t-1} can be written as
    \begin{align}
    \label{eq:delta-x-rap-t-0}
        \Delta x^{(t)}_{\text{rap}} - \Delta x^{(t-1)}_{\text{rap}} &=   \alpha \left[ \left(I- \alpha H V R V^T \right) \left(g +  H\delta^{(t-1)}_{\text{rap}} \right)\right]- \alpha \left[ \left(I- \alpha H V R V^T \right) \left(g +  H\delta^{(t-2)}_{\text{rap}} \right)\right] \nonumber \\
        \Rightarrow \Delta x^{(t)}_{\text{rap}} - \Delta x^{(t-1)}_{\text{rap}} &= \alpha \left[ \left(I- \alpha H V R V^T \right) H \left(\delta^{(t-1)}_{\text{rap}} - \delta^{(t-2)}_{\text{rap}} \right)\right] \nonumber \\
         \Rightarrow \Delta x^{(t)}_{\text{rap}} - \Delta x^{(t-1)}_{\text{rap}} &= \alpha \left[ \left(I- \alpha H V R V^T \right) H \left(\sum_{t'=1}^{t-1}\Delta x^{(t')}_{\text{rap}} -\sum_{t'=1}^{t-2}\Delta x^{(t')}_{\text{rap}} \right)\right]
         \quad // \quad \text{According to Eq.~\eqref{eq:delta-rap-0}}
          \nonumber \\
         \Rightarrow \Delta x^{(t)}_{\text{rap}} - \Delta x^{(t-1)}_{\text{rap}} &= \alpha \left[ \left(I- \alpha H V R V^T \right) \left( H \Delta x^{(t-1)}_{\text{rap}}\right)\right] 
          \nonumber \\
        \Rightarrow \Delta x^{(t)}_{\text{rap}}  &= \left[ \alpha \left[ \left(I- \alpha H V R V^T \right) H\right] + I\right] \Delta x^{(t-1)}_{\text{rap}} 
        \nonumber \\
        \Rightarrow \Delta x^{(t)}_{\text{rap}}  &=  \left[ I + \alpha H - \alpha^2 H V R V^T H\right] \Delta x^{(t-1)}_{\text{rap}}.
    \end{align}

    In this way, based on Eq.~\eqref{eq:delta-x-rap-t-0}, the perturbation $\Delta x^{(t)}_{\text{rap}}$ generated at the $t$-th step can be further simplified as
    \begin{align}  \label{eq:delta-x-rap-t}
        \Delta x^{(t)}_{\text{rap}} &=  \left[ I + \alpha H - \alpha^2 H V R V^T H\right] \Delta x^{(t-1)}_{\text{rap}} 
        \nonumber \\
        &=  \left[ I + \alpha H - \alpha^2 H V R V^T H\right]  \left[ I + \alpha H - \alpha^2 H V R V^T H\right] \Delta x^{(t-2)}_{\text{rap}} 
       \quad // \quad \text{According to Eq.~\eqref{eq:delta-x-rap-t-0}}
        \nonumber \\
        &=  \left[ I + \alpha H - \alpha^2 H V R V^T H\right]^2 \Delta x^{(t-2)}_{\text{rap}} 
        \nonumber \\
        &\vdots \nonumber \\
        &=  \left[ I + \alpha H - \alpha^2 H V R V^T H\right]^{t-1} \Delta x^{(1)}_{\text{rap}}.
    \end{align}

    Moreover, according to Eq.~\eqref{eq:rap-update}, the perturbation $\Delta x^{(1)}_{\text{rap}}$ generated at the first step can be written as
    \begin{align} \label{eq:delta-x-rap-1}
        \Delta x^{(1)}_{\text{rap}} &= \alpha\cdot \nabla_{x'}   \textit{Loss}(x' = x + r^{(1)}) \nonumber \\
        &= \alpha \left[ g + H r^{(1)} \right]  \quad // \quad \text{According to Eq.~\eqref{eq:grad_taylor}} \nonumber \\
        &= \alpha \left[ g + H V R V^T \Delta r^{(1)}_1 \right] \quad // \quad \text{According to Lemma~\ref{lemma:r-closed-form}} \nonumber \\
        &= \alpha \left[ g + H V R V^T\left(- \alpha \nabla_{x} \textit{Loss} (x)  \right) \right] \quad // \quad \text{According to Eq.~\eqref{eq:delta-r}}  \nonumber \\
        &= \alpha \left[ I -\alpha H V R V^T  \right] g.
    \end{align}

    According to Eq.~\eqref{eq:delta-x-rap-t} and Eq.~\eqref{eq:delta-x-rap-1},  the perturbation $\Delta x^{(t)}_{\text{rap}}$ can be written as
    \begin{align} \label{eq:delta-x-rap-t-closed-form}
        \Delta x^{(t)}_{\text{rap}} &=  \left[ I + \alpha H - \alpha^2 H V R V^T H\right]^{t-1} \Delta x^{(1)}_{\text{rap}} 
         \quad // \quad \text{According to Eq.~\eqref{eq:delta-x-rap-t}} 
        \nonumber \\
        &= \alpha\left[ I + \alpha H - \alpha^2 H V R V^T H\right]^{t-1}  \left[ I -\alpha H V R V^T  \right] g 
         \quad // \quad \text{According to Eq.~\eqref{eq:delta-x-rap-1}} 
         \nonumber\\
        &= \alpha\left[  V V^T + \alpha V \Lambda V^T - \alpha^2 V \Lambda V^T V R V^T V \Lambda V^T\right]^{t-1}  \left[ V V^T -\alpha V \Lambda V^T V R V^T  \right] g \quad // \quad\text{According to Eq.~\eqref{eq:H-decompose}}  \nonumber \\
        &= \alpha\left[  V V^T + \alpha V \Lambda V^T - \alpha^2 V \Lambda  R  \Lambda V^T\right]^{t-1}  \left[ V V^T -\alpha V \Lambda R V^T  \right] g   \nonumber \\
        &= \alpha\left[  V \left(I +\alpha \Lambda -\alpha^2 \Lambda  R  \Lambda  \right) V^T \right]^{t-1}  \left[ V \left(I - \alpha \Lambda R \right) V^T  \right] g   \nonumber \\
        &= \alpha V\left[  I +\alpha \Lambda -\alpha^2 \Lambda  R  \Lambda  \right]^{t-1}V^T V \left[  I - \alpha \Lambda R  \right]  V^T g   \nonumber \\
        &= \alpha V\left[  I +\alpha \Lambda -\alpha^2 \Lambda  R  \Lambda  \right]^{t-1} \left[  I - \alpha \Lambda R  \right]  V^T g.
    \end{align}

    For simplicity, let $G^{(t)} \in \mathbb{R}^{n\times n}$ denote the term $\alpha\left[  I +\alpha \Lambda -\alpha^2 \Lambda  R  \Lambda  \right]^{t-1} \left[  I - \alpha \Lambda R  \right]$ in Eq.~\eqref{eq:delta-x-rap-t-closed-form},~\textit{i.e.,}
$G^{(t)}=\alpha\left[  I +\alpha \Lambda -\alpha^2 \Lambda  R  \Lambda  \right]^{t-1} \left[  I - \alpha \Lambda R  \right]$.
In this way,  the perturbation $\Delta x^{(t)}_{\text{rap}}$ in Eq.~\eqref{eq:delta-x-rap-t-closed-form} can be re-written as
\begin{align}
    \Delta x^{(t)}_{\text{rap}} &=  V G^{(t)}  V^T g.
\end{align}

Because both {$I$}, $\Lambda$ and $R$ are diagonal matrices, matrix {$G^{(t)}$} is also a diagonal matrix, \emph{i.e.} $\forall i\ne j, G^{(t)}_{ij}=0.$
In this way, let us focus on the $i$-th diagonal element {$G^{(t)}_{ii} \in \mathbb{R}$}.
\begin{align} \label{eq:diagonal-g}
    G^{(t)}_{ii} &=\alpha (1 + \alpha \lambda_i -\alpha^2 \lambda_{i}^2 R_{ii})^{t-1} (1-\alpha \lambda_i R_{ii}) \nonumber\\
    &= \begin{cases}
       \alpha (1 + \alpha \lambda_i -\alpha^2 \lambda_i^2 \frac{1-(1-\alpha\lambda_i)^{m_r}}{\alpha\lambda_i})^{t-1} (1-\alpha \lambda_i \frac{1-(1-\alpha\lambda_i)^{m_r}}{\alpha\lambda_i}), &\text{if $\lambda_i\ne 0$}\\
        \alpha, &\text{if $\lambda_i=0$}
    \end{cases} \quad // \quad \text{According to Eq.~\eqref{eq:diagonal_r}} \nonumber \\
    &= \alpha (1 + \alpha \lambda_i (1-\alpha \lambda_i)^{m_r})^{t-1} (1-\alpha \lambda_i)^{m_r}.
\end{align}

Thus, Lemma~\ref{lemma:delta-x-rap-closed-form} is proven.
\end{proof}
$ $\newline

\begin{lemma}
\label{lemma:delta-x-rap-1}
    Based on Assumption~\ref{assumption:taylor},
    the perturbation of the RAP Attack in Eq.~\eqref{eq:delta-rap-0} is given as $\delta^{(m)}_{\text{rap}}=\sum_{i=1}^n   (D_{\text{rap}}^{(m)})_{ii} \gamma_i v_i$, where if $\lambda_i\ne 0$, $(D_{\text{rap}}^{(m)})_{ii}=\frac{(1 + \alpha \lambda_i)^{m}-1}{\lambda_i} \left(1 - \alpha \lambda_i\right)^{m_r}$.
    Otherwise $\lambda_i=0$, $(D_{\text{rap}}^{(m)})_{ii}= \alpha m \left(1 - \alpha \lambda_i\right)^{m_r}$.
\end{lemma}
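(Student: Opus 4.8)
The plan is to assemble $\delta^{(m)}_{\text{rap}}$ from the per-step increments $\Delta x^{(t)}_{\text{rap}}$ already established in Lemma~\ref{lemma:delta-x-rap-closed-form}, and then to carry out the geometric summation over the $m$ outer-maximization steps. First I would start from Eq.~\eqref{eq:delta-rap-0}, $\delta^{(m)}_{\text{rap}}=\sum_{t=1}^{m}\Delta x^{(t)}_{\text{rap}}$, and substitute $\Delta x^{(t)}_{\text{rap}}=V G^{(t)} V^T g$. Since the orthonormal matrix $V$ is independent of $t$, it factors out of the sum, giving $\delta^{(m)}_{\text{rap}}=V\big(\sum_{t=1}^m G^{(t)}\big)V^T g$. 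Writing $g=\sum_i \gamma_i v_i$ with $\gamma_i=g^Tv_i$ so that $V^T g=\boldsymbol{\gamma}$, and noting that each $G^{(t)}$ is diagonal (hence so is $D_{\text{rap}}^{(m)}:=\sum_{t=1}^m G^{(t)}$), the whole identity collapses to the scalar statement $(D_{\text{rap}}^{(m)})_{ii}=\sum_{t=1}^m G^{(t)}_{ii}$, after which $\delta^{(m)}_{\text{rap}}=\sum_i (D_{\text{rap}}^{(m)})_{ii}\gamma_i v_i$ follows by exactly the same $V D \boldsymbol{\gamma}=\sum_i D_{ii}\gamma_i v_i$ manipulation used at the end of Lemma~\ref{lemmaappend:delta-close-form}.

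Next I would evaluate this per-coordinate sum. From Lemma~\ref{lemma:delta-x-rap-closed-form}, $G^{(t)}_{ii}=\alpha\big(1+\alpha\lambda_i(1-\alpha\lambda_i)^{m_r}\big)^{t-1}(1-\alpha\lambda_i)^{m_r}$, which for fixed $i$ is a geometric sequence in $t$ with ratio $\rho_i=1+\alpha\lambda_i(1-\alpha\lambda_i)^{m_r}$ and leading factor $\alpha(1-\alpha\lambda_i)^{m_r}$. For $\lambda_i\neq 0$ we have $\rho_i\neq 1$, so the geometric-series formula together with $\rho_i-1=\alpha\lambda_i(1-\alpha\lambda_i)^{m_r}$ yields the exact value $(D_{\text{rap}}^{(m)})_{ii}=\frac{\rho_i^{\,m}-1}{\lambda_i}=\frac{\big(1+\alpha\lambda_i(1-\alpha\lambda_i)^{m_r}\big)^{m}-1}{\lambda_i}$. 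To reach the stated form I would then invoke the infinitesimal-step-size regime that frames the RAP analysis in Proposition~\ref{pro:rap} (namely $\alpha\to 0$): at first order in $\alpha$ the factor $(1-\alpha\lambda_i)^{m_r}$ sitting \emph{inside} the power is negligibly close to $1$, so that $\rho_i^{\,m}\approx(1+\alpha\lambda_i)^{m}$, while the single explicit factor $(1-\alpha\lambda_i)^{m_r}$ is retained outside, producing $(D_{\text{rap}}^{(m)})_{ii}=\frac{(1+\alpha\lambda_i)^{m}-1}{\lambda_i}(1-\alpha\lambda_i)^{m_r}$. One checks both expressions agree to leading order $\sim m\alpha$, the gap being $O(\alpha^2)$.

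Finally I would dispose of the degenerate case $\lambda_i=0$, where $\rho_i=1$ and $G^{(t)}_{ii}=\alpha$, so the sum telescopes to $\alpha m$, matching the stated $\alpha m(1-\alpha\lambda_i)^{m_r}$ at $\lambda_i=0$; the same value is recovered as the $\lambda_i\to 0$ limit of the $\lambda_i\neq 0$ formula via L'H\^opital's rule, exactly in the style of Corollary~\ref{corollaryappendix:delta-inf}. The main obstacle is precisely the reconciliation step in the middle: the summation is \emph{exactly} geometric and produces $\frac{(1+\alpha\lambda_i(1-\alpha\lambda_i)^{m_r})^m-1}{\lambda_i}$, which differs from the advertised closed form at second order in $\alpha$; so the delicate part of the write-up is to state cleanly that the claimed expression is the first-order (infinitesimal-step-size) approximation, rather than an algebraic identity, and to confirm that this approximation is the same one already adopted for the multi-step and RAP attacks elsewhere in the paper.
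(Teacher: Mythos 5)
Your proof follows the same route as the paper's: start from Eq.~\eqref{eq:delta-rap-0}, substitute $\Delta x^{(t)}_{\text{rap}}=VG^{(t)}V^Tg$ from Lemma~\ref{lemma:delta-x-rap-closed-form}, factor out $V$, and evaluate the resulting diagonal geometric sum $\sum_{t=1}^m G^{(t)}_{ii}$, with the $\lambda_i=0$ case handled by L'H\^opital exactly as in Corollary~\ref{corollaryappendix:delta-inf}. The one place you diverge is also the place where you are more careful than the paper: the exact geometric sum yields $\frac{(1+\alpha\lambda_i(1-\alpha\lambda_i)^{m_r})^m-1}{\lambda_i}$, which is in fact the expression the paper's own derivation in Eq.~\eqref{eq:D-IN-RAP} terminates with, and the paper never reconciles this with the form $\frac{(1+\alpha\lambda_i)^m-1}{\lambda_i}(1-\alpha\lambda_i)^{m_r}$ announced in the lemma statement. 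Your explicit appeal to the infinitesimal-step-size regime ($\alpha\to 0$, so the $(1-\alpha\lambda_i)^{m_r}$ sitting inside the power is replaced by $1$ while the outer factor is retained) is the honest way to bridge that gap, and it is consistent with how the limit is ultimately used; note that the same mismatch between statement and derivation recurs in Corollary~\ref{corollary:delta-rap-inf}, whose stated $D^{(\text{rap})}_{ii}=\frac{\exp(\beta\lambda_i)-1}{\lambda_i}\exp(-\beta_r\lambda_i)$ differs from the $\frac{\exp(\beta\lambda_i\exp(-\beta_r\lambda_i))-1}{\lambda_i}$ actually derived in Eq.~\eqref{eq:d-for-rap}, though either version suffices for the downstream inequality in Proposition~\ref{propositionappendix:rap}.
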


\begin{proof} 
    According to Eq.~\eqref{eq:delta-rap-0}, the perturbation $\delta^{(m)}_{\text{rap}}$ is given as follows.
    \begin{align} \label{eq:delta-rap-closed-form}
        \delta^{(m)}_{\text{rap}} &= \sum_{t=1}^{m}  \Delta x^{(t)}_{\text{rap}}
        \quad // \quad \text{According to Eq.~\eqref{eq:delta-rap-0}}
         \nonumber \\
        &= \sum_{t=1}^{m} V G^{(t)}  V^T g 
        \quad // \quad \text{According to Lemma~\ref{lemma:delta-x-rap-closed-form}} \nonumber \\
        &= V \left[\sum_{t=1}^{m} G^{(t)}  \right] V^T g,
    \end{align}
    where $G^{(t)}$ is a diagonal matrix defined in Eq.~\eqref{eq:diagonal-g}.

    For simplicity, let {$D^{(m)}_{\text{rap}} \in \mathbb{R}^{n\times n}$} denote the term {$\sum_{t=1}^{m} G^{(t)} $} in Eq.~\eqref{eq:delta-rap-closed-form},~\textit{i.e.,}
    { $D^{(m)}_{\text{rap}} = \sum_{t=1}^{m} G^{(t)}$}.
    Because  $G^{(t)}$ is a diagonal matrix, matrix {$D^{(m)}_{\text{rap}}$} is also a diagonal matrix, \emph{i.e.} $\forall i\ne j, (D^{(m)}_{\text{rap}})_{ij}=0.$
In this way, let us focus on the $i$-th diagonal element {$  (D^{(m)}_{\text{rap}})_{ii} \in \mathbb{R}$}.

    \begin{align}
    \label{eq:D-IN-RAP}
    (D_{\text{rap}}^{(m)})_{ii} 
    &=\sum_{t=1}^{m}  \alpha (1 + \alpha \lambda_i (1-\alpha \lambda_i)^{m_r})^{t-1} (1-\alpha \lambda_i)^{m_r} \nonumber
    \\
    &= \alpha \Bigg(1+ \bigg(1 + \alpha \lambda_i (1-\alpha \lambda_i)^{m_r}\bigg)^{1}+\cdots+\bigg(1 + \alpha \lambda_i (1-\alpha \lambda_i)^{m_r}\bigg)^{m-1} \Bigg) \left(1 - \alpha \lambda_i\right)^{m_r} \nonumber\\
    &=\begin{cases}
            \alpha(1\times\frac{1-(1+\alpha\lambda_i (1-\alpha \lambda_i)^{m_r})^m}{1-(1+\alpha\lambda_i (1-\alpha \lambda_i)^{m_r})})\left(1 - \alpha \lambda_i\right)^{m_r} ,  & \text{if $\lambda_i\ne 0$} \\
            \alpha (1 + 1 + \dots + 1), & \text{if $\lambda_i=0$}
        \end{cases}\nonumber \\
    &=\begin{cases}
            \frac{(1+\alpha\lambda_i (1-\alpha \lambda_i)^{m_r})^m-1}{\lambda_i (1-\alpha \lambda_i)^{m_r}} \left(1 - \alpha \lambda_i\right)^{m_r} ,  & \text{if $\lambda_i\ne 0$} \\
            \alpha m , & \text{if $\lambda_i=0$}
        \end{cases}\nonumber \\
        &=\begin{cases}
            \frac{(1+\alpha\lambda_i (1-\alpha \lambda_i)^{m_r})^m-1}{\lambda_i} ,  & \text{if $\lambda_i\ne 0$} \\
            \alpha m , & \text{if $\lambda_i=0$}
        \end{cases} 
    \end{align}
    Note that if $\lim {\lambda_i \rightarrow 0}$, according to L'Hospital's Rule, $\lim_{\lambda_i \rightarrow 0} \frac{(1+\alpha\lambda_i (1-\alpha \lambda_i)^{m_r})^m-1}{\lambda_i} =\lim_{\lambda_i \rightarrow 0} \frac{ d(1+\alpha\lambda_i (1-\alpha \lambda_i)^{m_r})^m-1 ) / d \lambda_i}{ d\; (\lambda_i) /d\lambda_i }= \lim_{\lambda_i \rightarrow 0} \frac{m (1 + \alpha \lambda_i (1-\alpha \lambda_i)^{m_r})^{m-1} (\alpha (1-\alpha \lambda_i)^{m_r}) (\alpha \lambda_i m_r (1-\alpha \lambda_i)^{m_r -1})}{1} = \alpha m$.

In this way, the perturbation $\delta^{(m)}_{\text{rap}}$ can be written as
\begin{align}
    \delta^{(m)}_{\text{rap}} &= V D_{\text{rap}}^{(m)} V^T g \nonumber \\
    &= [v_1,v_2,\ldots,v_n]
        \text{diag}( (D_{\text{rap}}^{(m)})_{11}, (D_{\text{rap}}^{(m)})_{22}, \dots, (D_{\text{rap}}^{(m)})_{nn})
          [v_1,v_2,\ldots,v_n]^T \sum_{i=1}^n \gamma_i v_i \nonumber \\
    &=
    \left[D_{\text{rap}}^{(m)})_{11} v_1 , (D_{\text{rap}}^{(m)})_{22} v_2 ,\ldots, (D_{\text{rap}}^{(m)})_{nn}v_n  \right]
    \begin{bmatrix}
         \gamma_1 \\
         \gamma_2 \\
        \vdots \\
        \gamma_n \\
        \end{bmatrix} \nonumber\\
    & = \sum_{i=1}^n (D_{\text{rap}}^{(m)})_{ii} \gamma_i v_i,
\end{align}
where $(D_{\text{rap}}^{(m)})_{ii}$ is defined in Eq.~\eqref{eq:D-IN-RAP}.

Thus, Lemma~\ref{lemma:delta-x-rap-1} is proven.

\end{proof}

$ $\newline

\begin{corollary} \label{corollary:delta-rap-inf}
    If we use the infinite-step adversarial attack with an infinitesimal step size to simplify the RAP Attack, \emph{i.e.}, $m_r, m\!\to\! +\infty$.
    Given $\beta_r=\alpha m_r$, $\beta=\alpha m$, then the adversarial perturbation $\delta^{(\infty)}_{\text{rap}}$ can be represented as $\delta^{(\infty)}_{\text{rap}}= \sum_{i=1}^n   D^{\text{(rap)}}_{ii} \gamma_i v_i$, where if $\lambda_i\ne0$, $D^{\text{(rap)}}_{ii}=\frac{\exp(\beta\lambda_i)-1}{\lambda_i} \exp(-\beta_r\lambda_i)$; otherwise, $D^{\text{(rap)}}_{ii}=\beta\exp(-\beta_r\lambda_i)$.
\end{corollary}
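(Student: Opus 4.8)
The plan is to obtain $\delta^{(\infty)}_{\text{rap}}$ as the termwise limit of the finite-step perturbation supplied by Lemma~\ref{lemma:delta-x-rap-1}. Since $\delta^{(m)}_{\text{rap}}=\sum_{i=1}^n (D^{(m)}_{\text{rap}})_{ii}\gamma_i v_i$ is a finite sum over the $n$ eigendirections, and the eigenvectors $v_i$ together with the projections $\gamma_i=g^T v_i$ do not depend on the step numbers, I would reduce the whole statement to computing $\lim (D^{(m)}_{\text{rap}})_{ii}$ for each fixed $i$ and then reassembling the sum. First I would fix the scaling: writing $\alpha=\beta/m=\beta_r/m_r$, so that driving $m,m_r\to+\infty$ is equivalent to letting $\alpha\to 0$ with the strengths $\beta$ and $\beta_r$ held fixed, and I would substitute $\alpha m=\beta$ and $\alpha m_r=\beta_r$ everywhere.

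For the nondegenerate directions ($\lambda_i\ne 0$) I would apply the two standard exponential limits separately to the two factors of $(D^{(m)}_{\text{rap}})_{ii}=\frac{(1+\alpha\lambda_i)^m-1}{\lambda_i}(1-\alpha\lambda_i)^{m_r}$. The outer-maximization factor satisfies $(1+\alpha\lambda_i)^m=(1+\beta\lambda_i/m)^m\to\exp(\beta\lambda_i)$, while the inner-minimization factor satisfies $(1-\alpha\lambda_i)^{m_r}=(1-\beta_r\lambda_i/m_r)^{m_r}\to\exp(-\beta_r\lambda_i)$. Multiplying the two convergent limits yields $D^{(\text{rap})}_{ii}=\frac{\exp(\beta\lambda_i)-1}{\lambda_i}\exp(-\beta_r\lambda_i)$, exactly as claimed.

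For the degenerate directions ($\lambda_i=0$) I would instead use the second branch of Lemma~\ref{lemma:delta-x-rap-1}, namely $(D^{(m)}_{\text{rap}})_{ii}=\alpha m\,(1-\alpha\lambda_i)^{m_r}$; with $\alpha m=\beta$ and $(1-\alpha\lambda_i)^{m_r}\to\exp(-\beta_r\lambda_i)$ this gives $\beta\exp(-\beta_r\lambda_i)$, which equals $\beta$ at $\lambda_i=0$ and so matches the stated value. As a consistency check I would verify, via L'Hospital's Rule exactly as in the proofs of Corollary~\ref{corollaryappendix:delta-inf} and Lemma~\ref{lemma:delta-x-rap-1}, that the nondegenerate expression $\frac{\exp(\beta\lambda_i)-1}{\lambda_i}\exp(-\beta_r\lambda_i)$ converges to $\beta\exp(-\beta_r\lambda_i)$ as $\lambda_i\to 0$, so that the diagonal matrix $D^{(\text{rap})}$ is well defined across both eigenvalue regimes.

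Finally I would reassemble the perturbation, interchanging the limit with the finite sum, to conclude $\delta^{(\infty)}_{\text{rap}}=\lim_{m,m_r\to\infty}\sum_{i=1}^n (D^{(m)}_{\text{rap}})_{ii}\gamma_i v_i=\sum_{i=1}^n D^{(\text{rap})}_{ii}\gamma_i v_i$. I do not expect a serious obstacle, since this is a direct limiting computation layered on top of Lemma~\ref{lemma:delta-x-rap-1}; the only points that require care are making the joint limit $m,m_r\to+\infty$ precise through the common infinitesimal step size $\alpha$ (so that $\beta$ and $\beta_r$ genuinely emerge as two independent constants rather than one), and keeping the $\lambda_i=0$ directions on a separate footing so that the two regimes agree in the limit.
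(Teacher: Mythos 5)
Your overall strategy --- take the termwise limit of the diagonal entries of $D^{(m)}_{\text{rap}}$ from Lemma~\ref{lemma:delta-x-rap-1} with $\alpha m=\beta$ and $\alpha m_r=\beta_r$ held fixed --- is exactly the paper's strategy, and your handling of the $\lambda_i=0$ branch and the reassembly of the finite sum are fine. The problem is the formula you start from. You use the expression displayed in the \emph{statement} of Lemma~\ref{lemma:delta-x-rap-1}, $(D^{(m)}_{\text{rap}})_{ii}=\frac{(1+\alpha\lambda_i)^m-1}{\lambda_i}(1-\alpha\lambda_i)^{m_r}$, and split the limit into two independent factors. But the quantity actually established in that lemma's proof (Eq.~\eqref{eq:D-IN-RAP}, obtained by summing the geometric series of $G^{(t)}_{ii}=\alpha\bigl(1+\alpha\lambda_i(1-\alpha\lambda_i)^{m_r}\bigr)^{t-1}(1-\alpha\lambda_i)^{m_r}$ from Eq.~\eqref{eq:diagonal-g}) is
\begin{equation*}
(D^{(m)}_{\text{rap}})_{ii}=\frac{\bigl(1+\alpha\lambda_i(1-\alpha\lambda_i)^{m_r}\bigr)^m-1}{\lambda_i},
\end{equation*}
in which the inner-minimization damping $(1-\alpha\lambda_i)^{m_r}$ sits \emph{inside} the base of the $m$-th power rather than as an overall prefactor. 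Its limit is $\frac{\exp\bigl(\beta\lambda_i e^{-\beta_r\lambda_i}\bigr)-1}{\lambda_i}$, which is what the paper's proof of Corollary~\ref{corollary:delta-rap-inf} computes, and this differs from your (and the corollary statement's) $\frac{\exp(\beta\lambda_i)-1}{\lambda_i}e^{-\beta_r\lambda_i}$: the two agree only to first order in $\beta\lambda_i$ (check $\beta=\beta_r=\lambda_i=1$: $1-e^{-1}\approx 0.63$ versus $e^{e^{-1}}-1\approx 0.44$).

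So your factor-by-factor limit is not a valid move against the formula the recursion actually produces; it reproduces the corollary as printed only because the lemma's printed statement already contains the same (incorrect relative to its own proof) factorization. If you rederive $(D^{(m)}_{\text{rap}})_{ii}$ from the update rule you will land on the nested form above, and the clean split of the limit into an outer-step exponential times an inner-step exponential is no longer available --- the damping modulates the per-step growth rate, so the correct limit is $\exp$ of a damped rate, not a damped $\exp$. You should flag that the corollary's statement, the lemma's statement, and their proofs are mutually inconsistent on this point, and that the conclusion one can actually justify is $D^{(\text{rap})}_{ii}=\frac{\exp(\beta\lambda_i e^{-\beta_r\lambda_i})-1}{\lambda_i}$ for $\lambda_i\ne 0$ (and $\beta$ for $\lambda_i=0$). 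Fortunately the downstream use in Proposition~\ref{propositionappendix:rap} only needs $0\le D^{(\text{rap})}_{ii}\le D_{ii}$, which holds for either closed form.
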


\begin{proof}
\begin{align}
\label{eq:d-for-rap}
    D^{\text{(rap)}}_{ii}&=\lim_{m_r\rightarrow\infty \atop m\rightarrow\infty} (D^{(m)}_{\text{rap}})_{ii} \nonumber\\
        &=\begin{cases}
           \lim_{m_r\rightarrow\infty \atop m\rightarrow\infty}  \frac{(1+\alpha\lambda_i (1-\alpha \lambda_i)^{m_r})^m-1}{\lambda_i},  & \text{if $\lambda_i\ne 0$} \\
            \lim_{m_r\rightarrow\infty \atop m\rightarrow\infty} \alpha m , & \text{if $\lambda_i=0$}
            \end{cases}  
             \quad // \quad \text{According to Eq.~\eqref{eq:D-IN-RAP}}
            \nonumber \\
        &= \begin{cases}
            \lim_{m_r\rightarrow\infty \atop m\rightarrow\infty}  \frac{(1+ \frac{\beta}{m}\lambda_i (1-\frac{\beta_r }{m_r} \lambda_i)^{m_r})^m-1}{\lambda_i},  & \text{if $\lambda_i\ne 0$} \\
            \lim_{m_r\rightarrow\infty \atop m\rightarrow\infty} \alpha m, & \text{if $\lambda_i=0$}
            \end{cases} \nonumber \\
        &= \begin{cases}
            \frac{\exp \left(\beta\lambda_i \exp(-\beta_r \lambda_i)\right)-1}{\lambda_i},  & \text{if $\lambda_i\ne 0$} \\
            \beta, & \text{if $\lambda_i=0$}
            \end{cases}
\end{align}

Thus, Corollary~\ref{corollary:delta-rap-inf} is proven.
\end{proof}

$ $\newline

\textbf{Perturbation generated by the RAP attack.}
Let $\lambda_i$ and $v_i$ denote the $i$-th eigenvalue and its corresponding  eigenvector of the Hessian matrix $H=\nabla_x^2 \textit{Loss}(x)$, respectively.
According to Corollary~\ref{corollary:delta-rap-inf}, the adversarial perturbation  $\delta^{(\textrm{rap})}$ generated by the infinite-step RAP Attack with an infinitesimal step size, $m_r\rightarrow\infty, m\rightarrow\infty, \alpha\rightarrow 0$ is formulated as follows, where $\beta_r=\alpha m_r, \beta=\alpha m$.
    \begin{equation} \label{eq:delta-rap}
        \delta^{(\textrm{rap})}\triangleq \delta^{(\infty)}_{\text{rap}}= \sum_{i=1}^n   D^{\text{(rap)}}_{ii} \gamma_i v_i
    \end{equation}
     where if $\lambda_i\ne0$, $D^{\text{(rap)}}_{ii}=\frac{\exp \left(\beta\lambda_i \exp(-\beta_r \lambda_i)\right)-1}{\lambda_i}$; otherwise, $D^{\text{(rap)}}_{ii}=\beta$.
$ $\newline

\textbf{Pairwise interactions inside perturbation $\delta^{(\textrm{rap})}$.}
Based on the perturbations $\delta^{(\textrm{rap})}$ of the RAP Attack,  the interaction between each pair of perturbation units $(a,b)$ can be computed as 
 \begin{equation}  
 \label{eq: rap-interaction-in-rap}
 I_{ab}(\delta^{(\textrm{rap})}) = \delta^{(\textrm{rap})}_a H_{ab} \delta^{(\textrm{rap})}_b.
       \quad // \quad \text{According to Eq.~\eqref{eq:shapley-interaction-index_2} in Appendix~\ref{append:interaction}}.
  \end{equation}

Furthermore, we sum the interactions among all pairs of perturbation units $(a, b)$ as follows.
\begin{align} \label{eq:rap_interaction}
    \sum_{a,b\in \Omega} I_{ab}(\delta^{(\text{rap})})&=(\delta^{(\text{rap})})^T H\delta^{(\text{rap})} \nonumber \\
    &= (\sum_{i=1}^n D^{\text{(rap)}}_{ii} \gamma_{i} v_{i}^T)(\sum_{j=1}^n \lambda_{j} v_{j} v_{j}^T) (\sum_{k=1}^n D^{\text{(rap)}}_{kk} \gamma_{k} v_{k})
     \quad // \quad \text{According to Eq.~\eqref{eq:H-decompose} and Eq.~\eqref{eq:delta-rap}}
    \nonumber \\
    &=\left[\sum_{j=1}^n \underbrace{ (D^{\text{(rap)}}_{jj} \gamma_{j} v_{j}^T)( \lambda_{j} v_{j} v_{j}^T) }_{\text{$ v_{j}^T v_{j}=1$}}+ \sum_{1\le i,j\le n \atop i\ne j} \underbrace{(D^{\text{(rap)}}_{ii} \gamma_{i} v_{i}^T)( \lambda_{j} v_{j} v_{j}^T)}_{\text{if $i\ne j, v_{i}^T v_{j}=0$}} \right] (\sum_{k=1}^n D^{\text{(rap)}}_{kk} \gamma_{k} v_{k}) 
    \nonumber \\
    &=(\sum_{j=1}^n \lambda_{j}  D^{\text{(rap)}}_{jj} \gamma_{j} v_{j}^T ) (\sum_{k=1}^n  D^{\text{(rap)}}_{kk} \gamma_{k} v_{k})
    \nonumber \\
    &=\sum_{k=1}^n \underbrace{ (\lambda_i D^{\text{(rap)}}_{kk} \gamma_{k} v_{k}^T)(  D^{\text{(rap)}}_{kk} \gamma_{k} v_{k})}_{\text{$ v_{k}^T v_{k}=1$}}+ \sum_{1\le j, k \le n \atop j\ne k} \underbrace{ (\lambda_{j} D^{\text{(rap)}}_{jj} \gamma_{j} v_{j}^T)( D^{\text{(rap)}}_{kk} \gamma_{k} v_{k})}_{\text{if $j\ne k, v_{j}^T v_{k}=0$}} 
    \nonumber \\
    &= \sum_{k=1}^n \lambda_k( D^{\text{(rap)}}_{kk}\gamma_{k})^2,
\end{align}
    where if $\lambda_i\ne0$, $D^{\text{(rap)}}_{kk}=\frac{\exp \left(\beta\lambda_i \exp(-\beta_r \lambda_i)\right)-1}{\lambda_i}$; otherwise, $D^{\text{(rap)}}_{kk}=\beta$.

\subsection{Multi-step attack}

In this subsection, we revisit the perturbation of the multi-step attack, which is defined in Appendix~\ref{append:delta-closed-form}.

\textbf{Perturbation generated by the multi-step attack.}
According to Eq.~\eqref{eq:multi-step-attack}, the perturbation generated by the $\infty$-step attack can be represented as 
\begin{equation} 
\label{eq:delta-multi-inf-in-rap}
\delta^{(\text{multi})}  \triangleq  \delta^{(\infty)} 
=  \sum_{i=1}^n D_{ii} \gamma_i v_i.
\end{equation}
where $\beta=\alpha m$; $\lambda_{i}$ and $v_i$ denote the $i$-th eigenvalue of the Hessian matrix $H=\nabla_x^2 \textit{Loss}(x)$ and its corresponding eigenvector, respectively.
If $\lambda_i\ne0$, $D_{ii}=\frac{\exp(\beta\lambda_i)-1}{\lambda_i}$; otherwise, $D_{ii}=\beta$. 
$\gamma_i = g^T v_i \in \mathbb{R}$ represents the projection of the gradient $g=\nabla_x \textit{Loss}(x)$ on the eigenvector $v_i$.

$ $\newline

 \textbf{Pairwise interaction inside $\delta^{(\text{multi})}$.}
       Based on the perturbation $\delta^{(\text{multi})}$ of the multi-step attack, the interaction between each pair of perturbation units $(a,b)$ can be computed as 
       \begin{equation}  \label{eq: multi-pairwise-interaction-in-rap}
       I_{ab}(\delta^{(\textrm{multi})}) = \delta^{(\textrm{multi})}_a H_{ab} \delta^{(\textrm{multi})}_b.
       \quad // \quad \text{According to Eq.~\eqref{eq:shapley-interaction-index_2} in Appendix~\ref{append:interaction}}.
       \end{equation}
       Furthermore, we sum the interactions among all pairs of perturbation units $(a, b)$ as follows.
       \begin{equation} \label{eq:interaction-multi-in-rap-1}
            \sum_{a, b\in\Omega}[{I_{a b}(\delta^{(\text{multi})})}] = \sum_{i=1}^n \lambda_{i}(D_{ii}\gamma_{i})^2.
            \quad // \quad \text{According to Eq.~\eqref{eq:delta-inf-interaction}}.
        \end{equation}

\subsection{Proof of Proposition~\ref{propositionappendix:rap}}
If the adversarial strength $\beta=\alpha m$ is the same for the RAP Attack and the multi-step attack, we prove the following proposition, which shows that RAP Attack decreases the interaction between perturbation units.

\begin{propositionappendix}[Perturbations of the RAP Attack exhibit smaller interactions than those of the multi-step attack]
    \label{propositionappendix:rap}
    {If the classification loss  for attacking $\textit{Loss}(x)$  is formulated as the cross-entropy loss in Eq.~\eqref{eq:softmax-ce}}
    and we use the infinite-step attack with an infinitesimal step size  to simplify both the RAP Attack and the multi-step attack,
    under $\beta = \alpha m$ and $\beta_r = \alpha m_r$, according to Eq.~\eqref{eq:d-for-rap} and Eq.~\eqref{eq:delta-multi-inf-in-rap}, then we have $\sum_{a, b\in\Omega}[{I_{a b}(\delta^{(\text{rap})})}]\le\sum_{a, b\in\Omega}[{I_{a b}(\delta^{(\text{multi})})}]$.
  \end{propositionappendix}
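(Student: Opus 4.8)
The plan is to compare the two closed-form expressions for the sum of pairwise interactions that have already been derived: Equation~\eqref{eq:rap_interaction} gives $\sum_{a,b\in\Omega}I_{ab}(\delta^{(\text{rap})})=\sum_{i=1}^n \lambda_i (D^{\text{(rap)}}_{ii}\gamma_i)^2$, and Equation~\eqref{eq:interaction-multi-in-rap-1} gives $\sum_{a,b\in\Omega}I_{ab}(\delta^{(\text{multi})})=\sum_{i=1}^n \lambda_i (D_{ii}\gamma_i)^2$. Since both sums run term-by-term over the same eigen-index $i$, with the same eigenvalues $\lambda_i$ and the same projection coefficients $\gamma_i$, the whole problem reduces to a per-eigenvalue comparison of the scalar coefficients $D^{\text{(rap)}}_{ii}$ and $D_{ii}$. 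First I would write the difference as
\begin{equation}
\sum_{a, b\in\Omega}[{I_{a b}(\delta^{(\text{multi})})}]-\sum_{a, b\in\Omega}[{I_{a b}(\delta^{(\text{rap})})}] = \sum_{i=1}^n \lambda_i \gamma_i^2 \left[ (D_{ii})^2 - (D^{\text{(rap)}}_{ii})^2 \right],
\end{equation}
and then argue that every summand is non-negative.

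The key steps, in order, are as follows. By Corollary~\ref{corollaryappendix:semi-definite} the Hessian of the cross-entropy loss is positive semi-definite, so $\lambda_i\ge 0$ for all $i$; and $\gamma_i^2\ge 0$ trivially. It therefore suffices to show $(D_{ii})^2\ge (D^{\text{(rap)}}_{ii})^2$ for each $i$, which (since both coefficients can be shown non-negative) follows from $D_{ii}\ge D^{\text{(rap)}}_{ii}\ge 0$. For the case $\lambda_i=0$ both coefficients equal $\beta$, so the difference vanishes. For $\lambda_i>0$ I would invoke the closed forms $D_{ii}=\frac{\exp(\beta\lambda_i)-1}{\lambda_i}$ from Corollary~\ref{corollary:delta} and $D^{\text{(rap)}}_{ii}=\frac{\exp(\beta\lambda_i\exp(-\beta_r\lambda_i))-1}{\lambda_i}$ from Corollary~\ref{corollary:delta-rap-inf}. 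Because $\lambda_i,\beta_r>0$ we have $\exp(-\beta_r\lambda_i)\in(0,1]$, so the effective exponent $\beta\lambda_i\exp(-\beta_r\lambda_i)$ is bounded above by $\beta\lambda_i$. Since $t\mapsto \frac{\exp(t)-1}{\lambda_i}$ is monotonically increasing and non-negative for $t\ge 0$, this monotonicity immediately gives $0\le D^{\text{(rap)}}_{ii}\le D_{ii}$, hence $(D^{\text{(rap)}}_{ii})^2\le (D_{ii})^2$, completing the term-by-term bound.

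I do not expect any genuinely hard obstacle here, because the heavy lifting has already been done in the two corollaries that supply the closed-form diagonal coefficients; the proposition is essentially a monotonicity observation about the exponential damping factor $\exp(-\beta_r\lambda_i)$ that the inner minimization of the RAP Attack introduces. The one point requiring a little care is establishing the non-negativity and ordering of the two coefficients cleanly (so that squaring preserves the inequality), which I would handle by the Taylor-series argument used elsewhere in the appendix: writing $\frac{\exp(s)-1}{\lambda_i}=\sum_{t\ge 1}\frac{\beta^t\lambda_i^{t-1}}{t!}$-style expansions to confirm both quantities are non-negative and that replacing $\beta$ by the smaller $\beta\exp(-\beta_r\lambda_i)$ only decreases each term. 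The main conceptual content to state explicitly is the interpretation: the RAP Attack's min-step over $r$ contracts the gradient components along each eigendirection by the factor $\exp(-\beta_r\lambda_i)$, suppressing precisely the high-curvature (large-$\lambda_i$) directions that dominate the interaction sum, which is why its perturbations carry smaller total interaction than the plain multi-step attack of equal strength.
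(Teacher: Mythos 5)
Your proposal is correct and follows essentially the same route as the paper's proof: both reduce the claim to the term-by-term inequality $\lambda_i\gamma_i^2\left[(D_{ii})^2-(D^{\text{(rap)}}_{ii})^2\right]\ge 0$ and establish $D_{ii}\ge D^{\text{(rap)}}_{ii}\ge 0$ from the chain $0\le\exp(-\beta_r\lambda_i)\le 1\Rightarrow \beta\lambda_i\exp(-\beta_r\lambda_i)\le\beta\lambda_i$ together with the positive semi-definiteness of the Hessian. The paper factors the difference as $(D_{ii}-D^{\text{(rap)}}_{ii})(D_{ii}+D^{\text{(rap)}}_{ii})$ rather than appealing to monotonicity of $t\mapsto(\exp(t)-1)/\lambda_i$, but this is a cosmetic difference.
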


\begin{proof}
    According to Eq.~(\ref{eq:interaction-multi-in-rap-1}) and Eq.~(\ref{eq:rap_interaction}), the difference between interactions $\sum_{a, b\in\Omega}[{I_{a b}(\delta^{(\text{multi})})}]$ for the multi-step attack and interactions $\sum_{a, b\in\Omega}[{I_{a b}(\delta^{(\text{rap})})}]$ for the RAP Attack can be represented as
           \begin{equation}
            \begin{split}
            \label{eq:diff-in-rap-1}
                &\sum_{a, b\in\Omega}[{I_{a b}(\delta^{(\text{multi})})}]-\sum_{a, b\in\Omega}[{I_{a b}(\delta^{(\text{rap})})}] \\
                &= \sum_{k=1}^n \lambda_i \left[(D_{kk}^2 - (D_{kk}^{\text{(rap)}})^2) \gamma_k^2 \right] 
                \quad // \quad \text{According to Eq.~(\ref{eq:interaction-multi-in-rap-1}) and Eq.~(\ref{eq:rap_interaction})}
                \\
                &= \sum_{k=1}^n \lambda_i \left[(D_{kk} - D_{kk}^{\text{(rap)}}) (D_{kk} + D_{kk}^{\text{(rap)}}) \gamma_k^2 \right].
            \end{split}
        \end{equation}

    Here, we prove that the difference is non-negative, because each term in Eq.~\eqref{eq:diff-in-rap-1} is proven to be non-negative, as follows.

     \textbullet{ $\lambda_i \ge 0$.} According to Corollary~\ref{corollaryappendix:semi-definite}, the Hessian matrix $H=\nabla_x^2 \textit{Loss}(x)$ is positive semi-definite.
    Thus, all eigenvalues of the Hessian matrix $H$ are non-negative, \emph{i.e.} $\forall i, \lambda_i\ge 0$.

     \textbullet{ $ \gamma_k^2 \ge 0$.}

      \textbullet{ $D_{kk} + D_{kk}^{\text{(rap)}} \ge 0$,} because both $D_{kk}$  and $ D_{kk}^{\text{(rap)}}$ are proven to be positive, as follows.
      \begin{align}
      \label{eq:diff-rap-2}
        &0  \le \exp(-\beta_r \lambda_i) \le 1 
        \quad // \quad   \beta_r \ge 0, \lambda_i \ge 0
        \nonumber \\
        \Rightarrow &\qquad\beta \lambda_i \ge \beta \lambda_i \exp(-\beta_r \lambda_i) \ge 0
         \quad // \quad  \beta \ge 0
         \nonumber \\
        \Rightarrow & \qquad\exp(\beta \lambda_i) \ge \exp(\beta \lambda_i \exp(-\beta_r \lambda_i) ) \ge 1 \nonumber \\
        \Rightarrow & \qquad\exp(\beta \lambda_i)-1 \ge \exp(\beta \lambda_i \exp(-\beta_r \lambda_i) ) -1 \ge 0 \nonumber \\
        \Rightarrow & \qquad D_{kk} \ge D_{kk}^{(\text{rap})} \ge 0.
         \quad // \quad \text{According to Eq.~\eqref{eq:d-for-rap}}
    \end{align}
     
      \textbullet{ $D_{kk} - D_{kk}^{\text{(rap)}} \ge 0$,} according to Eq.~\eqref{eq:diff-rap-2}.
    $ $\newline 

In summary,  interactions $\sum_{a, b\in\Omega}[{I_{a b}(\delta^{(\text{multi})})}]$ for the multi-step attack are larger than interactions $\sum_{a, b\in\Omega}[{I_{a b}(\delta^{(\text{rap})})}]$ for the RAP Attack.
    \begin{equation}
        \begin{split}
            &\sum_{a, b\in\Omega}[{I_{a b}(\delta^{(\text{multi})})}]-\sum_{a, b\in\Omega}[{I_{a b}(\delta^{(\text{rap})})}] \\
            &= \sum_{k=1}^n \underbrace{\lambda_i}_{\ge 0} \left[ \underbrace{(D_{kk} - D_{kk}^{(\text{rap})})}_{\ge 0} \underbrace{(D_{kk} + D_{kk}^{(\text{rap})})}_{\ge 0} \underbrace{\gamma_k^2}_{\ge 0 } \right] \\
            &\ge 0.
        \end{split}
    \end{equation}
    
 Thus, Proposition~\ref{propositionappendix:rap} is proven.
\end{proof}

\section{Proof that the IL Attack generates perturbations with smaller interactions than those of the multi-step attack}
\label{append:il}

  In this subsection, we prove Proposition~\ref{pro:il} in Section~\ref{sec:explaining}, which shows that the Intermediate Level Attack (IL Attack)~\cite{huang2019enhancing} exhibits smaller interactions than those of the multi-step attack.
  Before the proof of Proposition~\ref{pro:il}, we first revisit the IL Attack.

   \subsection{IL Attack}
 
    The IL Attack~\cite{huang2019enhancing} boosts the transferability by fine-tuning an existing adversarial perturbation.
    The IL Attack has two stages. In the first stage, the IL Attack generates a relatively weak adversarial perturbation $\delta_{\textrm{base-ori}}$ in the same manner as the PGD Attack~\cite{pgd2018}. Then, in the second stage, the IL Attack generates another adversarial perturbation towards the direction of feature changes in the target layer, which are made by the previous perturbation $f_l(x+\delta_{\textrm{base-ori}}) - f_l(x)$, as follows.
    \begin{equation} \label{eq:obj-ia}
      \max_{\|\delta^{(\text{il-ori})} \|_p \le \epsilon } [f_l(x+\delta^{(\text{il-ori})}) - f_l(x) ]^T [f_l(x+\delta_{\textrm{base-ori}}) - f_l(x) ],
    \end{equation}
    where $f_l(x)$ denotes the output feature of the $l$-th layer in the neural network.

    Without loss of generality, we consider an input $x$ and a ReLU network.
    The intermediate feature of the $l$-th linear layer can be formulated as follows.
     \begin{equation}
    \label{eq:intermediate-feature}
    f_l(x) = W_l^T \Sigma_{l-1}( W_{l-1}^T \Sigma_{l-2}( \cdots (W_1^{T} x + b_1) \cdots) + b_{l-1}) +  b_{l} \in \mathbb{R}^{n_l},
    \end{equation}
    where $W_l \in \mathbb{R}^{n_{l-1} \times n_{l}}$ and $b_l\in \mathbb{R}^{n_l}$ represent the weight and bias of the $l$-th linear layer, respectively.
    The diagonal matrix $\Sigma_l = diag([\sigma_{11},\sigma_{22},\cdots, \sigma_{n_{l} n_{l}}]) \in \mathbb{R}^{n_{l} \times n_{l}}$ represents the activation states of the $l$-th ReLU layer, where $\sigma_{ii} \in\{0, 1\}$.

    Based on Assumption~\ref{assumption:activation}, the ReLU network can be approximately regarded as piecewise linear.
    In this scenario, the  intermediate feature of the $l$-th linear layer $f_l(x)$ in Eq.~\eqref{eq:intermediate-feature} can be approximated as follows.
      \begin{equation}
        f_l(x) \approx (\tilde{W}_x^{(l)})^T x + b^{(l)},
    \end{equation}
     where $(\tilde{W}_x^{(l)})^T =  W_l^T \Sigma_{l-1} W_{l-1}^T \dots \Sigma_{1} W_{1}^T$.

     In this way, the objective of Eq.~\eqref{eq:obj-ia} can be approximated as follows.
   
    \begin{small}
     \begin{equation} \label{eq:obj-ia-our}
        \begin{split}
            &\max_{\| {\delta^{(\text{il})}} \|_p \le \epsilon }  [f_l(x+{\delta^{(\text{il})}}) - f_l(x) ]^T [f_l(x+\delta_{\text{base}}) - f_l(x) ] 
             \\
            \Rightarrow&
            \max_{\|{\delta^{(\text{il})}}\|_p \le \epsilon }  [ \left((\tilde{W}_x^{(l)})^T (x+{\delta^{(\text{il})}}) + b^{(l)} \right)- \left( (\tilde{W}_x^{(l)})^T x + b^{(l)}\right) ]^T
             [ \left((\tilde{W}_x^{(l)})^T (x+{\delta_{\text{base}}}) + b^{(l)} \right)- \left( (\tilde{W}_x^{(l)})^T x + b^{(l)}\right) ] 
             \\
            \Rightarrow&  \max_{\|{\delta^{(\text{il})}}\|_p \le \epsilon }  [(\tilde{W}_x^{(l)})^T {\delta^{(\text{il})}} ]^T [(\tilde{W}_x^{(l)})^T{\delta_{\text{base}}} ].
        \end{split}
      \end{equation}
       \end{small}

      Eq.~(\ref{eq:obj-ia-our}) shows that the IL Attack pushes the feature $(\tilde{W}_x^{(l)})^T {\delta^{(\text{il})}}$ towards the direction of $(\tilde{W}_x^{(l)})^T\delta_{\textrm{base}}$, and increases its magnitude.
      Hence, we can roughly consider the perturbation $\delta^{(\text{il})}$ generated by the IL Attack  can be approximately calculated as
      \begin{equation} \label{eq:il-perturbation}
          \delta^{(\text{il})} \approx \eta \cdot \delta_{\textrm{base}},
      \end{equation}
      where $\eta\ge 1$.
       $ $\newline

        \textbf{Perturbation generated by the IL Attack.}
        Here, we use the multi-step attack as the baseline attack to generate the perturbation $\delta_{\textrm{base}}$.
        Specifically, we assume that the perturbation $\delta_{\textrm{base}}$ is generated by the multi-step attack, with steps being $m_2$ and the step size being $\alpha$.
        According to Corollary~\ref{corollaryappendix:delta-inf}, let $\beta_2 = \alpha m_2$ denote the total adversarial strength.
        
        Moreover, in~\cite{huang2019enhancing}, the $\delta_{\text{base}}$ is typically chosen as a  relatively weak perturbation. Therefore, here, we consider that 
        \begin{equation} \label{eq:beta2-beta1}
            \beta_2 \ll \beta_1
        \end{equation}
        where $\beta_1$ is the adversarial strength of the normal multi-step attack, which is introduced in Eq.~\eqref{eq:delta-multi-inf-in-il}.
        
        {Then, if we use infinite-step adversarial attack with an infinitesimal step size to simplify the multi-step attack,~\textit{i.e.,} $m_2\rightarrow\infty, \alpha\rightarrow 0$, the perturbation $\delta_{\textrm{base}}$ can be formulated as follows.}
          \begin{equation} \label{eq:delta-base-inf-in-il}
              \delta_{\text{base}}  = \sum_{i=1}^n D_{ii}^{(\text{base})} \gamma_i v_i.
               \quad // \quad \text{According to Corollary~\ref{corollaryappendix:delta-inf}}
          \end{equation}
          where if $\lambda_i\ne0$, $D_{ii}^{(\text{base})}=\frac{\exp(\beta_2\lambda_i)-1}{\lambda_i}$; otherwise, $D_{ii}^{(\text{base})}=\beta_2$.
          $\lambda_i$ and $v_i$ denote the $i$-th eigenvalue and its corresponding eigenvector of the Hessian matrix $H = \nabla_x^2 \textit{Loss}(x)$.
          $\gamma_i = g^T v_i \in \mathbb{R}$ represents the projection of the gradient $g=\nabla_x \textit{Loss}(x)$ on the eigenvector $v_i$.

          In this way, based on Eq.~\eqref{eq:delta-base-inf-in-il}, the perturbation $\delta^{(\text{il})}$ of the IL Attack in Eq.~\eqref{eq:il-perturbation} can be represented as
         \begin{equation} \label{eq:il-delta}
            \delta^{(\text{il})} = \eta \cdot \delta_{\textrm{base}} =  \sum_{i=1}^n D_{ii}^{(\text{il})} \gamma_i v_i.
         \end{equation}
         where if $\lambda_i\ne0$, $D_{ii}^{(\text{il})}=\eta\cdot\frac{\exp(\beta_2\lambda_i)-1}{\lambda_i}$; otherwise, $D_{ii}^{(\text{il})}=\eta\cdot\beta_2$.

        $ $\newline

    \textbf{Pairwise interactions inside perturbation $\delta^{(\text{il})}$.}
    Based on perturbations $\delta^{(\text{il})}$ of the IL Attack, the interaction between each pair of perturbations units $(a,b)$ can be computed as 
    \begin{equation}  
 I_{ab}(\delta^{(\textrm{il})}) = \delta^{(\textrm{il})}_a H_{ab} \delta^{(\textrm{il})}_b.
       \quad // \quad \text{According to Eq.~\eqref{eq:shapley-interaction-index_2} in Appendix~\ref{append:interaction}}.
  \end{equation}

Furthermore, we sum the interactions among all pairs of perturbation units $(a, b)$ as follows.
    \begin{equation} \label{eq:il_interaction}
        \begin{aligned}
            \sum_{a,b\in \Omega} I_{ab}(\delta^{(\text{il})})&=(\delta^{(\text{il})})^T H\delta^{(\text{il})} \\
            &= (\sum_{i=1}^n D^{\text{(il)}}_{ii} \gamma_{i} v_{i}^T)(\sum_{j=1}^n \lambda_{j} v_{j} v_{j}^T) (\sum_{k=1}^n D^{\text{(il)}}_{kk} \gamma_{k} v_{k})
             \quad // \quad \text{According to Eq.~\eqref{eq:H-decompose} and Eq.~\eqref{eq:il-delta}}
            \\
            &=\left[\sum_{i=1}^n \underbrace{ (D^{\text{(il)}}_{ii} \gamma_{i} v_{i}^T)( \lambda_{i} v_{i} v_{i}^T) }_{\text{$ v_{i}^T v_{i}=1$}}+ \sum_{1\le i,j\le n \atop i\ne j} \underbrace{(D^{\text{(il)}}_{ii} \gamma_{i} v_{i}^T)( \lambda_{j} v_{j} v_{j}^T)}_{\text{if $i\ne j, v_{i}^T v_{j}=0$}} \right] (\sum_{k=1}^n D^{\text{(il)}}_{kk} \gamma_{k} v_{k}) \\
            &=(\sum_{i=1}^n \lambda_{i}  D^{\text{(il)}}_{ii} \gamma_{i} v_{i}^T ) (\sum_{k=1}^n  D^{\text{(il)}}_{kk} \gamma_{k} v_{k})\\
            &=\sum_{i=1}^n \underbrace{ (\lambda_i D^{\text{(il)}}_{ii} \gamma_{i} v_{i}^T)(  D^{\text{(il)}}_{ii} \gamma_{i} v_{i})}_{\text{$ v_{i}^T v_{i}=1$}}+ \sum_{1\le i, k \le n \atop i\ne k} \underbrace{ (\lambda_{i} D^{\text{(il)}}_{ii} \gamma_{i} v_{i}^T)( D^{\text{(il)}}_{kk} \gamma_{k} v_{k})}_{\text{if $i\ne k, v_{i}^T v_{k}=0$}} \\
            &= \sum_{i=1}^n \lambda_i( D^{\text{(il)}}_{ii}\gamma_{i})^2.
        \end{aligned}
    \end{equation}
    where if $\lambda_i\ne0$, $D_{ii}^{(\text{il})}=\eta\frac{\exp(\beta_2\lambda_i)-1}{\lambda_i}$; otherwise, $D_{ii}^{(\text{il})}=\eta\beta_2$.

\subsection{Multi-step attack}

In this subsection, we revisit the perturbation of the multi-step attack, which is defined in Appendix~\ref{append:delta-closed-form}.

\textbf{Perturbation generated by the multi-step attack.}
Without loss of generality, we focus on the $m_1$-step attack with the step size $\alpha$.
Let $\beta_1=\alpha m_1$ denote the total adversarial strength for the multi-step attack.
Then,  according to Corollary~\ref{corollaryappendix:delta-inf}, if we use the infinite-step adversarial attack with an infinitesimal step size to simplify the multi-step attack,~\textit{i.e.,} $m_1\rightarrow\infty, \alpha\rightarrow 0$, the perturbation $\delta^{(\text{multi})}$ can be represented as
 \begin{equation} \label{eq:delta-multi-inf-in-il}
            \delta^{(\textrm{multi})}\triangleq \delta^{(\infty)} = \sum_{i=1}^n D_{ii} \gamma_i v_i.
        \end{equation}
   where if $\lambda_i\ne0$, $D_{ii}=\frac{\exp(\beta_1\lambda_i)-1}{\lambda_i}$; otherwise, $D_{ii}=\beta_1$.
   $\lambda_{i}$ and $v_i$ denote the $i$-th eigenvalue of the Hessian matrix $H=\nabla_x^2 \textit{Loss}(x)$ and its corresponding eigenvector, respectively. 
$\gamma_i = g^T v_i \in \mathbb{R}$ represents the projection of the gradient $g=\nabla_x \textit{Loss}(x)$ on the eigenvector $v_i$.

$ $\newline

 \textbf{Pairwise interaction inside $\delta^{(\text{multi})}$.}
       Based on the perturbation $\delta^{(\text{multi})}$ of the multi-step attack, the interaction between each pair of perturbation units $(a,b)$ can be computed as 
       \begin{equation}  \label{eq: multi-pairwise-interaction-in-il}
       I_{ab}(\delta^{(\textrm{multi})}) = \delta^{(\textrm{multi})}_a H_{ab} \delta^{(\textrm{multi})}_b.
       \quad // \quad \text{According to Eq.~\eqref{eq:shapley-interaction-index_2} in Appendix~\ref{append:interaction}}.
       \end{equation}
       Furthermore, we sum the interactions among all pairs of perturbation units $(a, b)$ as follows.
       \begin{equation} \label{eq:interaction-multi-in-il}
            \sum_{a, b\in\Omega}[{I_{a b}(\delta^{(\text{multi})})}] = \sum_{i=1}^n \lambda_{i}(D_{ii}\gamma_{i})^2.
            \quad // \quad \text{According to Eq.~\eqref{eq:delta-inf-interaction}}.
        \end{equation}

 \textbf{Fair comparisons between the multi-step attack and the IL Attack.}
         In order to conduct fair comparisons between $I_{ab}(\delta^{(\textrm{multi})})$ based on perturbations $\delta^{(\textrm{multi})}$ and $\sum_{a,b\in \Omega} I_{ab}(\delta^{(\text{il})})$ based on perturbations $\delta^{(\text{il})}$, the magnitudes of these perturbations should be controlled at the same level.
         Specifically, we control the $L_2$ norm of the perturbation $\delta^{(\text{il})}$ to be the same as that of the perturbation $\delta^{(\textrm{multi})}$,~\textit{i.e.,} 
         \begin{equation} \label{eq:il-same-norm}
            \|\delta^{(\text{multi})}\|_2 = \|\delta^{(\text{il})}\|_2.
         \end{equation}
         Note that according to Eq.~\eqref{eq:il-perturbation}, the  $L_2$ norm of the perturbation is determined by $\eta$.
         Hence, in implementation, we control $ \|\delta^{(\text{multi})}\|_2 = \|\delta^{(\text{il})}\|_2$ by setting $\eta$ of the IL Attack to be $\eta = \frac{\sqrt{\sum_{i=1}^n (D_{ii} \gamma_i)^2}}{\sqrt{\sum_{i=1}^n ( D_{ii}^{(\text{base})}\gamma_i)^2}}$.

\subsection{Proof of Proposition~\ref{propositionappendix:il}}

In this subsection, we prove Proposition~\ref{pro:il} in Section~\ref{sec:explaining} under the condition that $\|\delta^{(\textrm{multi})}\|_2 = \| \delta^{(\text{il})}\|_2$.

\begin{propositionappendix}[ Perturbations of the IL Attack exhibit smaller interactions than perturbations of the multi-step attack]
  \label{propositionappendix:il}
  {If the classification loss  for attacking $\textit{Loss}(x)$  is formulated as the cross-entropy loss in Eq.~\eqref{eq:softmax-ce}}, and we use the infinite-step attack with an infinitesimal step size to simplify both the IL Attack and the multi-step attack,
  then under the condition $\|\delta^{(\text{multi})}\|_2 = \|\delta^{(\text{il})}\|_2$, we have $\sum_{a, b\in\Omega}[{I_{a b}(\delta^{(\text{il})})}]\le\sum_{a, b\in\Omega}[{I_{a b}(\delta^{(\text{multi})})}]$.
\end{propositionappendix}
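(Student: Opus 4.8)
The plan is to recast both interaction sums in the eigenbasis of the Hessian $H = \nabla_x^2 \textit{Loss}(x)$ and then compare them termwise. By Eq.~\eqref{eq:interaction-multi-in-il} the multi-step attack satisfies $\sum_{a,b\in\Omega} I_{ab}(\delta^{(\text{multi})}) = \sum_{i=1}^n \lambda_i (D_{ii}\gamma_i)^2$ with $D_{ii} = \frac{\exp(\beta_1\lambda_i)-1}{\lambda_i}$, whereas by Eq.~\eqref{eq:il_interaction} together with $\delta^{(\text{il})} = \eta\,\delta_{\text{base}}$ the IL attack satisfies $\sum_{a,b\in\Omega} I_{ab}(\delta^{(\text{il})}) = \sum_{i=1}^n \lambda_i (\eta D_{ii}^{(\text{base})}\gamma_i)^2$ with $D_{ii}^{(\text{base})} = \frac{\exp(\beta_2\lambda_i)-1}{\lambda_i}$ and $\beta_2 \ll \beta_1$. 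First I would form the difference
\begin{equation}
\sum_{a,b\in\Omega} I_{ab}(\delta^{(\text{multi})}) - \sum_{a,b\in\Omega} I_{ab}(\delta^{(\text{il})}) = \sum_{i=1}^n \lambda_i\, \gamma_i^2\, \big[D_{ii}^2 - \eta^2 (D_{ii}^{(\text{base})})^2\big],
\end{equation}
which is precisely the analogue of the quantity appearing in Perspective 2 of the proof of Theorem~\ref{theoremappendix: multi-step have large interactions}, with the constant single-step step size there replaced by the $\lambda$-dependent factor $\eta D_{ii}^{(\text{base})}$ here.

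Next, invoking Corollary~\ref{corollaryappendix:semi-definite}, all eigenvalues are non-negative, and I would order them as $\lambda_1 \ge \lambda_2 \ge \cdots \ge \lambda_n \ge 0$. The key structural observation is that because $\beta_1 > \beta_2$, the ratio $D_{ii}/D_{ii}^{(\text{base})} = \frac{\exp(\beta_1\lambda_i)-1}{\exp(\beta_2\lambda_i)-1}$ increases with $\lambda_i$, so the strong multi-step attack places relatively more weight on the top-ranked eigenvectors than the rescaled base perturbation. Consequently $D_{ii}^2 - \eta^2(D_{ii}^{(\text{base})})^2$ grows with $\lambda_i$, and the sequence $b_i \triangleq \gamma_i^2 [D_{ii}^2 - \eta^2(D_{ii}^{(\text{base})})^2]$ decreases as $i$ increases, matching the order of $\lambda_i$.

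With both $\{\lambda_i\}$ and $\{b_i\}$ sorted in the same decreasing order, I would apply Chebyshev's sum inequality (Lemma~\ref{lemma:csinequality}) to obtain
\begin{equation}
\sum_{i=1}^n \lambda_i b_i \ge \Big(\tfrac{1}{n}\sum_{i=1}^n \lambda_i\Big) \sum_{i=1}^n \gamma_i^2\big[D_{ii}^2 - \eta^2(D_{ii}^{(\text{base})})^2\big].
\end{equation}
Finally I would close the argument using the norm-matching constraint of Eq.~\eqref{eq:il-same-norm}: by Eq.~\eqref{eq:delta-multi-inf-in-il} and Eq.~\eqref{eq:il-delta} we have $\sum_i \gamma_i^2 D_{ii}^2 = \|\delta^{(\text{multi})}\|_2^2$ and $\eta^2\sum_i \gamma_i^2 (D_{ii}^{(\text{base})})^2 = \|\delta^{(\text{il})}\|_2^2$, and since these two norms are equal the residual factor $\sum_i \gamma_i^2[D_{ii}^2 - \eta^2(D_{ii}^{(\text{base})})^2]$ vanishes, forcing the difference to be non-negative.

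The main obstacle is the monotonicity claim in the second paragraph: strictly speaking the factor $\gamma_i^2$ need not respect the ordering of $\lambda_i$, so $b_i$ cannot be guaranteed monotone for arbitrary gradient projections. As in the single-step-versus-multi-step argument, I would rely on the fact that $D_{ii}^2$ increases with $\lambda_i$ in a rapid, exponential-in-$\beta_1\lambda_i$ manner that dominates $\eta^2(D_{ii}^{(\text{base})})^2$ whenever $\beta_2 \ll \beta_1$, so that $b_i$ is monotone in the typical regime. This is the same soft step taken in Theorem~\ref{theoremappendix: multi-step have large interactions}, and it is the only place where the inequality is argued to hold approximately rather than exactly.
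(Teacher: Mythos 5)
Your proposal follows essentially the same route as the paper's own proof: diagonalize in the Hessian eigenbasis, form the difference $\sum_i \lambda_i\gamma_i^2\bigl[D_{ii}^2-\eta^2(D_{ii}^{(\text{base})})^2\bigr]$, argue (softly, via $\beta_2\ll\beta_1$) that this sequence is sorted consistently with $\{\lambda_i\}$, and close with Chebyshev's sum inequality plus the equal-norm constraint, which makes the residual factor vanish. The monotonicity caveat you flag is exactly the same approximate step the paper itself admits, so the proposal matches the paper's argument in both substance and rigor.
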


\begin{proof}
Based on Eq.~\eqref{eq:il_interaction} and Eq.~\eqref{eq:interaction-multi-in-il}, the difference  
        between interactions $\sum_{a, b\in\Omega}[{I_{a b}(\delta^{(\text{multi})})}]$ for the multi-step attack and interactions $\sum_{a, b\in\Omega}[{I_{a b}(\delta^{(\text{il})})}] $ for the IL attack can be represented as
        \begin{equation} \label{eq:multi-il-interaction-diff}
            \begin{split}
                &\sum_{a, b\in\Omega}[{I_{a b}(\delta^{(\text{multi})})}]-\sum_{a, b\in\Omega}[{I_{a b}(\delta^{(\text{il})})}] \\
                &= \sum_{i=1}^n \lambda_i \left[(D_{ii}\gamma_i)^2 -( D^{\text{(il)}}_{ii} \gamma_i)^2 \right]. \\
            \end{split}
        \end{equation}
    
        Without loss of generality, let the eigenvalue $\lambda_i$ of the Hessian matrix $H=\nabla_x^2 \textit{Loss}(x)$ monotonically decrease with the index $i$,~\textit{i.e.,}
        $ \lambda_1 \ge  \lambda_2 \ge \dots \ge \lambda_n \ge 0$.
        According to Corollary~\ref{corollaryappendix:semi-definite}, the Hessian matrix $H=\nabla_x^2 \textit{Loss}(x)$ is positive semi-definite.
    Thus, all eigenvalues of the Hessian matrix $H$ are non-negative, \emph{i.e.} $\forall i, \lambda_i\ge 0$.

        We admit that we cannot fully prove that interactions $\sum_{a, b\in\Omega}[{I_{a b}(\delta^{(\text{multi})})}]$ for the multi-step attack are larger than interactions $\sum_{a, b\in\Omega}[{I_{a b}(\delta^{(\text{il})})}]$ for the IL Attack.
        However, we can roughly consider that the term $\left[(D_{ii}\gamma_i)^2 - (D^{\text{(il)}}_{ii} \gamma_i)^2 \right]$ in Eq.~\eqref{eq:multi-il-interaction-diff}  monotonically decreases with the index $i$.
        The reason is as follows.
        Considering the base perturbation for the IL Attack is typically weak, according to Eq.~\eqref{eq:beta2-beta1}, we have $\beta_2 \ll \beta_1$.
        Then,  $(D_{ii})^2 =(\frac{\exp(\beta_1\lambda_i)-1}{\lambda_i})^{2}$ decreases more rapidly with the eigenvalue $\lambda_i$ than $(D^{\text{(il)}}_{ii})^2 =\eta^2(\frac{\exp(\beta_2\lambda_i)-1}{\lambda_i})^{2}$, where $\eta$ is a constant.
     In other words, we have
     \begin{equation}
        (D_{11} \gamma_1)^2 - (D^{\text{(il)}}_{11} \gamma_1)^2 \ge  (D_{22} \gamma_2)^2 - (D^{\text{(il)}}_{22} \gamma_2)^2 \ge \dots \ge  (D_{nn} \gamma_n)^2 - (D^{\text{(il)}}_{nn} \gamma_n)^2.
     \end{equation}

     Based on this, we can use the Chebyshev's sum inequality in Lemma~\ref{lemma:csinequality} to prove that  interactions $\sum_{a, b\in\Omega}[{I_{a b}(\delta^{(\text{multi})})}]$ for the multi-step attack are likely to be larger than interactions $\sum_{a, b\in\Omega}[{I_{a b}(\delta^{(\text{il})})}]$ for the IL Attack,~\textit{i.e.,} Eq.~\eqref{eq:multi-il-interaction-diff} is positive.
    \begin{align}
        &\sum_{a, b\in\Omega}[{I_{a b}(\delta^{(\text{multi})})}]-\sum_{a, b\in\Omega}[{I_{a b}(\delta^{(\text{il})})}] \nonumber \\
        &= \sum_{k=1}^n \lambda_i \left[(D_{ii}\gamma_i)^2 - (D^{\text{(il)}}_{ii} \gamma_i)^2 \right] \nonumber \\
        &\ge n \left(\frac{1}{n}\sum_{i=1}^n \lambda_i\right)  \left[\frac{1}{n} \sum_{i=1}^n \left((D_{ii}\gamma_i)^2 - (D^{\text{(il)}}_{ii} \gamma_i)^2  \right) \right]
         \quad // \quad \text{According to Chebyshev's sum inequality in Lemma~\ref{lemma:csinequality}} \nonumber
         \\
        &= \left(\frac{1}{n}\sum_{i=1}^n \lambda_i\right)  \left[ \|\delta^{(\text{multi})}\|^{2}_2 -\|\delta^{(\text{il})}\|^{2}_2 \right]
        \nonumber
         \\
        &=0.
        \quad // \quad \text{According to Eq.~(\ref{eq:il-same-norm})}.
    \end{align}
    
Thus, Proposition~\ref{propositionappendix:il} is proven.    
\end{proof}

\section{Proof that the single-step VR Attack generates perturbations with smaller interactions than the normal single-step attack}
\label{append:vr}
In this section, we prove Proposition~\ref{pro:vr} in Section~\ref{sec:explaining}, which shows that perturbations generated by the single-step VR Attack~\cite{wu2018understanding} exhibit smaller interactions than perturbations generated the single-step attack with the cross-entropy loss.
Before the proof of Proposition~\ref{pro:vr}, we first revisit the VR Attack.

\subsection{VR Attack}

Given an input $x$ and a DNN  using ReLU activations, we use $\textit{Loss}(x)$ to denote the classification loss.
Let there be  $c$ categories in the classification.
Based on Assumption~\ref{assumption:activation}, the output of the ReLU network before the softmax operation can be written as 
\begin{equation} \label{eq:ia-linear}
    z = \tilde{W}_x^T x + b\in\mathbb{R}^c, 
\end{equation} where $\tilde{W}_x^T =  W_d^T \Sigma_{d-1} W_{d-1}^T \dots \Sigma_{1} W_{1}^T \in \mathbb{R}^{c\times n}$. 
Given the ground-truth label $y\in \{1, 2, \dots, c \}$, the loss function  is formulated as a cross-entropy loss upon  the softmax function, as follows.
\begin{align} 
        p_i &= \frac{\exp(z_i)}{\sum_{j=1}^c \exp(z_j)}, \nonumber \\
        \textit{Loss}(x) &= -\sum_{i=1}^c p^*_i \log(p_i)
\end{align}
where if $i=y, p^*_i =1$; otherwise $p^*_i=0$.

The VR attack adds Gaussian noises on the input $x$ and computes the average loss when adding different noises as follows.
\begin{equation} \label{eq:ce-loss-vr}
    \begin{split}
        \hat{z} &= \tilde{W}_x^T (x + \xi) + b \\
        \hat{p}_i &=  \frac{\exp(\hat{z}_i)}{\sum_{j=1}^c \exp(\hat{z}_j)} \\
        L^{(\text{vr})}(x) & =  \mathbb{E}_{\xi\sim\mathcal{N}(0, \sigma^2 I)}\left[-\sum_{i=1}^c p^*_i \log(\hat{p}_i)\right] 
    \end{split}
\end{equation}
where $\xi\sim\mathcal{N}(0, \sigma^2 I)$ denotes the Gaussian noise.

\textbf{Normalizing adversarial perturbations generated by the normal single-step attack and  the single-step VR Attack for fair comparisons.}
For the normal single-step attack, given an input sample $x$, if we conduct the normal single-step attack with the cross-entropy loss, then the generated perturbation is given as follows.
\begin{equation} \label{eq:single-ce}
    \delta^{(\text{single})}_{\text{ce}} = \eta_{\text{ce}} g,
\end{equation}
where $\eta_{\text{ce}}$ denotes the step size in the normal single-step attack, and $g= \nabla_x \textit{Loss}(x)$. According to Lemma~\ref{lemma:g-H-in-p-form}, if $\textit{Loss}(x)$ is formulated as the cross-entropy loss, then the gradient $g$ is given as follows.
\begin{equation} \label{eq:ce-g}
    g = \tilde{W}_x (p- Y) = \sum_{i\ne y} p_i (\tilde{W}_x)_i + (p_y - 1) (\tilde{W}_x)_y,
\end{equation}
where $Y_y = 1$; and $\forall i\ne y, Y_i = 0$.

Based on Assumption~\ref{assum:ortho-weights}, the weight vectors of different categories $(\tilde{W}_x)_i$ are orthogonal to each other and have the same strength $\sqrt{\kappa}$.
Therefore, we can consider that $p_i$ and $1-p_y$ represent the strength of the gradient in the directions of $(\tilde{W}_x)_i$ and $(\tilde{W}_x)_y$.
Thus, the strength of the gradient $g$ in the normal single-step attack can be approximated by the sum of strengths on each weight direction, as follows.
\begin{equation}\label{eq:strength of normal single-step attack}
    \sqrt{\kappa} \left(\sum_{i\ne y} p_i + (p_y - 1) \right) = 2\sqrt{\kappa}(1-p_y)
\end{equation}

For the single-step VR Attack, given an input sample $x$,  the perturbation is given as follows.
\begin{equation} \label{eq:single-vr}
    \delta^{(\text{single})}_{\text{vr}} = \eta_{\text{vr}} g^{\text{(vr)}},
\end{equation}
where $\eta_{\text{vr}} \in \mathbb{R}$ denotes the step size in the single-step VR Attack, and $g^{\text{(vr)}} = \nabla_x L^{(\text{vr})}(x)$.
The gradient of the loss $L^{(\text{vr})}(x)$ in Eq.~(\ref{eq:ce-loss-vr}) is represented as follows.
\begin{align} \label{eq:vr-g}
    g^{(\text{vr})} &= \frac{\partial L^{(\text{vr})}(x)}{\partial x} \nonumber \\
        &= \mathbb{E}_{\xi\sim\mathcal{N}(0, \sigma^2 I)}\left[\frac{\partial \textit{Loss}(\hat{x})}{\partial \hat{x}}\right] \nonumber\\
        &= \mathbb{E}_{\xi\sim\mathcal{N}(0, \sigma^2 I)}\left[ \tilde{W}_x  (\hat{p}-Y)\right]
        \quad//\quad\text{According to Lemma~\ref{lemma:g-H-in-p-form}}\nonumber \\
        &=\mathbb{E}_{\xi\sim\mathcal{N}(0, \sigma^2 I)}\left[\sum_{i=1}^c \hat{p}_i (\tilde{W}_x)_i- (\tilde{W}_x)_y\right]  \nonumber\\
        &=\sum_{i=1}^c \left[\mathbb{E}_{\xi\sim\mathcal{N}(0, \sigma^2 I)}\left[\hat{p}_i \right]  (\tilde{W}_x)_i\right] -  (\tilde{W}_x)_y \nonumber \\
        &= \tilde{W}_x (\bar{p}- Y) = \sum_{i\ne y} \bar{p}_i (\tilde{W}_x)_i + (\bar{p}_y - 1) (\tilde{W}_x)_y,
\end{align}

In to avoid ambiguity of notation, we use $q$ to denote the average probability, as follows.
\begin{align} \label{eq:p-bar}
    q\triangleq \bar{p} &= \mathbb{E}_{\xi\sim\mathcal{N}(0, \sigma^2 I)}[\hat{p}]
\end{align}

Similar to Eq.~\eqref{eq:strength of normal single-step attack}, the overall strength of the gradient $g^{(\text{vr})}$ in the VR Attack can be approximated by the sum of strengths on each weight direction, as follows.
\begin{equation}\label{eq:strength of single-step VR attack}
    \sqrt{\kappa} \left(\sum_{i\ne y} q_i + (q_y - 1) \right) = 2\sqrt{\kappa}(1-q_y)
\end{equation}

Therefore, in order to fairly compare the interaction between $\delta^{(\text{single})}_{\text{ce}}$ and $\delta^{(\text{single})}_{\text{vr}}$, we normalize $\delta^{(\text{single})}_{\text{ce}}$ and $\delta^{(\text{single})}_{\text{vr}}$ by $2\sqrt{\kappa}(1-p_y)$ and $2\sqrt{\kappa}(1-q_y)$, respectively, according to Eq.~\eqref{eq:strength of normal single-step attack} and Eq.~\eqref{eq:strength of single-step VR attack}.
\begin{equation} \label{eq:norm-ce-vr}
    \begin{split}
        \eta_{\text{ce}} &= \frac{\eta}{2\sqrt{\kappa}(1-p_y)}, \\
        \eta_{\text{vr}} &= \frac{\eta}{2\sqrt{\kappa}(1-q_y)},
    \end{split}
\end{equation}
where $\eta$ is a constant.

Based on Eq.~\eqref{eq:ce-g} and Eq.~\eqref{eq:norm-ce-vr}, the sum of pair-wise interactions inside $\delta^{(\text{single})}_{\text{ce}}$ is expressed as follows.
\begin{align} \label{eq:ce-interaction}
    \sum_{a, b\in \Omega} \left[I_{ab}(\delta^{(\text{single})}_{\text{ce}}) \right] &=  (\eta_{\text{ce}})^2 g^T H g \nonumber \\
    &= (\eta_{\text{ce}} )^2 (p - Y)^T\tilde{W}_x^T  \tilde{W}_x \left( diag(p) - p p^T\right) \tilde{W}_x^T \tilde{W}_x (p - Y) \quad//\quad \text{According to Lemma.~\ref{lemma:g-H-in-p-form}} \nonumber \\
    & = (\eta_{\text{ce}} \kappa)^2 \left[ \sum_{i\ne y} p_i^3 + p_y (p_y - 1)^2 - \left( \sum_{i\ne y} p_i^2  + p_y(p_y-1) \right)^2  \right] \nonumber \\
    &= (\frac{\eta}{2} \sqrt{\kappa})^2 \left[ \sum_{i\ne y} p_i (\frac{p_i}{1-p_y})^2 + p_y - \left( \sum_{i\ne y} p_i\frac{p_i}{1-p_y}  - p_y \right)^2  \right]
\end{align}

Based on Eq.~\eqref{eq:vr-g} and Eq.~\eqref{eq:norm-ce-vr}, the sum of pair-wise interactions inside $\delta^{(\text{single})}_{\text{vr}}$ is expressed as follows.

\begin{align} \label{eq:vr-interaction}
    \sum_{a, b\in \Omega} \left[I_{ab}(\delta^{(\text{single})}_{\text{vr}}) \right] &= (\eta_{\text{vr}})^2 (g^{(\text{vr})})^T H g^{(\text{vr})} \nonumber \\
    &=  (\eta_{\text{vr}})^2 (q - Y)^T\tilde{W}_x^T  \tilde{W}_x \left( diag(p) - p p^T\right) \tilde{W}_x^T \tilde{W}_x (q - Y) \quad//\quad \text{According to Lemma.~\ref{lemma:g-H-in-p-form}} \nonumber \\
    & = (\eta_{\text{vr}} \kappa)^2 \left[ \sum_{i\ne y} p_i q_i^2 + p_y (q_y - 1)^2 - \left( \sum_{i\ne y} p_i q_i + p_y(q_y-1) \right)^2  \right] \nonumber \\
    & =(\frac{\eta}{2} \sqrt{\kappa})^2 \left[ \sum_{i\ne y} p_i (\frac{q_i}{1-q_y})^2 + p_y - \left( \sum_{i\ne y} p_i \frac{q_i}{1-q_y} - p_y \right)^2  \right]
\end{align}

\subsection{Proof of Proposition~\ref{proappendix:p-bar-approx}}
In this subsection, we prove the relationship between the probability  $p$ of the original sample $x$ and the average probability under different noises $q = \mathbb{E}_{\xi\sim \mathcal{N}(0, \sigma^2 I)} [\hat{p}]$.

Based on Assumption~\ref{assum:ortho-weights}, we have $\tilde{W}_x^T \tilde{W}_x = \kappa I$.
Note that given a random variable $\Phi\in\mathbb{R}^{n}\sim\mathcal{N}(\mu, \Sigma)$, let $\Gamma$ denote the affine transformation of $\Phi$, \emph{i.e.} $\Gamma = A\Phi + b$, where $A\in\mathbb{R}^{m\times n}$ and $b\in\mathbb{R}^m$.
Then, $\Gamma$ also follows a multivariate normal distribution, \emph{i.e.,} $\Gamma\sim \mathcal{N}(b+A\mu, A\Sigma A^T)$.
Based on this, we can prove that adding Gaussian noises on the input is equivalent to adding Gaussian noises on the network output as follows.
\begin{align} \label{eq:noise-on-z}
    \Delta z &= z-\hat{z} = (\tilde{W}_x^Tx+b)-(\tilde{W}_x^T(x+\xi)+b)=\tilde{W}_x^T\xi,\; \text{where }\xi\sim\mathcal{N}(0,\sigma^2I) \nonumber\\
    \Rightarrow \Delta z&\sim  \mathcal{N}(0, \tilde{W}_x^T \sigma^2I \tilde{W}_x) = \mathcal{N}(0, \kappa\sigma^2 I).
\end{align}

Here, we use $\xi^{(z)}=\tilde{W}_x^T \xi \sim \mathcal{N}(0, \kappa\sigma^2 I)$ to denote the equivalent noise added on the network output $z$.
In this way, the average probability of the $i$-th category $q_i = \mathbb{E}_{\xi\sim \mathcal{N}(0, \sigma^2 I)}[\hat{p}_i]$ in Eq.~\eqref{eq:p-bar} can be represented as follows.

\begin{align} \label{eq:p-bar-solution}
    \hat{p}_i &=  \frac{\exp(z_i + \xi^{(z)}_i)}{\sum_{j=1}^c \exp(z_j + \xi^{(z)}_j)}, \nonumber \\
    q_i &= \mathbb{E}_{\xi^{(z)} \sim \mathcal{N}(0, \kappa\sigma^2 I)} \left[\hat{p}_i \right].
\end{align}

Note that $q_i$ in Eq.~\eqref{eq:p-bar-solution} does not have an analytic solution.
Thus, to simply the analysis, we consider a simple scenario as follows.
We can roughly consider exclusively adding two symmetrical noises on the output of the $i$-th category $z_i$ without changing outputs of other categories $z_j\, (j\ne i)$ as an approximation of $q_i$. 
We just prove Proposition~\ref{proappendix:p-bar-approx} based on the simplified probability in Eq.~\eqref{eq:pi-tau}. 
We can roughly consider the VR Attack is to simultaneously conduct lots of attacks with different values of $\tau$ on different output dimensions $i$, and to combine adversarial perturbations of all such attacks. 
Therefore, the analysis based on Eq.~\eqref{eq:pi-tau} can also reflect the nature of the VR Attack, to some extent.

\begin{align} \label{eq:pi-tau}
    \bar{p}_i (\tau) \triangleq \frac{1}{2} \left[ \frac{\exp(z_i + \tau)}{\exp(z_i + \tau) + \sum_{j\ne i} \exp(z_j)} +  \frac{\exp(z_i - \tau)}{\exp(z_i - \tau) + \sum_{j\ne i} \exp(z_j)}\right],
\end{align}
where $\tau\in\mathbb{R}$ is a scalar.
In this way, we consider $\bar{p}_i (\tau)$ as an approximation of $\bar{p}_i$ for the further analysis.

\begin{propositionappendix}\label{proappendix:p-bar-approx}
    If $p_i \le 0.5$, then $\bar{p}_i (\tau) - p_i \ge 0$; if $p_i \ge 0.5$, then $\bar{p}_i (\tau) - p_i \le 0$.
\end{propositionappendix}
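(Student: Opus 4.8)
The plan is to reduce the statement to a single-variable sign analysis by absorbing the exponentials into positive constants. Write $a = \exp(z_i) > 0$, $S = \sum_{j \neq i} \exp(z_j) > 0$, and $t = \exp(\tau) > 0$. Then $p_i = a/(a+S)$, and the two summands in $\bar p_i(\tau)$ become $\frac{at}{at+S}$ and $\frac{a}{a+St}$ respectively, so
\begin{equation}
\bar p_i(\tau) = \frac{1}{2}\left[\frac{at}{at+S} + \frac{a}{a+St}\right].
\end{equation}
The crucial observation is that the hypothesis translates into a clean comparison of $a$ and $S$: since $p_i = a/(a+S)$, we have $p_i \le \tfrac12 \iff a \le S$ and $p_i \ge \tfrac12 \iff a \ge S$. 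Thus the entire proposition is equivalent to showing that $\mathrm{sign}\!\left(\bar p_i(\tau) - p_i\right) = \mathrm{sign}(S - a)$ for every $\tau$.

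First I would put $\bar p_i(\tau) - p_i$ over a common denominator and simplify. Working over the denominator $2(a+S)\big[t(a^2+S^2) + aS(t^2+1)\big]$, which is manifestly positive for $a,S,t>0$, I expect the numerator to collapse --- after cancellation of the cubic and mixed terms --- to the fully factored form
\begin{equation}
a\, S\,(S-a)\,(t-1)^2.
\end{equation}
Granting this, the sign of $\bar p_i(\tau) - p_i$ is governed solely by $S - a$, since $aS(t-1)^2 \ge 0$ and the denominator is positive; this yields both cases at once and, as a built-in consistency check, forces equality at $t = 1$ (i.e.\ $\tau = 0$) and in the boundary case $a = S$ (that is, $p_i = \tfrac12$).

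The main obstacle is purely the algebraic cancellation: after expanding $(a+S)\big[S(t^2+1)+2at\big] - 2\big[t(a^2+S^2)+aS(t^2+1)\big]$ one must verify that the coefficients recombine into $S(S-a)(t-1)^2$ rather than an irreducible quadratic in $t$. The appearance of the perfect square $(t-1)^2$ is what makes the argument decisive, so I would track the $t^2$, $t$, and constant coefficients separately to confirm they equal $S(S-a)$, $-2S(S-a)$, and $S(S-a)$.

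As an independent conceptual cross-check, I would note that $\frac{\exp(z_i+\tau)}{\exp(z_i+\tau)+S} = \sigma(\tau + c)$ with $c = \log(a/S)$, where $\sigma$ denotes the logistic sigmoid. Then $\bar p_i(\tau) = \tfrac12[\sigma(c+\tau) + \sigma(c-\tau)]$ and $p_i = \sigma(c)$, with $c \le 0 \iff p_i \le \tfrac12$. Because $\sigma'$ is an even function that is strictly decreasing in $|x|$, the derivative $\frac{d}{d\tau}[\sigma(c+\tau)+\sigma(c-\tau)] = \sigma'(c+\tau)-\sigma'(c-\tau)$ has the same sign as $-c\tau$, so $\tau=0$ is a global minimum when $c<0$ and a global maximum when $c>0$. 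This reproduces the same conclusion without the explicit factorization and confirms the direction of each inequality.
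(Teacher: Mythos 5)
Your proof is correct: the factorization of the numerator as $aS(S-a)(t-1)^2$ checks out, and it is essentially the same argument as the paper's, which writes the difference as $\bar{p}_i(\tau)-p_i=\frac{(\omega-1)p_i(1-2p_i)(1-p_i)}{2(\omega-1)p_i(1-p_i)+1}$ with $\omega=\frac{e^{\tau}+e^{-\tau}}{2}\ge 1$; your $(t-1)^2$ factor and the paper's $(\omega-1)$ factor are the same quantity up to the positive multiple $2t$, and your $S-a$ corresponds to the paper's $1-2p_i$. The sigmoid-derivative cross-check is a nice independent confirmation but is not needed.
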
 
\begin{proof}
The value of $\bar{p}_i (\tau)$ can be expressed as follows.
\begin{align}
    \bar{p}_i (\tau) &= \frac{1}{2} \left[ \frac{\exp(z_i + \tau)}{\exp(z_i + \tau) + \sum_{j\ne i} \exp(z_j)} +  \frac{\exp(z_i - \tau)}{\exp(z_i - \tau) + \sum_{j\ne i} \exp(z_j)}\right] \nonumber \\
    &= \frac{1}{2} \left[ \frac{\exp(z_i + \tau) \left( \exp(z_i - \tau) + \sum_{j\ne i} \exp(z_j)\right) + \exp(z_i - \tau) \left( \exp(z_i + \tau) + \sum_{j\ne i} \exp(z_j)\right) }{\left(\exp(z_i + \tau) + \sum_{j\ne i} \exp(z_j) \right) \left(\exp(z_i - \tau) + \sum_{j\ne i} \exp(z_j) \right)} \right] \nonumber \\
    &= \frac{1}{2} \left[ \frac{ \left( \exp(2 z_i) + \sum_{j\ne i} \exp(z_j + z_i + \tau)\right) + \left( \exp(2 z_i) + \sum_{j\ne i} \exp(z_j + z_i - \tau)\right) }{ \exp(2 z_i) + \left( \exp(-\tau) + \exp(\tau)\right)\left(\sum_{j\ne i} \exp(z_i + z_j) \right) + \left(\sum_{j\ne i} \exp(z_j) \right)^2} \right] \nonumber \\
    &= \frac{1}{2} \left[ \frac{  2\exp(2 z_i) +\left( \exp(-\tau) + \exp(\tau)\right)\left(\sum_{j\ne i} \exp(z_i + z_j) \right) }{ \exp(2 z_i) + \left( \exp(-\tau) + \exp(\tau)\right)\left(\sum_{j\ne i} \exp(z_i + z_j) \right) + \left(\sum_{j\ne i} \exp(z_j) \right)^2} \right] \nonumber \\
    &= \frac{1}{2} \left[ \frac{ \frac{2 \exp(z_i)}{ \sum_{j\ne i} \exp(z_j)} + \left( \exp(-\tau) + \exp(\tau)\right) }{ \frac{\exp(z_i)}{ \sum_{j\ne i} \exp(z_j)} + \left( \exp(-\tau) + \exp(\tau)\right) + \frac{\sum_{j\ne i} \exp(z_j)}{\exp(z_i)}}\right].
\end{align}

Note that $\frac{ \exp(z_i)}{ \sum_{j\ne i} \exp(z_j)} = \frac{p_i}{1-p_i}$.
Moreover, let $\omega = \frac{\exp(-\tau) + \exp(\tau)}{2}$.
In this way, 
\begin{align}
   \bar{p}_i (\tau) &=\frac{ \frac{ p_i}{ 1-p_i} + \omega }{ \frac{ p_i}{ 1-p_i} + 2\omega +  \frac{ 1-p_i}{ p_i}}  \nonumber \\
    &= \frac{p_i^2 + \omega p_i(1-p_i) }{p_i^2 + 2\omega p_i (1-p_i) + (1-p_i)^2} \nonumber \\
    &= \frac{p_i^2 + \omega p_i(1-p_i) }{(p_i^2 -2 \omega p_i^2 + p_i^2) + (2\omega p_i - 2p_i)  + 1} \nonumber \\
    &= \frac{(1-\omega)p_i^2 +\omega p_i }{2(1-\omega)p_i^2 + 2(\omega-1)p_i  + 1}.
\end{align}

Thus, the difference between $\bar{p}_i (\tau)$ and $p_i$ can expressed as follows.
\begin{align}
    \bar{p}_i (\tau) - p_i &= \frac{(1-\omega)p_i^2 +\omega p_i }{2(1-\omega)p_i^2 + 2(\omega-1)p_i  + 1} - p_i \nonumber \\
    &= \frac{(1-\omega)p_i^2 +\omega p_i }{2(1-\omega)p_i^2 + 2(\omega-1)p_i  + 1} - \frac{2(1-\omega)p_i^3 + 2(\omega-1)p_i^2  + p_i}{2(1-\omega)p_i^2 + 2(\omega-1)p_i  + 1} \nonumber \\
    &= \frac{2 (\omega-1)p_i^3 - 3(\omega-1) p_i^2 + (\omega-1)p_i}{2(1-\omega)p_i^2 + 2(\omega-1)p_i  + 1} \nonumber \\
    &=\frac{(\omega-1) p_i (1-2p_i)(1-p_i)}{ 2(\omega-1)p_i (1-p_i)  + 1}.
\end{align}

Note that $\omega = \frac{\exp(-\tau) + \exp(\tau)}{2} \ge 1 $, thus $2(\omega-1)p_i (1-p_i)  + 1 \ge 0$.
Therefore, 
\begin{equation}
\begin{split}
        \bar{p}_i (\tau) - p_i\le 0, \quad&\quad \text{if $p_i \ge 0.5$} \\
        \bar{p}_i (\tau) - p_i\ge 0, \quad&\quad \text{if $p_i \le 0.5$}
\end{split}
\end{equation}
\end{proof}

\subsection{Proof of Proposition~\ref{pro:vr} \label{propositionappendix:vr}}

Proposition~\ref{proappendix:p-bar-approx} shows that $\bar{p}(\tau)$ is more balanced than $p$, thus we make the following Assumption~\ref{assum:vr-balance} based on this.
In order to fairly compare the single-step VR Attack and the normal single-step attack, we ensure perturbations generated by the single-step VR Attack to have the same attacking strength as perturbations generated by the normal single-step attack by using the normalization in Eq.~\eqref{eq:norm-ce-vr}.
Then, under Assumption~\ref{assum:ortho-weights} and Assumption~\ref{assum:vr-balance}, we can prove that perturbations generated by the single-step VR Attack exhibit smaller interactions than those generated by the normal single-step attack.

Here, we assume that the input $x$ can be correctly classified by the DNN with high confidence, \emph{i.e.} $p_y \ge 0.5$.
Then, we can derive the properties of the average probability $q$ based on Proposition~\ref{proappendix:p-bar-approx} as follows.

\textbullet{ According to Proposition~\ref{proappendix:p-bar-approx}, $\bar{p}_y(\tau) \le p_y$.
Because $\bar{p}_y(\tau)$ defined in Eq.~\eqref{eq:pi-tau} is an approximation of $q_y$, we can consider that $q_y$ is also smaller than $p_y$ as follows.} 
\begin{equation} \label{eq:q_y-le-p_y}
    q_y \le p_y
\end{equation}

\textbullet{ Proposition~\ref{proappendix:p-bar-approx} demonstrates that when the probability value of the $i$-th category on the sample without noise is small ($p_i\le 0.5$), then the the added noises boost the probability value.
In contrast, if the probability value of the $i$-th category on the sample without noise is large ($p_i\ge 0.5$), then the the added noises reduce the probability value.
In this way, the probability values in $q$ are more balanced than probability values in $p$.} 

Therefore, just like Definition~\ref{def:balance}, we make the following assumption of more balanced probabilities as follows.
\begin{assumption} \label{assum:vr-balance}
We assume $p_y\ge 0.5$ for the ground-truth category $y$ of the input sample $x$. 
For prediction probabilities of the other $c-1$ categories, without loss of generality, let $p_{i_1}\ge p_{i_2} \ge \dots \ge p_{i_{c-1}}$ and $q_{j_1} \ge q_{j_2} \ge \dots \ge q_{j_{c-1}}$.
Because in general, $q$ is more balanced than $p$, we assume $\forall 1\le k \le c-2$ $q_{j_{k}} - q_{j_{k+1}} \le p_{i_{k} } - p_{i_{k+1}}$.
\end{assumption}

$ $\newline

\begin{propositionappendix} 
    [Perturbations generated by the single-step VR Attack exhibit smaller interactions  than those generated by the normal single-step attack]
    \label{theoremappendix:vr}
    {Let the classification loss $\textit{Loss}(x)$  be formulated as the cross-entropy loss in Eq.~\eqref{eq:softmax-ce}}.
    In order to fairly compare the single-step VR Attack and the normal single-step attack, we ensure perturbations generated by the single-step VR Attack to have the same attacking strength as perturbations generated by the normal single-step attack by using the normalization in Eq.~\eqref{eq:norm-ce-vr}.
    Then, under Assumption~\ref{assum:ortho-weights} and Assumption~\ref{assum:vr-balance},  we have $\sum_{a, b\in\Omega} \left[I_{ab}\left(\delta^{(\text{single})}_{{\text{vr}}}\right)\right] \le  \sum_{a, b\in\Omega} \left[I_{ab}\left(\delta^{(\text{single})}_{{\text{ce}}}\right) \right]$.
\end{propositionappendix}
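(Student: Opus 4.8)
The plan is to start from the two closed-form expressions for the sum of pairwise interactions already established under Assumption~\ref{assum:ortho-weights}: Eq.~\eqref{eq:ce-interaction} for the normalized single-step attack and Eq.~\eqref{eq:vr-interaction} for the normalized single-step VR attack. Both share the common positive prefactor $(\tfrac{\eta}{2}\sqrt{\kappa})^2$ and an identical additive term $p_y$, so after cancelling these the target inequality $\sum_{a,b}[I_{ab}(\delta^{(\text{single})}_{\text{vr}})]\le\sum_{a,b}[I_{ab}(\delta^{(\text{single})}_{\text{ce}})]$ reduces to
$$\sum_{i\ne y} p_i\Big(\tfrac{q_i}{1-q_y}\Big)^2-\Big(\sum_{i\ne y} p_i\tfrac{q_i}{1-q_y}-p_y\Big)^2\;\le\;\sum_{i\ne y} p_i\Big(\tfrac{p_i}{1-p_y}\Big)^2-\Big(\sum_{i\ne y} p_i\tfrac{p_i}{1-p_y}-p_y\Big)^2.$$
First I would introduce the normalized distributions $\tilde p_i=p_i/(1-p_y)$ and $\tilde q_i=q_i/(1-q_y)$ over the $c-1$ incorrect categories, each summing to one, and set $f(r)=\sum_{i\ne y}p_ir_i^2-(\sum_{i\ne y}p_ir_i-p_y)^2$, so that the goal becomes $f(\tilde q)\le f(\tilde p)$.

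Next I would expand the difference $f(\tilde p)-f(\tilde q)$ into a single weighted sum minus a product of weighted sums, namely $\sum_{i\ne y} p_i(\tilde p_i-\tilde q_i)(\tilde p_i+\tilde q_i)-\big(\sum_{i\ne y} p_i(\tilde p_i-\tilde q_i)\big)\big(\sum_{i\ne y} p_i(\tilde p_i+\tilde q_i)-2p_y\big)$, a covariance-type quantity of exactly the kind controlled in the proof of Theorem~\ref{theoremappendix:balance}. The crucial structural input is the balance ordering supplied by Assumption~\ref{assum:vr-balance} (itself justified by Proposition~\ref{proappendix:p-bar-approx}, which shows the noise-averaged probability $\bar p(\tau)$ is flatter than $p$), together with $q_y\le p_y$ from Eq.~\eqref{eq:q_y-le-p_y}. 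After sorting the incorrect categories so that $\tilde p_i$ is monotone, the balance ordering makes $\tilde p_i-\tilde q_i$ and $\tilde p_i+\tilde q_i$ similarly ordered, while $\sum_{i\ne y}(\tilde p_i-\tilde q_i)=0$ holds because both normalized sequences are distributions. I would then apply Lemma~\ref{lemma:csinequality} (Chebyshev's sum inequality) factor by factor, splitting the difference into a non-negative sum-of-squared-differences part and a covariance part pinned to the correct sign, mirroring the term-by-term $\text{(I)},\text{(II)},\text{(III)}$ decomposition used for Theorem~\ref{theoremappendix:balance}.

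The hard part will be that, unlike Theorem~\ref{theoremappendix:balance}, the VR interaction is a genuinely \emph{mixed} quadratic form: the Hessian factor $\mathrm{diag}(p)-pp^T$ is built from the clean-sample probabilities $p$, whereas the perturbation direction carries the noise-averaged probabilities $q$. Consequently the weights $p_i$ are decoupled from the variables $\tilde p_i,\tilde q_i$ being compared, so the unweighted Chebyshev step of Theorem~\ref{theoremappendix:balance} does not transfer verbatim; I expect to need a weighted form of the rearrangement/Chebyshev inequality, or an explicit verification that $p_i$, $\tilde p_i$ and $\tilde q_i$ all inherit one common monotone ordering from Assumption~\ref{assum:vr-balance}, which is where the vanishing of the covariance term becomes subtle. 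A secondary complication is that here $q_y\le p_y$ rather than $q_y=p_y$ (as required by Definition~\ref{def:balance}), so the two normalizations $1/(1-p_y)$ and $1/(1-q_y)$ differ and must be tracked carefully when converting the gap condition of Assumption~\ref{assum:vr-balance} from the raw probabilities $p_i,q_i$ into the normalized quantities $\tilde p_i,\tilde q_i$; confirming that the balance ordering survives this renormalization is the most delicate bookkeeping in the argument.
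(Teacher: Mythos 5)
Your proposal follows essentially the same route as the paper's proof: the same $(\mathrm{I})-(\mathrm{II})\cdot(\mathrm{III})$ decomposition of the interaction difference in the normalized variables $a_{i_k}=p_{i_k}/(1-p_y)$ and $b_{j_k}=q_{j_k}/(1-q_y)$, with Chebyshev's sum inequality applied to the first two terms and a direct bound $\sum_k p_{i_k}a_{i_k}-p_y\le 1-2p_y\le 0$ on the third. The two difficulties you flag are resolved exactly as you anticipate: the rank-wise sorting makes $p_{i_k}$, $a_{i_k}$, $b_{j_k}$ share one monotone ordering so the weight $p_{i_k}$ can be absorbed into the co-monotone factor before applying Chebyshev against the zero-sum sequence $a_{i_k}-b_{j_k}$, and the renormalization is handled by multiplying the gap condition of Assumption~\ref{assum:vr-balance} by $\frac{1-p_y}{1-q_y}\le 1$ before dividing through by $1-p_y$.
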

\begin{proof}

According to Eq.~\eqref{eq:q_y-le-p_y}, we have $\frac{1-p_y}{1-q_y} \le 1$.
Thus, according to Assumption~\ref{assum:vr-balance}, we have 
\begin{align}
    &\frac{1-p_y}{1-q_y}( q_{j_{k}} - q_{j_{k+1}}) \le p_{i_{k}} - p_{i_{k+1}} \nonumber \\
    \Rightarrow &\frac{q_{j_k}}{1-q_y} - \frac{q_{j_{k+1}}}{1-q_y} \le \frac{p_{i_{k}}}{1-p_y} - \frac{p_{i_{k+1}}}{1-p_y}
\end{align}

To simplify the notation, let $a_{i_k} = \frac{p_{i_{k}}}{1-p_y}$ and $b_{j_k} = \frac{q_{i_{k}}}{1-q_y}$.
In this way, we have $b_{j_k} - b_{j_{k+1}} \le  a_{i_k} - a_{i_{k+1}}$.
In other words,  we have 
\begin{equation} \label{eq:a-b-order}
    a_{i_1} - b_{j_{1}} \ge a_{i_2} - b_{j_{2}} \ge \dots \ge a_{i_{c-1}} - b_{j_{c-1}}
\end{equation}

According to Eq.~(\ref{eq:ce-interaction}) and Eq.~(\ref{eq:vr-interaction}), the difference between interactions $\sum_{a, b\in \Omega} \left[I_{ab}(\delta^{(\text{single})}_{\text{ce}}) \right]$ and $\sum_{a, b\in \Omega} \left[I_{ab}(\delta^{(\text{single})}_{\text{vr}}) \right]$ can be represented as follows.

\begin{align}
    &\sum_{a, b\in \Omega} \left[I_{ab}(\delta^{(\text{single})}_{\text{ce}}) \right] -  \sum_{a, b\in \Omega} \left[I_{ab}(\delta^{(\text{single})}_{\text{vr}}) \right] \nonumber\\
    &=(\frac{\eta\sqrt{\kappa}}{2})^2 \Big\{ \sum_{i\ne y} p_i \left((\frac{p_i}{1-p_y})^2 - (\frac{q_i}{1-q_y})^2 \right) \nonumber \\
    &\qquad-  \left( \sum_{i\ne y} p_i (\frac{p_i}{1-p_y} - \frac{q_i}{1-q_y}) \right) \left(\sum_{i\ne y} p_i \frac{p_i}{1-q_y} - p_y + \sum_{i\ne y} p_i \frac{q_i}{1-q_y} - p_y \right)\Big\} \nonumber \\
    &= (\frac{\eta \sqrt{\kappa}}{2})^2 \Big\{ \underbrace{\sum_{k=1}^{c-1} p_{i_k} \left(a_{i_k}^2 - b_{j_k}^2 \right)}_{\text{(I)}} - \underbrace{\left(\sum_{k=1}^{c-1}p_{i_k} (a_{i_k} - b_{j_k}) \right)}_{\text{(II)}} \underbrace{\left( \sum_{k=1}^{c-1} p_{i_k} a_{i_k} - p_y + \sum_{k=1}^{c-1} p_{i_k} b_{j_k} - p_y\right)}_{\text{(III)}}
\end{align}

\textbullet{ Term (I) $\ge 0$. } 
Because $q_{j_1}\ge q_{j_2} \ge \dots \ge q_{j_{c-1}} \ge 0$, then $b_{j_1}\ge b_{j_2} \ge \dots \ge b_{j_{c-1}} \ge 0$.
Because $p_{i_1}\ge p_{i_2} \ge \dots \ge p_{i_{c-1}}\ge 0$, then we have $a_{i_1}\ge a_{i_2} \ge \dots \ge a_{i_{c-1}} \ge 0$ and $p_{i_1} \left(a_{i_1} + b_{j_1} \right) \ge p_{i_2} \left(a_{i_2} + b_{j_2} \right)  \ge \dots \ge p_{i_{c-1}} \left(a_{i_{c-1}} + b_{j_{c-1}} \right)  \ge 0$.
Note that $\sum_{k=1}^{c-1} a_{i_k}  = \sum_{k=1}^{c-1} \frac{p_{i_k}}{1-p_y} = 1$ and $\sum_{k=1}^{c-1} b_{i_k}  = \sum_{k=1}^{c-1} \frac{q_{j_k}}{1-q_y} = 1$.
Thus, according to Eq.~\eqref{eq:a-b-order} and Lemma~\ref{lemma:csinequality} (Chebyshev's sum inequality), we have
\begin{align}
    \text{(I)} &= \sum_{k=1}^{c-1} p_{i_k} \left(a_{i_k}^2 - b_{j_k}^2 \right) \nonumber \\
    &= \sum_{k=1}^{c-1} p_{i_k} \left(a_{i_k} + b_{j_k} \right) \left(a_{i_k} - b_{j_k} \right) \nonumber \\
    &\ge \frac{1}{c-1} \underbrace{\left( \sum_{k=1}^{c-1}p_{i_k} \left(a_{i_k} + b_{j_k} \right) \right)}_{\ge 0} \underbrace{\left( \sum_{k=1}^{c-1} \left( a_{i_k} - b_{j_k} \right)\right)}_{=0} = 0
\end{align}

\textbullet{ Term (II) $\ge 0$. } 
Because $a_{i_1}\ge a_{i_2} \ge \dots \ge a_{i_{c-1}} \ge 0$ and $b_{j_1}\ge b_{j_2} \ge \dots \ge b_{j_{c-1}} \ge 0$, we have $a_{i_1} + b_{j_1}\ge a_{i_2} + b_{j_2} \ge \dots \ge a_{i_{c-1}} + b_{j_{c-1}} \ge 0$.
So according to and Eq.~\eqref{eq:a-b-order} and Lemma~\ref{lemma:csinequality} (Chebyshev's sum inequality), we have
\begin{align}
    \text{(II)} &= \sum_{k=1}^{c-1} p_{i_k} \left(a_{i_k} - b_{j_k} \right) \nonumber \\
    &\ge \frac{1}{c-1}  \underbrace{\left( \sum_{k=1}^{c-1}p_{i_k} \right)}_{\ge 0} \underbrace{\left( \sum_{k=1}^{c-1} \left( a_{i_k} - b_{j_k} \right)\right)}_{=0} = 0
\end{align}

\textbullet{ Term (III) $\le 0$. } 
Let us focus on $ \sum_{k=1}^{c-1} p_{i_k} a_{i_k} - p_y$, which is bounded as follows.
\begin{align}
    p_{i_k} a_{i_k} &= p_{i_k} \frac{p_{i_k}}{1-p_y} \le p_{i_k} \frac{1-p_y}{1-p_y} = p_{i_k} \nonumber \\
\Rightarrow & \sum_{k=1}^{c-1} p_{i_k} a_{i_k} \le \sum_{k=1}^{c-1} p_{i_k} = 1-p_y \nonumber \\
\Rightarrow & \sum_{k=1}^{c-1} p_{i_k} a_{i_k} - p_y \le 1 -2p_y \le 0 
\quad//\quad \text{According to Assumption~\ref{assum:vr-balance}, $p_y\ge 0.5$}
\end{align}

Similarly, we have 
\begin{align}
    p_{i_k} b_{i_k} &= p_{i_k} \frac{q_{j_k}}{1-q_y} \le p_{i_k} \frac{1-q_y}{1-q_y} = p_{i_k} \nonumber \\
\Rightarrow & \sum_{k=1}^{c-1} p_{i_k} b_{i_k} \le \sum_{k=1}^{c-1} p_{i_k} = 1-p_y \nonumber \\
\Rightarrow & \sum_{k=1}^{c-1} p_{i_k} b_{i_k} - p_y \le 1 -2p_y \le 0
\end{align}

Thus, (III) $\le 0$.

In summary,  we have
\begin{align} 
&\sum_{a, b\in \Omega} \left[I_{ab}(\delta^{(\text{single})}_{\text{ce}}) \right] -  \sum_{a, b\in \Omega} \left[I_{ab}(\delta^{(\text{single})}_{\text{vr}}) \right] \nonumber\\
    &= \underbrace{(\frac{\eta \sqrt{\kappa}}{2})^2}_{\ge 0} \Big\{ \underbrace{\sum_{k=1}^{c-1} p_{i_k} \left(a_{i_k}^2 - b_{j_k}^2 \right)}_{\text{(I)} \ge 0} - \underbrace{\left(\sum_{k=1}^{c-1}p_{i_k} (a_{i_k} - b_{j_k}) \right)}_{\text{(II)} \ge 0} \underbrace{\left( \sum_{k=1}^{c-1} p_{i_k} a_{i_k} - p_y + \sum_{k=1}^{c-1} p_{i_k} b_{j_k} - p_y\right)}_{\text{(III)} \le 0} \nonumber \\
    &\ge 0
\end{align}

Thus, the proposition is proven.

\end{proof}

\section{More discussion of the conclusion that the SGM Attack generates perturbations with smaller interactions than the multi-step attack} \label{append:sgm}
The SGM Attack~\cite{Wu2020Skip} modifies the gradient in the back-propagation of ResNets to reduce gradient information from the residual modules.
Specifically, a ResNet~\cite{he2016deep} consists several residual blocks, where the feature of the $l$-th residual block is formulated as follows.
\begin{equation}
    x_{l+1} = x_l + F_{l} (x_l)
\end{equation}
where $x_l \in \mathbb{R}^{n_l}$ and $x_{l+1} \in \mathbb{R}^{n_{l+1}}$ denote the input and the output of the $l$-th residual block, respectively.
$F_l (\cdot)$ represents the nonlinear transformation of the $l$-th residual block.
The normal back-propagation rule of the residual block is given as follows.
\begin{equation}\label{eq:normal grad}
    \frac{\partial x_{l+1}}{\partial x_l^T} = I + \frac{\partial F_l(x_l)}{\partial x_l^T}
\end{equation}

The SGM Attack computes the forward propagation as normal, but adds a decay factor to back-propagation as follows.
\begin{equation}
    (\frac{\partial x_{l+1}}{\partial x_l^T})_{\text{sgm}} = I + \gamma\frac{\partial F_l(x_l)}{\partial x_l^T}
\end{equation}
where $\gamma \in (0, 1]$.
In this way, the gradient information of the residual module is partially reduced.
Such an operation can be understood as adding a specific dropout operation on the gradient.
Moreover, in our previous work~\cite{dropout2020zhang}, we have proven that the dropout operation can reduce the significance of interactions. 
Thus, compared with the multi-step attack based on gradients in Eq.~\eqref{eq:normal grad}, the SGM Attack generates perturbations with smaller interactions.

\section{More discussion of the conclusion that the GhostNet generates perturbations with smaller interactions than the multi-step attack} \label{append:ghost}
The GhostNet~\cite{li2020learning} uses two types of erosion to revise a pretrained DNN, in order to generate more transferable perturbations on the revised DNN.

\textbf{Dropout Erosion.} Given the input $x$, let $f_{l+1}$ denote the non-linear transformation in the $(l+1)$-th layer, and let $x_l$ denote the input feature of the $(l+1)$-th layer. Then in the GhostNet, the output feature of the $(l+1)$-th layer is given as follows.
\begin{equation}
    \begin{split}
        x_{l+1}=f_{l}\left(\frac{r_{l} \odot x_{l}}{1-p}\right),\\
        r_{l} \sim \text{Bernoulli} (1-p)
    \end{split}
\end{equation}
where $\odot$ denotes element-wise production. The element-wise production between $r_l$ and $x_l$ represents the dropout operation~\cite{srivastava2014dropout}, and $1-p$ denotes the probability that each element in $x_l$ is dropped.
In the GhostNet, the dropout operation is densely applied to each layer of the DNN.

 In our previous work~\cite{dropout2020zhang}, we have proven that the dropout operation can decrease the significance of interactions. Thus, compared with the multi-step attack on normally-trained DNNs without additional dropout layers, the multi-step attack on the GhostNet generates perturbations with smaller interactions.

\textbf{Skip Connection Erosion.}
In ResNets~\cite{he2016deep},  the residual block is formulated as follows.
\begin{equation}
    x_{l+1} = x_l + F_l (x_l)
\end{equation}
where $x_l$ and $x_{l+1}$ denote the input and the output of the $l$-th residual block, respectively.
$F_l (\cdot)$ represents the nonlinear transformation of the $l$-th residual block.

For DNNs with skip connection blocks, the GhostNet uses skip connection erosion to revise the DNN as follows.
\begin{equation}\label{eq:forward in GhostNet}
    x_{l+1} = \eta_l x_l + F_l (x_l)
\end{equation}
where $\eta_l$ is drawn from a uniform distribution.
\emph{I.e.} given $\tau > 0$, $\eta_l\sim U[1-\tau, 1+\tau]$.

To simplify our analysis, we consider the DNN where the non-linear transformation in $F_l$ is the ReLU activation.
In this way, the forward propagation in Eq.~\eqref{eq:forward in GhostNet} can be written as follows.
\begin{equation}
    \begin{split}
        x_{l+1} &= \eta_l x_l + \sigma\left( W^T_l x_l  \right) \\
    \end{split}
\end{equation}
where $W_l$ denotes the weight matrix of the $l$-th residual block, and $\sigma(\cdot)$ represents the activation function.
According to Assumption~\ref{assumption:activation}, if the activation function is set as the ReLU activation, then
\begin{equation}
    \begin{split}
        x_{l+1} &= \eta_l x_l + \sigma\left( W^T_l x_l  \right) \\
        &= \eta_l x_l + \Sigma_{l}\left( W^T_l (x_l) \right) \\
        &= \left[\eta_{l} I + \Sigma_{l} W_{l}^T \right]  x_l,
    \end{split}
\end{equation}
where $\Sigma_l$ is a diagonal matrix, and the diagonal element  $(\Sigma_l)_{ii}\in\{0, 1\}$ represents the activation state of the $i$-element in the ReLU activation function.
Suppose the DNN contains a total of $d$ residual blocks.
Then, the output of the network is given as follows.
\begin{equation} \label{eq:ghost-skip-forward}
    \begin{split}
        \hat{z} &= W_d^T \left[\eta_{d-1} I + \Sigma_{d-1} W_{d-1}^T \right] \dots \left[\eta_{1} I + \Sigma_{1} W_{1}^T\right] x 
    \end{split}
\end{equation}

Moreover, the output of the ReLU network with skip connection erosion can be disentangled as follows.
\begin{align}
    \hat{z}&= W_d^T \left[ I+ \Sigma_{d-1} W_{d-1}^T + (\eta_{d-1} - 1) I \right] \dots \left[I + \Sigma_{1} W_{1}^T + (\eta_{1} - 1) I \right] x \nonumber \\
        &=  W_d^T \left[ I + \Sigma_{d-1} W_{d-1}^T \right] \dots \left[ I + \Sigma_{1} W_{1}^T\right] x + W_d^T \left[\prod_{l=1}^{d-1} (\eta_{l}-1) \right]x + \dots \nonumber \\
        &= z + \Delta z
\end{align}
where $z =  W_d^T \left[ I + \Sigma_{d-1} W_{d-1}^T \right] \dots \left[ I + \Sigma_{1} W_{1}^T\right] x$ represents the normal network output without skip connection erosions.
Note that $\eta_l\sim U[1-\tau, 1+\tau]$.
If the number of residual blocks $d$ is large, we can approximately assume that $\Delta z$ is a Gaussian noise, \emph{i.e.} $\Delta z\sim \mathcal{N}(0, \sigma_z^2 I)$.
As discussed in Appendix~\ref{append:vr}, adding Gaussian noise to the input is roughly equivalent to adding noise to the network output, and adding noise on the network work can make the DNN generates perturbations with smaller interactions.
Therefore, we can consider the skip connection erosion also make the DNN generates perturbations with smaller interactions.

\section{More discussion of the conclusion that the FD Attack generates perturbations with smaller interactions than the multi-step attack}

The FD Attack~\cite{Inkawhich2020Transferable} puts a two-layer network for each category upon the feature of an
intermediate layer in the DNN.
More specifically, the FD Attack retrains a two-layer network for each category by using the intermediate-layer featur es of a pre-trained DNN as the input. Then, the adversarial perturbation is generated by maximizing the output probability of the two-layer network corresponding to the target category.

Note that each two-layer network is trained for binary classification in the one-versus-all way.
Let $h^{(i)} (\cdot)$ denote the two-layer network trained for the category $i\in \{1, 2, \dots, c\}$, and let $f_l(x)$ denote the output feature of the $l$-th layer in the pretrained DNN.
The FD Attack trains $h^{(i)} (f_l (x))$ to estimate the prediction probability of the sample $x$ belonging to the $i$-th category, which is given as follows.
\begin{equation}
    P\left(C(x) = i \mid f_l(x) \right) = \text{Sigmoid}\left( h^{(i)} (f_l (x)) \right),
\end{equation}
where $C(x) = i$ represents that the input sample $x$ belongs to category $i$.
Note that the FD Attack only focuses on the targeted attack.
In this way, after obtaining $c$ two-layer networks, the FD Attack generates adversarial perturbations by maximizing the output probability of the two-layer network corresponding to the target category $t$ as follows.
\begin{equation}
    \max_{ \|\delta \|_p \le \epsilon} P\left(C(x) = t \mid f_l(x + \delta) \right)
\end{equation}

Obviously, the newly trained shallow networks $h^{(i)} (f_l (x))$ usually encode more linear feature representations than the original deep network.
In this way, we can consider eigenvalues of the Hessian matrix in the shallow network are smaller than those in the original DNN.
If the newly-trained two-layer network exhibits the similar classification power to the original deep network, then, according to Lemma~\ref{lemma:interaction}, the two-layer network learned by the FD Attack is very likely to generate perturbations with smaller interactions than the original DNN.

\section{More discussion of the conclusion that the FI Attack generates perturbations with smaller interactions than the multi-step attack}

The FI Attack~\cite{wang2021feature} generates adversarial perturbations as follows. 
For the input $x$, the FI Attack first randomly masks the input by the mask matrix $M\in\{0,1\}^n$.
Then, the FI Attack feeds the input under different masks into the DNN.
For each specific mask $M$, let $Loss(M\odot x)$ denote the loss function on the masked input, and let $f_k(M\odot x)$ the feature of the $k$-th layer $f_k(M\odot x)$, where $\odot$ denotes the element-wise multiplication.
Then, $g^{(M)}=\frac{\partial Loss(M\odot x)}{\partial f_k(M\odot x)}$ denote the gradient of the loss \emph{w.r.t.} the feature of the $k$-th layer computed on the masked input.
The FI Attack computes the average gradient over different masks as follows.
\begin{equation}
  \bar g = \mathbb{E}_{M\sim \text{Bernoulli}(q)}\frac{\partial Loss(M\odot x)}{\partial f_k(M\odot x)}
\end{equation}
The objective of the FI Attack is to estimate the adversarial perturbation to further push intermediate-layer features away from the direction of the average gradient $\bar g$.
\begin{equation}
  \min_{\delta} \sum {\bar g \odot f_k(x+\delta)}
\end{equation}

In order to prove that the FI Attack generates perturbations with smaller interactions than the multi-step attack, we will prove the following three propositions.
(1) In the pairwise interactions between perturbation units, high-order interactions are the major components, while low-order interactions constitute only a small proportion. (2) In the FI Attack, when the input is masked, its destruction on high-order interactions are more significant than its destruction on low-order interactions. In other words, most high-order interactions are destroyed and low-order interactions are remained. Therefore, the generated perturbations mainly encode low-order interactions. (3) Because low-order interactions are usually small, the perturbation that mainly encode low-order interactions is more likely to exhibit smaller interactions than the perturbation that contain both high-order and low-order interactions.
Therefore, interactions between perturbation units generated by the FI Attack are smaller than those generated by the multi-step attack.

\begin{figure}[t!]
    \centering
    \includegraphics[width=0.5\linewidth]{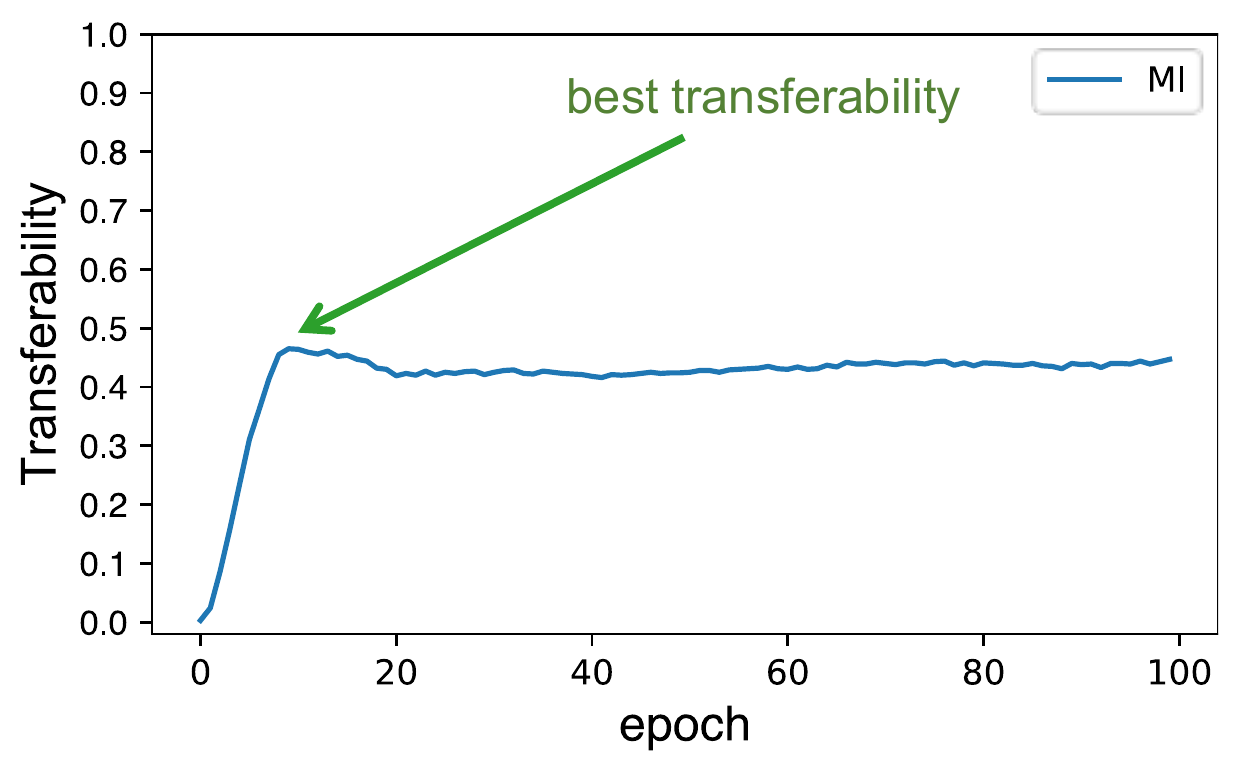}
    \caption{The curve of adversarial transferability in different steps.}
    \label{fig:loo}
\end{figure}

\section{Evaluation of adversarial transferability via leave-one-out validation} \label{append:loo}

This Section provides the motivation and details of using the leave-one-out validation in experiments.
The motivation of using the leave-one-out validation is the following phenomenon.
As shown in Figure~\ref{fig:loo}, when we conduct the MI Attack for multiple steps, the highest transferability of the MI Attack is reached in an intermediate step, rather than in the last step.
This phenomenon presents a challenge for the fair comparison of adversarial transferability between different attacking methods.

To  enable impartial comparisons of adversarial transferability, we estimate the adversarial perturbations with the highest transferability for each input image via the leave-one-out (LOO) validation as follows.
Given a set of clean examples $\{(x_i, y_i)\}_{i=1}^N$, where $y_i\in\{1, 2, \dots, C\}$, ${x}^t_i$ denotes the adversarial example at step $t$ \emph{w.r.t.} the clean example $x_i$, where $t\in\{1, 2, \dots, T\}$, and $T$ is the number of total steps.
Given a target DNN $h(\cdot)$ and an input example $x$, where $h(\cdot)$ denotes the output before the softmax layer, the prediction of the example $x$ is given as  $\mathcal{C}(x)=\arg\max_{k} h_k(x),k\in\{1,\dots,C\}$.
For the example $x_i$, we use its adversarial example generated after $t_i^*$ steps, where $t_i^*$ is determined via leave-one-out validation, which is defined as follows.
\begin{equation*}
\hat{x}_i \triangleq{x_i^{t^*_i}},  \; s.t. \quad t^*_i = \arg\max_t \mathbb{E}_{i'\in\{1,2,\dots, N\}\setminus \{i\}}\left[{\mathbbm{1}}[\mathcal{C}({x}^t_{i'})\ne y_{i'}] \right],
\end{equation*}
where $\mathbbm{1}[\cdot]$ is the indicator function.
Then the average transferability is given as follows.
\begin{equation*}
    \textit{Transferability} = \mathbb{E}_i \left[ \mathbbm{1}[\mathcal{C}(\hat{x}_i)\ne y_{i}]\right].
\end{equation*}

\begin{table}[t!]
\renewcommand{\arraystretch}{1.1}
\renewcommand\tabcolsep{3.0pt}
\small
\centering
\caption{
The success rates of $L_\infty$ black-box attacks crafted by MI and MI$+$IR on four source models (RN-34/152, DN-121/201) against seven target models. The interaction loss can boost the transferability of the MI Attack.
}
\resizebox{0.9\linewidth}{!}{
\begin{tabular}{c|c|ccccccc}
\hline
  Source& Method & VGG-16 & RN152 & DN-201 & SE-154 & IncV3 & IncV4 & IncResV2  \\ \hline
  \multirow{2}{*}{{RN-34}}
  & MI & 80.1$\pm$0.5 & 73.0$\pm$2.3 & 77.7$\pm$0.5 & 48.9$\pm$0.8 & 46.2$\pm$1.2 & 39.9$\pm$0.5 & 34.8$\pm$2.5 \\
  & MI$+$IR & \textbf{90.0$\pm$0.5} & \textbf{85.7$\pm$0.3} & \textbf{88.5$\pm$0.6} & \textbf{67.0$\pm$0.1} & \textbf{66.9$\pm$1.8} & \textbf{60.2$\pm$0.7} & \textbf{53.9$\pm$2.3} \\ \hline
  \multirow{2}{*}{{RN-152}}
  & MI & 70.3$\pm$0.6 & -- & 74.8$\pm$1.4 & 51.7$\pm$0.8 & 47.1$\pm$0.9 & 40.5$\pm$1.6 & 36.8$\pm$2.7 \\
  & MI$+$IR & \textbf{78.9$\pm$1.4} & -- & \textbf{82.2$\pm$2.0} & \textbf{68.3$\pm$0.3} & \textbf{63.6$\pm$1.2} & \textbf{59.0$\pm$0.4} & \textbf{56.3$\pm$1.0} \\\hline
  \multirow{2}{*}{{DN-121}}
  & MI & 83.0$\pm$4.9 & 72.0$\pm$0.7 & 91.5$\pm$0.2 & 58.4$\pm$2.6 & 54.6$\pm$1.6 & 49.2$\pm$2.4 & 43.9$\pm$1.5 \\
  & MI$+$IR & \textbf{89.0$\pm$0.8} & \textbf{83.2$\pm$1.5} & \textbf{93.4$\pm$0.6} & \textbf{74.2$\pm$0.7} & \textbf{69.6$\pm$0.9} & \textbf{64.7$\pm$0.5} & \textbf{58.2$\pm$2.3} \\\hline
  \multirow{2}{*}{{DN-201}}
  & MI & 77.3$\pm$0.8 & 74.8$\pm$1.4 & -- & 64.6$\pm$1.0 & 56.5$\pm$2.5 & 51.1$\pm$2.1 & 47.8$\pm$1.9 \\
  & MI$+$IR & \textbf{87.3$\pm$0.3} & \textbf{81.6$\pm$2.0} & -- & \textbf{75.4$\pm$0.6} & \textbf{66.6$\pm$3.3} & \textbf{60.0$\pm$1.0} & \textbf{62.1$\pm$0.7} \\ \hline
\end{tabular}}
\vspace{-10pt}
\label{table:mi+ir}
\end{table}

\begin{table}[t!]
\renewcommand{\arraystretch}{1.1}
\renewcommand\tabcolsep{3.0pt}
\small
\centering
\caption{
The success rates of $L_\infty$ black-box attacks crafted by SGM and SGM$+$IR on four source models (RN-34/152, DN-121/201) against seven target models. The interaction loss can boost the transferability of the SGM Attack.
}
\resizebox{0.9\linewidth}{!}{
\begin{tabular}{c|c|ccccccc}
\hline
  Source& Method & VGG-16 & RN152 & DN-201 & SE-154 & IncV3 & IncV4 & IncResV2  \\ \hline
  \multirow{2}{*}{{RN-34}}
  & SGM & 91.8$\pm$0.6 & 89.0$\pm$0.9 & 90.0$\pm$0.4 & 68.0$\pm$1.4 & 63.9$\pm$0.3 & 58.2$\pm$1.1 & 54.6$\pm$1.2 \\
  & SGM$+$IR & \textbf{94.7$\pm$0.6} & \textbf{91.7$\pm$0.6} & \textbf{93.4$\pm$0.8} & \textbf{72.7$\pm$0.4} & \textbf{68.9$\pm$0.9} & \textbf{64.1$\pm$1.3} & \textbf{61.3$\pm$1.0} \\ \hline
  \multirow{2}{*}{{RN-152}}
  & SGM & 88.2$\pm$0.5 & -- & 90.2$\pm$0.3 & 72.7$\pm$1.4 & 63.2$\pm$0.7 & 59.1$\pm$1.5 & 58.1$\pm$1.2 \\
  & SGM$+$IR & \textbf{92.0$\pm$1.0} & -- & \textbf{92.5$\pm$0.4} & \textbf{79.3$\pm$0.1} & \textbf{69.6$\pm$0.8} & \textbf{66.2$\pm$1.0} & \textbf{63.6$\pm$0.9} \\ \hline
  \multirow{2}{*}{{DN-121}}
  & SGM & 88.7$\pm$0.9 & 88.1$\pm$1.0 & \textbf{98.0$\pm$0.4} & 78.0$\pm$0.9 & 64.7$\pm$2.5 & 65.4$\pm$2.3 & 59.7$\pm$1.7 \\
  & SGM$+$IR & \textbf{91.7$\pm$0.2} & \textbf{90.4$\pm$0.4} & {94.3$\pm$0.1} & \textbf{87.0$\pm$0.4} & \textbf{78.8$\pm$1.3} & \textbf{79.5$\pm$0.2} & \textbf{75.8$\pm$2.7} \\ \hline
  \multirow{2}{*}{{DN-201}}
  & SGM & 87.3$\pm$0.3 & \textbf{92.4$\pm$1.0} & -- & 82.9$\pm$0.2 & 72.3$\pm$0.3 & 71.3$\pm$0.6 & 68.8$\pm$0.5 \\
  & SGM$+$IR & \textbf{89.5$\pm$0.9} & {91.8$\pm$0.7} & -- & \textbf{87.3$\pm$1.2} & \textbf{82.5$\pm$0.8} & \textbf{80.3$\pm$0.3} & \textbf{81.5$\pm$0.5} \\ \hline
\end{tabular}}
\label{table:sgm+ir}
\end{table}

\begin{table}[t!]
\renewcommand{\arraystretch}{1.1}
\renewcommand\tabcolsep{3.0pt}
\small
\centering
\caption{
The success rates of $L_\infty$ black-box attacks crafted by VR and VR$+$IR on four source models (RN-34/152, DN-121/201) against seven target models. The interaction loss can boost the transferability of the VR Attack.
}
\vspace{1pt}
\resizebox{0.7\linewidth}{!}{
\begin{tabular}{c|c|ccccccc}
\hline
  Source& Method & VGG-16 & RN152 & DN-201 & SE-154 & IncV3 & IncV4 & IncResV2  \\ \hline
  \multirow{2}{*}{{RN-34}}
  & VR & 85.1 & 85.3 & 87.0 & 55.7 & 54.3 & 50.7 & 43.7 \\
  & VR$+$IR & \textbf{90.8} & \textbf{92.2} & \textbf{93.3} & \textbf{75.4} & \textbf{75.4} & \textbf{67.5} & \textbf{66.1} \\ \hline
  \multirow{2}{*}{{DN-121}}
  & VR & 88.8 & 88.4 & \textbf{98.2} & 72.9 & 73.5 & 72.5 & 63.6 \\
  & VR$+$IR & \textbf{93.0} & \textbf{93.5} & 96.2 & \textbf{83.7} & \textbf{82.8} & \textbf{84.0} & \textbf{79.8} \\ \hline
\end{tabular}}
\label{table:vr+ir}
\end{table}

\section{Interaction reduction on other attacks}\label{append:ir+}

To further demonstrate the effectiveness of the interaction loss, we have applied the interaction loss on other attacks besides the PGD Attack, including the MI Attack, the SGM Attack, and the VR Attack.
More specifically, we added the interaction loss on the MI Attack (namely the MI$+$IR Attack), the SGM Attack (namely the SGM$+$IR Attack), and the VR Attack (namely the VR$+$IR Attack), respectively.

For the MI Attack and the SGM Attack, we directly applied Eq.\eqref{eq:inte loss} to generate adversarial perturbations, because these attacks were compatible with the interaction loss. Besides, for the VR Attack, its objective function is expressed as follows.
\begin{equation}
\underset{\delta}{\text{maximize}}\quad \mathbb{E}_{\xi \sim \mathcal{N}\left(0, \sigma^{2}I\right)}\left[ \textit{Loss}(x+\delta+\xi)\right] \quad \text{s.t.} \quad \|\delta\|_p\le \epsilon,\; x+\delta\in[0, 1]^n,
\end{equation}
Therefore, the VR$+$IR Attack was implemented via sampling as follows.
\begin{equation}
\begin{split}
    \underset{\delta}{\text{maximize}}\quad &\frac{1}{K}\sum_{k=1}^K  \left[ \textit{Loss}(x+\delta+\xi_k) -\mathbb{E}_{a b}\left[ I_{a b}(\delta)\right]\right],\quad \xi_k \sim \mathcal{N}\left(0, \sigma^{2}I\right) \\ &\text{s.t.}  \quad \|\delta\|_p\le \epsilon,\; x+\delta\in[0, 1]^n,
\end{split}
\end{equation}
where the interaction loss was computed by considering the input image as $x+\xi_k$, rather than $x$ in Eq.~(\ref{eq:expectation}).
When applying the interaciton loss to the VR Attack, a significant issue was the extremely high computational cost. 
Therefore, in order to implement the VR$+$IR Attack, we set $K=5$ and reduced the number of steps from 100 to 50. 
Besides, we also employed the LOO strategy for evaluation of the MI$+$IR, the VR$+$IR, and the SGM$+$IR attacks.

Table~\ref{table:mi+ir}, Table~\ref{table:sgm+ir}, and Table~\ref{table:vr+ir} compare the success rates of attacks with and without the interaction loss. 
These results demonstrate that the performance of the MI Attack,  the VR Attack , and the SGM Attack can be further improved by adding the interaction loss, which directly reduced interactions inside perturbations.

\end{document}